\title{Decision-Aware Actor-Critic with Function Approximation and Theoretical Guarantees}
\author{%
  Sharan Vaswani \\
  Simon Fraser University\\
  \texttt{vaswani.sharan@gmail.com} \\
  \And
  Amirreza Kazemi \\
  Simon Fraser University\\
  \texttt{aka208@sfu.ca} \\
  \And
  Reza Babanezhad \\
  Samsung - SAIT AI Lab, Montreal \\
  \texttt{babanezhad@gmail.com} \\
  \And
  Nicolas Le Roux \\
  Microsoft Research, Mila \\
  \texttt{nicolas@le-roux.name} \\
}
\definecolor{shadecolor}{gray}{0.92}
\declaretheoremstyle[
headfont=\normalfont\bfseries,
notefont=\mdseries, notebraces={(}{)},
bodyfont=\normalfont,
postheadspace=0.5em,
spaceabove=5pt,
mdframed={
  skipabove=3pt,
  skipbelow=3pt,
  hidealllines=true,
  backgroundcolor={shadecolor},
  innerleftmargin=2pt,
  innerrightmargin=2pt}
]{shaded}
\renewcommand{\Cref}[1]{\cref{#1}}
\Crefname{alg}{Algorithm}{Algorithm}
\Crefname{proposition}{Prop.}{Propositions}
\Crefname{section}{Sec.}{Sections}
\Crefname{appendix}{App.}{Appendix}
\renewcommand*{\@textcolor}[3]{%
  \protect\leavevmode
  \begingroup
    \color#1{#2}#3%
  \endgroup
}
\definecolor{mydarkblue}{rgb}{0,0.08,0.45}
\definecolor{mydarkgreen}{rgb}{0,0.45,0.08}
\newcommand{\red}[1]{\textcolor{red}{#1}}
\newcommand{\blue}[1]{\textcolor{blue}{#1}}
\newcommand{\green}[1]{\textcolor{mydarkgreen}{#1}}
\newcommand{\brown}[1]{\textcolor{brown}{#1}}
\newcommand{\magenta}[1]{\textcolor{magenta}{#1}}
\def\pitt{{\pi_{t+1}}}
\newcommand{\pit}{{\pi_{t}}}
\newcommand{\tpitt}{\tilde{\pi}_{t+1}}
\newcommand{\bpitt}{\bar{\pi}_{t+1}}
\newcommand{\bpit}{\bar{\pi}_{t}}
\newcommand{\brpi}{\bar{\pi}}
\def\piopt{{\pi^\ast}}
\newcommand{\hgt}[1]{\hat g_t{(\pit)}}
\newcommand{\gradphi}[1]{\nabla \phi(#1)}
\newcommand{\breg}[2]{D_{\Phi}(#1, #2)}
\newcommand{\inner}[2]{\langle #1,\, #2 \rangle}
\newcommand{\normsq}[1]{\left\|#1 \right\|_{2}^{2}}
\newcommand{\norm}[1]{\left\|#1 \right\|}
\newcommand{\indnorm}[2]{\left\|#1\right\|_{#2}}
\newcommand{\infnorm}[1]{\left\|#1\right\|_{\infty}}
\newcommand{\pidist}{{p^\pi}}
\newcommand{\pitdist}{{p^{\pi_{t}}}}
\newcommand{\pipdist}{{p^{\pi'}}}
\def\1{\bm{1}}
\newcommand{\transpose}{^\mathsf{\scriptscriptstyle T}}
\newcommand{\qt}{Q^\pit(s,\cdot)}
\newcommand{\qtt}{Q^\pitt(s,\cdot)}
\newcommand{\bqtt}{Q^{\bpitt}(s,\cdot)}
\newcommand{\qopt}{Q^{\piopt}(s,\cdot)}
\newcommand{\vat}{J(\pit)}
\newcommand{\vatt}{J(\pitt)}
\newcommand{\vaopt}{J(\piopt)}
\newcommand{\vats}[1]{J_{#1}(\pit)}
\newcommand{\vatts}[1]{J_{#1}(\pitt)}
\newcommand{\bvatts}[1]{J_{#1}(\bpitt)}
\newcommand{\vaopts}[1]{J_{#1}(\piopt)}
\newcommand{\bpitts}{\bpitt^s}
\newcommand{\pitts}{\pitt^s}
\newcommand{\pits}{\pit^s}
\newcommand{\piopts}{\pi^{*s}}
\DeclareMathAlphabet{\mathsfit}{\encodingdefault}{\sfdefault}{m}{sl}
\SetMathAlphabet{\mathsfit}{bold}{\encodingdefault}{\sfdefault}{bx}{n}
\def\cA{{\mathcal{A}}}
\def\cH{{\mathcal{H}}}
\def\cL{{\mathcal{L}}}
\def\cP{{\mathcal{P}}}
\def\cS{{\mathcal{S}}}
\newcommand{\E}{\mathbb{E}}
\newcommand{\R}{\mathbb{R}}
\DeclareMathOperator*{\argmax}{arg\,max}
\DeclareMathOperator*{\argmin}{arg\,min}
\def\FMAPG/{{FMA-PG}}
\def\DFMAPG/{{DFMA-PG}}
\def\SFMAPG/{{SFMA-PG}}
\newcommand{\tht}{{\theta_{t}}}
\newcommand{\hatgt}{{\hat{g}_t}}
\begin{document}

\maketitle
\begin{abstract}
Actor-critic (AC) methods are widely used in reinforcement learning (RL), and benefit from the flexibility of using any policy gradient method as the actor and value-based method as the critic. The critic is usually trained by minimizing the TD error, an objective that is potentially decorrelated with the true goal of achieving a high reward with the actor. We address this mismatch by designing a joint objective for training the actor and critic in a \emph{decision-aware} fashion. We use the proposed objective to design a generic, AC algorithm that can easily handle any function approximation. We explicitly characterize the conditions under which the resulting algorithm guarantees monotonic policy improvement, regardless of the choice of the policy and critic parameterization. Instantiating the generic algorithm results in an actor that involves maximizing a sequence of surrogate functions (similar to TRPO, PPO), and a critic that involves minimizing a closely connected objective. Using simple bandit examples, we provably establish the benefit of the proposed critic objective over the standard squared error. Finally, we empirically demonstrate the benefit of our decision-aware actor-critic framework on simple RL problems. 
\end{abstract}

\vspace{-2ex}
\section{Introduction}
\label{sec:introduction}
\vspace{-1ex}

Reinforcement learning (RL) is a framework for solving problems involving sequential decision-making under uncertainty, and has found applications in games~\citep{mnih2015human,silver2016mastering}, robot manipulation tasks~\citep{tan2018sim,zeng2020tossingbot} and clinical trials~\citep{schaefer2005modeling}. RL algorithms aim to learn a policy that maximizes the long-term return by interacting with the environment. Policy gradient (PG) methods~\citep{williams1992simple, sutton2000policy,konda1999actor, kakade2001natural, schulman2017equivalence} are an important class of algorithms that can easily handle function approximation and structured state-action spaces, making them widely used in practice. PG methods assume a differentiable parameterization of the policy and directly optimize the return with respect to the policy parameters. Typically, a policy's return is estimated by using Monte-Carlo samples obtained via environment interactions~\citep{williams1992simple}. Since the environment is stochastic, this approach results in high variance in the estimated return, leading to higher sample-complexity (number of environment interactions required to learn a good policy). Actor-critic (AC) methods~\citep{konda1999actor,peters2005natural,bhatnagar2009natural} alleviate this issue by using value-based approaches~\citep{sutton1988learning,watkins1989learning} in conjunction with PG methods, and have been empirically successful~\citep{haarnoja2018soft,iqbal2019actor}. In AC algorithms, a value-based method (``critic'') is used to approximate a policy's estimated value, and a PG method (``actor'') uses this estimate to improve the policy towards obtaining higher returns.

Though AC methods have the flexibility of using any method to independently train the actor and critic, it is unclear how to train the two components \emph{jointly} in order to learn good policies. For example, the critic is typically trained via temporal difference (TD) learning and its objective is to minimize the value estimation error across all states and actions. For large real-world Markov decision processes (MDPs), it is intractable to estimate the values across all states and actions, and algorithms resort to function approximation schemes. In this setting, the critic should focus its limited model capacity to correctly estimate the state-action values that have the largest impact on improving the actor's policy. This idea of explicitly training each component of the RL system to help the agent take actions that result in higher returns is referred to as \emph{decision-aware RL}. Decision-aware RL~\citep{farahmand2017value,farahmand2018iterative,abachi2020policy,dai2017boosting,d2020learn,d2020gradient,li2021gradient} has mainly focused on model-based approaches that aim to learn a model of the environment, for example, the rewards and transition dynamics in an MDP. In this setting, decision-aware RL aims to model relevant parts of the world that are important for inferring a good policy. This is achieved by (i) designing  objectives that are aware of the current policy~\citep{abachi2020policy,d2020gradient} or its value~\citep{farahmand2017value,farahmand2018iterative}, (ii) differentiating through the transition dynamics to learn models that result in good action-value functions~\citep{d2020learn} or (iii) simultaneously learning value functions and models that are consistent~\citep{silver2017predictron, oh2017value, luo2018algorithmic}. In the model-free setting, decision-aware RL aims to train the actor and critic cooperatively in order to optimize the same objective that results in near-optimal policies. In particular,~\citet{dai2017boosting} use the linear programming formulation of MDPs and define a joint saddle-point objective (minimization w.r.t. the critic and maximization w.r.t. the actor). The use of function approximation makes the resulting optimization problem non-convex non-concave leading to training instabilities and necessitating the use of  heuristics. Recently,~\citet{dong2022provably} used stochastic gradient descent-ascent to optimize this saddle-point objective and, under certain assumptions on the problem, proved that the resulting policy converges to a stationary point of the value function. Similar to~\citet{dong2022provably}, we study a decision-aware AC method with function approximation and equipped with theoretical guarantees on its performance. In particular, we make the following contributions.    

\textbf{Joint objective for training the actor and critic}: Following~\citet{vaswani2021general}, we  distinguish between a policy's \emph{functional representation} (sufficient statistics that define a policy) and its \emph{parameterization} (the specific model used to realize these sufficient statistics in practice). For example, a policy can be represented by its state-action occupancy measure, and we can use a neural network parameterization to model this measure in practice (refer to~\cref{sec:problem} for more examples). In~\cref{sec:ac}, we exploit a smoothness property of the return and design a lower-bound (\cref{prop:genericlb}) on the return of an arbitrary policy. Importantly, the lower bound depends on both the actor and critic, and immediately implies a joint objective for training the two components (minimization w.r.t the critic and maximization w.r.t the actor). Unlike~\citet{dai2017boosting, dong2022provably}, the proposed objective works for \textbf{any} policy representation -- the policy could be represented as conditional distributions over actions for each state or a deterministic mapping from states to actions~\citep{heess2015learning}. Another advantage of working in the functional space is that our lower bound does not depend on the parameterization of either the actor or the critic. Moreover, unlike~\citet{dai2017boosting,dong2022provably}, our framework does not need to model the distribution over states, and hence results in a more efficient algorithm. We note that our framework can be used for other applications where gradient computation is expensive or has large variance~\citep{mohamed2020monte}, and hence requires a model of the gradient (e.g., variational inference).

\textbf{Generic actor-critic algorithm}: In~\cref{sec:ac}, we use our joint objective to design a generic decision-aware AC algorithm. The resulting algorithm (\cref{alg:generic}) can be instantiated with any functional representation of the policy, and can handle any policy or critic parameterization. Similar to~\citet{vaswani2021general}, the actor update involves optimizing a \emph{surrogate function} that depends on the current policy, and consequently supports \emph{off-policy updates}, i.e. similar to common PG methods such as TRPO~\citep{schulman2015trust}, PPO~\citep{schulman2017proximal}, the algorithm can update the policy without requiring additional interactions with the environment. This property coupled with the use of a critic makes the resulting algorithm sample-efficient in practice. In contrast with TRPO/PPO, both the off-policy actor updates and critic updates in~\cref{alg:generic} are designed to maximize the same lower bound on the policy return. 

\textbf{Theoretical guarantees}: In~\cref{sec:theory-pi}, we analyze   the necessary and sufficient conditions in order to guarantee monotonic policy improvement, and hence convergence to a stationary point. We emphasize that these improvement guarantees hold regardless of the policy parameterization and the quality of the critic (up to a certain threshold that we explicitly characterize). This is in contrast to existing theoretical results that focus on the tabular or linear function approximation settings or rely on highly expressive critics to minimize the critic error and achieve good performance for the actor. By exploiting the connection to inexact mirror descent (MD), we prove that~\cref{alg:generic} is guaranteed to converge to the neighbourhood of a stationary point where the neighbourhood term depends on the decision-aware critic loss (\cref{sec:mdstationary}). Along the way, we improve the theoretical guarantees for MD on general smooth, non-convex functions~\citep{dragomir2021fast,d2021stochastic}. As an additional contribution, we demonstrate a way to use the framework of~\citet{vaswani2021general} to ``lift'' the existing convergence rates~\citep{xiao2022convergence,mei2020global,johnson2023optimal} for the tabular setting to use off-policy updates and function approximation (\cref{app:direct-lifting,app:softmax-lifting}). This gives rise to a simple, black-box proof technique that might be of independent interest.

\textbf{Instantiating the general AC framework}: We instantiate the framework for two policy representations -- in~\cref{sec:direct}, we represent the policy by the set of conditional distributions over actions (``direct'' representation), whereas in~\cref{sec:softmax}, we represent the policy by using the logits corresponding to a softmax representation of these conditional distributions (``softmax'' representation). In both cases, we instantiate the generic lower-bound (Propositions~\ref{prop:directlb}, \ref{prop:softmaxlb}), completely specifying the actor and critic objectives in~\cref{alg:generic}. Importantly, unlike the standard critic objective that depends on the squared difference of the value functions, the proposed decision-aware critic loss (i) depends on the policy representation -- it involves the state-action value functions for the direct representation and depends on the advantage functions for the softmax representation, and (ii) penalizes the under-estimation and over-estimation of these quantities in an asymmetric manner. For both representations, we consider simple bandit examples (Propositions~\ref{prop:directexample-main}, \ref{prop:softmaxexample-main}) which show that minimizing the decision-aware critic loss results in convergence to the optimal policy, whereas minimizing variants of the squared loss do not. In~\cref{app:svg}, we consider a third policy representation involving stochastic value gradients~\citep{heess2015learning} for continuous control, and instantiate our decision-aware actor-critic framework in this case. 

\textbf{Experimental evaluation}: Finally, in~\cref{sec:experiments}, we consider simple RL environments and benchmark~\cref{alg:generic} for both the direct and softmax representations with a linear  policy and critic parameterization. We compare the actor performance when using the squared critic loss vs the proposed critic loss, and demonstrate the empirical benefit of our decision-aware actor-critic framework. 
\vspace{-4ex}
\section{Problem Formulation}
\label{sec:problem}
\vspace{-1ex}
We consider an infinite-horizon discounted Markov decision process (MDP)~\citep{Puterman1994} defined by the tuple $\langle \cS, \cA, \cP, r, \rho, \gamma \rangle$ where $\cS$ is the set of states, $\cA$ is the action set, $\cP : \cS \times \cA \rightarrow \Delta_\cS$ is the transition probability function, $\rho \in \Delta_{\cS}$ is the initial distribution of states, $r : \cS \times \cA \rightarrow [0,1]$ is the reward function and $\gamma \in [0, 1)$ is the discount factor. For state $s \in \cS$, a policy $\pi$ induces a distribution $\pidist(\cdot | s)$ over actions. It also induces a measure $d^\pi$ over states such that $d^\pi(s) = (1 - \gamma) \, \sum_{\tau = 0}^\infty \gamma^\tau \cP(s_{\tau} = s \mid s_0 \sim \rho, a_{\tau} \sim \pidist(\cdot|s_{\tau}))$. Similarly, we define $\mu^\pi$ as the measure over state-action pairs induced by policy $\pi$, implying that $\mu^\pi(s,a) = d^\pi(s) \, \pidist(a|s)$ and $d^\pi(s) = \sum_a \mu^\pi(s, a)$. The \emph{action-value function} corresponding to policy $\pi$ is denoted by $Q^\pi: \cS \times \cA \rightarrow \R$ such that $Q^\pi(s,a) := \mathbb{E} [\sum_{\tau=0}^\infty \gamma^\tau r(s_\tau, a_\tau)]$ where $s_0 = s, a_0 = a$ and for $\tau \geq 0$, $s_{\tau+1} \sim \cP(\cdot | s_\tau, a_\tau)$ and $a_{\tau + 1} \sim \pidist(\cdot|s_{\tau + 1})$. The \emph{value function} of a stationary policy $\pi$ for the start state equal to $s$ is defined as $J_{s}(\pi) := \E_{a \sim \pidist(\cdot|s)} [Q^\pi(s,a)]$ and we define $J(\pi) := \E_{s \sim \rho} [J_{s}(\pi)]$. For a state-action pair $(s,a)$, the \emph{advantage function} corresponding to policy $\pi$ is given by $A^\pi(s,a) := Q^\pi(s,a) - J_{s}(\pi)$. Given a set of feasible policies $\Pi$, the objective is to compute the policy that maximizes $J(\pi)$. 

\textbf{Functional representation vs Policy Parameterization:} Similar to the policy optimization framework of~\citet{vaswani2021general}, we differentiate between a policy's functional representation and its parameterization. The \emph{functional representation} of a policy $\pi$ defines its sufficient statistics, for example, we may represent a policy via the set of distributions $\pidist(\cdot|s) \in \Delta_{A}$ for state $s \in \cS$. We will refer to this as the \emph{direct representation}. The same policy can have multiple functional representations, for example, since $\pidist(\cdot|s)$ is a probability distribution, one can write $\pidist(a|s) =  \nicefrac{\exp(z^\pi(s,a))}{\sum_{a^\prime} \exp(z^\pi(s,a^\prime))}$, and represent $\pi$ by the set of logits $z^\pi(s,a)$ for each $(s,a)$ pair. We will refer to this as the \emph{softmax representation}. On the other hand, the \emph{policy parameterization} is determined by a \emph{model} (with parameters $\theta$) that realizes these statistics. For example, we could use a neural-network to parameterize the logits corresponding to the policy's softmax representation, rewriting $z^\pi(s,a) = z^\pi(s,a | \theta)$ where the model is implicit in the $z^\pi(s,a | \theta)$ notation. As another example, the \emph{tabular parameterization} corresponds to having a parameter for each state-action pair~\citep{xiao2022convergence, mei2020global}. The policy parameterization thus defines the set $\Pi$ of realizable policies that can be expressed with the parametric model at hand. Note that the policy parameterization can be chosen independently of its functional representation. Next, we recap the framework in~\citet{vaswani2021general} and generalize it to the actor-critic setting. 
\vspace{-2ex}
\section{Methodology}
\label{sec:methodology}
\vspace{-1ex}
We describe functional mirror ascent in~\cref{sec:fmapg}, and use it to design a general decision-aware actor-critic framework and corresponding algorithm in~\cref{sec:ac}. 

\subsection{Functional Mirror Ascent for Policy Gradient (FMAPG) framework}
\label{sec:fmapg}
For a given functional representation,~\citet{vaswani2021general} update the policy by \emph{functional mirror ascent} and project the updated policy onto the set $\Pi$ determined by the policy parameterization. Functional mirror ascent is an iterative algorithm whose update at iteration $t \in \{0, 1, \ldots, T - 1\}$ is given as: \magenta{$\pitt = \arg\max_{\pi \in \Pi} \left[\inner{\pi}{\nabla_\pi J(\pi_t)} - \frac{1}{\eta} \, D_\Phi(\pi, \pi_t) \right]$} where $\pi_t$ is the policy (expressed as its functional representation) at iteration $t$, $\eta$ is the step-size in the functional space and $D_\Phi$ is the Bregman divergence (induced by the mirror map $\Phi$) between the representation of policies $\pi$ and $\pi_t$. The FMAPG framework casts the projection step onto $\Pi$ as an unconstrained optimization w.r.t the parameters $\theta \in \R^{n}$ of a \emph{surrogate function}: \magenta{$\theta_{t+1} = \argmax \ell_t(\theta) :=  \inner{\pi(\theta)}{\nabla_\pi J(\pi(\theta_t))} - \frac{1}{\eta}D_\Phi(\pi(\theta), \pi(\theta_t))$}. Here, $\pi(\theta)$ refers to the parametric form of the policy where the choice of the parametric model is implicit in the $\pi(\theta)$ notation. The policy at iteration $t$ is thus expressed as $\pi(\theta_t)$, whereas the updated policy is given by $\pitt = \pi(\theta_{t+1})$. 
The surrogate function is non-concave in general and can be approximately maximized using a gradient-based method, resulting in a nested loop algorithm. Importantly, the inner-loop (optimization of $\ell_t(\theta)$) updates the policy parameters (and hence the policy), but does not involve recomputing $\nabla_\pi J(\pi)$. Consequently, these policy updates do not require interacting with the environment and are thus \emph{off-policy}. This is a desirable trait for designing sample-efficient PG algorithms and is shared by methods such as TRPO~\citep{schulman2015trust} and PPO~\citep{schulman2017proximal}. 

With the appropriate choice of $\Phi$ and $\eta$, the FMAPG framework guarantees monotonic policy improvement for any number of inner-loops and policy parameterization. A shortcoming of this framework is that it requires access to the exact gradient $\nabla_\pi J(\pi)$. When using the direct or softmax representations, computing $\nabla_\pi J(\pi)$ involves computing either the action-value $Q^\pi$ or the advantage $A^\pi$ function respectively. In complex real-world environments where the rewards and/or the transition dynamics are unknown, these quantities can only be estimated. For example, $Q^\pi$ can be estimated using Monte-Carlo sampling by rolling out trajectories using policy $\pi$ resulting in large variance, and consequently higher sample complexity. Moreover, for large MDPs, function approximation is typically used to estimate the $Q$ function, and the resulting aliasing makes it impossible to compute it exactly in practice. This makes the FMAPG framework impractical in real-world scenarios. Next, we generalize FMAPG to handle inexact gradients and subsequently design an actor-critic framework.
\vspace{-2ex}
\subsection{Generalizing FMAPG to Actor-Critic}
\label{sec:ac}
\vspace{-0.5ex}
To generalize the FMAPG framework, we first prove the following proposition in~\cref{app:proofs-sec3}. 
\begin{restatable}{proposition}{genericlb}
For any policy representations $\pi$ and $\pi'$, any strictly convex mirror map $\Phi$, and any gradient estimator $\hat{g}$, for $c > 0$ and $\eta$ such that $J + \frac{1}{\eta}\Phi$ is convex in $\pi$, 
\small
{
\begin{align*}
J(\pi) &\geq \brown{J(\pi') + \langle \hat{g}(\pi'), \pi - \pi' \rangle} - \red{\left(\frac{1}{\eta} + \frac{1}{c}\right) D_\Phi(\pi, \pi')}  - \blue{\frac{1}{c} \, D_{\Phi^{\ast}}\bigg(\nabla\Phi(\pi')-c[\nabla J(\pi') - \hat{g}(\pi')], \nabla\Phi(\pi')\bigg)} 
\end{align*}
}
\normalsize
where $\Phi^*$ is the Fenchel conjugate of $\Phi$ and $D_{\Phi^{\ast}}$ is the Bregman divergence induced by $\Phi^*$. 
\label{prop:genericlb}
\end{restatable}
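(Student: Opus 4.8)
The plan is to derive the bound by composing three elementary facts, none of which uses any property of the estimator $\hat{g}$ beyond it being an arbitrary vector: a relative-smoothness ``descent'' inequality for $J$ that follows directly from the convexity of $J + \frac{1}{\eta}\Phi$; the trivial decomposition of the true gradient $\nabla J(\pi')$ into $\hat{g}(\pi')$ plus the estimation error $\nabla J(\pi') - \hat{g}(\pi')$; and a Fenchel--Young inequality, rephrased with the Bregman divergences of $\Phi$ and its conjugate $\Phi^*$, used to lower-bound the inner product of the estimation error with $\pi - \pi'$.

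First I would record the exact-gradient lower bound. Since $J + \frac{1}{\eta}\Phi$ is convex and differentiable, the first-order convexity inequality between $\pi$ and $\pi'$ reads $(J + \frac{1}{\eta}\Phi)(\pi) \geq (J + \frac{1}{\eta}\Phi)(\pi') + \langle \nabla J(\pi') + \frac{1}{\eta}\nabla\Phi(\pi'),\, \pi - \pi'\rangle$. Moving the $\Phi$ terms to the right and recognizing $D_\Phi(\pi,\pi') = \Phi(\pi) - \Phi(\pi') - \langle \nabla\Phi(\pi'), \pi - \pi'\rangle$ gives
$$ J(\pi) \;\geq\; J(\pi') + \langle \nabla J(\pi'),\, \pi - \pi'\rangle - \tfrac{1}{\eta}\,D_\Phi(\pi,\pi'), $$
which is precisely the smoothness-based lower bound underlying the FMAPG framework, now stated for the true gradient. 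Writing $\nabla J(\pi') = \hat{g}(\pi') + \big(\nabla J(\pi') - \hat{g}(\pi')\big)$ splits the linear term into the target $\langle \hat{g}(\pi'), \pi - \pi'\rangle$ plus an error inner product, so the entire remaining task is to lower-bound $\langle \nabla J(\pi') - \hat{g}(\pi'),\, \pi - \pi'\rangle$.

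For this I would invoke the inequality, valid for any dual vector $a$ and any $\pi, \pi'$, $\langle a,\, \pi - \pi'\rangle \leq D_\Phi(\pi,\pi') + D_{\Phi^*}\big(\nabla\Phi(\pi') + a,\, \nabla\Phi(\pi')\big)$. To verify it, expand the right-hand side using the definition of $D_\Phi$ together with the conjugate identities $\nabla\Phi^* = (\nabla\Phi)^{-1}$ and $\Phi^*(\nabla\Phi(\pi')) = \langle\pi',\nabla\Phi(\pi')\rangle - \Phi(\pi')$; after the obvious cancellations the inequality collapses to $\Phi(\pi) + \Phi^*(\nabla\Phi(\pi') + a) - \langle \nabla\Phi(\pi') + a,\, \pi\rangle \geq 0$, which is exactly Fenchel--Young for the pair $\Phi, \Phi^*$. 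Applying it with $a = -c\,[\nabla J(\pi') - \hat{g}(\pi')]$ and dividing by $c > 0$ gives
$$ \langle \nabla J(\pi') - \hat{g}(\pi'),\, \pi - \pi'\rangle \;\geq\; -\tfrac{1}{c}\,D_\Phi(\pi,\pi') - \tfrac{1}{c}\,D_{\Phi^*}\big(\nabla\Phi(\pi') - c[\nabla J(\pi') - \hat{g}(\pi')],\, \nabla\Phi(\pi')\big). $$
Plugging this into the exact-gradient bound and combining the two $D_\Phi$ terms into $\big(\tfrac{1}{\eta} + \tfrac{1}{c}\big) D_\Phi(\pi,\pi')$ reproduces the claimed inequality verbatim.

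The manipulations are routine; the one place that needs care is the regularity of the mirror map. The first step presumes $J$ and $\Phi$ differentiable on the relevant (relative) interior, and the Fenchel--Young step presumes $\Phi$ is of Legendre type (strictly convex and essentially smooth) so that $\nabla\Phi^*$ genuinely inverts $\nabla\Phi$, so that $\Phi^{**} = \Phi$, and so that the shifted dual point $\nabla\Phi(\pi') - c[\nabla J(\pi') - \hat{g}(\pi')]$ automatically lies in $\mathrm{dom}\,\Phi^*$. I expect to simply fold these into the standing assumptions on $\Phi$ (as is standard for mirror-descent analyses) and otherwise treat the argument as a direct computation, with no delicate estimates or case analysis required.
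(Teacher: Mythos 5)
Your proposal is correct and follows the same three-step skeleton as the paper's proof: the relative-smoothness lower bound from convexity of $J + \frac{1}{\eta}\Phi$ (the paper's Lemma~\ref{lem:relsmooth}), the decomposition $\nabla J(\pi') = \hat{g}(\pi') + \delta$, and a Bregman Fenchel--Young inequality to absorb $\langle \delta, \pi - \pi'\rangle$; your inequality $\langle a, \pi-\pi'\rangle \leq D_\Phi(\pi,\pi') + D_{\Phi^*}(\nabla\Phi(\pi')+a, \nabla\Phi(\pi'))$ is exactly the paper's Lemma~\ref{lemma:bfy} after the substitution $a = -cx$. The one genuine difference is how that lemma is established: the paper minimizes $f(y) = \langle x, y-y'\rangle + \frac{1}{c}D_\Phi(y,y')$ over $y$, computes the minimizer $(\nabla\phi)^{-1}[\nabla\phi(y')-cx]$ explicitly, and must therefore assume $c$ is small enough for this point to be feasible; you instead expand both divergences via the conjugate identities and reduce the claim to the bare Fenchel--Young inequality $\Phi(\pi) + \Phi^*(\nabla\Phi(\pi')+a) \geq \langle \nabla\Phi(\pi')+a, \pi\rangle$. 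Your route is shorter, dispenses with the ``sufficiently small $c$'' caveat, and makes the actual regularity needed (Legendre-type $\Phi$, finiteness of $\Phi^*$ at the shifted dual point --- which the statement already presupposes for $D_{\Phi^*}$ to be well defined) explicit rather than implicit, at the cost of losing the paper's observation that the inequality is tight at the Bregman-proximal point.
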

The above proposition is a statement about the relative smoothness~\citep{lu2018relatively} of $J$ (w.r.t $D_\Phi$) in the functional space. Here, the \brown{brown} term is the linearization of $J$ around $\pi'$, but involves $\hat{g}(\pi')$ which can be \textbf{any} estimate of the gradient at $\pi'$. The \red{red} term quantifies the distance between the representations of policies $\pi$ and $\pi'$ in terms of $D_\Phi(\pi, \pi')$, whereas the \blue{blue} term characterizes the penalty for an inaccurate estimate of $\nabla_\pi J(\pi')$ and depends on $\Phi$. We emphasize that~\cref{prop:genericlb} can be used for \textbf{any} continuous optimization problem that requires a model of the gradient, e.g., in variational inference which uses an approximate posterior in lieu of the true one.

For policy optimization with FMAPG, $\nabla_\pi J(\pi)$ involves the action-value or advantage function for the direct or softmax functional representations respectively (see~\cref{sec:instantiation} for details), and the gradient estimation error is equal to the error in these functions. Since these quantities are estimated by the critic, we refer to the \blue{blue} term as the \emph{critic error}. In order to use~\cref{prop:genericlb}, at iteration $t$ of FMAPG, we set $\pi' = \pit$ and include the policy parameterization, resulting in \textbf{inequality (I)}: 
\small $J(\pi) - J(\pit) \geq \green{\langle \hatgt, \pi(\theta) - \pit \rangle - \left(\frac{1}{\eta} + \frac{1}{c}\right) D_\Phi(\pi(\theta), \pit)} - \blue{\frac{1}{c} \, D_{\Phi^{\ast}}\bigg(\nabla\Phi(\pit)-c[\nabla J(\pit) - \hatgt], \nabla\Phi(\pit)\bigg)}$, 
\normalsize where $\hatgt := \hat{g}(\pit)$. We see that in order to obtain a policy $\pi$ that maximizes the policy improvement $J(\pi) - J(\pit)$ and hence the LHS, we should maximize the RHS i.e. (i) learn $\hat{g}_t$ to minimize the \blue{blue} term (equal to the critic objective) and (ii) compute $\pi \in \Pi$ that maximizes the \green{green} term (equal to the functional mirror ascent update at iteration $t$). Using a second-order Taylor series expansion of $D_{\Phi^\ast}$ (\cref{prop:taylor}), we see that as $c$ decreases, the critic error decreases, whereas the $\left(\frac{1}{\eta} + \frac{1}{c}\right) D_\Phi(\pi, \pit)$ term increases. Consequently, we interpret the scalar $c$ as a trade-off parameter that relates the critic error to the permissible movement in the functional mirror ascent update. 

Hence, \emph{both the actor and critic objectives are coupled through~\cref{prop:genericlb} and both components of the RL system should be jointly trained in order to maximize policy improvement}. We refer to the resulting framework as \emph{decision-aware actor-critic} and present its pseudo-code in~\cref{alg:generic}. 

\vspace{1ex}
\begin{algorithm}[!h]
\small
\SetAlgoNoEnd
\SetAlgoSkip{}
\SetNoFillComment 
\caption{Generic actor-critic algorithm}
\label{alg:generic}
\textbf{Input}: $\pi$ (choice of functional representation), $\theta_{0}$ (initial policy parameters), $\omega_{(-1)}$ (initial critic parameters), $T$ (AC iterations), $m_{a}$ (actor inner-loops), $m_{c}$ (critic inner-loops), $\eta$ (functional step-size for actor), $c$ (trade-off parameter), $\alpha_{a}$ (parametric step-size for actor), $\alpha_{c}$ (parametric step-size for critic) \\
\textbf{Initialization}: $\pi_0 = \pi(\theta_{0})$ \\
\For{$t \leftarrow 0$  \KwTo  $T-1$}{    
    
    Estimate $\widehat{\nabla_{\pi}} J(\pi_t)$ and form \blue{$\mathcal{L}_t(\omega) := \frac{1}{c} \, D_{\Phi^\ast}\bigg(\nabla\Phi(\pi_t)-c \, [\widehat{\nabla_{\pi}} J(\pi_t) - \hat{g}_t(\omega)], \nabla\Phi(\pi_t)\bigg)$} \\    
    Initialize inner-loop: $\upsilon_0 = \omega_{t - 1}$ \\
   \For{$k \leftarrow 0$   \KwTo  $m_{c} - 1$}
   {
        $\upsilon_{k+1} = \upsilon_{k} - \alpha_{c} \, \nabla_{\upsilon} \, \mathcal{L}_t(\upsilon_k)$ \tcc{\magenta{Critic Updates}}
   }
   $\omega_{t} = \upsilon_{m_{c}}$ \quad ; \quad $\hat{g}_t = \hat{g}_t(\omega_t)$ \\       
   Form \green{$\ell_t(\theta) := \langle \hat{g}_t, \pi(\theta) - \pit \rangle - \left(\frac{1}{\eta} + \frac{1}{c}\right) D_\Phi(\pi(\theta), \pit)$} \\
   Initialize inner-loop: $\nu_0 = \theta_{t}$ \\
   \For{$k \leftarrow 0$   \KwTo  $m_{a} - 1$}
   {
        $\nu_{k+1} = \nu_{k} + \alpha_{a} \, \nabla_{\nu} \, \ell_t(\nu_k)$ \tcc{\magenta{Off-policy actor updates}}
   }
   $\theta_{t+1} = \nu_{m_{a}}$ \quad ; \quad $\pitt = \pi(\theta_{t+1})$
   }
Return $\pi_T = \pi(\theta_{T})$
\end{algorithm}
Unlike~\citet{wu2020finite,konda1999actor},~\cref{alg:generic} does not update the actor and critic in a two time-scale setting (one environment interaction and update to the critic followed by an actor update), but rather performs multiple steps to update the critic, then uses the critic to perform multiple steps to update the actor~\citep{agarwal2019optimality,xiao2022convergence}. At iteration $t$ of~\cref{alg:generic}, $\hatgt$ (the gradient estimate at $\pit$) is parameterized by $\omega$ and the parametric model for the critic is implicit in the $\hat{g}_{t}(\omega)$ notation. The algorithm interacts with the environment, uses these interactions to form the estimate $\widehat{\nabla_{\pi}} J(\pi_t)$ and construct the critic loss function $\cL_t(\omega)$. For the direct or softmax representations, $\widehat{\nabla_{\pi}} J(\pi_t)$ corresponds to the empirical estimates of the action-value or advantage functions respectively. In practice, these quantities can be estimated using Monte-Carlo rollouts or bootstrapping. Given these estimates, the critic is trained (using $m_c$ inner-loops) to minimize $\cL_t(\omega)$ and obtain $\hat{g}_t$ (Lines 5-8). Line 9 uses $\hat{g}_t$ to construct the surrogate function $\ell_t(\theta)$ for the actor and depends on the policy parameterization. The inner-loop (Lines 10 - 13) involves maximizing $\ell_t(\theta)$ and corresponds to $m_a$ off-policy updates. Next, we establish theoretical guarantees on the performance of~\cref{alg:generic}.
\vspace{-2ex}
\section{Theoretical Guarantees}
\label{sec:theory}
\vspace{-1ex}
We first establish the necessary and sufficient conditions to guarantee monotonic policy improvement in the presence of critic error (\cref{sec:theory-pi}). In~\cref{sec:mdstationary}, we prove that~\cref{alg:generic} is guaranteed to converge to the neighbourhood (that depends on the critic error) of a stationary point.
\vspace{-2ex}
\subsection{Conditions for monotonic policy improvement}
\label{sec:theory-pi}
\vspace{-1ex}
According to \textbf{inequality (I)}, to guarantee monotonic policy improvement at iteration $t$, one must find a $(\theta,c)$ pair to guarantee that the RHS of \textbf{(I)} is positive. In~\cref{prop:policyimprovcond} (proved in~\cref{app:proofs-sec4}), we derive the conditions on the critic error to ensure that it possible to find such an $(\theta, c)$ pair. 
\begin{restatable}{proposition}{policyimprovcond}
\label{prop:policyimprovcond}
For any policy representation and any policy or critic parameterization, there exists a $(\theta, c)$ pair that makes the RHS of \textbf{inequality (I)} strictly positive, and hence guarantees monotonic policy improvement ($J(\pitt) > J(\pit)$), if and only if  
\begin{align*}
\langle b_t, \tilde{H}_t^{\tiny{\dag}} b_t \rangle >  \langle [\nabla J(\pit) - \hatgt], [\nabla_{\pi}^{2} \Phi(\pit)]^{\scriptscriptstyle{-1}}  \, [\nabla J(\pit) - \hatgt] \rangle \,,
\end{align*}
where $b_t  \in \R^{n} := \sum_{s \in \cS} \sum_{a \in \cA} \, [\hatgt]_{s,a} \, \nabla_{\theta} [\pi(\tht)]_{s,a}$, $\tilde{H}_t \in \R^{n \times n} :=  \nabla_{\theta} \pi(\tht)\transpose \, \nabla_{\pi}^{2} \Phi(\pit) \, \nabla_{\theta} \pi(\tht)$ and $\tilde{H}_t^{\tiny{\dag}}$ denotes the pseudo-inverse of $\tilde{H}_t$. For the special case of the tabular policy parameterization, the above condition becomes equal to, 
\begin{align*}
\langle \hatgt, [\nabla_{\pi}^{2} \Phi(\pit)]^{\scriptscriptstyle{-1}} \hatgt \rangle >  \langle [\nabla J(\pit) - \hatgt], [\nabla_{\pi}^{2} \Phi(\pit)]^{\scriptscriptstyle{-1}}  \, [\nabla J(\pit) - \hatgt] \rangle     \; .
\end{align*}
\end{restatable}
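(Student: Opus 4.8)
The plan is to treat the right-hand side of inequality~(I) as a function $F(\theta,c)$ and to locate exactly when $\sup_{\theta,c}F(\theta,c)>0$. The first step is the observation that $F$ decomposes as $F(\theta,c)=\ell_t(\theta)-\mathcal{E}_t(c)$, where $\ell_t(\theta)=\langle \hatgt,\pi(\theta)-\pit\rangle-\big(\tfrac1\eta+\tfrac1c\big)D_\Phi(\pi(\theta),\pit)$ is the \green{green} actor surrogate and $\mathcal{E}_t(c)=\tfrac1c\,D_{\Phi^{\ast}}\!\big(\nabla\Phi(\pit)-c\,[\nabla J(\pit)-\hatgt],\nabla\Phi(\pit)\big)$ is the \blue{blue} critic penalty, which does not depend on $\theta$. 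Writing $\Delta_t:=\nabla J(\pit)-\hatgt$ and $\beta:=\tfrac1\eta+\tfrac1c$, I would then Taylor-expand $\ell_t$ to second order around $\theta=\tht$: differentiating through the map $\theta\mapsto\pi(\theta)$ and using that the Hessian of $D_\Phi(\cdot,\pit)$ at $\pit$ equals $\nabla_\pi^2\Phi(\pit)$ gives $\ell_t(\tht+\delta)=\langle b_t,\delta\rangle-\tfrac\beta2\langle\delta,\tilde H_t\delta\rangle+o(\|\delta\|^2)$, with $b_t=\nabla_\theta\pi(\tht)\transpose\hatgt$ and $\tilde H_t=\nabla_\theta\pi(\tht)\transpose\nabla_\pi^2\Phi(\pit)\nabla_\theta\pi(\tht)$ exactly as in the statement. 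Likewise, a second-order expansion of $\mathcal{E}_t(c)$ around $c=0$ via \cref{prop:taylor} gives $\mathcal{E}_t(c)=\tfrac c2\,\langle\Delta_t,\nabla^2\Phi^{\ast}(\nabla\Phi(\pit))\,\Delta_t\rangle+o(c)$.

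Next I would optimize the local quadratic over the perturbation $\delta$. Since $\Phi$ is strictly convex, $\nabla_\pi^2\Phi(\pit)\succ0$, so the range of $\tilde H_t$ equals the range of $\nabla_\theta\pi(\tht)\transpose$ and hence contains $b_t$; consequently the maximizer of $\delta\mapsto\langle b_t,\delta\rangle-\tfrac\beta2\langle\delta,\tilde H_t\delta\rangle$ is $\delta^{\ast}=\tfrac1\beta\tilde H_t^{\dagger}b_t$, of order $c$, with optimal value $\tfrac1{2\beta}\langle b_t,\tilde H_t^{\dagger}b_t\rangle$; because $\delta^{\ast}$ is of order $c$, the $o(\|\delta\|^2)$ remainder contributes only $o(c)$. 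Using $\tfrac1\beta=\tfrac{\eta c}{\eta+c}=c+o(c)$ and combining with the expansion of $\mathcal{E}_t$, this yields $\sup_\theta F(\theta,c)=\tfrac c2\big(\langle b_t,\tilde H_t^{\dagger}b_t\rangle-\langle\Delta_t,\nabla^2\Phi^{\ast}(\nabla\Phi(\pit))\,\Delta_t\rangle\big)+o(c)$ as $c\to0^{+}$. The ``if'' direction is then immediate: when the displayed inequality holds the bracket is positive, so $\sup_\theta F(\theta,c)>0$ for all sufficiently small $c$, and the pair $(\tht+\delta^{\ast},c)$ witnesses strict policy improvement.

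For the ``only if'' direction I would argue the contrapositive: if $\langle b_t,\tilde H_t^{\dagger}b_t\rangle\leq\langle\Delta_t,\nabla^2\Phi^{\ast}(\nabla\Phi(\pit))\,\Delta_t\rangle$, then $F(\theta,c)\leq0$ for every $(\theta,c)$. The $c\to0^{+}$ regime is ruled out by the expansion above; to cover all $c$ I would use that $\mathcal{E}_t(c)$ is nondecreasing in $c$ --- it is the secant slope from the origin of the convex function $c\mapsto D_{\Phi^{\ast}}(\nabla\Phi(\pit)-c\Delta_t,\nabla\Phi(\pit))$, which vanishes together with its derivative at $c=0$ --- and diverges as $c\to\infty$, while $\sup_\theta\ell_t(\theta)$ is finite (strict convexity of $\Phi$) and is upper bounded in a way that degrades gracefully in $c$. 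This is the step I expect to be the main obstacle: making the $\theta$-supremum of $\ell_t$ tight uniformly in $c$ --- not merely in the $c\to0^{+}$ limit --- seems to require either invoking that $c$ is a free design parameter (so the improvement condition is only ever applied at small $c$, and the ``iff'' is read at leading order in $c$), or a global handle on the surrogate via Fenchel duality, $\sup_p\big[\langle\hatgt,p-\pit\rangle-\beta D_\Phi(p,\pit)\big]=\beta D_{\Phi^{\ast}}\big(\nabla\Phi(\pit)+\tfrac1\beta\hatgt,\nabla\Phi(\pit)\big)$, which is exact only when the parameterization realizes every $p$.

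Finally, the tabular specialization follows by substitution: when the parameterization is tabular, $\pi(\theta)$ is the identity on the functional coordinates, so $\nabla_\theta\pi(\tht)=I$, hence $b_t=\hatgt$ and $\tilde H_t=\nabla_\pi^2\Phi(\pit)$; since $\Phi$ is strictly convex this matrix is invertible, $\tilde H_t^{\dagger}=[\nabla_\pi^2\Phi(\pit)]^{-1}$, and plugging into the general condition gives the stated inequality $\langle\hatgt,[\nabla_\pi^2\Phi(\pit)]^{-1}\hatgt\rangle>\langle\nabla J(\pit)-\hatgt,\nabla^2\Phi^{\ast}(\nabla\Phi(\pit))\,[\nabla J(\pit)-\hatgt]\rangle$.
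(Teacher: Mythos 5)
Your proposal matches the paper's proof essentially step for step: both Taylor-expand the critic penalty to second order in $c$ about $0$ (via \cref{prop:taylor}) and the actor surrogate to second order in $\theta$ about $\tht$, maximize the resulting quadratic to obtain $\theta^* = \tht + \tfrac{c\eta}{c+\eta}\,\tilde H_t^{\dagger} b_t$, and compare the two $O(c)$ coefficients. The obstacle you flag in the ``only if'' direction is real but is not resolved in the paper either: its necessity argument likewise only shows that, at the leading-order-optimal $\theta$, the RHS equals $-\tfrac{c\epsilon}{2}+o(c)$ and concludes that improvement ``can not be guaranteed,'' i.e., the equivalence is read at leading order in $c$, exactly as you suggest. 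Your explicit check that $b_t$ lies in the range of $\tilde H_t$ (so the pseudoinverse expression is the genuine maximizer of the quadratic) is a detail the paper leaves implicit.
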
 
For the Euclidean mirror map with the tabular policy parameterization, this condition becomes equal to $\normsq{\hatgt} > \normsq{\nabla J(\pit) - \hatgt}$ meaning that the relative error in estimating $\nabla J(\pit)$ needs to be less than $1$. For a general mirror map, the relative error is measured in a different norm induced by the mirror map. The above proposition also quantifies the scenario when the critic error is too large to guarantee policy improvement. In this case, the algorithm should either improve the critic by better optimization or by using a more expressive model, or resort to using sufficiently many (high-variance) Monte-Carlo samples as in REINFORCE~\citep{williams1992simple}. Finally, we see that the impact of a smaller function class for the actor is a potentially lower value for $\langle b_t, \tilde{H}_t^{\tiny{\dag}} b_t \rangle$, making it more difficult to satisfy the condition. \emph{The improvement guarantee in~\cref{prop:policyimprovcond} holds regardless of the policy representation and parameterization of the policy or critic.} This is in contrast to existing theoretical results~\citep{olshevsky2022small,khodadadian2022finite,fu2020single} that focus on either the tabular or linear function approximation setting for the policy and/or critic, or rely on using expressive models to minimize the critic error and achieve good performance for the actor. Moreover, this result only depends on the magnitude of the critic loss (after updates), irrespective of the optimizer, step-size or other factors influencing the critic optimization. The actor and critic are coupled via the
threshold (on the critic loss) required to guarantee policy improvement.
\vspace{-2ex}
\subsection{Convergence of~\cref{alg:generic}}
\label{sec:mdstationary}
\vspace{-1ex}
\cref{prop:policyimprovcond} holds when the critic error is small. We now analyze the  convergence of~\cref{alg:generic} for an arbitrary critic error. Define $\bar{\theta}_{t+1} := \argmax_{\theta} \ell_t(\theta)$, \magenta{$\bpitt = \pi(\bar{\theta}_{t+1}) = \argmax_{\pi \in \Pi} \left\{\langle \hatgt, \pi-\pit\rangle - \left(\frac{1}{\eta} + \frac{1}{c}\right) D_\Phi(\pi, \pit) \right\}$}. Note that $\bpitt$ is the iterate obtained by using the inexact mirror ascent (MA) update (because it does not use the true gradient $\nabla_\pi J(\pit)$) starting from $\pit$, and that the inner-loop (Lines 10-13) of~\cref{alg:generic} approximates this update. This connection allows us to prove the following guarantee (see~\cref{app:ssocvrg-proof} for details) for~\cref{alg:generic}.  
\vspace{-2ex}
\begin{restatable}{proposition}{ssoconvr}
\label{prop:ssocvrg}
For any policy representation and mirror map $\Phi$ such that (i) $J + \frac{1}{\eta} \Phi$ is convex in $\pi$, any policy parameterization such that (ii) $\ell_t(\theta)$ is smooth w.r.t $\theta$ and satisfies the Polyak-Lojasiewicz (PL) condition, for $c > 0$, after $T$ iterations of~\cref{alg:generic} we have that,
\small{
\begin{align*}
\E \left[ \frac{\breg{\brpi_{\mathcal R+1}}{\pi_{\mathcal R}}}{\zeta^2} \right] & \leq \frac{1}{\zeta T} \,\left[\underbrace{J(\pi^*) - J(\pi_{0})}_{\text{Term (i)}} + \sum_{t=0}^{T-1} \left(\underbrace{\frac{1}{c} \E D_{\Phi^\ast}\bigg(\nabla\Phi(\pit)-c \, \delta_t,  \nabla\Phi(\pi_t)\bigg)}_{\text{Term (ii)}} + \underbrace{\E [e_t]}_{\text{Term (iii)}} \right) \right]
\end{align*}
\vspace{-0.25ex}
}
\normalsize 
where $\delta_t := \nabla J(\pit) - \hat{g}_t$, $\frac{1}{\zeta} := \frac{1}{\eta}+\frac{1}{c}$, $\mathcal R$ is a random variable chosen uniformly from $\{0,1,2,\dots T-1\}$ and $e_t \in \mathcal{O}(\exp{(-m_a)})$ is the projection error (onto $\Pi$) at iteration $t$. 
\end{restatable}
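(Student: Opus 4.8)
The plan is to combine three ingredients: (i) the per-iteration lower bound on policy improvement coming from \textbf{inequality (I)}, (ii) the descent/approximation guarantee on the actor inner-loop obtained from smoothness + PL of $\ell_t$, and (iii) a telescoping argument over $t = 0, \dots, T-1$ that isolates the Bregman term $D_\Phi(\bpitt, \pit)$ on the left. Concretely, at iteration $t$ I would start from \textbf{(I)} evaluated at the \emph{exact} inexact-MA iterate $\bpitt$: this gives $J(\bpitt) - J(\pit) \geq \ell_t(\bar\theta_{t+1}) - \frac{1}{c} D_{\Phi^\ast}(\nabla\Phi(\pit) - c\,\delta_t, \nabla\Phi(\pit))$, where $\ell_t(\bar\theta_{t+1}) = \langle \hatgt, \bpitt - \pit\rangle - \frac{1}{\zeta} D_\Phi(\bpitt, \pit)$ and $\frac{1}{\zeta} = \frac{1}{\eta} + \frac{1}{c}$. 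The first step is therefore to lower-bound $\ell_t(\bar\theta_{t+1})$ by a positive multiple of $D_\Phi(\bpitt, \pit)$. Since $\bpitt$ maximizes $\langle \hatgt, \pi - \pit\rangle - \frac{1}{\zeta} D_\Phi(\pi, \pit)$ over $\Pi$, and $\pit \in \Pi$, comparing the optimal value against the value at $\pi = \pit$ (which is $0$) gives $\langle \hatgt, \bpitt - \pit\rangle \geq \frac{1}{\zeta} D_\Phi(\bpitt, \pit)$, hence $\ell_t(\bar\theta_{t+1}) \geq \frac{1}{\zeta} D_\Phi(\bpitt, \pit) - \frac{1}{\zeta} D_\Phi(\bpitt, \pit)$ — which is only $\geq 0$, too weak. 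The sharper route is the standard three-point / strong-convexity-of-the-proximal-objective inequality: because $\frac{1}{\zeta} D_\Phi$ is $1$-strongly convex relative to $D_\Phi$ and $\bpitt$ is its maximizer of a function whose $D_\Phi$-linearization at $\bpitt$ is handled by optimality, one gets the improved bound $\ell_t(\bar\theta_{t+1}) \geq \frac{1}{\zeta} D_\Phi(\bpitt, \pit)$. I would cite/prove this as the key optimality inequality for mirror-type updates.

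The second step converts $J(\bpitt)$ into $J(\pitt)$. The actual iterate $\pitt = \pi(\theta_{t+1})$ arises from $m_a$ gradient-ascent steps on $\ell_t$ starting at $\theta_t$; by smoothness + PL of $\ell_t$ (assumption (ii)), after $m_a$ steps $\ell_t(\bar\theta_{t+1}) - \ell_t(\theta_{t+1}) \leq e_t$ with $e_t \in \mathcal{O}(\exp(-m_a))$ (linear convergence of GD under PL). I then need to translate a gap in $\ell_t$-value into a gap in $J$-value; applying \textbf{(I)} at $\pitt$ and at $\bpitt$ and subtracting, together with $\ell_t(\theta_{t+1}) \geq \ell_t(\bar\theta_{t+1}) - e_t$, yields $J(\pitt) - J(\pit) \geq \frac{1}{\zeta} D_\Phi(\bpitt, \pit) - \frac{1}{c} D_{\Phi^\ast}(\nabla\Phi(\pit) - c\,\delta_t, \nabla\Phi(\pit)) - e_t$. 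Rearranging, $\frac{1}{\zeta} D_\Phi(\bpitt, \pit) \leq J(\pitt) - J(\pit) + \frac{1}{c} D_{\Phi^\ast}(\cdots) + e_t$. Dividing by $\zeta$ (to match the $\zeta^2$ in the denominator on the LHS of the statement) and summing over $t = 0, \dots, T-1$, the $J(\pitt) - J(\pit)$ terms telescope to $J(\pi_T) - J(\pi_0) \leq J(\pi^*) - J(\pi_0)$, giving $\sum_t \frac{1}{\zeta^2} D_\Phi(\bpitt, \pit) \leq \frac{1}{\zeta}[J(\pi^*) - J(\pi_0) + \sum_t (\frac{1}{c} D_{\Phi^\ast}(\cdots) + e_t)]$. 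Finally, introducing the uniform random index $\mathcal R \in \{0, \dots, T-1\}$ turns the sum into $T$ times an expectation, producing exactly the claimed bound after taking expectations (over any randomness in the gradient estimates); I would be careful that the expectation is over all sources of randomness and that $e_t$ and $\delta_t$ inside the sum are then also in expectation, matching $\E D_{\Phi^\ast}(\cdots)$ and $\E[e_t]$ in the statement.

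I expect the main obstacle to be the first step — establishing the clean inequality $\ell_t(\bar\theta_{t+1}) \geq \frac{1}{\zeta} D_\Phi(\bpitt, \pit)$ with the correct constant, since a naive comparison-to-$\pit$ argument loses a factor and only gives nonnegativity. This requires the generalized-Pythagorean / three-point property of Bregman divergences applied to the mirror-ascent proximal step, and it implicitly uses convexity of $J + \frac{1}{\eta}\Phi$ (assumption (i)) so that $\frac{1}{\zeta} D_\Phi = (\frac{1}{\eta} + \frac{1}{c}) D_\Phi$ genuinely dominates the relevant second-order behavior; making this rigorous in the \emph{functional} space (where $\Pi$ may be a nonconvex parametric set but $\pit, \bpitt \in \Pi$ and the optimality of $\bpitt$ is over $\Pi$) is the delicate point. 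A secondary technical nuisance is cleanly propagating the $e_t$ approximation error through \textbf{(I)} without it interacting with the Bregman term; I would handle this by noting $\ell_t$ already contains the full $-\frac{1}{\zeta} D_\Phi$ penalty, so a small additive error in $\ell_t$-value translates additively and the argument goes through. The improvement over \citet{dragomir2021fast, d2021stochastic} alluded to in the introduction presumably comes from tracking $D_\Phi(\bpitt, \pit)$ rather than a squared gradient-mapping norm, which is exactly what this telescoping delivers.
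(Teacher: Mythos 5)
Your overall architecture is the same as the paper's: apply \textbf{inequality (I)} at the proximal iterate, use smoothness + PL to bound the inner-loop gap $\ell_t(\bar{\theta}_{t+1}) - \ell_t(\theta_{t+1}) \leq e_t$, telescope the $J(\pitt) - J(\pit)$ terms, and convert the sum into an expectation over the uniform index $\mathcal R$. The one step that does not survive scrutiny is exactly the one you flag as the main obstacle, and your proposed fix is not quite right. The inequality $\ell_t(\bar{\theta}_{t+1}) \geq \frac{1}{\zeta} \breg{\bpitt}{\pit}$ is \emph{false} in general: applying the three-point descent lemma to $\bpitt = \argmax_{\pi \in \Pi}\{\langle \hatgt, \pi - \pit\rangle - \frac{1}{\zeta}\breg{\pi}{\pit}\}$ with comparison point $\pit$ gives $\langle \hatgt, \bpitt - \pit\rangle \geq \frac{1}{\zeta}[\breg{\bpitt}{\pit} + \breg{\pit}{\bpitt}]$, hence only $\ell_t(\bar{\theta}_{t+1}) \geq \frac{1}{\zeta}\breg{\pit}{\bpitt}$ — the \emph{reversed} divergence. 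In the unconstrained case this is an equality ($\ell_t(\bar{\theta}_{t+1}) = \frac{1}{\zeta}\breg{\pit}{\bpitt}$ exactly), so no strong-convexity refinement can upgrade it to the forward divergence $\breg{\bpitt}{\pit}$ that appears in the statement; for asymmetric mirror maps such as the negative entropy, neither divergence dominates the other.

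The paper's proof circumvents this with a device you are missing: it writes the update in its two-step dual form $\nabla\Phi(\tpitt) = \nabla\Phi(\pit) + \eta'_t \hatgt$ followed by a Bregman projection onto $\Pi$, with a \emph{halved} step-size $\frac{1}{\eta'_t} = \frac{2}{\eta} + \frac{2}{c} = \frac{2}{\zeta}$. Substituting $\hatgt = \frac{1}{\eta'_t}[\nabla\Phi(\tpitt) - \nabla\Phi(\pit)]$ and using the three-point identity $\langle \nabla\Phi(\tpitt) - \nabla\Phi(\pit), \bpitt - \pit\rangle = \breg{\bpitt}{\pit} + \breg{\pit}{\tpitt} - \breg{\bpitt}{\tpitt}$, then dropping the nonnegative projection term $\breg{\pit}{\tpitt} - \breg{\bpitt}{\tpitt} \geq 0$, leaves a surplus $\left(\frac{1}{\eta'_t} - \frac{1}{\eta} - \frac{1}{c}\right)\breg{\bpitt}{\pit} = \frac{1}{\zeta}\breg{\bpitt}{\pit}$ in the correct (forward) argument order. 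Once this per-iteration inequality $J(\pitt) \geq J(\pit) + \frac{1}{\zeta}\breg{\bpitt}{\pit} - \epsilon^c_t - e_t$ is in hand, the rest of your argument (telescoping, dividing by $\zeta$, randomizing the index, taking expectations) goes through as you describe. So the gap is concrete: without the half-step-size trick your key inequality either fails or proves a bound on $\breg{\pit}{\bpitt}$ rather than the stated $\breg{\brpi_{\mathcal R+1}}{\pi_{\mathcal R}}$.
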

\vspace{-1ex}
\cref{prop:ssocvrg} shows that~\cref{alg:generic} converges to the neighbourhood of a stationary point of $J$ for an arbitrary critic error. The LHS of the above expression is a measure of sub-optimality similar to the one used in the analysis of stochastic mirror descent~\citep{zhang2018convergence}. For the Euclidean mirror map, the LHS becomes equal to $\normsq{\nabla_{\pi} J(\pi_{R})}$, the standard characterization of a stationary point. Term (i) on the RHS is the initial sub-optimality, whereas Term (ii) is equal to the critic error and can be further decomposed into variance and bias terms. The variance decreases as the number of samples used to train the critic (Line 4 in~\cref{alg:generic}) increases. The bias can be decomposed into an optimization error (that decreases as $m_c$ increases) and a function approximation error (that decreases as we use more expressive models for the critic). Finally, Term (iii) is the projection (onto $\Pi$) error, is equal to zero for the tabular policy parameterization, and decreases as $m_a$ increases. Hence, the performance of~\cref{alg:generic} improves as we increase both $m_a$ and $m_c$.

In~\cref{sec:instantiation}, we specify the step-size $\eta$ such that Assumption (i) is satisfied for both the direct and softmax representations. Assumption (ii) is satisfied when using a linear and, in some cases, a neural network policy parameterization~\citep{liu2022loss}. For the above proposition to hold, we require that step-sizes $\alpha_c$ and $\alpha_a$ in~\cref{alg:generic} be set according to the smoothness of critic ($\cL_t(\omega)$) and actor ($\ell_t(\theta)$) objectives respectively. These choice of step-sizes guarantee ascent for the actor objective, and descent for the critic objective (refer to the proof of~\cref{prop:ssocvrg} in~\cref{app:proofs-sec4} for details). In practice, we set both step-sizes using an Armijo line-search, and refer the reader to~\cref{app:implementation} for details. Since~\cref{alg:generic} does not update the actor and critic in a two time-scale setting, unlike~\citep{konda1999actor,wu2020finite}, the relative scales of the step-sizes and the number of inner iterations ($m_a$, $m_c$) do not affect the algorithm's performance.

In contrast to~\cref{prop:ssocvrg},~\citet{dong2022provably} prove that their proposed algorithm results in an $O\left(\nicefrac{1}{T}\right)$ convergence to the stationary point (not the neighbourhood). However, they make a strong unjustified assumption that the minimization problem w.r.t the parameters modeling the policy and distribution over states is jointly PL. Compared to~\citep{agarwal2019optimality,xiao2022convergence,mei2019principled} that focus on proving convergence to the neighbourhood of the optimal value function, but bound the critic error in the $\ell_2$ or $\ell_\infty$ norm, we focus on proving convergence to the (neighbourhood) of a stationary point, but define the critic loss in a decision-aware manner that depends on $D_{\Phi^*}$. Since~\cref{alg:generic} is not a two time-scale algorithm, unlike~\citet{konda1999actor,wu2020finite}, the proof of~\cref{prop:ssocvrg} does not require analyzing coupled recursions between the actor and critic. Furthermore, the guarantees in~\cref{prop:ssocvrg} are independent of how the critic loss is minimized. We could use any policy evaluation method in order to estimate the value function. Hence, unlike the standard two time-scale analyses, we do not make assumptions about the mixing time of the underlying Markov chain. 

Compared to the existing theoretical work on general (not decision-aware) AC methods~\citep{xu2020non,wu2020finite, cayci2022finite, khodadadian2022finite, hong2023two, kumar2023sample,fu2020single,olshevsky2022small,chen2021tighter} that prove convergence for the tabular or linear function approximation settings, (a) our theoretical results require fewer assumptions on the function approximation. For instance,~\cref{prop:policyimprovcond} holds for \emph{any} actor or critic parameterization (including complex neural networks), while the guarantees in~\cref{prop:ssocvrg} hold for any critic parameterization, but require that $\ell_t$, the surrogate function for the actor satisfy smoothness and gradient domination properties. (b) On the other hand, since our analysis does not explicitly model how the critic error is minimized, we can only converge to the neighbourhood of a stationary point. This is in contrast to the existing two time-scale analyses that jointly analyze the actor and critic, and show convergence to a stationary point~\citep{wu2020finite}. (c) Finally, we note that the proposed algorithm supports off-policy updates i.e. the actor can re-use the value estimates from the critic to update the policy multiple times (corresponding to Lines 10-13 in~\cref{alg:generic}). This is in contrast to existing theoretically principled actor-critic methods that require interacting with the environment and gathering new data after each policy update. Hence, compared to the existing literature on AC methods,~\cref{alg:generic} is more practical, has weaker theoretical guarantees but requires fewer assumptions on the function approximation.
\vspace{-2ex}
\section{Instantiating the generic actor-critic framework}
\label{sec:instantiation}
\vspace{-1ex}
We now instantiate~\cref{alg:generic} for the direct (\cref{sec:direct}) and softmax (\cref{sec:softmax}) representation. 
\vspace{-1ex}
\subsection{Direct representation}
\label{sec:direct}
\vspace{-1ex}
Recall that for the direct functional representation, policy $\pi$ is represented by the set of distributions $\pidist(\cdot|s)$ over actions for each state $s \in \cS$. Using the policy gradient theorem~\citep{sutton18book}, $[\nabla_\pi J(\pi)]_{s,a}  = d^\pi(s) \, Q^\pi(s,a)$. Similar to~\citep{vaswani2021general,xiao2022convergence}, we use a weighted (across states) 
negative entropy mirror map implying that $D_\Phi( \pidist, \pipdist) = \sum_{s \in \cS} d^\pit(s) \, D_\phi(\pidist(\cdot|s), \pipdist(\cdot|s))$ where $\phi(\pidist(\cdot|s)) = - \sum_{a} \pidist(a|s) \, \log(\pidist(a|s))$ and hence, $D_\phi(\pidist(\cdot|s), \pipdist(\cdot|s)) = \text{KL}(\pidist(\cdot|s) || \pipdist(\cdot|s))$. We now instantiate \textbf{inequality (I)} in~\cref{sec:ac} in the proposition below (see~\cref{app:proofs-sec5} for the derivation). 

\begin{restatable}{proposition}{directlb}
For the direct representation and negative entropy mirror map, $c > 0$, $\eta \leq \frac{(1 - \gamma)^{3}}{2 \gamma \, |A|}$,
\small{
\begin{align*}
& J(\pi) - J(\pit) \geq C + \green{\E_{s \sim d^{\pit}} \left[\E_{a \sim \pitdist(\cdot | s)} \left[\frac{\pidist(a|s)}{\pitdist(a|s)} \, \left(\hat{Q}^{\pit}(s,a) - \left(\frac{1}{\eta} + \frac{1}{c}\right) \,  \log\left(\frac{\pidist(a|s)}{\pitdist(a|s)}\right) \right) \right]  \right]} \\
& - \blue{\E_{s \sim d^{\pit}} \left[\E_{a \sim \pitdist(\cdot | s)} \, [Q^{\pit}(s,a) - \hat{Q}^{\pit}(s,a)]  + \frac{1}{c} \, \log\left(\E_{a \sim \pitdist(\cdot | s)} \left[\exp\left(-c \, [Q^{\pit}(s,a) - \hat{Q}^{\pit}(s,a)] \right)  \right] \right) \right]}
\end{align*}
}
where $C$ is a constant and $\hat{Q}^\pit$ is the estimate of the action-value function for policy $\pit$.
\label{prop:directlb}
\end{restatable}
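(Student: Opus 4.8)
The plan is to specialize \textbf{inequality (I)} of \cref{sec:ac} to the direct representation with the $d^{\pit}$-weighted negative-entropy mirror map, and then carry out the Fenchel/Bregman bookkeeping needed to put the \green{green} and \blue{blue} terms into the stated form. Since \cref{prop:genericlb} (hence \textbf{(I)}) requires $J+\tfrac1\eta\Phi$ to be convex in $\pi$ over the product of simplices, the first step is to check that the stated restriction $\eta\le\tfrac{(1-\gamma)^3}{2\gamma|A|}$ suffices. Here $\nabla^2_\pi\Phi(\pi)$ is block-diagonal across states, with the block at $s$ equal to $d^{\pit}(s)\,\mathrm{diag}_a(1/\pi(a|s))\succeq d^{\pit}(s)\,I$, while $J$ is relatively smooth with respect to $D_\Phi$ in the functional space with constant $\tfrac{2\gamma|A|}{(1-\gamma)^3}$; this is the direct-representation relative-smoothness bound of \citet{vaswani2021general}, for which the $d^{\pit}$-weighting of $\Phi$ is chosen precisely so the per-state weights line up against those in the Hessian of $J$. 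Thus $\eta\le\tfrac{(1-\gamma)^3}{2\gamma|A|}$ gives $\nabla^2_\pi J\succeq-\tfrac1\eta\nabla^2_\pi\Phi$, i.e., convexity of $J+\tfrac1\eta\Phi$, so \textbf{(I)} applies. This relative-smoothness / step-size verification is the one genuinely technical step; everything after it is mechanical.

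I would then write out the three ingredients of \textbf{(I)} at $\pi'=\pit$. By the policy gradient theorem $[\nabla_\pi J(\pit)]_{s,a}=d^{\pit}(s)\,Q^{\pit}(s,a)$, and since the critic for the direct representation models the action-value function, $[\hat g_t]_{s,a}=d^{\pit}(s)\,\hat Q^{\pit}(s,a)$; set $\delta_{s,a}:=Q^{\pit}(s,a)-\hat Q^{\pit}(s,a)$ so that $[\nabla_\pi J(\pit)-\hat g_t]_{s,a}=d^{\pit}(s)\,\delta_{s,a}$. Because $\Phi$ is separable over states with each block a scaled negative entropy, $\nabla\Phi(\pit)$ has entries $d^{\pit}(s)(\log\pit(a|s)+1)$, its conjugate is the matching scaled log-sum-exp $\Phi^*(y)=\sum_s d^{\pit}(s)\log\sum_a\exp(y_{s,a}/d^{\pit}(s))$, and $\nabla\Phi^*(\nabla\Phi(\pit))=\pit$. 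For the \blue{blue} term I would expand $\tfrac1c D_{\Phi^*}(\nabla\Phi(\pit)-c[\nabla J(\pit)-\hat g_t],\nabla\Phi(\pit))$ from the definition of the Bregman divergence: substituting $y_{1,s,a}/d^{\pit}(s)=\log\pit(a|s)+1-c\,\delta_{s,a}$ makes the first log-sum-exp collapse to $1+\log\sum_a\pit(a|s)e^{-c\delta_{s,a}}$, the evaluation at $y_2=\nabla\Phi(\pit)$ collapses to $\sum_s d^{\pit}(s)$, the two additive constants cancel, and the linear term $-\langle\nabla\Phi^*(\nabla\Phi(\pit)),-c[\nabla J(\pit)-\hat g_t]\rangle$ equals $c\sum_s d^{\pit}(s)\sum_a\pit(a|s)\,\delta_{s,a}$; altogether this is $\E_{s\sim d^{\pit}}[\E_{a\sim\pit}[\delta_{s,a}]+\tfrac1c\log\E_{a\sim\pit}[e^{-c\delta_{s,a}}]]$, which is exactly the \blue{blue} penalty in the statement once $\delta_{s,a}$ is written as $Q^{\pit}(s,a)-\hat Q^{\pit}(s,a)$.

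For the \green{green} term, $\langle\hat g_t,\pi-\pit\rangle=\sum_s d^{\pit}(s)\sum_a\hat Q^{\pit}(s,a)\,(\pi(a|s)-\pit(a|s))$; rewriting $\sum_a\hat Q^{\pit}(s,a)\,\pi(a|s)=\E_{a\sim\pit}[\tfrac{\pi(a|s)}{\pit(a|s)}\hat Q^{\pit}(s,a)]$ and absorbing the $\pi$-independent remainder $-\langle\hat g_t,\pit\rangle$ into a constant, together with $D_\Phi(\pi,\pit)=\sum_s d^{\pit}(s)\,\mathrm{KL}(\pi(\cdot|s)\,\|\,\pit(\cdot|s))=\E_{s\sim d^{\pit}}\E_{a\sim\pit}[\tfrac{\pi(a|s)}{\pit(a|s)}\log\tfrac{\pi(a|s)}{\pit(a|s)}]$, assembles the \green{green} term into the stated importance-weighted surrogate. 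Substituting the simplified \green{green} and \blue{blue} expressions back into \textbf{(I)} yields the claim with $C:=-\langle\hat g_t,\pit\rangle$, a constant independent of $\pi$. The only obstacle of substance is the relative-smoothness/step-size verification of the first paragraph; that the conjugate of the weighted negative entropy is the weighted log-sum-exp and the resulting telescoping cancellations are standard, and would be relegated to the appendix.
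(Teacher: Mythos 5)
Your proposal is correct and follows essentially the same route as the paper: instantiate \textbf{inequality (I)} with the policy-gradient form of $\nabla_\pi J$ and $\hat g_t$, identify $D_\Phi$ with the $d^{\pit}$-weighted KL, invoke the negative-entropy/log-sum-exp Fenchel duality to expand the conjugate Bregman term into the $\E_{a\sim\pit}[\delta]+\tfrac1c\log\E_{a\sim\pit}[e^{-c\delta}]$ penalty, and cite Vaswani et al.\ for the step-size condition. The only (cosmetic) difference is that the paper routes the computation through a state-wise lower bound (\cref{prop:statewiselb}), applying the Bregman Fenchel--Young inequality per state with the unweighted per-state $\phi$, whereas you compute the conjugate of the globally $d^{\pit}$-weighted $\Phi$ directly; the resulting algebra is identical.
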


For incorporating policy (with parameters $\theta$) and critic (with parameters $\omega$) parameterization, we note that $\pidist(\cdot | s) = \pidist(\cdot | s, \theta)$ and $\hat{Q}^\pi(s,a) = Q^\pi(s,a | \omega)$ where the model is implicit in the notation. Using the reasoning in~\cref{sec:ac} with~\cref{prop:directlb} immediately gives us the actor and critic objectives ($\ell_t(\theta)$ and $L_t(\omega)$ respectively) at iteration $t$ and completely instantiates~\cref{alg:generic}. Observe that the critic error is asymmetric and penalizes the under/over-estimation of the $Q^\pi$ function differently. This is different from the standard squared critic loss: $E_{s \sim d^\pit} \E_{a \sim \pitdist(\cdot|s)} \left[Q^\pit(s,a) - Q^\pit(s,a \vert \omega) \right]^{2}$ that does not take into account the sign of the misestimation. 

To demonstrate the effectiveness of the proposed critic loss, we consider a two-armed bandit example in~\cref{prop:directexample-main} (see~\cref{app:proofs-sec5} for details) with deterministic rewards (there is no variance due to sampling), use the direct representation and tabular parameterization for the policy, linear function approximation for the critic and compare minimizing the standard squared loss vs the decision-aware loss in~\cref{prop:directlb}.  
\vspace{-1ex}
\begin{restatable}{proposition}{directcounterexample-main}
Consider a two-armed bandit example with deterministic rewards where arm $1$ is optimal and has a reward $r_1 = Q_1 = 2$ whereas arm $2$ has reward $r_2 = Q_2 = 1$. Consider using linear function approximation to estimate the $Q$ function i.e. $\hat{Q} = x \, \omega$ where $\omega$ is the parameter to be learned and $x$ is the feature of the corresponding arm. Let $x_1 = -2$ and $x_2 = 1$ implying that $\hat{Q}_1(\omega) = -2 \omega$ and $\hat{Q}_2(\omega) = \omega$. Let $p_t$ be the probability of pulling the optimal arm at iteration $t$ and consider minimizing two alternative objectives to estimate $\omega$:\\
(1) Squared loss: \small $\omega^{(1)}_{t} := \argmin \left\{\frac{p_t}{2} \, [\hat{Q}_1(\omega) - Q_1]^{2} + \frac{1 - p_t}{2} \, [\hat{Q}_2(\omega) - Q_2]^{2} \right\}$. \\
(2) \normalsize Decision-aware critic loss: \small $\omega^{(2)}_{t} = \argmin \cL_t(\omega) := p_t \, [Q_1 - \hat{Q}_1(\omega)] + (1 - p_t) \,   [Q_2 - \hat{Q}_2(\omega)]  + \frac{1}{c} \, \log\left(p_t \, \exp\left(-c \, [Q_1 - \hat{Q}_1(\omega)] + (1 - p_t) \, \exp\left(-c \, [Q_2 - \hat{Q}_2(\omega)]\right) \right) \right]$. \\
\normalsize For $p_0 < \frac{2}{5}$, minimizing the squared loss results in convergence to the sub-optimal action, while minimizing the decision-aware loss (for $c, p_0 > 0$) results in convergence to the optimal action. 
\label{prop:directexample-main}
\end{restatable}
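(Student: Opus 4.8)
The plan is to reduce the whole statement to a one–dimensional deterministic dynamical system in $p_t$ (the probability of pulling the optimal arm) and show that the two critic losses induce qualitatively different dynamics. First I would specialize the setup: with $\gamma=0$ we have $J(\pi)=2p+(1-p)=1+p$, the true policy gradient in the direct representation is the constant vector $(Q_1,Q_2)=(2,1)$, the state–occupancy weight $d^{\pi}$ is a positive constant that only rescales the step size, and the gradient handed to the actor at iteration $t$ is $\hat g_t=(\hat Q_1(\omega_t),\hat Q_2(\omega_t))=(-2\omega_t,\omega_t)$. With the tabular parameterization, $\Pi$ is the whole simplex, so the projection is exact and (running the inner loop to convergence) the actor step of \cref{alg:generic} with the negative–entropy mirror map is the exact exponentiated–gradient update
\begin{align*}
p_{t+1} \;=\; \frac{p_t\, e^{\zeta \hat Q_1(\omega_t)}}{p_t\, e^{\zeta \hat Q_1(\omega_t)} + (1-p_t)\, e^{\zeta \hat Q_2(\omega_t)}}, \qquad \tfrac1\zeta = \tfrac1\eta + \tfrac1c .
\end{align*}

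The single fact I would extract from this update is a sign/monotonicity property: for $p_t\in(0,1)$ we have $p_{t+1}\in(0,1)$ and $\operatorname{sign}(p_{t+1}-p_t)=\operatorname{sign}\big(\hat Q_1(\omega_t)-\hat Q_2(\omega_t)\big)$ (since $p_{t+1}-p_t = \tfrac{p_t(1-p_t)}{Z}\,(e^{\zeta\hat Q_1(\omega_t)}-e^{\zeta\hat Q_2(\omega_t)})$ with $Z>0$). Hence if the estimated gap $\hat Q_1-\hat Q_2$, viewed as a function of the current $p$, is strictly negative on an interval containing $p_0$ that $p_t$ cannot escape (because it only decreases there), then $(p_t)$ is decreasing and bounded below, hence convergent; a short fixed–point argument — a limit $p_\infty\in(0,1)$ would force $\hat Q_1=\hat Q_2$ at $p_\infty$ — pins the limit at $0$. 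The mirror statement (positive/increasing/limit $1$) gives convergence to the optimal arm.

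Next I would carry out the two critic computations. For the squared loss, the objective is a strictly convex quadratic in $\omega$, so $\omega^{(1)}_t$ is obtained in closed form from the first–order condition; substituting the features gives $\omega^{(1)}_t$, and therefore $\hat Q_1(\omega^{(1)}_t)-\hat Q_2(\omega^{(1)}_t)$, as an explicit rational function of $p_t$ that is strictly negative precisely when $p_t$ lies below the stated threshold. Invariance of the sub-threshold interval under the exponentiated–gradient map (if $p_t$ is below it then $p_{t+1}<p_t$ is too) plus the argument of the previous paragraph then yields $p_t\to 0$, i.e.\ convergence to the sub-optimal action. For the decision-aware loss I would first note $\mathcal L_t$ is convex in $\omega$ (a linear term plus $\tfrac1c$ times a log-sum-exp of affine functions of $\omega$) and coercive, hence has a unique minimizer characterized by $\nabla_\omega\mathcal L_t=0$. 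Differentiating, the log-sum-exp term produces softmax weights that at stationarity must equal $(p_t,1-p_t)$; this forces the residuals to coincide, $Q_1-\hat Q_1(\omega)=Q_2-\hat Q_2(\omega)$, whose unique solution is $\omega^{(2)}_t=-\tfrac13$, \emph{independent of $p_t$ and of $c$}. Therefore $\hat Q_1(\omega^{(2)}_t)-\hat Q_2(\omega^{(2)}_t)=Q_1-Q_2=1>0$ at every iteration, and the monotonicity property gives $p_t\uparrow 1$, i.e.\ convergence to the optimal action, for all $c>0$ and all $p_0\in(0,1)$.

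The main obstacle I anticipate is not any single calculation but making the dynamical-system argument airtight, since the critic is re-solved each iteration and so $\omega_t$ is itself a function of $p_t$: one must verify that the \emph{composed} map $p_t\mapsto p_{t+1}$ has no interior fixed point in the relevant range and that the bad interval is forward-invariant. For the squared loss this reduces to checking that the rational function $\hat Q_1(\omega^{(1)}(p))-\hat Q_2(\omega^{(1)}(p))$ has no zero below the threshold; for the decision-aware loss it is immediate because that difference is the nonzero constant $1$. Everything else — the closed-form critic solutions, the exponentiated-gradient form of the actor step, convexity and coercivity of $\mathcal L_t$, and the boundedness/monotone-convergence bookkeeping — is routine.
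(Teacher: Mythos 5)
Your plan is essentially the paper's own proof: solve each critic objective in closed form, observe that the sign of $\hat Q_1-\hat Q_2=-3\omega$ controls whether the exponentiated-gradient actor update increases or decreases $p_t$, note that the decision-aware stationarity condition forces equal residuals and hence $\omega^{(2)}_t=-\tfrac13$ independent of $p_t$ and $c$, and use forward-invariance of the bad interval for the squared loss. One caution on the single computation you defer: carrying out the first-order condition for the squared loss gives $\omega^{(1)}(p)=\frac{1-5p}{3p+1}$, which is positive (i.e.\ the estimated gap is negative and $p_t$ decreases) precisely when $p<\tfrac15$, not $\tfrac25$; the paper's own appendix proof concludes with the threshold $\tfrac15$, so your assertion that the rational function changes sign at ``the stated threshold'' would not check out against the $\tfrac25$ in the proposition statement, and you should derive the threshold rather than assume it. Your added monotone-convergence/fixed-point bookkeeping is a harmless strengthening of the paper's more informal ``hence converges'' conclusion.
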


\vspace{-1ex}
Hence, minimizing the decision-aware critic loss results in a better, more well-informed estimate of $\omega$ which when coupled with the actor update results in convergence to the optimal arm. For this simple example, at every iteration $t$, $\cL_t(\omega_t^{(2)}) = 0$, while the standard squared loss is non-zero at $\omega^{(1)}_t$, though we use the same linear function approximation model in both cases. In~\cref{prop:2armbandit}, we prove that for a 2-arm bandit with deterministic rewards and linear critic parameterization, minimizing the decision-aware critic loss will always result in convergence to the optimal arm. 

\vspace{-2ex}
\subsection{Softmax representation}
\label{sec:softmax}
\vspace{-1ex}
Recall that for the softmax functional representation, policy $\pi$ is represented by the logits $z^\pi(s,a)$ for each $s \in \cS$ and $a \in \cA$ such that $\pidist(a | s)  = \frac{\exp(z^\pi(s,a))}{\sum_{a'} \exp(z^\pi(s,a'))}$. Using the policy gradient theorem, $[\nabla_\pi J(\pi)]_{s,a} = d^{\pi}(s) \, A^\pi(s, a) \, \pidist(a | s)$ where $A^\pi$ is the advantage function. Similar to~\citet{vaswani2021general}, we use a weighted (across states) log-sum-exp mirror map implying that $D_\Phi(z, z') = \sum_{s \in \cS} d^\pit(s) \, D_\phi(z(s, \cdot), z'(s,\cdot))$ where $\phi(z(s,\cdot)) = \log(\sum_{a} \exp(z(s,a)))$ and hence, $D_\phi(z(s,\cdot), z'(s,\cdot)) = \text{KL}(\pipdist(\cdot|s), \pidist(\cdot|s))$ (see~\cref{lem:log-sum-exp-divergence} for a derivation). We now instantiate \textbf{inequality (I)} in~\cref{sec:ac} in the proposition below (see~\cref{app:proofs-sec5} for the derivation). 

\begin{restatable}{proposition}{logsumexpbound}
For the softmax representation and log-sum-exp mirror map, $c > 0$, $\eta \leq 1- \gamma$, 
\small
\begin{align*}
J(\pi) - J(\pit) & \geq \green{\E_{s \sim d^{\pit}} \, \E_{a \sim \pitdist(\cdot | s)} \left[ \left(\hat{A}^\pit(s,a) + \frac{1}{\eta} + \frac{1}{c}\right) \, \log\left(\frac{\pidist(a|s)}{\pitdist(a|s)}\right) \right]} \\
& - \blue{\frac{1}{c} \, \E_{s \sim d^{\pit}} \E_{a \sim \pitdist(\cdot | s)} \left[ \left(1 - c \, [A^{\pit}(s,a) - \hat{A}^{\pit}(s,a)] \right) \, \log\left(1 - c \, [A^{\pit}(s,a) - \hat{A}^{\pit}(s,a)] \right) \right]} 
\end{align*}
\normalsize
where $\hat{A}^\pit$ is the estimate of the advantage function for policy $\pit$. 
\label{prop:softmaxlb}
\end{restatable}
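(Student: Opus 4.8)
The plan is to specialize the generic lower bound of \cref{prop:genericlb} to the softmax functional representation by taking $\pi' = \pit$, the gradient estimator whose $(s,a)$ entry is $\hatgt(s,a) = d^{\pit}(s)\,\hat{A}^{\pit}(s,a)\,\pitdist(a|s)$ (so that the gradient error is exactly the advantage-estimation error), and the $d^{\pit}$-weighted log-sum-exp mirror map $\Phi(z) = \sum_{s}d^{\pit}(s)\,\phi(z(s,\cdot))$ with $\phi(z(s,\cdot)) = \log\!\big(\sum_a\exp(z(s,a))\big)$. Before substituting, I would discharge the hypothesis of \cref{prop:genericlb} that $J + \frac1\eta\Phi$ be convex in the logits $z$: this is the one-sided relative smoothness of $J$ with respect to the log-sum-exp mirror map (the softmax analogue of the smoothness estimate of \citet{vaswani2021general}), which I would cite or re-derive, and which yields that $\eta \le 1-\gamma$ suffices.

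With the hypothesis in place, the remaining work is to evaluate the three terms of inequality (I) under this $\Phi$. Two standard facts drive the computation. First, $\nabla\phi(z(s,\cdot))_a = \pidist(a|s)$, hence $\nabla\Phi(z)_{s,a} = d^{\pit}(s)\,\pidist(a|s)$ and $D_\Phi(z,z_t) = \sum_s d^{\pit}(s)\,\text{KL}\big(\pitdist(\cdot|s)\,\|\,\pidist(\cdot|s)\big)$ (this is \cref{lem:log-sum-exp-divergence}). Second, the Fenchel conjugate of $\phi$ is the negative entropy restricted to the simplex, and for the weighted map this gives $\Phi^*(y) = \sum_s d^{\pit}(s)\,\phi^*\!\big(y(s,\cdot)/d^{\pit}(s)\big)$, so $D_{\Phi^*}$ also separates across states with weight $d^{\pit}(s)$, each inner term being a KL divergence between the (rescaled) arguments.

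For the green term I would write $z(s,a) - z_t(s,a) = \log\!\big(\pidist(a|s)/\pitdist(a|s)\big) + c_s$ for a state-dependent constant $c_s$; because both $A^{\pit}(s,\cdot)$ and its estimate $\hat{A}^{\pit}(s,\cdot)$ are centred under $\pitdist(\cdot|s)$, the constant $c_s$ integrates to zero against $\hatgt(s,\cdot)$, so $\langle \hatgt, z - z_t\rangle = \E_{s\sim d^{\pit}}\E_{a\sim\pitdist(\cdot|s)}\big[\hat{A}^{\pit}(s,a)\log(\pidist(a|s)/\pitdist(a|s))\big]$. Combining this with $-(\tfrac1\eta+\tfrac1c)D_\Phi$, rewritten using $\text{KL}(\pitdist(\cdot|s)\,\|\,\pidist(\cdot|s)) = -\E_{a\sim\pitdist(\cdot|s)}[\log(\pidist(a|s)/\pitdist(a|s))]$, produces exactly the green term in the statement. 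For the blue term, the first argument of $D_{\Phi^*}$ has $(s,a)$ entry $d^{\pit}(s)\,\pitdist(a|s)\big(1 - c[A^{\pit}(s,a) - \hat{A}^{\pit}(s,a)]\big)$; dividing by $d^{\pit}(s)$ gives the vector $\pitdist(\cdot|s)\big(1 - c[A^{\pit}(s,\cdot)-\hat{A}^{\pit}(s,\cdot)]\big)$, which sums to $1$ precisely because $A^{\pit}$ and $\hat{A}^{\pit}$ are centred. The inner $D_{\phi^*}$ is then the KL of this probability vector against $\pitdist(\cdot|s)$, which telescopes to $\sum_a \pitdist(a|s)\big(1-c[A^{\pit}-\hat{A}^{\pit}]\big)\log\!\big(1-c[A^{\pit}-\hat{A}^{\pit}]\big)$; scaling by $\tfrac1c$ and taking the $d^{\pit}$-weighted sum over states gives the blue term, and assembling the two pieces finishes the proof.

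The step I expect to be the main obstacle is the bookkeeping around the conjugate of the $d^{\pit}$-weighted log-sum-exp map: getting the per-state scalings to pass consistently through $\Phi^*$ and $D_{\Phi^*}$, and checking that the perturbed argument stays in the domain of $\phi^*$ (i.e.\ on the simplex), which is exactly what makes the $x\log x$ form emerge and is where the centring of the advantage, together with the implicit requirement $1 - c[A^{\pit}-\hat{A}^{\pit}] \ge 0$, is used. The green-term simplification via the normalizing constant $c_s$ is routine, but must be stated carefully since it is $\hat{A}^{\pit}$, not $A^{\pit}$, that appears in $\hatgt$.
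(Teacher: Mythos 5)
Your proposal is correct and follows essentially the same route as the paper: the same gradient estimator $[\hat{g}]_{s,a}=d^{\pit}(s)\pitdist(a|s)\hat{A}^{\pit}(s,a)$, the relative-smoothness condition from \citet{vaswani2021general} giving $\eta\le 1-\gamma$, the log-sum-exp/negative-entropy conjugacy turning both Bregman terms into KL divergences, and the centring of $\hat{A}^{\pit}$ and $A^{\pit}-\hat{A}^{\pit}$ under $\pitdist(\cdot|s)$ to simplify the linear term and to keep the perturbed argument on the simplex. The only cosmetic difference is that the paper decomposes across states \emph{before} applying the Bregman Fenchel--Young inequality (via its state-wise lower bound), whereas you compute the conjugate of the $d^{\pit}$-weighted map directly and separate afterwards; the bookkeeping you flag as the main obstacle works out exactly as you describe.
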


\vspace{-1ex}
For incorporating policy (with parameters $\theta$) and critic (with parameters $\omega$) parameterization, we note that $\pidist(a | s) = \frac{\exp(z^\pi(s,a \vert \theta))}{\sum_{a'} \exp(z^\pi(s,a' \vert \theta))}$ and $\hat{A}^\pi(s,a) = A^\pi(s,a |\omega)$ where the model is implicit in the notation. Using the reasoning in~\cref{sec:ac} with~\cref{prop:softmaxlb} immediately gives us the actor and critic objectives ($\ell_t(\theta)$ and $L_t(\omega)$ respectively) at iteration $t$ and completely instantiates~\cref{alg:generic}. Similar to the direct representation, observe that $\cL_t$ is asymmetric and penalizes the under/over-estimation of the advantage function differently. 

To demonstrate the effectiveness of the proposed critic loss, we construct a two-armed bandit example in~\cref{prop:softmaxexample-main} below (see~\cref{app:proofs-sec5} for details), use the softmax representation and tabular parameterization for the policy and consider a discrete hypothesis class (with two hypotheses) as the model for the critic. We compare minimizing the squared loss on the advantage: $E_{s \sim d^\pit} \E_{a \sim \pitdist(\cdot|s)} \left[A^\pit(s,a) - A^\pit(s,a \vert \omega) \right]^{2}$ with minimizing the decision-aware loss. We see that minimizing the decision-aware critic loss can distinguish between the two hypotheses and choose the correct hypothesis resulting in convergence to the optimal action. 

\begin{restatable}{proposition}{softmaxcounterexample-main}
Consider a two-armed bandit example and define $p \in [0,1]$ as the probability of pulling arm 1. Given $p$, let the advantage of arm $1$ be equal to $A_1 := \frac{1}{2} > 0$, while that of arm $2$ is $A_2 := -\frac{p}{2 \, (1 - p)} < 0$ implying that arm $1$ is optimal. For $\varepsilon \in \left(\frac{1}{2}, 1 \right)$, consider approximating the advantage of the two arms using a function approximation model with two hypotheses that depend on $p$: $\magenta{\cH_{0}}: \hat{A}_{1} = \frac{1}{2} + \varepsilon \,, \hat{A}_{2} = -\frac{p}{1 - p} \, \left(\frac{1}{2} + \varepsilon \right)$ and $\magenta{\cH_{1}}: \hat{A}_{1} = \frac{1}{2} - \varepsilon \, \text{sgn}\left(\frac{1}{2} - p \right) \,, \hat{A}_{2} = -\frac{p}{1 - p} \, \left(\frac{1}{2} - \varepsilon \, \text{sgn}\left(\frac{1}{2} - p \right) \right)$ where $\text{sgn}$ is the signum function. If $p_t$ is the probability of pulling arm 1 at iteration $t$, consider minimizing two alternative loss functions to choose the hypothesis $\cH_t$: \\ 
(1) Squared loss: \small $\cH_{t} = \argmin_{\{\cH_0, \cH_1\}} \left\{\frac{p_t}{2} \, [A_1 - \hat{A}_1]^{2} + \frac{1- p_t}{2} \, [A_2 - \hat{A}_2]^{2} \right\}$. \\
(2) \normalsize Decision-aware critic loss with $c = 1$: \small $\cH_{t} = \argmin_{\{\cH_0, \cH_1\}}$ \\ $\left\{p_t \, (1 - [A_1 - \hat{A}_1]) \, \log(1 - [A_1 - \hat{A}_1]) \\ +  (1-p_t) \, (1 - [A_2 - \hat{A}_2]) \, \log(1 - [A_2 - \hat{A}_2]) \right\}$. \\
\normalsize For $p_0 \leq \frac{1}{2}$, the squared loss cannot distinguish between $\cH_0$ and $\cH_1$, and depending on how ties are broken, minimizing it can result in convergence to the sub-optimal action. On the other hand, minimizing the divergence loss (for any $p_0 > 0$) results in convergence to the optimal arm. 
\label{prop:softmaxexample-main}
\end{restatable}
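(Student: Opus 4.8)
The plan is to reduce the statement to two elementary facts at each iteration: (a) which hypothesis the selection rule returns, and (b) the fact that the actor update of~\cref{alg:generic}, instantiated via~\cref{prop:softmaxlb} under the tabular parameterization, increases $p_t$ exactly when the selected $\hat A_1$ exceeds the selected $\hat A_2$ (and decreases it in the reverse case), a monotonicity I will verify directly from the instantiated surrogate $\ell_t$. Since there is a single state I track only the scalar $p_t$ and its odds ratio $r_t := p_t/(1-p_t)$; the per-step multiplicative gain in $r_t$ is bounded away from $1$ along a trajectory on which the advantage ordering is fixed, so such a trajectory has $p_t \to 1$ (resp. $p_t \to 0$). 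First I record, for $p_t < \tfrac12$ (so $\operatorname{sgn}(\tfrac12 - p_t) = 1$), the per-arm estimation errors $e_a := A_a - \hat A_a$: substituting $A_2 = -r_t/2$ and the stated $\hat A$'s gives errors $(-\varepsilon,\ r_t\varepsilon)$ for $\mathcal H_0$ and $(\varepsilon,\ -r_t\varepsilon)$ for $\mathcal H_1$; moreover $p_t < \tfrac12$ forces $r_t < 1$, hence $r_t\varepsilon < \varepsilon < 1$, so every logarithm appearing below has a positive argument.

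For the squared-loss claim: plugging in these errors, the weighted squared losses of $\mathcal H_0$ and $\mathcal H_1$ both equal $\tfrac{\varepsilon^2}{2}\bigl(p_t + \tfrac{p_t^2}{1-p_t}\bigr)$ — the two hypotheses differ only in the signs of their per-arm errors, which squaring erases — so the minimizer is a genuine tie for every $p_t < \tfrac12$. If the tie is broken towards $\mathcal H_1$, then $\hat A_1 = \tfrac12 - \varepsilon < 0$ (as $\varepsilon > \tfrac12$) while $\hat A_2 = r_t(\varepsilon - \tfrac12) > 0$, so $\hat A_1 < \hat A_2$, the actor update strictly decreases $p_t$, and $p_{t+1} < p_t < \tfrac12$. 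Thus the tie recurs at every step, $\mathcal H_1$ is selected forever, and the geometric decay of $r_t$ gives $p_t \to 0$: convergence to the sub-optimal arm.

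For the decision-aware-loss claim, set $f(x) := (1-x)\log(1-x)$, which is convex on $(-\infty,1)$ with $f(0) = 0$, so with $c = 1$ a hypothesis with errors $(e_1,e_2)$ has loss $p_t f(e_1) + (1-p_t) f(e_2)$. For $p_t < \tfrac12$, subtracting and using $p_t = \tfrac{r_t}{1+r_t}$, $1-p_t = \tfrac1{1+r_t}$,
\[
\mathcal L_t(\mathcal H_1) - \mathcal L_t(\mathcal H_0) = p_t\,[f(\varepsilon) - f(-\varepsilon)] + (1-p_t)\,[f(-r_t\varepsilon) - f(r_t\varepsilon)] = \tfrac{1}{1+r_t}\bigl(g(r_t\varepsilon) - r_t\,g(\varepsilon)\bigr),
\]
where $g(x) := f(-x) - f(x) = (1+x)\log(1+x) - (1-x)\log(1-x)$. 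Since $g(0) = 0$ and $g''(x) = -\tfrac{2x}{1-x^2} < 0$ on $(0,1)$, $g$ is strictly concave through the origin, hence $g(\lambda y) > \lambda g(y)$ with $\lambda = r_t \in (0,1)$ and $y = \varepsilon$; therefore $\mathcal L_t(\mathcal H_1) > \mathcal L_t(\mathcal H_0)$ and the rule selects $\mathcal H_0$ whenever $p_t < \tfrac12$. But $\mathcal H_0$ has $\hat A_1 = \tfrac12 + \varepsilon > 0 > \hat A_2$, so the actor update strictly increases $p_t$; once $p_t \ge \tfrac12$ the $\operatorname{sgn}$ flips and $\mathcal H_0 = \mathcal H_1$, so the choice is vacuous while the estimates $\hat A_1 = \tfrac12 + \varepsilon$, $\hat A_2 = -r_t(\tfrac12 + \varepsilon)$ still push $p_t \uparrow 1$. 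For $p_0 \ge \tfrac12$ one starts already in this collapsed-hypothesis regime (or, at $p_0 = \tfrac12$, picks the exact $\mathcal H_1$, which has $\hat A_1 = \tfrac12 > 0$), so the conclusion holds for any $p_0 > 0$.

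The hard part will be the inequality $g(r_t\varepsilon) > r_t\,g(\varepsilon)$ in the third paragraph; everything hinges on recognizing $g = f(-\cdot) - f(\cdot)$ as strictly concave with $g(0) = 0$, which delivers exactly this superadditivity. Two secondary checks are needed: that all logarithms remain well-defined, i.e.\ $e_a < 1$ for every arm and hypothesis on the region $p_t < \tfrac12$ (true precisely because $\varepsilon < 1$); and that on the region $p_t \ge \tfrac12$ the loss value itself may be ill-defined but the hypothesis choice is vacuous, so the actor dynamics (driven only by the numbers $\hat A_1,\hat A_2$ of $\mathcal H_0$) still make sense and close the argument.
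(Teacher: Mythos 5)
Your proposal is correct, and while its overall skeleton necessarily mirrors the paper's (compute the per-arm errors, observe that squaring erases their signs so the TD losses coincide, show the decision-aware loss strictly prefers $\mathcal{H}_0$ on $p_t<\tfrac12$, and translate $\hat A_1 \gtrless \hat A_2$ into monotone motion of $p_t$), the one genuinely nontrivial step is handled by a different argument. The paper proves $\mathcal{L}_t(\mathcal{H}_0)<\mathcal{L}_t(\mathcal{H}_1)$ by viewing the difference as a function $f(p)$ of the policy, checking $f(0)=f(\nicefrac{1}{2})=0$, and establishing convexity of $f$ on $(0,\nicefrac{1}{2})$ via a term-by-term second-derivative computation; you instead parameterize by the error magnitude, reduce the difference to $\frac{1}{1+r_t}\bigl(g(r_t\varepsilon)-r_t\,g(\varepsilon)\bigr)$ with $g(x)=(1+x)\log(1+x)-(1-x)\log(1-x)$, and invoke strict concavity of $g$ through the origin ($g''(x)=-\tfrac{2x}{1-x^2}<0$, $g(0)=0$) to get $g(r_t\varepsilon)>r_t g(\varepsilon)$ directly. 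I checked your error table, the identity $f(-x)-f(x)=g(x)$, the substitution $p_t=\tfrac{r_t}{1+r_t}$, and the domain conditions ($r_t\varepsilon<\varepsilon<1$ keeps every logarithm defined on $p_t<\tfrac12$); all are right. Your route is more elementary and arguably more illuminating, since it isolates the asymmetry of the loss in the single odd function $g$ and shows the preference for $\mathcal{H}_0$ comes from its chord-above-secant property, whereas the paper's computation buys a self-contained verification that does not require re-expressing the weights in terms of the odds ratio. Your handling of the boundary cases (the vacuous hypothesis choice for $p_t\ge\tfrac12$, the exact $\mathcal{H}_1$ at $p_t=\tfrac12$, and the geometric growth/decay of $r_t$ giving actual convergence) matches or slightly exceeds the paper's level of rigor.
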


\vspace{-1ex}
In~\cref{prop:euclidean} in~\cref{app:proofs-sec5}, we study the softmax representation with the Euclidean mirror map and instantiate \textbf{inequality (I)} for this case. Finally, in~\cref{app:svg}, we instantiate our actor-critic framework to handle stochastic value gradients used for learning continuous control policies~\citep{heess2015learning}. In the next section, we consider simple RL environments to empirically benchmark~\cref{alg:generic}.
\vspace{-2.5ex}
\section{Experiments}
\vspace{-2.5ex}
\label{sec:experiments}
\begin{figure*}[h]
    \includegraphics[width=\textwidth]{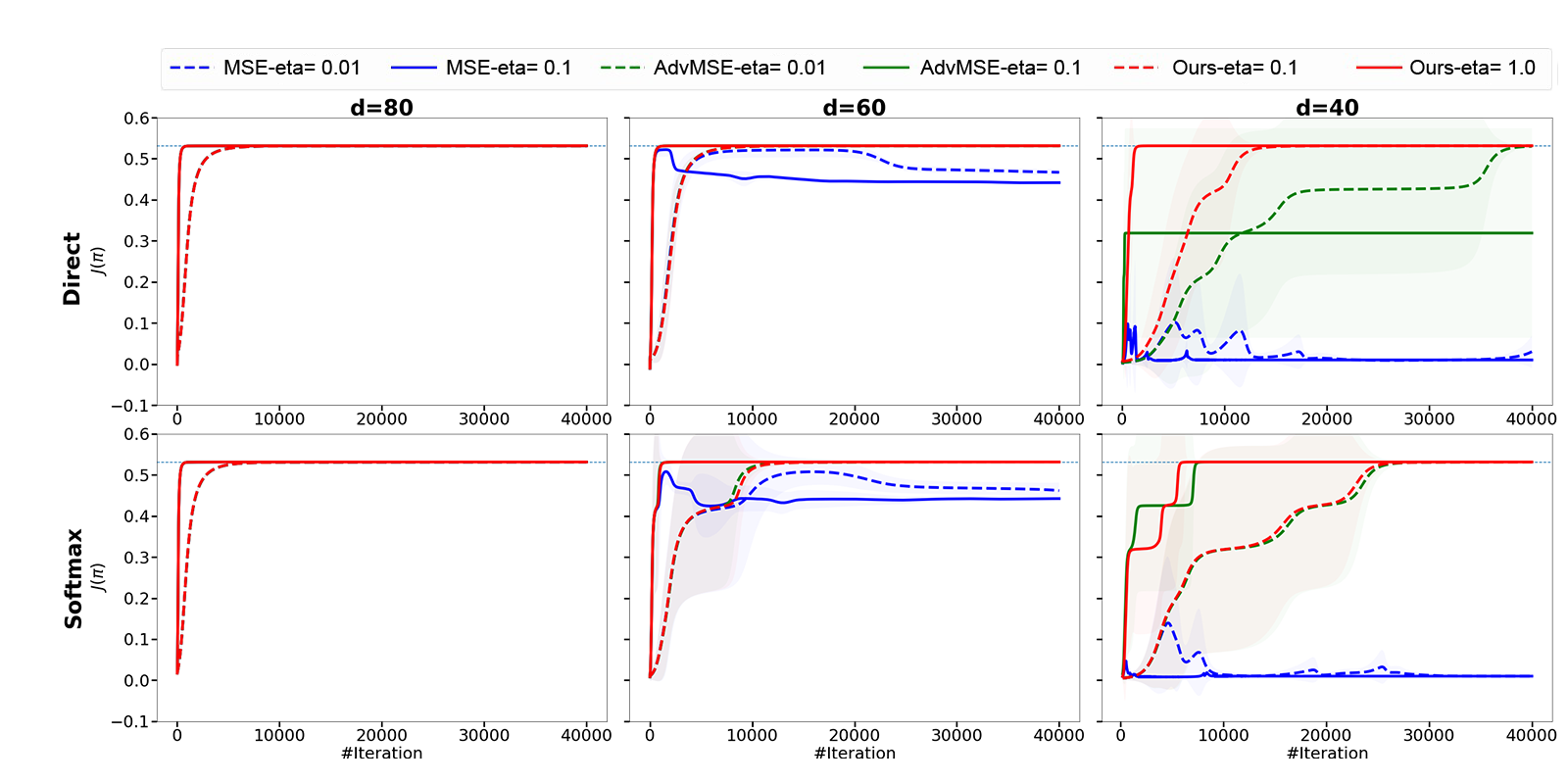}  
    \vspace{-2ex}
    \caption{Comparison of decision-aware, \texttt{Adv-MSE} and \texttt{MSE} loss functions using a linear actor and linear (with three different dimensions) critic in the Cliff World environment for direct and softmax policy representations. For $d=80$ (corresponding to an expressive critic), all algorithms have the same performance. For $d=40$ and $d=60$, \texttt{MSE} does not have monotonic improvement and converges to a sub-optimal policy. \texttt{Adv-MSE} almost always reaches the optimal policy. Compared to the \texttt{Adv-MSE} and \texttt{MSE}, minimizing the decision-aware loss always results in convergence to the optimal policy at a faster rate, especially when using a less expressive critic ($d = 40$).      
    }
    \label{fig:cw_mb_linear_main}
\end{figure*}
\vspace{0.5ex}
We demonstrate the benefit of the decision-aware framework over the standard AC algorithm where the critic is trained by minimizing the squared error. We instantiate~\cref{alg:generic} for the direct and softmax representations, and evaluate the performance on two grid-world environments, namely Cliff World~\citep{sutton18book} and Frozen Lake~\citep{gym2016} (see~\cref{app:implementation} for details). We compare the performance of three AC algorithms that have the same actor, but differ in the objective function used to train the critic. 

\textbf{Critic Optimization}: For the direct and softmax representations, the critic's objective is to estimate the action-value ($Q$) and advantage ($A$) functions respectively. We use a linear parameterization for the $Q$ function implying that for each policy $\pi$, $Q^\pi(s,a \vert \omega) = \langle \omega, \mathbf{X}(s, a) \rangle$, where $\mathbf{X}(s, a) \in \R^d$ are features obtained via tile-coding~\citep[Ch. 9]{sutton18book}. We vary the  dimension $d \in \{80, 60, 40\}$ of the tile-coding features to vary the expressivity of the critic. Given the knowledge of $\pidist$ and the estimate $Q^\pi(s,a \vert \omega)$, the estimated advantage can be obtained as: $A^\pi(s,a \vert \omega) = Q^\pi(s,a\vert \omega) - \sum_{a} \pidist(a|s) \, Q^\pi(s,a\vert \omega)$. We consider two ways to estimate the $Q$ function for training the critic: (a) using the known MDP to exactly compute the $Q$ values and (b) estimating the $Q$ function using Monte-Carlo (MC) rollouts. There are three sources of error for an insufficient critic -- the bias due to limited model capacity, the optimization error due to an insufficient number of inner iterations ($m_c$) and the variance due to Monte-Carlo sampling. We use a large value of $m_c$ and sufficiently large Monte-Carlo rollouts to control the optimization error and variance respectively (see~\cref{app:implementation}). This ensures that the bias dominates, and enables us to isolate the effect of the form of the critic loss.

We evaluate the performance of the decision-aware loss defined for the direct (\cref{prop:directlb}) and softmax representations (\cref{prop:softmaxlb}). For both representations, we minimize the corresponding objective at each iteration $t$ (Lines 6-8 in~\cref{alg:generic}) using gradient descent with the step-size $\alpha_{c}$ determined by the Armijo line-search~\citep{armijo1966minimization}. We use a grid-search to tune the trade-off parameter $c$, and propose an alternative albeit conservative method to estimate $c$ in~\cref{app:implementation}. We compare against two baselines (see~\cref{app:implementation} for implementation details) -- (i) the standard squared loss on the $Q$ functions (referred to as \texttt{MSE} in the plots) defined in~\cref{prop:directexample-main} and (ii) squared loss on $A$ function (referred to as \texttt{Adv-MSE} in the plots) defined in~\cref{prop:softmaxexample-main}. We note that the \texttt{Adv-MSE} loss corresponds to a second-order Taylor series expansion of the decision-aware loss (see~\cref{prop:taylor} for details), and is similar to the loss in~\citet{pan2022direct}. Recall that the critic error consists of the variance when using MC samples (equal to zero when we exactly compute the $Q$ function) and the bias because of the critic optimization error (controlled since the critic objective is convex) and error due to the limited expressivity of the linear function approximation (decreases as $d$ increases). Since our objective is to study the effect of the critic loss and its interaction with function approximation, we do not use bootstrapping to estimate the $Q^\pi$ since it would result in a confounding bias term.

\textbf{Actor Optimization}: For all algorithms, we use the same actor objective defined for the direct (\cref{prop:directlb}) and softmax representations (\cref{prop:softmaxlb}). We consider both the tabular and linear policy paramterization for the actor. For the linear function approximation, we use the same tile-coded features and set $n = 60$ for both environments. We update the policy parameters at each iteration $t$ in the off-policy inner-loop (Lines 11-13 in~\cref{alg:generic}) using Armijo line-search to set $\alpha_{a}$. For details about the derivatives and closed-form solutions for the actor objective,  refer to~\citep{vaswani2021general} and~\cref{app:implementation}. We use a grid-search to tune $\eta$, and compare different values. Our experimental setup~\footnote{Code to reproduce the experiments is available at~\url{https://github.com/amirrezakazemi/ACPG}} enables us to isolate the effect of the critic loss, without non-convexity or optimization issues acting as confounders.

\textbf{Results}: For each environment, we conduct four experiments that depend on (a) whether we use MC samples or the true dynamics to estimate the $Q$ function, and (b) on the policy parameterization. We only show the plot corresponding to using the true dynamics for estimating the $Q$ function and linear policy parameterization, and defer the remaining plots to~\cref{app:additional-exps}. For all experiments, we report the mean and $95 \%$ confidence interval of $J(\pi)$ averaged across $5$ runs. In the main paper, we only include $2$ values of $\eta \in \{0.01, 0.1 \}$ and vary $d \in \{40, 60, 80\}$, and defer the complete figure with a broader range of $\eta$ and $d$ to~\cref{app:additional-exps}. For this experiment, $c$ is tuned to $0.01$ and we include a sensitivity (of $J(\pi)$ to $c$) plot in~\cref{app:additional-exps}. From~\cref{fig:cw_mb_linear_main}, we see that (i) with a sufficiently expressive critic ($d=80$), all algorithms reach the optimal policy at nearly the same rate. (ii) as we decrease the critic capacity, minimizing the \texttt{MSE} loss does not result in monotonic improvement and converges to a sub-optimal policy, (iii) minimizing the \texttt{Adv-MSE} usually results in convergence to the optimal policy, whereas (iv) minimizing the decision-aware loss results in convergence to better policies at a faster rate, and is more beneficial when using a less-expressive critic (corresponding to $d = 40$). We obtain similar results for the tabular policy parameterization or when using sampling to estimate the $Q$ function (see~\cref{app:additional-exps} for additional results). 
\vspace{-1ex}
\section{Discussion}
\label{sec:discussion}
\vspace{-2ex}
We designed a generic decision-aware actor-critic framework where the actor and critic are trained cooperatively to optimize a joint objective. Our framework can be used with any policy representation and easily handle general policy and critic parameterization, while preserving theoretical guarantees. Instantiating the framework resulted in an actor that supports off-policy updates, and a corresponding critic loss that can be minimized using first-order optimization. We demonstrated the benefit of our framework both theoretically and empirically. We note that~\cref{alg:generic} can be directly used with any complex actor/critic parameterization in order to  generalize across states/actions. The theoretical guarantees of~\cref{prop:policyimprovcond} would still hold. From a practical perspective, w.r.t tuning hyper-parameters, $\eta$ does not depend on the actor/critic parameterization. On the other hand, $\alpha_{a}$ and $\alpha_{c}$ are set adaptively using an Armijo line-search that only requires the smoothness of actor/critic objectives, and does not depend on their convexity. However,~\cref{alg:generic} does require tuning hyper-parameter $c$, and we will aim to investigate automatic adaptive ways to set it. In the future, we aim to benchmark~\cref{alg:generic} for complex deep RL environments. Finally, we aim to broaden the scope our framework to applications such as variational inference. 

\newpage
\section*{Acknowledgements}
\label{sec:ack}
We would like to thank Michael Lu for feedback on the paper. This research was partially supported by the  Canada CIFAR AI Chair program, the Natural Sciences and Engineering Research Council of Canada (NSERC) Discovery Grant RGPIN-2022-04816. 
\bibliographystyle{plainnat}
\bibliography{ref}

\newpage
\appendix
\newgeometry{margin=0.8in}
\newcommand{\appendixTitle}{%
\vbox{
    \centering
	\hrule height 4pt
	\vskip 0.2in
	{\LARGE \bf Supplementary material}
	\vskip 0.2in
	\hrule height 1pt 
}}

\appendixTitle

\section*{Organization of the Appendix}
\begin{itemize}

  \item[\ref{app:definitions}] \nameref{app:definitions}

  \item[\ref{app:svg}] \nameref{app:svg}

  \item[\ref{app:proofs-sec3}] \nameref{app:proofs-sec3}

  \item[\ref{app:proofs-sec4}] \nameref{app:proofs-sec4}

  \item[\ref{app:proofs-sec5}] \nameref{app:proofs-sec5}

  \item[\ref{app:implementation}] \nameref{app:implementation}

  \item[\ref{app:additional-exps}] \nameref{app:additional-exps}
    
\end{itemize}

\section{Definitions}
\label{app:definitions}
\begin{itemize}
\item \textbf{[Solution set]}. We define the solution set $\mathcal{X}^*$ for a function $f$ as $\mathcal{X}^* := \{ x^* \vert x^* \in \argmin_{x \in \text{dom}(f)} f(x) \}$. \\

\item \textbf{[Convexity]}. A differentiable function $f$ is convex iff for all $v$ and $w$ in $\text{dom}(f)$
\begin{align}
    f(v) &\geq f(w) + \inner{\nabla f(w)} {v - w}.
    \tag{Convexity}
    \label{eq:convexity}
\end{align}
 \item \textbf{[Lipschitz continuity]}. A differentiable function $f$ is $G$-Lipschitz continuous, meaning that for all $v$ and $w$ and constant $G > $, 
\begin{align}
    |f(v)-f(w)| & \leq G \norm{v-w} \implies \norm{\nabla f(v)} \leq G \,.
    \tag{Lipschitz Continuity}
    \label{eq:continuity}
\end{align}

\item \textbf{[Smoothness]}. A differentiable function $f$ is $L$-smooth, meaning that for all $v$ and $w$ and some constant $L >0$ 
\begin{align}
    f(v) & \leq f(w) + \inner{\nabla f(w)}{v - w} + \frac{L}{2} \normsq{v - w} \,.
    \tag{Smoothness}
    \label{eq:smoothness}
\end{align}

\item \textbf{[Polyak-Lojasiewicz inequality]}. A differentiable function $f$ satisfies the Polyak-Lojasiewicz (PL) inequality if there exists a constant $\mu_p >0$ s.t. for all $v$, 
\begin{align*}
    \mu_p (f(v)-f^*) & \leq \frac{1}{2}\normsq{\nabla f(v)} \,,
    \tag{PL}
    \label{eq:PL}
\end{align*}
where $f^*$ is the optimal function value i.e. $f^* := f(x^*)$ for $x^* \in \mathcal{X}^*$. 

\item \textbf{[Restricted Secant Inequality]}. A differentiable function $f$ satisfies the Restricted Secant Inequality (RSI) inequality if there exists a constant $\mu_r >0$ that for all $v$ 
\begin{align*}
    \inner{\nabla f(v)}{v-v_p} \geq \mu_r \normsq{v-v_p} \,,
    \tag{RSI}
    \label{eq:RSI}
\end{align*}
where $v_p$ is the projection of $v$ onto $\mathcal{X}^*$. 

\item \textbf{[Bregman divergence]}. For a strictly-convex, differentiable function $\Phi$, we define the Bregman divergence induced by $\Phi$ (known as the mirror map) as:
\begin{align*}
    \breg{w}{v} := \Phi(w) - \Phi(v) - \inner{\nabla \Phi(v)}{w-v}. 
    \tag{Bregman divergence}
    \label{eq:bregdef}
\end{align*}

\item \textbf{[Relative smoothness]}. A function $f$ is $\rho$-relatively smooth w.r.t. $D_\Phi$ iff $f + \rho \Phi$ is convex. Furthermore, if $f$ is $\rho$-relatively smooth w.r.t. $\Phi$, then, $\vert f(w) - f(v) - \inner{\nabla f(v)}{w-v} \vert \leq \rho \, \breg{w}{v}$.

\item \textbf{[Mirror Ascent]}. Optimizing $\max_{x \in \mathcal{X}} f(x)$ using mirror ascent (MA), if $x_t$ is the current iterate, then the  update at iteration $t \in \{0,1,\ldots, T - 1\}$ with a step-size $\eta_t$ and mirror map $\Phi$ is given as:  
\begin{align*}
    x_{t+1} := \argmax_{x \in \mathcal X}\left\{\inner{\nabla f(x_t)}{x} - \frac{1}{\eta_t}\breg{x}{x_t}\right\},  
    \tag{MD update}
    \label{eq:mdup}
\end{align*}
The above update can be formulated into two steps~\citet[Chapter 4]{bubeck2015convex} as follows:
\begin{align*}
    y_{t+1} &:= (\nabla \Phi)^{\scriptscriptstyle{-1}}\left(\nabla \Phi(x_t) + \eta_t \nabla f(x_t) \right) \tag{Move in dual space}\\
    x_{t+1} &:= \argmin_{x \in \mathcal X}\left\{ \breg{x}{y_{t+1}}\right\} \tag{Projection step}
\end{align*}
\end{itemize}

\section{Extension to stochastic value gradients}
\label{app:svg}
In~\cref{sec:instantiation}, we have seen alternative ways to represent a policy's conditional distributions over actions $\pidist(\cdot|s)$ for each state $s \in \cS$. On the other hand, stochastic value gradients~\citep{heess2015learning} represent a policy by a set of actions. Formally, if $\varepsilon$ are random variables drawn from a fixed distribution $\chi$, then policy $\pi$ is a deterministic map from $\cS \times \chi \rightarrow \cA$. This corresponds to the functional representation of the policy, and is particularly helpful for continuous control, i.e. when the action-space is continuous. The action $a$ chosen by $\pi$ in state $s$, when fixing the random variable $\varepsilon = \epsilon$, is represented as $\pi(s, \epsilon)$, and the value function for policy $\pi$ is given as:
\begin{align}
J(\pi) &= \sum_s d^\pi(s) \int_{\varepsilon \sim \chi} \, r(s, \pi(s, \varepsilon)) \, d\varepsilon 
\label{eq:svg-obj}    
\end{align}
and~\citet{silver2014deterministic} showed that $\displaystyle \frac{\partial J(\pi)}{\partial \pi(s,\epsilon)} = d^\pi(s) \nabla_a Q^\pi(s, a)\big|_{a = \pi(s, \epsilon)}$. In order to characterize the dependence on the policy parameterization, we note that $\pi(s, \epsilon) = \pi(s, \epsilon, \theta)$ where $\theta$ are the model parameters. For a fixed $\epsilon$, we will use a Euclidean mirror map implying that $D_\Phi(\pi, \pi') = \sum_{s \in \cS} d^\pit(s) D_\phi(\pi(s,\epsilon), \pi'(s,\epsilon)$ and choose $\phi(\pi(s,\epsilon)) = \frac{1}{2} \, \normsq{\pi'(s,\epsilon)}$ implying that $D_\phi(\pi(s,\epsilon), \pi'(s,\epsilon) = \frac{1}{2} \, \left[\pi(s,\epsilon) - \pi'(s,\epsilon) \right]^{2}$. In order to instantiate the generic lower bound in~\cref{prop:genericlb} at iteration $t$, we prove the following proposition in~\cref{app:proofs-sec5}. 
\begin{restatable}{proposition}{svglb}
For the stochastic value gradient representation and Euclidean mirror map, $c > 0$,  $\eta$ such that $J + \frac{1}{\eta} \Phi$ is convex in $\pi$. 
\begin{align*}
J(\pi) - J(\pit) & \geq C + \green{\E_{s \sim d^\pit} \E_{\varepsilon \sim \chi} \left[\widehat{\nabla_a Q^\pit}(s, a)\big|_{a = \pit(s, \varepsilon)} \, \pi(s, \varepsilon)  - \frac{1}{2} \, \left( \frac{1}{\eta} + \frac{1}{c} \right) \, \left[\pit(s,\epsilon) - \pi(s,\epsilon) \right]^{2} \right]} \\
& - \blue{\frac{c}{2} \, \E_{s \sim d^\pit} \E_{\varepsilon \sim \chi} \,  \left[\nabla_a Q^\pit(s, a)\big|_{a = \pit(s, \varepsilon)} - \widehat{\nabla_a Q^\pit}(s, a)\big|_{a = \pit(s, \varepsilon)} \right]^{2}}
\end{align*}
where $C$ is a constant and $\widehat{\nabla_a Q^\pit}(s, a)\big|_{a = \pit(s, \varepsilon)}$ is the estimate of the action-value gradients for policy $\pi$ at state $s$ and $a = \pit(s,\epsilon)$. 
\label{prop:svglb}
\end{restatable}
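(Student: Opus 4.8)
The plan is to specialize \cref{prop:genericlb} to the stochastic value gradient (SVG) representation by taking $\pi' = \pit$, substituting the appropriate gradient, gradient estimator, and weighted Euclidean mirror map, and then simplifying each of the three non-constant terms on the right-hand side. In the SVG representation the functional object is the map $(s,\varepsilon) \mapsto \pi(s,\varepsilon)$, and by the deterministic policy gradient result of~\citet{silver2014deterministic} recalled above~\cref{eq:svg-obj}, the $(s,\varepsilon)$-component of $\nabla J(\pit)$ equals $d^{\pit}(s)\,\nabla_a Q^{\pit}(s,a)\big|_{a = \pit(s,\varepsilon)}$, while the corresponding component of the gradient estimator $\hat{g}_t$ is $d^{\pit}(s)\,\widehat{\nabla_a Q^{\pit}}(s,a)\big|_{a = \pit(s,\varepsilon)}$. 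I would first expand the linear term of~\cref{prop:genericlb} as the function-space pairing $\inner{\hat{g}_t}{\pi - \pit} = \sum_{s} d^{\pit}(s)\int_\varepsilon \widehat{\nabla_a Q^{\pit}}(s,a)\big|_{a = \pit(s,\varepsilon)}\,[\pi(s,\varepsilon) - \pit(s,\varepsilon)]\,\chi(\varepsilon)\,d\varepsilon$, and rewrite it as $\E_{s\sim d^{\pit}}\E_{\varepsilon\sim\chi}\big[\widehat{\nabla_a Q^{\pit}}(s,a)\big|_{a = \pit(s,\varepsilon)}\,\pi(s,\varepsilon)\big]$ plus a term depending only on $\pit$, which is constant at iteration $t$ and is absorbed into $C$.

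Next I would evaluate the two Bregman terms for the weighted Euclidean mirror map. With $\phi(\pi(s,\varepsilon)) = \tfrac{1}{2}\normsq{\pi(s,\varepsilon)}$ and the across-state weighting by $d^{\pit}(s)$, the primal divergence is $\breg{\pi}{\pit} = \tfrac{1}{2}\,\E_{s\sim d^{\pit}}\E_{\varepsilon\sim\chi}\big[(\pi(s,\varepsilon) - \pit(s,\varepsilon))^2\big]$, so the red term $\big(\tfrac{1}{\eta}+\tfrac{1}{c}\big)\breg{\pi}{\pit}$ of~\cref{prop:genericlb} is exactly the quadratic penalty $\tfrac{1}{2}\big(\tfrac{1}{\eta}+\tfrac{1}{c}\big)\E_{s\sim d^{\pit}}\E_{\varepsilon\sim\chi}\big[(\pit(s,\varepsilon) - \pi(s,\varepsilon))^2\big]$ that appears in the \green{green} block. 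For the \blue{blue} term I would use that the Euclidean mirror map is self-conjugate so that, up to the $d^{\pit}$ weighting, $\Phi^\ast = \Phi$ and $D_{\Phi^\ast}$ is an exact quadratic: $D_{\Phi^\ast}\big(\nabla\Phi(\pit) - c\,\delta_t,\ \nabla\Phi(\pit)\big) = \tfrac{c^2}{2}\sum_s \tfrac{1}{d^{\pit}(s)}\int_\varepsilon [\delta_t(s,\varepsilon)]^2\,\chi(\varepsilon)\,d\varepsilon$ with $\delta_t = \nabla J(\pit) - \hat{g}_t$. Since the $(s,\varepsilon)$-component of $\delta_t$ is $d^{\pit}(s)\big[\nabla_a Q^{\pit}(s,a)\big|_{a=\pit(s,\varepsilon)} - \widehat{\nabla_a Q^{\pit}}(s,a)\big|_{a=\pit(s,\varepsilon)}\big]$, one factor of $d^{\pit}(s)$ cancels and $\tfrac{1}{c} D_{\Phi^\ast}(\cdot)$ reduces to $\tfrac{c}{2}\,\E_{s\sim d^{\pit}}\E_{\varepsilon\sim\chi}\big[\big(\nabla_a Q^{\pit}(s,a)\big|_{a=\pit(s,\varepsilon)} - \widehat{\nabla_a Q^{\pit}}(s,a)\big|_{a=\pit(s,\varepsilon)}\big)^2\big]$, which is precisely the \blue{blue} block.

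Finally I would assemble the inequality: apply~\cref{prop:genericlb} with $\pi' = \pit$, move $J(\pit)$ to the left-hand side, collect the $\pit$-only contributions (from the linearization and from $\Phi$, $\Phi^\ast$ evaluated at $\pit$) into the single constant $C$, and note that the hypothesis that $J + \tfrac{1}{\eta}\Phi$ is convex in $\pi$, stated in the proposition, is exactly the convexity assumption that~\cref{prop:genericlb} requires. The step I expect to be the main obstacle is the measure bookkeeping when passing to the Fenchel conjugate: one must verify that the $d^{\pit}(s)\chi(\varepsilon)$ weighting enters $\nabla^2\Phi^\ast$ (equivalently, the quadratic form defining $D_{\Phi^\ast}$) as the reciprocal of how it enters $\Phi$ and the functional gradient, so that the gradient-estimation error is ultimately measured in the same $d^{\pit}$-weighted norm as the primal penalty; once this cancellation is made explicit, the rest is a direct substitution. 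Unlike the direct and softmax instantiations (\cref{prop:directlb,prop:softmaxlb}), no explicit upper bound on $\eta$ is extracted here, since such a bound would require Lipschitz continuity of $a\mapsto\nabla_a Q^{\pit}(s,a)$, which is not assumed.
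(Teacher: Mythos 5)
Your proposal is correct and reaches the paper's statement by essentially the same route: specialize the generic relative-smoothness bound to the SVG representation, note that both Bregman terms become exact quadratics for the Euclidean map, and absorb the $\pit$-only pieces into $C$. The one place you diverge is in how the state weighting is handled. The paper does not apply \cref{prop:genericlb} directly; it routes through the intermediate state-wise bound (\cref{prop:statewiselb}), which applies the Bregman Fenchel--Young inequality (\cref{lemma:bfy}) separately in each state with the \emph{unweighted} per-state mirror map $\phi$ and the per-state error $\delta^s_t = \nabla_a Q^{\pit} - \widehat{\nabla_a Q^{\pit}}$ (no $d^{\pit}(s)$ factor), so the blue term comes out as $\sum_s \frac{d^{\pit}(s)}{c} D_{\phi^*}(\nabla\phi(\pit^s) - c\,\delta^s_t, \nabla\phi(\pit^s)) = \frac{c}{2}\E_{s\sim d^{\pit}}[\delta^s_t]^2$ with no conjugation of the weighted map ever needed. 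You instead keep the full functional gradient (with its $d^{\pit}(s)$ factor) and the $d^{\pit}$-weighted $\Phi$, compute $\Phi^*$ explicitly, and verify that the weight enters the conjugate reciprocally so that one factor of $d^{\pit}(s)$ cancels; that calculation is right ($\Phi^*(z) = \sum_s \tfrac{1}{2 d^{\pit}(s)}\normsq{z(s,\cdot)}$, so $\tfrac{1}{c}D_{\Phi^*}$ of an error $d^{\pit}(s)\,\delta^s_t$ is $\tfrac{c}{2}\sum_s d^{\pit}(s)[\delta^s_t]^2$) and yields the same blue term. So the ``main obstacle'' you flag is real but is exactly what the paper's state-wise lemma packages away; your direct conjugate computation is an equally valid, slightly more hands-on substitute for it, at the cost of having to be careful that $d^{\pit}(s) > 0$ wherever the error is nonzero (and analogously for the $\chi(\varepsilon)$ density, which the paper sidesteps by conditioning on a fixed $\varepsilon$ and taking $\E_{\varepsilon\sim\chi}$ only at the end).
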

For incorporating policy (with parameters $\theta$) and critic (with parameters $\omega$) parameterization, we note that $\pi(s, \varepsilon) = \pi(s, \varepsilon \vert \theta)$ and $\widehat{\nabla_a Q^\pit}(s, a)_{a = \pit(s, \varepsilon)} = \nabla_a Q^\pit(s, a \vert \omega)_{a = \pit(s, \varepsilon, \theta_t)}$ where the model is implicit in the notation. Using the reasoning in~\cref{sec:ac} with~\cref{prop:svglb} immediately gives us the actor and critic objectives ($\ell_t(\theta)$ and $L_t(\omega)$ respectively) at iteration $t$ and completely instantiates~\cref{alg:generic}. The actor objective is similar to Eq~(15) of~\citet{silver2014deterministic}, with the easier to compute $Q^{\pi_t}$ instead of $Q^\pi$, whereas the critic objective is similar to the one used in existing work on policy-aware model-based RL for continuous control~\citep{d2020learn}.  
\newpage
\section{Proofs for~\cref{sec:methodology}}
\label{app:proofs-sec3}

\genericlb*
\begin{proof}
For any $\eta$ such that $J + \frac{1}{\eta}\Phi$ is convex, we use~\cref{lem:relsmooth} to form the following lower-bound,
\begin{align*}
J(\pi) & \geq  J(\pi') + \langle \nabla J(\pi'), (\pi - \pi') \rangle - \frac{1}{\eta} \, D_\phi(\pi, \pi') \\   
& = J(\pi') + \langle \hat{g}(\pi'), (\pi - \pi') \rangle + \langle \nabla J(\pi') -  \hat{g}(\pi'), (\pi - \pi') \rangle - \frac{1}{\eta} \, D_\phi(\pi, \pi') \\
\intertext{Defining $\delta := \nabla J(\pi') -  \hat{g}(\pi')$, and assuming that $c \, \delta$ is small enough to satisfy the requirement for~\cref{lemma:bfy}, we use~\cref{lemma:bfy} with $x = \delta$, $y = \pi$ and $y' = \pi'$. }
& = J(\pi') + \langle \hat{g}(\pi'), (\pi - \pi') \rangle - \frac{1}{\eta} \, D_\phi(\pi, \pi') - \frac{1}{c} \left[ D_\phi(\pi, \pi') + D_{\phi*}\left(\nabla \phi(\pi') - c \delta, \nabla \phi(\pi') \right) \right] \\
\implies J(\pi) &\geq J(\pi') + \hat{g}(\pi')^\top (\pi - \pi') - \left(\frac{1}{\eta} + \frac{1}{c}\right) D_\Phi(\pi, \pi') -\frac{1}{c}D_{\phi^\ast}\bigg(\nabla\phi(\pi')-c[\nabla J(\pi') - \hat{g}(\pi')], \nabla\phi(\pi')\bigg)
\end{align*}
\end{proof}

\begin{restatable}[Bregman Fenchel-Young]{lemma}{bfy}
Let $x\in \mathcal{Y}^\ast$, $y \in \mathcal{Y}$, $y' \in \mathcal{Y}$. Then, for sufficiently small $c > 0$ and $x$ s.t. $(\nabla \phi)^{-1} [\nabla\phi(y') -c \, x] \in \mathcal{Y}$, we have
\begin{align}
\langle y - y', x \rangle &\geq -\frac{1}{c}\bigg[D_\phi(y, y') + D_{\phi^\ast}(\nabla\phi(y')-c \, x, \nabla\phi(y'))\bigg] \; .
\end{align}
For a fixed $y'$, this inequality is tight for $y  = \argmin_{\upsilon} \left\{ \langle x, \upsilon - y' \rangle + \frac{1}{c} D_\Phi(\upsilon, y) \right\}$. 
\label{lemma:bfy}
\end{restatable}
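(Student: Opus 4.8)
The plan is to reduce the claimed inequality to a single application of the Fenchel--Young inequality for the conjugate pair $(\phi, \phi^\ast)$. Write $p' := \nabla\phi(y')$ for brevity. First I would expand the two Bregman divergences from their definitions: $D_\phi(y, y') = \phi(y) - \phi(y') - \langle p',\, y - y'\rangle$, and, using $\nabla\phi^\ast(p') = y'$ (valid since $\phi$ is strictly convex and Legendre-type, so $\nabla\phi$ and $\nabla\phi^\ast$ are mutual inverses on the relevant domains), $D_{\phi^\ast}(p' - cx,\, p') = \phi^\ast(p' - cx) - \phi^\ast(p') + c\,\langle y',\, x\rangle$. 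Adding these and substituting the equality case of Fenchel--Young at $y'$, namely $\phi(y') + \phi^\ast(p') = \langle p',\, y'\rangle$ (which also cancels the $\langle p', y'\rangle$ produced by expanding $\langle p', y - y'\rangle$), a short computation shows
\[
D_\phi(y, y') + D_{\phi^\ast}(p' - cx,\, p') \;=\; \phi(y) + \phi^\ast(p' - cx) - \langle p',\, y\rangle + c\,\langle y',\, x\rangle .
\]

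The next step is to bound the first two terms on the right by the Fenchel--Young \emph{inequality} $\phi(y) + \phi^\ast(p' - cx) \geq \langle p' - cx,\, y\rangle = \langle p',\, y\rangle - c\,\langle x,\, y\rangle$, which gives
\[
D_\phi(y, y') + D_{\phi^\ast}(p' - cx,\, p') \;\geq\; -\,c\,\langle x,\, y\rangle + c\,\langle x,\, y'\rangle \;=\; -\,c\,\langle y - y',\, x\rangle .
\]
Dividing through by $c > 0$ and rearranging yields precisely $\langle y - y',\, x\rangle \geq -\tfrac{1}{c}\big[D_\phi(y, y') + D_{\phi^\ast}(\nabla\phi(y') - cx,\, \nabla\phi(y'))\big]$, the claim.

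For the tightness statement I would observe that the only inequality invoked above is the Fenchel--Young step, which holds with equality exactly when $p' - cx \in \partial\phi(y)$, i.e. $\nabla\phi(y) = \nabla\phi(y') - cx$, equivalently $y = (\nabla\phi)^{-1}\big[\nabla\phi(y') - cx\big]$. A one-line first-order-optimality check identifies this point with $\argmin_{\upsilon}\big\{\langle x,\, \upsilon - y'\rangle + \tfrac{1}{c} D_\phi(\upsilon, y')\big\}$: that objective is convex in $\upsilon$, and setting its gradient $x + \tfrac{1}{c}\big(\nabla\phi(\upsilon) - \nabla\phi(y')\big)$ to zero recovers exactly the same stationarity condition.

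The hard part is not the algebra but the domain bookkeeping that makes each step legitimate: I need $p' - cx$ to lie in the interior of $\operatorname{dom}\phi^\ast$ so that $\phi^\ast$ is differentiable there and $D_{\phi^\ast}(p' - cx, p')$ is well defined, and the preimage $(\nabla\phi)^{-1}[\nabla\phi(y') - cx]$ must exist inside $\mathcal{Y}$ — these are exactly the hypotheses ``$c$ sufficiently small'' and ``$(\nabla\phi)^{-1}[\nabla\phi(y') - cx] \in \mathcal{Y}$'' in the statement — together with the Legendre-type regularity of $\phi$ that justifies $\nabla\phi^\ast \circ \nabla\phi = \operatorname{id}$ and the equality form of Fenchel--Young at $y'$. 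Once these are in place, the chain of equalities and the single inequality above close the proof.
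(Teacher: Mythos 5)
Your proof is correct, and it takes a genuinely more direct route than the paper's. The paper proves the lemma by treating $f(\upsilon) := \langle x, \upsilon - y'\rangle + \tfrac{1}{c}D_\Phi(\upsilon,y')$ as a variational problem: it identifies the minimizer $y^* = \nabla\phi^*[\nabla\phi(y') - cx]$ from the first-order condition, then expands $D_\Phi(y^*, y')$ through a fairly long chain of substitutions using the identity $\phi(u) = \langle \nabla\phi(u), u\rangle - \phi^*(\nabla\phi(u))$ until $f(y^*)$ collapses to $-\tfrac{1}{c}D_{\Phi^*}(\nabla\phi(y')-cx, \nabla\phi(y'))$; the inequality then follows from $f(y) \geq f(y^*)$. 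You instead never compute the minimum: you expand both Bregman divergences, cancel the $y'$ terms via the equality case of Fenchel--Young at $y'$, and close with a single application of the Fenchel--Young inequality $\phi(y) + \phi^*(p'-cx) \geq \langle p'-cx, y\rangle$. I checked your algebra and it is right: the sum of the two divergences equals $\phi(y) + \phi^*(p'-cx) - \langle p', y\rangle + c\langle y', x\rangle$, and the Fenchel--Young bound turns this into $-c\langle y-y', x\rangle$ as claimed. Your approach buys two things: the inequality part needs only $\phi^*$ to be finite at $p'-cx$ and differentiable at $p'$ (the invertibility of $\nabla\phi$ at the shifted point is only needed for the tightness claim, exactly where you invoke it), and the equality condition falls out transparently as the equality case of Fenchel--Young, matching the stationarity condition of the $\argmin$ (note the paper's statement has a typo, $D_\Phi(\upsilon, y)$ where $D_\Phi(\upsilon, y')$ is meant, which you correctly read through). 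What the paper's longer computation buys is an explicit closed form for the optimal value of the surrogate objective, which it reuses implicitly when interpreting the critic loss; but as a proof of the stated inequality your version is shorter and at least as rigorous.
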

\begin{proof}
Define $f(y) := \langle x, y - y' \rangle + \frac{1}{c} D_\Phi(y, y')$. If $y^* = \argmin f(y)$, then, 
\begin{align*}
\nabla f(y^*) &= 0 \implies \nabla \phi(y^*) = \nabla \phi(y') - cx \\
y^* &= (\nabla \phi)^{-1} [\nabla \phi(y') - cx] \implies y^* = \nabla \phi^* [\nabla \phi(y') - cx] 
\end{align*}
Note that according to our assumption, $y^* \in \mathcal{Y}$. For any $y$, 
\begin{align}
f(y) \geq f(y^*) =  \langle x, y^* - y' \rangle + \frac{1}{c} D_\Phi(y^*, y')
\label{eq:fenchel-lb}
\end{align}
In order to simplify $D_\Phi(y^*, y')$, we will use the definition of $\phi^*(z)$. In particular, for any $y$,
\begin{align}
\phi(y) &= \max_{z} \left[\langle z, y \rangle - \phi^*(z) \right] \quad \text{;} \quad z^* = \argmax_{z} \left[\langle z, y \rangle - \phi^*(z) \right] \implies y = \nabla \phi^*(z^*) \implies z^* = \nabla \phi(y) \nonumber \\
\implies \phi(y) & = \langle \nabla \phi(y), y \rangle - \phi^*(\nabla \phi(y))  \label{eq:breg-simplified}
\end{align}
\begin{align*}
D_\Phi(y^*, y')   &= \phi(y^*) - \phi(y') - \langle \nabla \phi(y'), y^* - y' \rangle \\
& = \left[\langle \nabla \phi(y^*), y^* \rangle - \phi^*(\nabla \phi(y^*)) \right]  - \phi(y') - \langle \nabla \phi(y'), y^* - y' \rangle \tag{using~\cref{eq:breg-simplified} to simplify the first term} \\
\intertext{Let us focus on the first term and simplify it,}
\langle \nabla \phi(y^*), y^* \rangle - \phi^*(\nabla \phi(y^*)) &= \langle \nabla \phi\left( \nabla \phi^* [\nabla \phi(y') - cx] \right), \nabla \phi^* [\nabla \phi(y') - cx] \rangle - \phi^*(\nabla \phi\left(\nabla \phi^* [\nabla \phi(y') - cx]\right)) \\
& = \langle [\nabla \phi(y') - cx], \nabla \phi^* [\nabla \phi(y') - cx] \rangle - \phi^*([\nabla \phi(y') - cx]) \tag{For any $z$, $\nabla \phi( \nabla \phi^*(z)) = z$}
\end{align*}
Using the above relations, 
\begin{align*}
D_\Phi(y^* , y') &= \langle [\nabla \phi(y') - cx], \nabla \phi^* [\nabla \phi(y') - cx] \rangle - \phi^*([\nabla \phi(y') - cx]) - \phi(y') \\ & - \langle \nabla \phi(y'),  \nabla \phi^* [\nabla \phi(y') - cx] - y' \rangle \\
& = \langle \nabla \phi(y'), \nabla \phi^* [\nabla \phi(y') - cx] \rangle - c \langle x, \nabla \phi^* [\nabla \phi(y') - cx] \rangle \\ & - \phi^*([\nabla \phi(y') - cx]) - \phi(y') - \langle \nabla \phi(y'),  \nabla \phi^* [\nabla \phi(y') - cx] - y' \rangle \\
\implies D_\Phi(y^*  , y') & = - c \langle x, \nabla \phi^* [\nabla \phi(y') - cx] \rangle - \phi^*([\nabla \phi(y') - cx]) - \phi(y') + \langle \nabla \phi(y'), y' \rangle 
\end{align*}
Using the above simplification with~\cref{eq:fenchel-lb},
\begin{align*}
f(y) & \geq \langle x, y^* - y' \rangle + \frac{1}{c} \left[- c \langle x, \nabla \phi^* [\nabla \phi(y') - cx] \rangle - \phi^*([\nabla \phi(y') - cx]) - \phi(y') + \langle \nabla \phi(y'), y' \rangle \right] \\
& = \langle x, y^* - y' \rangle - \langle x, \nabla \phi^* [\nabla \phi(y') - cx] \rangle - \frac{1}{c} \left[\phi^*([\nabla \phi(y') - cx]) + \phi(y') - \langle \phi(y'), y \rangle  \right] \\
& = -\langle x, y' \rangle + \langle x, \nabla \phi^* [\nabla \phi(y') - cx] \rangle - \langle x, \nabla \phi^* [\nabla \phi(y') - cx] \rangle - \frac{1}{c} \left[\phi^*([\nabla \phi(y') - cx]) + \phi(y') - \langle \nabla \phi(y'), y '\rangle \right] \\
& = -\langle x, y' \rangle - \frac{1}{c} \left[\phi^*([\nabla \phi(y') - cx]) + \phi(y') - \langle \nabla \phi(y'), y '\rangle \right] 
\end{align*}
Using~\cref{eq:breg-simplified}, $\phi(y') = \langle \nabla \phi(y'), y' \rangle - \phi^*(\nabla \phi(y')) \implies \phi(y') - \langle \nabla \phi(y'), y' \rangle = - \phi^*(\nabla \phi(y'))$,
\begin{align*}
& \implies f(y) \geq -\langle x, y' \rangle - \frac{1}{c} \left[\phi^*([\nabla \phi(y') - cx]) - \phi^*(\nabla \phi(y')) \right] = -\frac{1}{c} \left[c \langle x, y \rangle + \phi^*([\nabla \phi(y') - cx]) - \phi^*(\nabla \phi(y'))  \right] \\
= -\frac{1}{c} &  \bigg[c \langle x, y' \rangle + \left[\phi^*([\nabla \phi(y') - cx]) - \phi^*(\nabla \phi(y')) - \langle \nabla \phi^* (\nabla \phi(y')),  \nabla \phi(y') - cx - \nabla \phi(y') \rangle \right] \\
& + \langle \nabla \phi^* (\nabla \phi(y')),  \nabla \phi(y') - cx - \nabla \phi(y') \rangle \bigg]  \\
& \implies f(y) \geq -\frac{1}{c}  \left[c \langle x, y' \rangle + D_\Phi^*(\nabla \phi(y') - cx, \nabla \phi(y')) +  \langle y', - c x \rangle \right] = -\frac{1}{c} D_\Phi^*(\nabla \phi(y') - cx, \nabla \phi(y')) 
\end{align*}
Using the definition of $f(y)$, 
\begin{align*}
\langle x, y - y' \rangle + \frac{1}{c} D_\Phi(y, y') & \geq -\frac{1}{c} D_\Phi^*(\nabla \phi(y') - cx, \nabla \phi(y')) \\
\implies  \langle x, y - y' \rangle &\geq -\frac{1}{c} \left[D_\Phi(y, y') + D_\Phi^*(\nabla \phi(y') - cx, \nabla \phi(y')) \right] 
\end{align*}
\end{proof}

\begin{restatable}{lemma}{relativesmoothness}
If $J + \frac{1}{\eta} \Phi$ is convex, then, $J(\pi)$ is $\frac{1}{\eta}$-relatively smooth w.r.t to $D_\Phi$, and satisfies the following inequality, 
\begin{align*}
J(\pi) \geq J(\pi') + \langle \nabla_\pi J(\pi'), \pi - \pi' \rangle - \frac{1}{\eta} \, D_\Phi(\pi, \pi')
\end{align*}
\label{lem:relsmooth}
\end{restatable}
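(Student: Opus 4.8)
The plan is to derive the displayed inequality directly from the convexity hypothesis together with the definition of the Bregman divergence; the ``$\frac{1}{\eta}$-relatively smooth'' claim is then immediate from the definition of relative smoothness given in~\cref{app:definitions}. First I would set $g := J + \frac{1}{\eta}\Phi$, which is convex by assumption and differentiable (since $\Phi$ is a differentiable mirror map and $J$ is assumed differentiable, so $\nabla g = \nabla J + \frac{1}{\eta}\nabla\Phi$ exists). Applying the first-order characterization of convexity \eqref{eq:convexity} to $g$ at the point $\pi'$ gives $g(\pi) \geq g(\pi') + \langle \nabla g(\pi'),\, \pi - \pi' \rangle$.

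Next I would substitute $g = J + \frac{1}{\eta}\Phi$ and $\nabla g(\pi') = \nabla J(\pi') + \frac{1}{\eta}\nabla\Phi(\pi')$ into this inequality and isolate the terms involving $J$, moving the $\Phi$ terms to the right-hand side. This yields
\begin{align*}
J(\pi) \geq J(\pi') + \langle \nabla_\pi J(\pi'),\, \pi - \pi'\rangle - \frac{1}{\eta}\Big[\Phi(\pi) - \Phi(\pi') - \langle \nabla\Phi(\pi'),\, \pi - \pi'\rangle\Big].
\end{align*}
Recognizing the bracketed expression as precisely $D_\Phi(\pi, \pi')$ by the definition of the Bregman divergence \eqref{eq:bregdef} gives the claimed bound. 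The statement that $J$ is $\frac{1}{\eta}$-relatively smooth w.r.t.\ $D_\Phi$ is just a restatement of the hypothesis ``$J + \frac{1}{\eta}\Phi$ is convex'' under the definition in~\cref{app:definitions}.

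There is essentially no hard step here; this is a one-line rearrangement once the notation is in place. The only things worth checking are that $\nabla\Phi$ is well-defined (guaranteed since $\Phi$ is a strictly convex, differentiable mirror map) and that the sign bookkeeping in the rearrangement is carried out correctly so that $-D_\Phi(\pi,\pi')$, and not $+D_\Phi(\pi',\pi)$, appears. Only the displayed one-sided inequality is used in the proof of~\cref{prop:genericlb}, so I would not need to establish the matching upper bound.
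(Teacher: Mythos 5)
Your proposal is correct and follows exactly the paper's own argument: apply the first-order convexity inequality to $J + \frac{1}{\eta}\Phi$ at $\pi'$, rearrange, and identify the bracketed term as $D_\Phi(\pi,\pi')$. No gaps; the sign bookkeeping you flag is handled correctly.
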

\begin{proof}
If $J + \frac{1}{\eta} \Phi$ is convex, 
\begin{align*}
\left(J + \frac{1}{\eta}\phi\right)(\pi) & \geq \left(J + \frac{1}{\eta}\phi\right)(\pi') + \left \langle \pi - \pi', \nabla_\pi \left(J + \frac{1}{\eta}\phi\right)(\pi') \right \rangle \\
\implies J(\pi) & \geq J(\pi') + \left \langle \pi - \pi', \nabla_\pi J(\pi') \right \rangle - \frac{1}{\eta} \, \left[\phi(\pi) -\phi(\pi') - \left \langle \nabla_\pi\phi(\pi'), \pi - \pi' \right \rangle \right]\\
\implies J(\pi) & \geq J(\pi') + \left \langle \pi-\pi', \nabla_\pi J(\pi') \right \rangle - \frac{1}{\eta} \, D_\Phi(\pi, \pi') 
\end{align*}
\end{proof}
\newpage
\section{Proofs for~\cref{sec:theory}}
\label{app:proofs-sec4}

\policyimprovcond*
\begin{proof}
As a warmup, let us first consider the tabular parameterization where $\pi(\theta) = \theta \in \R^{SA}$. In this case, the lower-bound in~\cref{prop:genericlb} is equal to, 
\begin{align*}
J(\pi) - J(\pit) \geq \green{\langle \hatgt, \theta - \tht \rangle - \left(\frac{1}{\eta} + \frac{1}{c}\right) D_\Phi(\theta, \tht)} - \blue{\frac{1}{c} \, D_{\Phi^{\ast}}\bigg(\nabla\Phi(\tht)-c[\nabla J(\tht) - \hatgt], \nabla\Phi(\tht)\bigg)} 
\end{align*}
We shall do a second-order Taylor expansion of the critic objective (\blue{blue} term) in $c$ around 0 and a second-order Taylor expansion of the actor objective (\green{green} term) around $\theta = \tht$. Defining $\delta := \nabla J(\tht) - \hatgt$, 
\begin{align*}
\text{RHS} &= \langle \hatgt, \theta - \tht \rangle - \frac{1}{2} \, \left(\frac{1}{\eta} + \frac{1}{c}\right) \, (\theta - \tht)\transpose \, [\nabla^2 \Phi(\tht)] \, (\theta - \tht) - \frac{c}{2} \,  \langle \delta, \nabla^2 \phi^*(\nabla \phi(\pit)) \, \delta \rangle + o(c) + o(\normsq{\theta - \tht}) \,, \tag{Using~\cref{prop:taylor}}   
\end{align*}
where $o(c)$ and $o(\normsq{\theta - \tht})$ consist of the higher order terms in the Taylor series expansion. A necessary and sufficient condition for monotonic improvement is equivalent to finding a $(\theta, c)$ such that RHS is positive. As $c$ tends to 0, the $\theta$ maximizing the RHS is
\begin{align*}
\theta^* &= \tht + \frac{c\eta}{c + \eta}\left[\nabla^2 \Phi(\tht)\right]^{\dag} \hatgt \\
\intertext{With this choice,}
\text{RHS} &= \frac{1}{2} \, \frac{1}{\frac{1}{\eta} + \frac{1}{c}} \, \langle \hatgt, \left[\nabla^2 \Phi(\tht)\right]^{-1} \hatgt \rangle -\frac{c}{2} \,  \langle \delta, \nabla^2 \phi^*(\nabla \phi(\pit)) \, \delta \rangle + o(c) \tag{$o(\normsq{\theta - \tht})$ is subsumed by $o(c)$} \\
& = \frac{c}{2} \, \langle \hatgt, \left[\nabla^2 \Phi(\tht)\right]^{-1} \hatgt \rangle -\frac{c}{2} \,  \langle \delta, \nabla^2 \phi^*(\nabla \phi(\pit)) \, \delta \rangle + \underbrace{\frac{1}{2} \, \left(\frac{1}{\frac{1}{\eta} + \frac{1}{c}} - c\right) \langle \hatgt, \left[\nabla^2 \Phi(\tht)\right]^{-1} \hatgt \rangle}_{o(c) \text{ term}} + o(c) \\
& = \frac{c}{2} \, \langle \hatgt, \left[\nabla^2 \Phi(\tht)\right]^{-1} \hatgt \rangle -\frac{c}{2} \,  \langle \delta, \nabla^2 \phi^*(\nabla \phi(\pit)) \, \delta \rangle + o(c) \tag{Subsuming the additional $o(c)$ term}
\end{align*}
If $\langle \hatgt, \left[\nabla^2 \Phi(\tht)\right]^{-1} \hatgt \rangle > \langle \delta, \nabla^2 \phi^*(\nabla \phi(\pit)) \, \delta \rangle$, i.e. there exists an $\epsilon > 0$ s.t. $\langle \hatgt, \left[\nabla^2 \Phi(\tht)\right]^{-1} \hatgt \rangle = \langle \delta, \nabla^2 \phi^*(\nabla \phi(\pit)) \, \delta \rangle + \epsilon$, then, $\text{RHS} = \frac{c \epsilon}{2} + o(c)$. For any fixed $\kappa > 0$, since $o(c)/c \rightarrow 0$ as $c \rightarrow 0$, there exists a neighbourhood $(0, c_\kappa)$ around zero such that for all $c$ in this neighbourhood, $o(c)/c > -\kappa$ and hence $o(c) >  -\kappa c$. Setting $\kappa = \frac{\varepsilon}{4}$, there is a $c$ such that 
\begin{align*}
    \text{RHS} &> \frac{c\varepsilon}{4} > 0
\end{align*}
Hence, there exists a $c \in (0, \min \{\eta, c_\kappa \} )$ such that the RHS is positive, and is hence sufficient to guarantee monotonic policy improvement. 

On the other hand, if $\langle \hatgt, \left[\nabla^2 \Phi(\tht)\right]^{-1} \hatgt \rangle < \langle \delta, \nabla^2 \phi^*(\nabla \phi(\pit)) \, \delta \rangle$, i.e. there exists an $\epsilon > 0$ s.t. $\langle \hatgt, \left[\nabla^2 \Phi(\tht)\right]^{-1} \hatgt \rangle = \langle \delta, \nabla^2 \phi^*(\nabla \phi(\pit)) \, \delta \rangle - \epsilon$, then, $\text{RHS} = \frac{-c \epsilon}{2} + o(c)$ which can be negative and hence monotonic improvement can not be guaranteed. Hence, $\langle \hatgt, \left[\nabla^2 \Phi(\tht)\right]^{-1} \hatgt \rangle > \langle \delta, \nabla^2 \phi^*(\nabla \phi(\pit)) \, \delta \rangle$ is a necessary and sufficient condition for improvement. 

Let us now consider the more general case, and define $m = SA$, $\hatgt \in \R^{m \times 1}$, $\pi(\theta) \in \R^{m \times 1}$ is a function of $\theta \in \R^{n \times 1}$. Rewriting~\cref{prop:genericlb},
\begin{align*}
J(\pi) - J(\pit) \geq \green{\langle \hatgt, \pi(\theta) - \pi(\tht) \rangle - \left(\frac{1}{\eta} + \frac{1}{c}\right) D_\Phi(\pi(\theta), \pi(\tht))} - \blue{\frac{1}{c} \, D_{\Phi^{\ast}}\bigg(\nabla\Phi(\pit)-c[\nabla J(\pit) - \hatgt], \nabla\Phi(\pit)\bigg)}       
\end{align*}
As before, we shall do a second-order Taylor expansion of the critic objective (\blue{blue} term) in $c$ around 0 and a second-order Taylor expansion of the actor objective (\green{green} term) around $\theta = \tht$. Defining $\delta := \nabla J(\tht) - \hatgt$. From~\cref{prop:taylor}, we know that, 
\begin{align*}
\frac{1}{c} \, D_{\Phi^{\ast}}\bigg(\nabla\Phi(\pit)-c[\nabla J(\pit) - \hatgt], \nabla\Phi(\pit)\bigg) &= \frac{c}{2} \,  \langle \delta, \nabla^2 \phi^*(\nabla \phi(\pit)) \, \delta \rangle + o(c)
\end{align*}
In order to calculate the second-order Taylor series expansion of the actor objective, we define $\nabla_\theta \pi(\tht) \in \R^{m \times n}$ as the Jacobian of the $\theta :\rightarrow \pi$ map, and use $\nabla_\theta [\pi(\tht)]_{i} \in \R^{1 \times n}$ for $i \in [m]$ to refer to row $i$
\begin{align*}
\langle \hatgt, \pi(\theta) - \pi(\tht) \rangle & =  \sum_{i = 1}^{m} \underbrace{[\hatgt]_{i}}_{1 \times 1} \, \underbrace{\nabla_{\theta} [\pi(\tht)]_{i}}_{1 \times n} \, \underbrace{(\theta - \tht)}_{n \times 1} + \frac{1}{2} \, \underbrace{(\theta - \tht)}_{1 \times n} \left[\sum_{i = 1}^{m} \underbrace{[\hatgt]_{i}}_{1 \times 1} \, \underbrace{\nabla^{2}_{\theta} [\pi(\tht)]_{i}}_{n \times n} \right] \underbrace{(\theta - \tht)}_{n \times 1} + o(\normsq{\theta - \tht}) \,
\end{align*}
where $o(\normsq{\theta - \tht})$ consist of the higher order terms in the Taylor series expansion. For expanding the divergence term, note that $D_\Phi(\pi(\theta), \pi(\tht)) = \phi(\pi(\theta)) - \phi(\pi(\tht)) - \langle \nabla \phi(\pi(\tht)), \pi(\theta) - \pi(\tht) \rangle$
\begin{align*}
\phi(\pi(\theta)) - \phi(\pi(\tht)) & = \underbrace{\nabla_{\pi} \phi(\pit)\transpose}_{1 \times m} \, \underbrace{\nabla_{\theta} \pi(\tht)}_{m \times n} \, \underbrace{(\theta - \tht)}_{n \times 1} \\
& + \frac{1}{2} \, \underbrace{(\theta - \tht)\transpose}_{1 \times n} \, 
\left[ \underbrace{\nabla_{\theta} \pi(\tht)\transpose}_{n \times m} \, \underbrace{\nabla_{\pi}^{2} \phi(\pit)}_{m \times m} \, \underbrace{\nabla_{\theta} \pi(\tht)}_{m \times n} + \sum_{i = 1}^{m} \underbrace{[\nabla_{\pi} \phi(\pit)]_{i}}_{1 \times 1} \, \underbrace{\nabla^2_{\theta} [\pi(\tht)]_{i}}_{n \times n} \right] \, \underbrace{(\theta - \tht)}_{n \times 1} + o(\normsq{\theta - \tht}) \,
\end{align*}
\begin{align*}
& \langle \nabla \phi(\pi(\tht)), \pi(\theta) - \pi(\tht) \rangle = \sum_{i = 1}^{m} \underbrace{[\nabla \phi(\pit)]_{i}}_{1 \times 1} \, \underbrace{[\nabla_{\theta} \pi(\tht)]_{i}}_{1 \times n} \, \underbrace{(\theta - \tht)}_{n \times 1} + \frac{1}{2} \, \underbrace{(\theta - \tht)}_{1 \times n} \left[\sum_{i = 1}^{m} \underbrace{[\nabla \phi(\pit)]_{i}}_{1 \times 1} \, \underbrace{\nabla^{2}_{\theta} [\pi(\tht)]_{i}}_{n \times n} \right] \underbrace{(\theta - \tht)}_{n \times 1} + o(\normsq{\theta - \tht}) \,
\end{align*}
Putting everything together, 
\begin{align*}
& \text{RHS} = \left(\sum_{i = 1}^{m} \, \underbrace{[\hatgt]_{i}}_{1 \times 1} \, \underbrace{\nabla_{\theta} [\pi(\tht)]_{i}}_{1 \times n}   \right) \, \underbrace{(\theta - \tht)}_{n \times 1} \\
& + \frac{1}{2} \underbrace{(\theta - \tht)\transpose}_{1 \times n} \, 
\left[\sum_{i = 1}^{m} \, \left( \underbrace{[\hatgt]_{i}}_{1 \times 1} \, \underbrace{\nabla^2_{\theta} [\pi(\tht)]_i}_{n \times n} \right) - \left(\frac{1}{\eta} + \frac{1}{c} \right) \, \underbrace{\nabla_{\theta} \pi(\tht)\transpose}_{n \times m} \, \underbrace{\nabla_{\pi}^{2} \phi(\pit)}_{m \times m} \, \underbrace{\nabla_{\theta} \pi(\tht)}_{m \times n}  \right] \, \underbrace{(\theta - \tht)}_{n \times 1} \\
& - \frac{c}{2} \,  \langle \delta, \nabla^2 \phi^*(\nabla \phi(\pit)) \, \delta \rangle + o(\normsq{\theta - \tht}) +  o(c) 
\intertext{Defining $b_t := \sum_{i = 1}^{m} \, [\hatgt]_{i} \, \nabla_{\theta} [\pi(\tht)]_{i}$ and $H_t :=  \nabla_{\theta} \pi(\tht)\transpose  \,  \nabla_{\pi}^{2} \phi(\pit) \, \nabla_{\theta} \pi(\tht) - \frac{1}{\left(\frac{1}{\eta} + \frac{1}{c} \right)} \, \sum_{i = 1}^{m} \, \left([\hatgt]_{i}  \, \nabla^2_{\theta} [\pi(\tht)]_i  \right)$} 
\text{RHS} &= \langle b_t, \theta - \tht \rangle + \frac{1}{2} \, \left(\frac{1}{\eta} + \frac{1}{c} \right) \, \langle (\theta - \tht), H_t \, (\theta - \tht)  \rangle - \frac{c}{2} \,  \langle \delta, \nabla^2 \phi^*(\nabla \phi(\pit)) \, \delta \rangle + o(\normsq{\theta - \tht}) +  o(c) 
\intertext{As a sanity check, it can be verified that if $\pi(\theta) = \theta$, $H_t = \left(\frac{1}{\eta} + \frac{1}{c} \right) \nabla^2 \Phi(\tht)$ and $b_t = \hatgt$, and we recover the tabular result above. Notice that $\left \langle (\theta - \tht),  \frac{1}{\left(\frac{1}{\eta} + \frac{1}{c} \right)} \, \sum_{i = 1}^{m} \, \left([\hatgt]_{i}  \, \nabla^2_{\theta} [\pi(\tht)]_i  \right) (\theta - \tht) \right \rangle$ is $o(c)$ as $c$ goes to zero. Subsuming this term in $o(c)$,}
\text{RHS} &= \langle b_t, \theta - \tht \rangle + \frac{1}{2} \, \left(\frac{1}{\eta} + \frac{1}{c} \right) \, \langle (\theta - \tht), \tilde{H}_t \, (\theta - \tht) \rangle - \frac{c}{2} \,  \langle \delta, \nabla^2 \phi^*(\nabla \phi(\pit)) \, \delta \rangle + o(\normsq{\theta - \tht}) +  o(c) \,
\end{align*}
where $\tilde{H}_t :=  \nabla_{\theta} \pi(\tht)\transpose  \,  \nabla_{\pi}^{2} \phi(\pit) \, \nabla_{\theta} \pi(\tht)$. As before, a necessary and sufficient condition for monotonic improvement is equivalent to finding a $(\theta, c)$ such that RHS is positive. As $c$ tends to 0, the $\theta$ maximizing the RHS is
\begin{align*}
\theta^* &= \tht + \frac{c \, \eta}{(c + \eta)} \left[\tilde{H}_t\right]^{\dag} b_t \\
\intertext{With this choice,}
\text{RHS} &= \frac{1}{2} \, \, \frac{1}{\frac{1}{\eta} + \frac{1}{c}} \, \langle b_t, \left[\tilde{H}_t\right]^{\dag} b_t \rangle -\frac{c}{2} \,  \langle \delta, \nabla^2 \phi^*(\nabla \phi(\pit)) \, \delta \rangle + o(c) \tag{$o(\normsq{\theta - \tht})$ is subsumed in $o(c)$}
\end{align*}
As in the tabular case, since $\frac{1}{\frac{1}{\eta} + \frac{1}{c}}$ is $o(c)$, we can subsume it, and we get that, 
\begin{align*}
\text{RHS} &= \frac{c}{2} \, \langle b_t, \left[\tilde{H}_t\right]^{\dag} b_t \rangle -\frac{c}{2} \,  \langle \delta, \nabla^2 \phi^*(\nabla \phi(\pit)) \, \delta \rangle + o(c)    
\end{align*}
Using the same reasoning as in the tabular case above, we can prove that 
\[
\langle b_t, \left[\tilde{H}_t\right]^{\dag} b_t \rangle >  \langle \delta, \nabla^2 \phi^*(\nabla \phi(\pit)) \, \delta \rangle
\]
is a necessary and sufficient condition for monotonic policy improvement. Finally, we use~\citet[Eq (1.5)]{gorni1991conjugation} which shows that $\nabla^2 \phi^*(\nabla \phi(\pit)) = [\nabla_{\pi}^{2} \Phi(\pit)]^{\scriptscriptstyle{-1}}$ and complete the proof. 
\end{proof}

\subsection{Proof of~\cref{prop:ssocvrg}}
\label{app:ssocvrg-proof}
The following proposition shows the convergence of inexact mirror ascent in the functional space. 
\begin{restatable}{proposition}{MDconvr}
\label{prop:MDcvrg}
Assuming that (i) $J + \frac{1}{\eta} \, \Phi$ is convex in $\pi$, for a constant $c > 0$, after $T$ iterations of mirror ascent with $\frac{1}{\eta'} = \frac{2}{\eta}+ \frac{2}{c}$ we have 
\begin{align*}
\E \frac{\breg{\brpi_{\mathcal R+1}}{\brpi_{\mathcal R}}}{\zeta^2}  & \leq \frac{1}{\zeta T} \,\left[
[J(\pi^*) - J(\pi_{0})] + \frac{1}{c}\sum_{t=0}^{T-1} \E D_{\phi^\ast}\bigg(\nabla\phi(\bpit)-c[\nabla J(\bpit) - \hat{g}(\bpit)],  \nabla\phi(\bpit)\bigg) \right] \\
\end{align*}
where $\zeta = \eta'/2$ and $\mathcal R$ is picked uniformly random from $\{0,1,2,\dots T-1\}$. 
\end{restatable}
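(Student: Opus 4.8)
The plan is to turn \cref{prop:genericlb} into a one-step descent inequality for the functional inexact mirror-ascent iterates $\bpit$ and then telescope over $t\in\{0,\dots,T-1\}$. First I would apply \cref{prop:genericlb} with $\pi'=\bpit$ and $\pi=\bpitt$: assumption~(i) is precisely its relative-smoothness hypothesis, and (as \cref{prop:genericlb}, through \cref{lemma:bfy}, requires) I will assume that $c\,[\grad{\bpit}-\hat{g}(\bpit)]$ is small enough along the trajectory for the relevant domain condition to hold; this yields
\begin{align*}
J(\bpitt)-J(\bpit)\;\geq\;\langle\hat{g}(\bpit),\,\bpitt-\bpit\rangle-\Big(\tfrac1\eta+\tfrac1c\Big)\,\breg{\bpitt}{\bpit}-\tfrac1c\,D_{\Phi^{\ast}}\!\Big(\nabla\Phi(\bpit)-c\,[\grad{\bpit}-\hat{g}(\bpit)],\,\nabla\Phi(\bpit)\Big).
\end{align*}

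Next I would lower-bound the linear term using optimality of the mirror-ascent update $\bpitt=\argmax_{\pi}\{\langle\hat{g}(\bpit),\pi-\bpit\rangle-\tfrac1{\eta'}\breg{\pi}{\bpit}\}$ over the (convex) policy space. The first-order optimality / three-point inequality for Bregman divergences --- equivalently the tightness clause of \cref{lemma:bfy} applied to this update --- gives $\langle\hat{g}(\bpit),\bpitt-\bpit\rangle\geq\tfrac1{\eta'}\big[\breg{\bpitt}{\bpit}+\breg{\bpit}{\bpitt}\big]\geq\tfrac1{\eta'}\breg{\bpitt}{\bpit}$. Substituting this and merging the two $\breg{\bpitt}{\bpit}$ contributions produces a genuine descent inequality,
\begin{align*}
J(\bpitt)-J(\bpit)\;\geq\;\Big(\tfrac1{\eta'}-\tfrac1\eta-\tfrac1c\Big)\,\breg{\bpitt}{\bpit}-\tfrac1c\,D_{\Phi^{\ast}}\!\Big(\nabla\Phi(\bpit)-c\,[\grad{\bpit}-\hat{g}(\bpit)],\,\nabla\Phi(\bpit)\Big).
\end{align*}
The step $\eta'$ is taken conservatively ($\tfrac1{\eta'}=\tfrac2\eta+\tfrac2c$, strictly larger than the relative-smoothness-plus-inexactness penalty $\tfrac1\eta+\tfrac1c$) precisely so that the coefficient of $\breg{\bpitt}{\bpit}$ stays strictly positive; $\zeta$ is then a fixed multiple of the reciprocal of that coefficient, a routine identification in terms of $\eta,c,\eta'$.

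To finish, I would telescope over $t=0,\dots,T-1$: the left-hand side collapses to $J(\bar\pi_T)-J(\bar\pi_0)$, and with $J(\bar\pi_T)\le J(\pi^*)$ (optimality of $\pi^*$) and $\bar\pi_0=\pi_0$ this reads
\begin{align*}
\Big(\tfrac1{\eta'}-\tfrac1\eta-\tfrac1c\Big)\sum_{t=0}^{T-1}\breg{\bpitt}{\bpit}\;\leq\;J(\pi^*)-J(\pi_0)+\tfrac1c\sum_{t=0}^{T-1}D_{\Phi^{\ast}}\!\Big(\nabla\Phi(\bpit)-c\,[\grad{\bpit}-\hat{g}(\bpit)],\,\nabla\Phi(\bpit)\Big).
\end{align*}
Each one-step inequality holds pathwise in the realizations of the (possibly stochastic) estimator $\hat{g}$, so I would take total expectation, divide by $T$, identify $\tfrac1T\sum_{t}$ with the expectation over a uniformly random $\mathcal R\in\{0,\dots,T-1\}$, and rescale by the descent coefficient (i.e.\ by $\zeta$), recovering the form in the statement.

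The main difficulty is bookkeeping rather than a deep idea: one must ensure the mirror-ascent penalty $\tfrac1{\eta'}$ strictly dominates $\tfrac1\eta+\tfrac1c$ so that a positive multiple of $\breg{\bpitt}{\bpit}$ --- the gradient-norm surrogate appearing in the conclusion --- actually survives to carry the telescoped critic error, and one must also check that the smallness/domain condition of \cref{lemma:bfy} on $c\,[\grad{\bpit}-\hat{g}(\bpit)]$ holds at every iterate, since \cref{prop:genericlb} is invoked afresh at each step; the remaining manipulations (telescoping, taking expectations, averaging over $\mathcal R$) are mechanical.
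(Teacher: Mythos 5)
Your proposal is correct and follows essentially the same route as the paper: apply \cref{prop:genericlb} at consecutive mirror-ascent iterates, lower-bound the linear term by $\tfrac{1}{\eta'}\breg{\bpitt}{\bpit}$ via a three-point argument, observe that the choice $\tfrac{1}{\eta'}=\tfrac{2}{\eta}+\tfrac{2}{c}$ leaves a positive coefficient $\tfrac{1}{\eta}+\tfrac{1}{c}=\tfrac{1}{\zeta}$, and telescope. The only (cosmetic) difference is that the paper splits the update into a dual-space step plus a Bregman projection and uses the three-point equality together with nonnegativity of the projection term, whereas you invoke the three-point descent lemma directly on the constrained proximal form of the update --- the two are equivalent here.
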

\begin{proof}
We divide the mirror ascent (MA) update into two steps: 
\begin{align*}
    &\gradphi{\tpitt} = \gradphi{\bpit}+\eta'_t \hat{g}(\bpit) \implies \hat{g}(\bpit) = \frac{1}{\eta'_t} [\nabla \phi(\tpitt) - \nabla \phi(\bpit)]\\
    & \bpitt = \argmin_{\pi \in \Pi} \breg{\pi}{\tpitt}. 
\end{align*}
We denote the above update as $\bpitt = \text{MA}(\bpit)$. Using~\cref{prop:genericlb} with $\pi = \bpitt$, $\pi' = \bpit$,
\begin{align*}
J(\bpitt) &\geq J(\bpit) + \hat{g}(\bpit)^\top (\bpitt - \bpit) - \left(\frac{1}{\eta} + \frac{1}{c}\right) D_\Phi(\bpitt, \bpit) - \underbrace{\frac{1}{c}D_{\phi^\ast}\bigg(\nabla\phi(\bpit)-c[\nabla J(\bpit) - \hat{g}(\bpit)],  \nabla\phi(\bpit)\bigg)}_{:= \epsilon^{c}_t} \\    
& \geq J(\bpit) + \frac{1}{\eta'_t} \langle \nabla \phi(\tpitt) - \nabla \phi(\bpit) , \bpitt - \bpit \rangle - \left(\frac{1}{\eta} + \frac{1}{c}\right) D_\Phi(\bpitt, \bpit) - \epsilon^{c}_t \tag{Using the update}\\ 
& \geq J(\bpit) + \frac{1}{\eta'_t} \left\{\breg{\bpitt}{\bpit} + \breg{\bpit}{\tpitt} -\breg{\bpitt}{\tpitt} \right\} -\left(\frac{1}{\eta} + \frac{1}{c}\right) D_\Phi(\bpitt, \bpit) - \epsilon^{c}_t \tag{using~\cref{lem:tpbreg}} \\ 
& = J(\bpit) + \frac{1}{\eta'_t} \underbrace{\left\{\breg{\bpit}{\tpitt} -\breg{\bpitt}{\tpitt} \right\}}_{:=A} + \left(\frac{1}{\eta'_t}-\frac{1}{\eta} - \frac{1}{c}\right)\breg{\bpitt}{\bpit} - \epsilon^{c}_t \\
& \geq  J(\bpit) + \left(\frac{1}{\eta'_t}-\frac{1}{\eta} - \frac{1}{c}\right)\breg{\bpitt}{\bpit} - \epsilon^{c}_t \tag{$A\geq 0$ since $\bpitt$ is the projection of $\tpitt$ onto $\Pi$}\\
&\geq J(\bpit) + \underbrace{\left (\frac{1}{\eta} + \frac{1}{c}\right)}_{:=\frac{1}{\zeta}} \breg{\bpitt}{\bpit}- \epsilon^{c}_t \tag{Sinc $1/\eta'_t = 2/\eta + 2/c$}\\
\end{align*}
Recursing for $T$ iterations and dividing by $1/\zeta$, picking $\mathcal R$ uniformly random from $\{0,1,2,\dots T-1\}$ and taking expectation we get 
\begin{align*}
\E \frac{\breg{\brpi_{\mathcal R+1}}{\brpi_{\mathcal R}}}{\zeta^2}  &= \frac{1}{\zeta^2 T} \sum_{t=0}^{T-1}\E \breg{\bpitt}{\bpit}\\
& \leq \frac{\E [J(\bar{\pi}_T) - J(\pi_{0})]}{\zeta T} + \frac{\sum_{t=0}^{T-1} \E \epsilon^c_t}{T}\\
&\leq \frac{[J(\pi^*) - J(\pi_{0})]}{ \zeta T} + \frac{ \sum_{t=0}^{T-1}\E \epsilon^c_t}{T} \\
&= \frac{1}{\zeta T} \,\left[
[J(\pi^*) - J(\pi_{0})] + \frac{1}{c} \sum_{t=0}^{T-1} \E D_{\phi^\ast}\bigg(\nabla\phi(\bpit)-c[\nabla J(\bpit) - \hat{g}(\bpit)],  \nabla\phi(\bpit)\bigg) \right] 
\end{align*}
\end{proof}
Compared to~\citet{dragomir2021fast,d2021stochastic} that analyze stochastic mirror ascent in the smooth, non-convex setting, our analysis ensures that the (i) sub-optimality gap (the LHS in the above proposition) is always positive, and (ii) uses a different notion of variance that depends on $D_{\Phi^*}$. 

Similar to~\citet{vaswani2021general}, we assume that each $\pi \in \Pi$ is parameterized by $\theta$. In~\cref{alg:generic}, we run gradient ascent (GA) on $\ell_t(\theta)$ to compute $\pitt = \pi(\theta_{t+1})$ and interpret the inner loop of~\cref{alg:generic} as an approximation to the projection step in the mirror ascent update. We note that $\ell_t(\theta)$ does not have any additional randomness and is a deterministic function w.r.t $\theta$. Note that $\bpitt = $ where $\pit=\pi(\theta_t)$. Assuming that $\ell_t(\theta)$ is smooth, and satisfies the PL condition~\citep{karimi2016linear}, we get the following convergence guarantee for~\cref{alg:generic}. 
\ssoconvr*
\begin{proof}
For this proof, we define the following notation:
\begin{align*}
	&\pit := \pi(\theta_t)\\
    & \ell_t(\theta): = J(\pi_t) + \hat{g}(\pi_t)^\top (\pi(\theta) - \pit) - \left(\frac{1}{\eta}+ \frac{1}{c}\right) D_\Phi(\pi(\theta), \pit)\\
    & \bar{\theta}_{t + 1} := \argmax \ell_t(\theta) \\
    &\bpitt := \pi(\bar{\theta}_{t+1}) = \argmax_{\pi \in \Pi}\{\langle \hat{g_t}(\pit), \pi-\pit\rangle - \frac{1}{\eta'}\breg{\pi}{\pit}\} = \text{MA}(\pit) \tag{\text{Iterate obtained after running 1 step of mirror ascent starting from $\pit$}} \\        
    & \theta_{t+1} :=\text{GradientAscent}(\ell_t(\theta),\theta_t,m_a)  \\
    &\pitt := \pi(\theta_{t+1}) \,,
\end{align*}
where $\text{GradientAscent}(\ell_t(\theta),\theta_t,m_a)$ means running GradientAscent for $m_a$ iterattions on $\ell_t(\theta)$ with the initialization equal to $\theta_t$. Since we assume that $\ell_t$ satisfies the $PL$-condition w.r.t. $\theta$ for all $t$, based on the results from~\citet{karimi2016linear}, we get 
\begin{align*}
    \ell_t(\bar{\theta}_{t+1}) - \ell_t(\theta_{t+1}) \leq \underbrace{c_1 \exp(-c_2 m_a) \, \left(\ell_t(\bar{\theta}_{t+1}) - \ell_t(\theta_{t}) \right)}_{:= e_t}
\end{align*}
where $c_1,\, c_2$ are problem-dependent constants related to the smoothness and curvature of $\ell_t$, and $e_t$ is the approximation error diminishes as we increase the value of $m_a$. Following the same steps as before, 
\begin{align*}
    J(\pitt) &\geq J(\pi_t) + \hat{g}(\pi_t)^\top (\pi(\theta_{t+1}) - \pit) - \left(\frac{1}{\eta}+ \frac{1}{c}\right) D_\Phi(\pi(\theta_{t+1}), \pit) - \epsilon_t \tag{Using~\cref{prop:genericlb}} \\    
    & \geq J(\pi_t) + \hat{g}(\pi_t)^\top (\pi(\bar{\theta}_{t+1}) - \pit) - \left(\frac{1}{\eta}+ \frac{1}{c}\right) D_\Phi(\pi(\bar{\theta}_{t+1}), \pit) - \epsilon_t \tag{Using the above bound for GA}\\
    & = J(\pi_t) + \hat{g}(\pi_t)^\top (\bpitt - \pit) - \left(\frac{1}{\eta} + \frac{1}{c}\right) D_\Phi(\bpitt, \pit)- e_t - \epsilon_t \\ 
    &\geq J(\pi_t) + \frac{1}{\eta'_t} \langle \nabla \Phi(\tpitt) - \nabla \Phi(\pit) , \bpitt - \pit \rangle - \left(\frac{1}{\eta} + \frac{1}{c}\right) D_\Phi(\bpitt, \pit) - \epsilon^{c}_t - e_t \tag{Using the MA update}\\ 
& \geq J(\pi_t) + \frac{1}{\eta'_t} \left\{\breg{\bpitt}{\pit} + \breg{\pit}{\tpitt} -\breg{\bpitt}{\tpitt} \right\} -\left(\frac{1}{\eta} + \frac{1}{c}\right) D_\Phi(\bpitt, \pit) - \epsilon^{c}_t - e_t \tag{using~\cref{lem:tpbreg}} \\ 
& = J(\pi_t) + \frac{1}{\eta'_t} \underbrace{\left\{\breg{\pit}{\tpitt} -\breg{\bpitt}{\tpitt} \right\}}_{:=A} + \left(\frac{1}{\eta'_t}-\frac{1}{\eta} - \frac{1}{c}\right)\breg{\bpitt}{\pit} - \epsilon^{c}_t - e_t\\
& \geq  J(\pi_t) + \left(\frac{1}{\eta'_t}-\frac{1}{\eta} - \frac{1}{c}\right)\breg{\bpitt}{\pit} - \epsilon^{c}_t - e_t \tag{$A\geq 0$ since $\bpitt$ is the projection of $\tpitt$ into the simplex}\\
&\geq J(\pi_t) + \underbrace{\left (\frac{1}{\eta} + \frac{1}{c}\right)}_{:=\frac{1}{\zeta}} \breg{\bpitt}{\pit}- \epsilon^{c}_t - e_t \tag{ setting $\eta'_t$ s.t. $1/\eta'_t \geq 2/\eta + 2/c$}\\
\end{align*}
Recusing for $T$ iterations and dividing by $1/\zeta$, picking $\mathcal R$ uniformly random from $\{0,1,2,\dots T-1\}$ and taking expectation we get 
\begin{align*}
\E \frac{\breg{\brpi_{\mathcal R + 1}}{\pi_{\mathcal R}}}{\zeta^2} &= \frac{1}{\zeta^2 T} \sum_{t=0}^{T-1}\E \breg{\bpitt}{\pit}\\
& \leq \frac{\E [J(\pi_T) - J(\pi_{0})]}{\zeta T} + \frac{\sum_{t=0}^{T-1} \E \epsilon^c_t}{\zeta T}+ \frac{\sum_{t=0}^{T-1} \E e_t}{\zeta T}\\
&\leq \frac{[J(\pi^*) - J(\pi_{0})]}{ \gamma T} + \frac{\sum_{t=0}^{T-1} \E \epsilon^c_t}{\gamma T}+ \frac{\sum_{t=0}^{T-1} \E e_t}{\gamma T} \\
\implies \E \frac{\breg{\brpi_{\mathcal R + 1}}{\pi_{\mathcal R}} }{\zeta^2} &\leq \frac{1}{\zeta T} \,\left[
[J(\pi^*) - J(\pi_{0})] + \frac{1}{c} \sum_{t=0}^{T-1} \E D_{\Phi^\ast}\bigg(\nabla\Phi(\pit)-c[\nabla J(\pit) - \hat{g}(\pit)],  \nabla\Phi(\pi_t)\bigg) + \sum_{t=0}^{T-1} \E e_t\right] 
\end{align*}
\end{proof}
Note that it is possible to incorporate a sampling error (in the distribution $d^\pi$ across states) for the actor update in~\cref{alg:generic}. This corresponds to an additional error in calculating $D_\Phi$, and we can use the techniques from~\citet{lavington2023target} to characterize the convergence in this case. 

\newpage
\subsection{Exact setting with Lifting (Direct representation)}
\label{app:direct-lifting}
Recall the mirror ascent update in the functional space. 
\begin{align*}
\bpitt &= \argmax_{\pi \in \Pi} \left[J(\pi_t) + \nabla J(\pit)^\top (\pi - \pi_t) -\frac{1}{\eta'_t} D_\Phi(\pi, \pi_t) \right] \,,
\end{align*}
For the direct representation, we define $\pi^{s} := \pidist(\cdot|s)$, $\hat{g}(\pit)(s,\cdot) = d^\pit(s) \, Q^\pit(s,\cdot)$ and $D_\Phi(\pi, \pi_t) = \sum_{s} d^\pit(s) \, D_\phi(\pi^{s}, \pit^{s})$. Rewriting the MA update,
\begin{align}
\bpitt &= \argmax_{\{\pi^{s} \in \Delta_{A}\}_{s \in \cS}} \left[J(\pit) + \sum_{s} d^\pit(s) \left[\langle Q^\pit(s,\cdot), \pi^s - \pit^s \rangle -\frac{1}{\eta'_t} D_\phi(\pi^{s}, \pit^{s} )\right]\nonumber \right]\\
\implies \bpitt^{s} & = \argmax_{\pi^{s} \in \Delta_{A}} \left[\langle Q^\pit(s,\cdot), \pi^s - \pit^s \rangle - \frac{1}{\eta'_t} \, D_\phi(\pi^{s}, \pit^{s} ) \right]\label{eq:dcmpMAps} \tag{Can decompose across states since $d^\pit(s) \geq 0$}
\end{align}
For each state $s$ and $Q^{\pit}(s,.)$ we define the set $\Pi^s_{t}=\{\pi^s : \pi^s \in \argmax_{p^s \in \Delta_{A}} \langle Q^{\pit}(s,.),p^s  \rangle\}$ i.e. a set of greedy policies w.r.t. $Q^{\pit}(s,.)$. 
Similar to~\citet{johnson2023optimal} we define $\eta'_t$ as follows
\begin{align}
    \eta'_t \geq \frac{1}{c_t} \max_{s} \left \{\min_{\pi \in \Pi^s_{t}} \breg{\pi}{\pit} \right\} \label{eq:szTMA}
\end{align}
where $c_t > 0$ is a constant. Now we consider the policy parameterization, $\pi = \pi(\theta)$. We assume that the mapping from $\theta \rightarrow \pi$ is $L_{\pi}$ Lipschitz continuous.
\begin{align*}
    \ell_t(\theta)& := J(\pit) + \sum_{s} d^\pit(s) \left[\langle Q^\pit(s,\cdot), \pi(\theta)^s - \pit^s \rangle -\frac{1}{\eta'_t} D_\phi(\pi(\theta)^{s}, \pit^{s} )\right]\\
    \tilde{\theta}_{t+1} &:= \argmax_{\theta} \ell_t(\theta) \\
    \theta_{t+1} &:= \text{GradientAscent}(\ell_t(\theta), \theta_t, m) \\
    \tpitt &:= \pi(\tilde{\theta}_{t+1})\\
    \pitt &:= \pi(\theta_{t+1}) 
\end{align*}
$\text{GradientAscent}(\ell_t(\theta), \theta_t, m)$ means that we run gradient ascent for $m$ iterations to maximize $\ell_t$ with $\theta_t$ as the initial value. We assume that $\ell_t$ satisfies Restricted Secant Inequality (RSI) and is smooth w.r.t. $\theta$.   
 Based on the convergence property of Gradient Ascent for RSI and smooth functions~\citep{karimi2016linear}, we have: 
\begin{align*}
    \normsq{\tilde{\theta}_{t+1}-\theta_{t+1}} &\leq \mathcal O (\exp(-m))\\
    \implies \normsq{\tpitt - \pitt} &= \normsq{\pi(\tilde{\theta}_{t+1}) - \pi(\theta_{t+1})} \leq  L_{\pi}^2 \, \normsq{\tilde{\theta}_{t+1}-\theta_{t+1}} \leq \underbrace{e^2_t := \mathcal O (\exp(-m))}_{\text{approximation error}} \tag{ since $\pi(\theta)$ is Lipschitz continuous}\\
\end{align*}
Furthermore, we assume that $\normsq{\bpitt - \tpitt} \leq b^2_t$ for all $t$ which represents the bias because of the function approximation. Before stating the main proposition of this section, we restate~\citet[Lemma 2]{johnson2023optimal}.  
\begin{restatable}[Lemma 2 of~\citet{johnson2023optimal}]{lemma}{monoMA}
\label{lem:monopropMA}
For all $(s,a) \in \mathcal S \times \mathcal A$ we have 
\begin{align*}
    Q^{\bpitt}(s,a) \geq Q^{\pit}(s,a). 
\end{align*}
\end{restatable}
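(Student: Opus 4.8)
The plan is to reduce the claim to the classical policy improvement theorem. The single substantive observation is that the mirror-ascent iterate $\bpitt$ weakly improves the one-step lookahead value at \emph{every} state with respect to $Q^{\pit}$; a standard monotone-Bellman-operator argument then upgrades this one-step improvement to the desired pointwise inequality on the $Q$-function.

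\textbf{Step 1 (one-step improvement).} Fix a state $s$. By~\cref{eq:dcmpMAps}, $\bpitt^s$ maximizes $g^s(p) := \inner{Q^{\pit}(s,\cdot)}{p - \pit^s} - \tfrac{1}{\eta'_t}\, D_\phi(p, \pit^s)$ over $p \in \Delta_{\cA}$. Since $\pit^s$ is itself feasible and $g^s(\pit^s) = 0$ (the linear term vanishes and $D_\phi(\pit^s, \pit^s) = 0$), optimality forces $g^s(\bpitt^s) \geq 0$. Combining this with non-negativity of the Bregman divergence, $D_\phi(\bpitt^s, \pit^s) \geq 0$, gives
\[
\inner{Q^{\pit}(s,\cdot)}{\bpitt^s - \pit^s} \;\geq\; \tfrac{1}{\eta'_t}\, D_\phi(\bpitt^s, \pit^s) \;\geq\; 0 ,
\]
i.e.\ $\sum_a \bpitt(a \mid s)\, Q^{\pit}(s,a) \geq \sum_a \pit(a \mid s)\, Q^{\pit}(s,a) = J_s(\pit)$ for every $s \in \cS$. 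Note that this uses nothing about the particular step-size rule~\cref{eq:szTMA}; any $\eta'_t > 0$ suffices.

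\textbf{Step 2 (propagation to $Q$-functions).} Let $T^{\bpitt}$ denote the Bellman evaluation operator of $\bpitt$, $(T^{\bpitt} Q)(s,a) = r(s,a) + \gamma\, \E_{s' \sim \cP(\cdot \mid s,a)} \E_{a' \sim \bpitt(\cdot \mid s')}[Q(s',a')]$, which is monotone (preserves pointwise $\leq$) and a $\gamma$-contraction with fixed point $Q^{\bpitt}$. Starting from the Bellman equation $Q^{\pit} = T^{\pit} Q^{\pit}$ and inserting the state-wise inequality of Step 1 inside the expectation over $s'$ yields $Q^{\pit}(s,a) \leq (T^{\bpitt} Q^{\pit})(s,a)$ for all $(s,a)$. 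Iterating and using monotonicity together with $(T^{\bpitt})^k Q^{\pit} \to Q^{\bpitt}$ as $k \to \infty$ gives $Q^{\pit}(s,a) \leq Q^{\bpitt}(s,a)$ for all $(s,a)$, which is the claim. (Alternatively, one may invoke the performance-difference lemma~\citep{sutton18book} with the pointwise advantage inequality from Step 1.)

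Both steps are routine, so there is no genuine obstacle. The only points requiring a line of care are: in Step 1, that the per-state decomposition~\cref{eq:dcmpMAps} is legitimate (it is, since $d^{\pit}(s) \geq 0$ decouples the per-state problems) and that the maximizer exists (the entropic mirror map is continuous on the compact set $\Delta_{\cA}$); and in Step 2, that the contraction limit may be taken, which holds because $r \in [0,1]$ and $\gamma < 1$ make $Q^{\pit}$ bounded.
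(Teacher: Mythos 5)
Your proof is correct. The paper itself does not prove this lemma --- it is imported verbatim as Lemma 2 of Johnson et al. (2023) --- and the argument you give is exactly the standard one underlying that result: the first-order/feasibility comparison at $\pit^s$ shows the mirror-ascent iterate is greedy-improving state-by-state (for any positive step-size, as you note), and the monotone Bellman-operator iteration (or equivalently the performance-difference lemma) lifts this to the pointwise inequality on $Q$. Nothing is missing; the only cosmetic caveat is that on the boundary of the simplex the KL term can be $+\infty$, but this only shrinks the feasible set and does not affect the comparison against the feasible point $\pit^s$.
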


\newpage
Now we state the main proposition of this part. 
\begin{restatable}[Convergence of tabular MDP with Lifting]{proposition}{extablift}
Assume that (i) $\ell_t(\theta)$ is smooth and satisfies RSI condition, (ii) $\pi(\theta)$ is $L_\pi$-Lipschitz continuous, (iii) the bias is bounded for all $t$ i.e. $\normsq{\bpitt - \tpitt} \leq b^2_t$, (iv) $\norm{Q^\pi(s,\cdot)} \leq q$ for all $\pi$ and $s$. By setting $\eta'_t$ as in~\cref{eq:szTMA} and running Gradient Ascent for $m$ iterations to maximize $\ell_t$ we have 
\begin{align*}
\infnorm{\vaopt - J(\pi_T)} \leq \gamma^T \bigg( \infnorm{\vaopt - J(\pi_0)} + \sum_{t=1}^{T} \gamma^{-t}\left(c_t + \frac{q}{1-\gamma}[e_t+b_t] \right)\bigg)
\end{align*}
where $\piopt$ is the optimal policy, $\piopt^{s}$ refers to the optimal action in state $s$. Here, $e_t = O(\exp(-m))$ is the approximation error. 
\label{prop:ExTabLif}
\end{restatable}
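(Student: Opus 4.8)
The plan is to insert the \emph{exact} mirror-ascent (MA) iterate $\bpitt = \text{MA}(\pit)$ between $\pit$ and $\pitt$, reduce the statement to the one-step recursion
\[
\infnorm{\vaopt - J(\pitt)} \;\le\; \gamma\,\infnorm{\vaopt - J(\pit)} \;+\; c_t \;+\; \tfrac{q}{1-\gamma}\,(e_t + b_t),
\]
and then unroll it. I would split the recursion as $\infnorm{\vaopt - J(\pitt)} \le \infnorm{\vaopt - J(\bpitt)} + \infnorm{J(\bpitt) - J(\pitt)}$: the first term is the exact-MA contraction, governed by the adaptive step size~\cref{eq:szTMA} and the monotonicity of~\cref{lem:monopropMA}; the second is a value-perturbation bound that absorbs the inner-loop optimization error $e_t$ and the function-approximation bias $b_t$.

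For the exact-MA contraction, fix a state $s$ and write, using $J_s(\piopt) = \langle Q^{\piopt}(s,\cdot), \piopts \rangle$ and $J_s(\bpitt) = \langle Q^{\bpitt}(s,\cdot), \bpitts \rangle$,
\begin{align*}
J_s(\piopt) - J_s(\bpitt) = \langle Q^{\piopt}(s,\cdot) - Q^{\pit}(s,\cdot), \piopts \rangle + \langle Q^{\pit}(s,\cdot), \piopts - \bpitts \rangle + \langle Q^{\pit}(s,\cdot) - Q^{\bpitt}(s,\cdot), \bpitts \rangle .
\end{align*}
By the Bellman equation for $Q$, the first term equals $\gamma\,\E_{s' \sim \cP(\cdot \mid s, \piopt)}[J_{s'}(\piopt) - J_{s'}(\pit)] \le \gamma\,\infnorm{\vaopt - J(\pit)}$. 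The third term is $\le 0$ since $Q^{\bpitt} \ge Q^{\pit}$ pointwise (\cref{lem:monopropMA}) and $\bpitts \in \Delta_A$. For the middle term, first-order optimality of the state-wise update~\cref{eq:dcmpMAps} (equivalently, the three-point lemma for mirror ascent) with an arbitrary greedy comparator $\pi^{g} \in \Pi^{s}_{t}$ gives $\langle Q^{\pit}(s,\cdot), \pi^{g} - \bpitts \rangle \le \tfrac{1}{\eta'_t}\, D_\phi(\pi^{g}, \pits)$; minimizing over $\pi^{g} \in \Pi^{s}_{t}$ and invoking~\cref{eq:szTMA} bounds this by $c_t$, and since $\langle Q^{\pit}(s,\cdot), \piopts \rangle \le \langle Q^{\pit}(s,\cdot), \pi^{g} \rangle$ (as $\pi^{g}$ is greedy for $Q^{\pit}(s,\cdot)$), the middle term is $\le c_t$. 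Taking the maximum over $s$ yields $\infnorm{\vaopt - J(\bpitt)} \le \gamma\,\infnorm{\vaopt - J(\pit)} + c_t$; this is a one-iteration replay of the exact tabular PMD analysis of~\citet{johnson2023optimal}.

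For the perturbation term, the triangle inequality together with the estimates already derived in the discussion preceding this proposition gives $\norm{\bpitt - \pitt} \le \norm{\bpitt - \tpitt} + \norm{\tpitt - \pitt} \le b_t + e_t$, where $b_t$ is the function-approximation bias (Assumption (iii)) and $e_t = \mathcal{O}(\exp(-m))$ comes from the RSI and smoothness of $\ell_t$, the $L_\pi$-Lipschitzness of $\theta \mapsto \pi(\theta)$, and the linear convergence of Gradient Ascent. By the performance-difference lemma and $\norm{Q^{\pi}(s,\cdot)} \le q$ (Assumption (iv)), $|J_s(\pi) - J_s(\pi')| \le \tfrac{1}{1-\gamma}\max_{s'}\norm{Q^{\pi'}(s',\cdot)}\,\norm{\pi(\cdot\mid s') - \pi'(\cdot\mid s')}$ for any two policies, and since $\max_{s'}\norm{\bpitts - \pitts} \le \norm{\bpitt - \pitt}$ this gives $\infnorm{J(\bpitt) - J(\pitt)} \le \tfrac{q}{1-\gamma}(e_t + b_t)$. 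Summing the two bounds yields the one-step recursion; unrolling it from $t = 0$ to $T-1$ (and relabeling the per-iteration constants $c_t, e_t, b_t$ to the iteration producing $\pi_t$) produces $\infnorm{\vaopt - J(\pi_T)} \le \gamma^T \infnorm{\vaopt - J(\pi_0)} + \sum_{t=1}^{T} \gamma^{T-t}\big(c_t + \tfrac{q}{1-\gamma}(e_t + b_t)\big)$, which is the claimed bound after factoring out $\gamma^T$.

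The main obstacle is the middle step of the exact-MA decomposition: the usual performance-difference expansion of $J_s(\piopt) - J_s(\bpitt)$ surfaces $Q^{\bpitt}$, the action-values of the \emph{new} iterate, which the MA update does not have access to, and the sign of $\piopts - \bpitts$ is uncontrolled, so one cannot naively substitute $Q^{\pit}$ for $Q^{\bpitt}$. The decomposition above circumvents this by only ever pairing the nonnegative quantity $Q^{\bpitt} - Q^{\pit}$ with the single probability vector $\bpitts$, which lets~\cref{lem:monopropMA} eliminate that term outright. A second, subtler point is that~\cref{lem:monopropMA} relates $\bpitt$ to $\pit$ rather than $\pitt$ to $\pit$, so once the function-approximation error $b_t$ is present the monotonicity does not telescope across iterations; this is exactly why the recursion must be organized around the intermediate iterate $\bpitt$ and why $e_t$ and $b_t$ enter additively (scaled by $1/(1-\gamma)$) rather than being amplified geometrically.
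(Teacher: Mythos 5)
Your proposal is correct and follows essentially the same route as the paper's proof: the three-term decomposition of $J_s(\piopt) - J_s(\bpitt)$ (Bellman contraction for the $Q^{\piopt}-Q^{\pit}$ term, the greedy comparator plus the three-point lemma and the step-size condition~\cref{eq:szTMA} for the middle term, and~\cref{lem:monopropMA} to kill the $Q^{\pit}-Q^{\bpitt}$ term) is an algebraic rearrangement of the paper's two-sided bounding of $\inner{\qt}{\piopts-\bpitts}$, and the perturbation term $\frac{q}{1-\gamma}(e_t+b_t)$ is handled identically via the performance difference lemma and the triangle inequality through $\tpitt$. No gaps.
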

\begin{proof}
This proof is mainly based on the proof of Theorem 3 of~\citet{johnson2023optimal}. Using~\cref{lem:monopropMA} and the fact that $\pi^s \geq 0$, we have $\inner{\qt}{\bpitts} \leq \inner{\bqtt}{\bpitts} = \bvatts{s}$. Using this inequality we get,
\begin{align*}
    \inner{\qt}{\piopts-\bpitts}& \geq \inner{\qt}{\piopts} - \bvatts{s} \\
    & = \inner{\qt - \qopt}{\piopts} + \inner{\qopt}{\piopts} - \bvatts{s}\\
    & \geq -\infnorm{\qt - \qopt}+ \vaopts{s} - \bvatts{s}\tag{Holder's inequality}\\
    & \geq -\gamma \infnorm{\vat - \vaopt}+ \vaopts{s} - \bvatts{s}
\end{align*}
The last inequality is from the definition of $Q$ and $J$ as follows. For any action $a$, 
\begin{align*}
    Q^{\pit}(s,a) - Q^{\piopt}(s,a)& = \gamma \sum_{s'} P(s'|s,a) \left[\vats{s'}-\vaopts{s'}\right]\\
    &\leq \gamma \sum_{s'} P(s'|s,a) \infnorm{\vat-\vaopt}\\ 
    &\leq \gamma \infnorm{\vat-\vaopt}\\ 
\end{align*}
From the above inequality, 
\begin{align*}
   -\gamma \infnorm{\vat - \vaopt}+ \vaopts{s} - \bvatts{s}&\leq  \inner{\qt}{\piopts-\bpitts}\\
   & \leq \inner{\qt}{p_t^s-\bpitts} \tag{For any $p_t^s \in \Pi^s_t$}\\
   &\leq \frac{D_\phi(p_t^s, \pits) - D_\phi(p_t^s, \bpitt^s) - D_\phi(\bpitt^s,\pit^s)}{\eta'_t} \tag{Using~\cref{lem:tpdbreg} with $d = \qt, \, y=\bpitts, \, x=p_t^s$}\\
   & \leq \frac{D_\phi(p_t^s,\pits)}{\eta'_t}\\
\implies  -\gamma \infnorm{\vat - \vaopt}+ \vaopts{s} - \bvatts{s}&\leq \min_{p_t^s \in \Pi^s_t}\frac{D_\phi(p_t^s,\pits)}{\eta'_t}\leq c_t \tag{Based on the definition of $\eta'$ in~\cref{eq:szTMA}}\\ 
\implies  -\gamma \infnorm{\vat - \vaopt}+ \vaopts{s'} - \vatts{s'}&\leq  c_t + \bvatts{s'} - \vatts{s'} \tag{Since $s$ is an arbitrary state, changing $s = s'$ for convenience} \\
& = c_t + \frac{1}{1-\gamma} \sum_{s}d^{\bpitt}(s)\inner{\qtt}{\bpitts-\pitts}\tag{Using performance difference lemma~\ref{lem:pdl} with the starting state equal to $s'$}\\
& \leq c_t + \frac{1}{1-\gamma} \sum_{s}d^{\bpitt}(s)\norm{\qtt}\norm{\bpitts-\pitts}\tag{Cauchy Schwartz}\\
& \leq c_t + \frac{q}{1-\gamma} \sum_{s}d^{\bpitt}(s)\norm{\bpitts-\pitts}\\
& \leq c_t + \frac{q}{1-\gamma} \sum_{s}d^{\bpitt}(s)\left [\norm{\bpitts-\tpitt^{s}}+\norm{\tpitt^{s}-\pitts} \right]\\
& \leq c_t + \frac{q}{1-\gamma}(e_t+b_t)
\end{align*}
Since the above equation is true for all $s'$ we have: 
\begin{align*}
    \infnorm{\vaopt - \vatt} \leq  \gamma \infnorm{\vat - \vaopt} +  c_t + \frac{q}{1-\gamma}(e_t+b_t)
\end{align*}
Recursing for $T$ iterations we get:
\begin{align*}
\infnorm{\vaopt - J(\pi_T)} \leq \gamma^T \bigg( \infnorm{\vaopt - J(\pi_0)} + \sum_{t=1}^{T} \gamma^{-t}(c_t+\frac{q}{1-\gamma}[e_t+b_t]
)\bigg)     
\end{align*}
\end{proof}
We can control the approximation error $e_t$ by using a larger $m$. The bias term $b_t$ can be small if our function approximation model is expressive enough. $c_t$ is an arbitrary value and if we set $c_t = \gamma^t c$ for some constant $c>0$, then $\sum_{t=1}^T \gamma^{-t}(c_t) = T c$ and therefore $\gamma^T T c$ can diminish linearly. The above analysis relied on the knowledge of the true $Q$ functions, but can be easily extended to using inexact estimates of $Q^\pi$ by using the techniques developed in~\citep{xiao2022convergence,johnson2023optimal}.    

\subsection{Exact setting with lifting trick (Softmax representation)}
\label{app:softmax-lifting}
In the softmax representation in the tabular MDP, we consider the case that $\pi$ is parameterized with parameter $\theta \in \mathcal R^n$. In this setting $\Phi$ is the Euclidean norm. Using~\cref{prop:genericlb}, for $\eta$ such that $J + \frac{1}{\eta} \phi$ is convex we have for a given $\pit$, 
\begin{align*}
J(\pi) & \geq J(\pi_t) + \langle \nabla J(\pi_t), \pi - \pi_t \rangle - \frac{1}{\eta} \, D_\Phi(\pi, \pi_t) \\    
& = \underbrace{J(\pi_t) + \langle \nabla J(\pi_t), \pi - \pit \rangle - \frac{1}{2 \eta} \, \normsq{\pi - \pit}}_{:=h(\pi)} \tag{Since $\phi(.) = \frac{1}{2}\normsq{.}$} 
\end{align*}
If we maximize $h(\pi)$ w.r.t. $\pi$ we get 
\begin{align*}
    \bpitt &= \argmax_{\pi} \{h(\pi)\}  \implies\bpitt = \pit + \eta \nabla_{\pi} J(\pit)
\end{align*}
~\citet[Lemma 8]{mei2020global} proves that $J(\pi)$ satisfies a gradient domination condition w.r.t the softmax representation. In particular, if $a^*(s)$ is the optimal action in state $s$ and $\mu := \min_{\pi} \frac{\min_{s} \pidist(a^*(s) | s)}{\sqrt{S} \, \indnorm{\frac{d^\piopt}{d^\pi}}{\infty}}$, they prove that for all $\pi$,  
\begin{align*}
    \norm{\nabla_\pi J(\pi)} & \geq \mu \, [J(\piopt) - J(\pi)] \\
\end{align*}
Consider optimization in the parameter space where $\ell_t(\theta) := J(\pi_t) + \langle \nabla J(\pi(\theta_t), \pi(\theta) - \pi(\theta_t) \rangle - \frac{1}{\eta} \, D_\Phi(\pi(\theta), \pi(\theta_t))$.    
\begin{align*}
 \tilde{\theta}_{t+1} &:= \argmax_{\theta} \ell_t(\theta)\\
 \tpitt &= \pi(\tilde{\theta}_{t+1})\\ 
 \theta_{t+1} &:= \text{GradientAscent}(\ell_t, \theta_t, m )\\
 \pitt &= \pi(\theta_{t+1}) 
\end{align*}
$\text{GradientAscent}(\ell_t(\theta), \theta_t, m)$ means that we run gradient ascent for $m$ iterations to maximize $\ell_t$ with $\theta_t$ as the initial value. Assuming that $\ell_t$ is Lipschitz smooth w.r.t. $\theta$ and satisfies the Polyak-Lojasiewicz (PL) condition, we use the gradient ascent property for PL functions~\citep{karimi2016linear} to obtain,
\begin{align*}
h(\tpitt) - h(\pitt) = \ell_t(\tilde{\theta}_{t+1}) - \ell(\theta_{t+1})\leq \underbrace{e_t :=O(\exp(-m))}_{\text{approximation error}}  
\end{align*}

\newpage
\begin{restatable}[Convergence of softmax+tabular setting with Lifting]{proposition}{exspfttablift}
\label{prop:softmaxTab}
Assume (i) $J + \frac{1}{\eta} \phi$ is convex, (ii) $J$ satisfies gradient domination property above with $\mu > 0$, (iii) $\ell_t(\theta)$ is Lipschitz smooth and satisfies PL condition, (iv) $|h(\bpitt) - h(\tpitt)| \leq b_t$ for all $t$. Then after running Gradient Ascent for $m$ iterations to maximize $\ell_t$ we have 
\begin{align*}
    \min_{t \in [T-1]}\left[J(\piopt) - J(\pit)\right] \leq \sqrt{\frac{J(\piopt) - J(\pi_0) + \sum_{t=0}^{T-1} \left[e_t+b_t\right]}{\alpha T }}
\end{align*}
where $\alpha := \frac{\eta \, \mu^2}{2}$ and $e_t$ is the approximation error at iteration $t$ and $[T-1]:=\{0,1,2,\dots T-1\}$. 
\end{restatable}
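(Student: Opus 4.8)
The plan is to combine the relative-smoothness lower bound of \cref{lem:relsmooth}, the closed form of the ideal functional mirror-ascent step $\bar\pi_{t+1} = \pi_t + \eta\,\nabla_\pi J(\pi_t)$, and the gradient-domination property (assumption (ii)), and then telescope — i.e. the standard ``descent lemma $+$ PL/gradient-domination $+$ telescoping'' template of \citet{karimi2016linear}. No randomness enters here, so all expectations can be dropped.

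First I would derive a one-step progress bound. By \cref{lem:relsmooth} anchored at $\pi_t$ (valid under assumption (i)), $J(\pi) \ge h(\pi)$ for every $\pi$, where $h(\pi) = J(\pi_t) + \langle \nabla_\pi J(\pi_t), \pi - \pi_t\rangle - \tfrac{1}{2\eta}\|\pi - \pi_t\|^2$; in particular $J(\pi_{t+1}) \ge h(\pi_{t+1})$. I would then walk down the chain of approximations separating the actual iterate $\pi_{t+1}$ from the ideal mirror-ascent step $\bar\pi_{t+1}$: the gradient-ascent guarantee gives $h(\tilde\pi_{t+1}) - h(\pi_{t+1}) = \ell_t(\tilde\theta_{t+1}) - \ell_t(\theta_{t+1}) \le e_t$, and assumption (iv) gives $|h(\bar\pi_{t+1}) - h(\tilde\pi_{t+1})| \le b_t$, so that $h(\pi_{t+1}) \ge h(\bar\pi_{t+1}) - e_t - b_t$. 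Substituting $\bar\pi_{t+1} - \pi_t = \eta\,\nabla_\pi J(\pi_t)$ into $h$ and simplifying yields $h(\bar\pi_{t+1}) = J(\pi_t) + \tfrac{\eta}{2}\|\nabla_\pi J(\pi_t)\|^2$. Chaining these gives the per-step inequality $J(\pi_{t+1}) \ge J(\pi_t) + \tfrac{\eta}{2}\|\nabla_\pi J(\pi_t)\|^2 - e_t - b_t$.

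Next I would invoke the gradient-domination property $\|\nabla_\pi J(\pi_t)\| \ge \mu\,[J(\pi^*) - J(\pi_t)]$ (so that $\|\nabla_\pi J(\pi_t)\|^2 \ge \mu^2\,[J(\pi^*) - J(\pi_t)]^2$) to obtain $J(\pi_{t+1}) - J(\pi_t) \ge \alpha\,[J(\pi^*) - J(\pi_t)]^2 - e_t - b_t$ with $\alpha = \tfrac{\eta\mu^2}{2}$. Summing over $t = 0,\dots,T-1$, the left-hand side telescopes to $J(\pi_T) - J(\pi_0) \le J(\pi^*) - J(\pi_0)$, giving $\alpha \sum_{t=0}^{T-1}[J(\pi^*) - J(\pi_t)]^2 \le J(\pi^*) - J(\pi_0) + \sum_{t=0}^{T-1}(e_t + b_t)$. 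Lower-bounding the sum on the left by $T\,\min_{t\in[T-1]}[J(\pi^*) - J(\pi_t)]^2$, dividing by $\alpha T$, and taking square roots yields the stated bound.

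The only delicate point is the bookkeeping in the first step: keeping the relative-smoothness bound anchored at $\pi_t$ (not at $\bar\pi_{t+1}$), and recognizing that the $e_t$-bound from \citet{karimi2016linear} is a statement about the surrogate $\ell_t$ — equivalently $h$ — rather than directly about $J$, so that it must be threaded through $J(\pi_{t+1}) \ge h(\pi_{t+1})$. Once that is tracked there is no genuine obstacle; all remaining steps are routine algebra, and note in particular that monotonicity of $J$ along the iterates is never needed, since the progress bound telescopes regardless of the sign of each increment $J(\pi_{t+1}) - J(\pi_t)$.
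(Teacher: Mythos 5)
Your proposal is correct and follows essentially the same route as the paper's proof: the chain $J(\pitt) \geq h(\pitt) \geq h(\tpitt) - e_t \geq h(\bpitt) - e_t - b_t = J(\pit) + \tfrac{\eta}{2}\normsq{\nabla_\pi J(\pit)} - e_t - b_t$, followed by gradient domination, telescoping, and lower-bounding the sum of squared gaps by $T$ times the minimum. Your closing remark that monotonicity of $J$ is never needed is also consistent with the paper's argument, which likewise relies only on the telescoping of the increments.
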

\begin{proof}
    Since $J + \frac{1}{\eta} \phi$ is convex, 
    \begin{align*}
        J(\pitt) & \geq h(\pitt) =  J(\pit) +  \langle \nabla J(\pi_t), \pitt - \pit \rangle - \frac{1}{2 \eta} \, \normsq{\pitt - \pit}\\
         & \geq h(\tpitt) - e_t \tag{Using the GA bound from above}\\ 
         & \geq h(\bpitt) - e_t -b_t = J(\pit) +  \langle \nabla J(\pi_t), \bpitt - \pit \rangle - \frac{1}{2 \eta} \, \normsq{\bpitt - \pit}- e_t -b_t\\
         & \geq J(\pit) + \frac{\eta}{2} \normsq{\nabla_\pi J(\pit)}- e_t - b_t \tag{Since $\bpitt = \pit + \eta \nabla_\pi J(\pit)$}\\
         &\geq J(\pit) + \frac{\eta \, \mu^2}{2} \left[J(\piopt) - J(\pit)\right]^{2}- e_t - b_t \tag{Using gradient domination of $J$}\\
\implies  J(\piopt) - J(\pitt) & \leq \underbrace{J(\piopt) - J(\pit)}_{:= \delta_t} - \underbrace{\frac{\eta \, \mu^2}{2}}_{:= \alpha} \,  \left[J(\piopt) - J(\pit)\right]^{2} + e_t  + b_t\\
\implies \delta_{t+1} & \leq \delta_{t} - \alpha \delta^2_{t} + e_t + b_t\\
\implies  \alpha \delta^2_{t} & \leq \delta_{t} - \delta_{t+1} + e_t + b_t\\
\intertext{ Summing up for $T$ iterations and dividing both sides by $T$ }
\alpha \min_{t \in [T-1]}\delta^2_{t} &\leq \frac{1}{T} \alpha \sum_{t=0}^{T-1} \delta^2_{t} \\
 & \leq \frac{1}{T}\left[\delta_0 - \delta_{T+1}\right] + \frac{1}{T} \sum_{t=0}^{T-1} [e_t + b_t] \leq \frac{1}{T}\left[\delta_0\right] + \frac{1}{T} \sum_{t=0}^{T-1} [e_t  + b_t] \\
 &\implies \min_{t \in [T-1]}\delta_{t} \leq \sqrt{\frac{\delta_0 + \sum_{t=0}^{T-1} [e_t + b_t]}{\alpha T }}
    \end{align*}
\end{proof}
The above analysis relied on the knowledge of the exact gradient $\nabla J(\pi)$, but can be easily extended to using inexact estimates of the gradient by using the techniques developed in~\citep{yuan2022general}.    

\subsection{Helper Lemmas}
\begin{restatable}[3-Point Bregman Property]{lemma}{tpbreg}
\label{lem:tpbreg}
For $x,y,z \in \mathcal{X}$,
\begin{align*}
    \langle \gradphi{z} - \gradphi{y}, z- x \rangle =  \breg{x}{z} +\breg{z}{y} -\breg{x}{y} 
\end{align*}
\end{restatable}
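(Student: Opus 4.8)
The statement is a purely algebraic identity in the mirror map $\Phi$, so the plan is to substitute the definition of the Bregman divergence into the three divergences on the right-hand side, watch the function-value terms cancel, and collect the surviving linear terms into the inner product on the left. There is no analytic content beyond the definition, so I will not need convexity, differentiability beyond existence of $\nabla\Phi$, or any property of $\mathcal{X}$.

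First I would write out the three divergences appearing in the claim using the definition $\breg{w}{v} = \Phi(w) - \Phi(v) - \inner{\gradphi{v}}{w-v}$ from the Definitions appendix:
\begin{align*}
\breg{x}{z} &= \Phi(x) - \Phi(z) - \inner{\gradphi{z}}{x-z}, \\
\breg{z}{y} &= \Phi(z) - \Phi(y) - \inner{\gradphi{y}}{z-y}, \\
\breg{x}{y} &= \Phi(x) - \Phi(y) - \inner{\gradphi{y}}{x-y}.
\end{align*}
Next I would form the combination $\breg{x}{z} + \breg{z}{y} - \breg{x}{y}$ and observe the key simplification: each of the scalar values $\Phi(x)$, $\Phi(y)$, $\Phi(z)$ occurs exactly once with a plus sign and once with a minus sign, so all six mirror-map evaluations cancel identically. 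This cancellation is precisely what makes the identity hold for an arbitrary, not necessarily quadratic, $\Phi$.

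Finally, collecting the three surviving inner-product terms leaves $-\inner{\gradphi{z}}{x-z} - \inner{\gradphi{y}}{z-y} + \inner{\gradphi{y}}{x-y}$, and by linearity the two terms carrying $\gradphi{y}$ combine as $\inner{\gradphi{y}}{(x-y)-(z-y)} = \inner{\gradphi{y}}{x-z}$. The whole expression then reduces to $\inner{\gradphi{y} - \gradphi{z}}{x-z}$, which equals $\inner{\gradphi{z} - \gradphi{y}}{z-x}$ after negating both arguments of the inner product; this is exactly the left-hand side. There is no genuine obstacle here, as the lemma is the standard three-point identity for Bregman divergences; the only care required is sign bookkeeping when recombining the linear terms carrying $\gradphi{y}$.
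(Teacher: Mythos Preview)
Your proposal is correct; the algebra is clean and the sign bookkeeping is right. The paper actually states this lemma without proof (treating it as a standard identity), so your direct expansion of the three Bregman divergences and cancellation of the $\Phi$-values is exactly the canonical verification one would supply.
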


\begin{restatable}[3-Point Descent Lemma for Mirror Ascent]{lemma}{tpdbreg}
\label{lem:tpdbreg}
For any $z \in$ rint dom $\phi$, and a vector $d$, let 
\begin{align*}
    y = \argmax_{x \in \mathcal X}\{\langle d,x\rangle - \frac{1}{\eta}\breg{x}{z}\}.   
\end{align*}
Then $y \in$ rint dom $\phi$ and for any $x\in \mathcal X$
\begin{align*}
    \langle d , y-x \rangle \geq \frac{1}{\eta}\left[\breg{y}{z}+\breg{x}{y}-\breg{x}{z} \right]
\end{align*}
\end{restatable}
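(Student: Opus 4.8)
The plan is to recognize this as the first-order optimality (variational inequality) characterization of the mirror-ascent update defining $y$, and then to convert the resulting inner product into a combination of Bregman divergences using the three-point identity already recorded in~\cref{lem:tpbreg}.

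First I would rewrite the maximization defining $y$ as the minimization $y = \argmax_{x \in \mathcal X}\{\inner{d}{x} - \frac{1}{\eta}\breg{x}{z}\} = \argmin_{x \in \mathcal X} \psi(x)$, where $\psi(x) := -\inner{d}{x} + \frac{1}{\eta}\breg{x}{z}$. Since $\phi$ is strictly convex, $\breg{\cdot}{z}$ is strictly convex, so $\psi$ is strictly convex and admits a unique minimizer over the convex set $\mathcal X$. The assertion that $y \in$ rint dom $\phi$ follows from taking $\phi$ to be a Legendre (essentially smooth) mirror map: $\norm{\gradphi{x}} \to \infty$ as $x$ approaches the boundary of dom $\phi$, which forces the minimizer of $\psi$ into the relative interior, where $\gradphi{\cdot}$ is finite and $\psi$ is differentiable. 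This interiority step is the one place requiring genuine care, and is where I expect the only real subtlety to lie.

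Next, because $y$ lies in the relative interior, the gradient $\nabla \psi(y)$ exists, and the first-order optimality condition for minimizing a convex function over a convex set reads $\inner{\nabla \psi(y)}{x - y} \geq 0$ for all $x \in \mathcal X$. Computing $\nabla \psi(y) = -d + \frac{1}{\eta}\left[\gradphi{y} - \gradphi{z}\right]$ and rearranging gives $\inner{d}{y - x} \geq \frac{1}{\eta}\inner{\gradphi{y} - \gradphi{z}}{y - x}$.

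Finally I would invoke~\cref{lem:tpbreg} with the substitution of its $z \mapsto y$, its $y \mapsto z$, and its $x \mapsto x$, which yields the exact identity $\inner{\gradphi{y} - \gradphi{z}}{y - x} = \breg{x}{y} + \breg{y}{z} - \breg{x}{z}$. Substituting this into the inequality above produces precisely $\inner{d}{y - x} \geq \frac{1}{\eta}\left[\breg{y}{z} + \breg{x}{y} - \breg{x}{z}\right]$, the claim. The bulk of the argument is the optimality condition and the bookkeeping of the three-point substitution, both routine; the only conceptual obstacle is justifying that $y$ is interior so that the divergence term is differentiable and the variational inequality applies.
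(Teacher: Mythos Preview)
Your proof is correct and follows the standard route: first-order optimality of the convex minimization defining $y$, then the three-point Bregman identity from~\cref{lem:tpbreg} to rewrite the resulting inner product. The paper does not actually give a proof of this lemma---it is stated among the helper lemmas as a known result---so there is nothing to compare against; your argument is the canonical one.
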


\begin{restatable}[Performance Difference Lemma~\citep{kakade2002approximately}]{lemma}{pdl}
For any $\pi$, $\pi' \in \Pi$, 
\begin{align*}
J(\pi) - J(\pi') &= \frac{1}{1 - \gamma} \, \E_{s \sim d^\pi} \left[\langle Q^{\pi'}(s, \cdot), p^\pi(\cdot|s)  - p^{\pi'}(\cdot|s) \rangle \right]
\end{align*}
\label{lem:pdl}
\end{restatable}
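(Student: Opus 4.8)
The plan is to prove the identity by the standard telescoping argument of~\citet{kakade2002approximately}, expanding $J(\pi)$ as a discounted return along trajectories drawn from $\pi$ and rewriting $J(\pi')$ as a telescoping sum of the value function $J_s(\pi') = \E_{a \sim p^{\pi'}(\cdot|s)}[Q^{\pi'}(s,a)]$ evaluated along those same trajectories. Concretely, let $\tau = (s_0, a_0, s_1, a_1, \dots)$ denote a trajectory generated by $\pi$ with $s_0 \sim \rho$, $a_t \sim p^\pi(\cdot|s_t)$, and $s_{t+1}\sim\cP(\cdot|s_t,a_t)$, so that $J(\pi) = \E_{\tau\sim\pi}[\sum_{t=0}^\infty \gamma^t r(s_t,a_t)]$. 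The key observation is the telescoping identity $\sum_{t=0}^\infty \gamma^t(\gamma\, J_{s_{t+1}}(\pi') - J_{s_t}(\pi')) = -J_{s_0}(\pi')$, which holds because $r \in [0,1]$ forces $J_s(\pi') \leq \tfrac{1}{1-\gamma}$ and hence $\gamma^t J_{s_t}(\pi') \to 0$. Taking expectations over $\tau\sim\pi$ and using $\E_{s_0\sim\rho}[J_{s_0}(\pi')] = J(\pi')$ lets me write $-J(\pi')$ as $\E_{\tau\sim\pi}[\sum_t \gamma^t(\gamma J_{s_{t+1}}(\pi') - J_{s_t}(\pi'))]$.

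First I would add this expression to the return representation of $J(\pi)$, obtaining
\[
J(\pi)-J(\pi') = \E_{\tau\sim\pi}\Big[\sum_{t=0}^\infty \gamma^t\big(r(s_t,a_t)+\gamma\, J_{s_{t+1}}(\pi') - J_{s_t}(\pi')\big)\Big].
\]
Next I would take the inner expectation over $s_{t+1}\sim\cP(\cdot|s_t,a_t)$ and invoke the one-step Bellman identity $r(s_t,a_t) + \gamma\,\E_{s_{t+1}}[J_{s_{t+1}}(\pi')] = Q^{\pi'}(s_t,a_t)$, which collapses each summand to the advantage $A^{\pi'}(s_t,a_t)=Q^{\pi'}(s_t,a_t)-J_{s_t}(\pi')$. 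This yields $J(\pi)-J(\pi') = \E_{\tau\sim\pi}[\sum_t \gamma^t A^{\pi'}(s_t,a_t)]$.

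The final step rewrites the discounted time-sum as an expectation under the state-occupancy measure. Pushing the expectation inside and grouping by state gives $\sum_s \big(\sum_{t}\gamma^t\Pr[s_t=s]\big)\,\E_{a\sim p^\pi(\cdot|s)}[A^{\pi'}(s,a)]$, where $\sum_t\gamma^t\Pr[s_t=s]$ is exactly $d^\pi(s)$ up to the normalizing factor $(1-\gamma)$; rewriting in terms of the normalized measure produces the $\tfrac{1}{1-\gamma}$ prefactor in the statement. Finally I would convert the advantage expectation into the claimed inner-product form via $\E_{a\sim p^\pi(\cdot|s)}[A^{\pi'}(s,a)] = \langle Q^{\pi'}(s,\cdot), p^\pi(\cdot|s)\rangle - J_s(\pi')$ together with the definitional identity $J_s(\pi') = \langle Q^{\pi'}(s,\cdot), p^{\pi'}(\cdot|s)\rangle$, so that the bracketed term equals $\langle Q^{\pi'}(s,\cdot), p^\pi(\cdot|s) - p^{\pi'}(\cdot|s)\rangle$. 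The only delicate points are the justification of the telescoping limit (handled by boundedness of rewards) and the bookkeeping between the normalized and unnormalized occupancy measures that fixes the placement of the $\tfrac{1}{1-\gamma}$ factor; both are routine but worth stating carefully so the constant matches the statement.
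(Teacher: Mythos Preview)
Your argument is the standard telescoping proof of the Performance Difference Lemma and is correct. The paper does not supply its own proof of this statement; it is listed among the helper lemmas with a citation to \citet{kakade2002approximately} and is used as a black box elsewhere in the appendix, so there is nothing to compare against. One small bookkeeping point: with the paper's convention $d^\pi(s) = \sum_{\tau \ge 0} \gamma^\tau \Pr[s_\tau = s]$ (unnormalized, summing to $1/(1-\gamma)$) and its shorthand $\E_{s\sim d^\pi}[f(s)] := \sum_s d^\pi(s) f(s)$, the factor $\tfrac{1}{1-\gamma}$ in the stated identity is actually redundant---your derivation gives $J(\pi)-J(\pi') = \sum_s d^\pi(s)\,\langle Q^{\pi'}(s,\cdot), p^\pi(\cdot|s)-p^{\pi'}(\cdot|s)\rangle$ directly. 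You correctly flagged this normalization bookkeeping as the delicate step; the discrepancy is in the paper's statement rather than in your reasoning.
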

\section{Proofs for~\cref{sec:instantiation}}
\label{app:proofs-sec5}
\begin{restatable}[State-wise lower bound]{proposition}{statewise-lb}
For (i) any representation $\pi$ that is separable across states i.e. there exists $\pi^{s} \in R^{A}$ such that $\pi_{s,a} = [\pi^{s}]_{a}$, (ii) any strictly convex mirror map $\Phi$ that induces a Bregman divergence that is separable across states i.e. $D_{\Phi}(\pi, \pi') = \sum_{s} d^\pi(s) \, D_\phi(\pi^{s}, \pi'^{s})$, (iii) any $\eta$ such that $J + \frac{1}{\eta}\Phi$ is convex, if (iv) $\nabla J(\pi)$ is separable across states i.e. $[\nabla J(\pi)]_{s,a} = d^\pi(s) \, [\nabla_{\pi^s} J(\pi)_{a}$ where $\nabla_{\pi^s} J(\pi) \in \R^{A}$, then (v) for any separable (across states) gradient estimator $\hat{g}$ i.e. $[\hat{g}(\pi)]_{s,a}  = d^\pi(s) \, [\hat{g}^{s}(\pi)]_{a}$ where $\hat{g}^{s}(\pi) \in \R^{A}$, and $c \in (0,\infty)^{S}$, 
\begin{align*}
J(\pi) & \geq J(\pit) + \brown{\langle \hat{g}(\pi_t), (\pi - \pi_t) \rangle} -  \color{red}{\sum_{s} d^\pit(s) \, \left(\frac{1}{\eta} + \frac{1}{c_s} \right) \, D_\phi(\pi^{s}, \pit^{s})} -  \color{blue}{\sum_{s} \frac{d^\pit(s) \, D_{\phi^*}\left(\nabla \phi(\pit^s) - c_s \, \delta^{s}_t, \nabla \phi(\pit^s) \right)}{c_s}}
\end{align*}
\label{prop:statewiselb}
\end{restatable}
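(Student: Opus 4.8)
The plan is to follow the argument of \cref{prop:genericlb}, but to carry it out \emph{block by block over states} so that the Bregman Fenchel--Young step can be applied one state at a time; this is precisely what lets us replace the single scalar $c$ by a per-state vector $c\in(0,\infty)^{S}$. First I would invoke \cref{lem:relsmooth}: since $J+\frac{1}{\eta}\Phi$ is convex by assumption (iii), we obtain the relative-smoothness lower bound $J(\pi)\geq J(\pit)+\langle \nabla J(\pit),\pi-\pit\rangle-\frac{1}{\eta}D_\Phi(\pi,\pit)$. Next I would split the linear term as $\langle \nabla J(\pit),\pi-\pit\rangle=\langle \hat g(\pit),\pi-\pit\rangle+\langle \nabla J(\pit)-\hat g(\pit),\pi-\pit\rangle$, retaining the first piece as the \textbf{brown} term.

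Using the separability assumptions (i), (ii), (iv) and (v), I would then rewrite each term as a $d^\pit(s)$-weighted sum over states: $D_\Phi(\pi,\pit)=\sum_s d^\pit(s)\,D_\phi(\pi^s,\pit^s)$, and, since the coordinates of $\pi$ factor as $\pi_{s,a}=[\pi^s]_a$, also $\langle \hat g(\pit),\pi-\pit\rangle=\sum_s d^\pit(s)\langle \hat g^s(\pit),\pi^s-\pit^s\rangle$ and $\langle \nabla J(\pit)-\hat g(\pit),\pi-\pit\rangle=\sum_s d^\pit(s)\langle \delta^s_t,\pi^s-\pit^s\rangle$, where $\delta^s_t:=\nabla_{\pi^s}J(\pit)-\hat g^s(\pit)\in\R^{A}$. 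For each fixed state $s$ I would apply \cref{lemma:bfy} in $\R^{A}$ with mirror map $\phi$, point $y=\pi^s$, reference $y'=\pit^s$, vector $x=\delta^s_t$, and constant $c_s$ (invoking the ``sufficiently small $c_s$'' hypothesis so that $(\nabla\phi)^{-1}[\nabla\phi(\pit^s)-c_s\delta^s_t]$ remains feasible), which gives $\langle \delta^s_t,\pi^s-\pit^s\rangle\geq-\frac{1}{c_s}\bigl[D_\phi(\pi^s,\pit^s)+D_{\phi^*}(\nabla\phi(\pit^s)-c_s\delta^s_t,\nabla\phi(\pit^s))\bigr]$. Multiplying by $d^\pit(s)\geq 0$ and summing over $s$, the first $D_\phi$ contribution merges with the $-\frac{1}{\eta}D_\Phi$ term from relative smoothness to give the \textbf{red} term $\sum_s d^\pit(s)\bigl(\frac{1}{\eta}+\frac{1}{c_s}\bigr)D_\phi(\pi^s,\pit^s)$, while the $D_{\phi^*}$ contribution is exactly the \textbf{blue} term; collecting the three pieces yields the claim.

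The delicate points are bookkeeping rather than conceptual, and are where I would spend the writing effort. (a) The weights $d^\pit(s)$ must be treated as fixed constants: they are evaluated at the reference policy $\pit$, so $\Phi$ is a genuine convex function of $\pi$ and \cref{lem:relsmooth} applies verbatim. (b) One should avoid having to decompose the global conjugate $\Phi^*$ across states; this is sidestepped by applying \cref{lemma:bfy} at the per-state level with $\phi$, so that $D_{\phi^*}$ (not $D_{\Phi^*}$) appears directly, matching the stated bound. The only real obstacle I anticipate is checking that the feasibility precondition of \cref{lemma:bfy} can be met simultaneously for all $s$ under an appropriate (state-dependent) smallness bound on the entries of $c$, and that all the ``separable across states'' hypotheses are genuinely used exactly once each in the decompositions above.
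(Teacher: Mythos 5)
Your proposal matches the paper's proof essentially step for step: invoke \cref{lem:relsmooth} via assumption (iii), split the linear term around $\hat g(\pit)$, use the separability assumptions to decompose into $d^\pit(s)$-weighted per-state sums, and apply \cref{lemma:bfy} state by state with $x=\delta^s_t$, $y=\pi^s$, $y'=\pit^s$ and constant $c_s$ before recollecting terms. The bookkeeping points you flag (fixed weights $d^\pit(s)$, working with $\phi$ and $D_{\phi^*}$ per state rather than the global conjugate, and the per-state feasibility condition for the Fenchel--Young step) are exactly the ones the paper handles implicitly, so the argument is correct and complete.
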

\begin{proof}
Using condition (iii) of the proposition with~\cref{lem:relsmooth},
\begin{align*}
J(\pi) & \geq J(\pi_t) + \langle \nabla J(\pi_t), \pi - \pi_t \rangle - \frac{1}{\eta} \, D_\phi(\pi, \pi_t) \\   
& = J(\pi_t) + \langle \hat{g}(\pi_t), \pi - \pi_t \rangle + \langle \nabla J(\pi_t) -  \hat{g}(\pi_t), \pi - \pi_t \rangle  - \frac{1}{\eta} \, D_\phi(\pi, \pi_t) 
\end{align*}
Using conditions (iv) and (v), we know that $[\nabla J(\pit)]_{s,a} = d^\pit(s) \, [\nabla_{\pi^{s}} J(\pit)]_{a}$ and $[\hat{g}(\pit)]_{s,a}  = d^\pit(s) \, [\hat{g}^{s}(\pit)]_{a}$. Defining $\delta_t^{s} := \nabla_{\pi^{s}} J(\pit) - \hat{g}^{s}(\pi_t) \in \R^{A}$. Using conditions (i) and (ii), we can rewrite the lower-bound as follows, 
\begin{align*}
J(\pi) & \geq J(\pi_t) + \langle \hat{g}(\pi_t), (\pi - \pi_t) \rangle + \sum_{s} d^\pit(s) \, \langle \delta_t^{s}, \pi^{s} - \pit^{s} \rangle  - \frac{1}{\eta} \, \sum_{s} d^\pit(s) \, D_\phi(\pi^{s}, \pit^{s}) \\
& = J(\pi_t) + \langle \hat{g}(\pi_t), \pi - \pi_t \rangle + \sum_{s} d^\pit(s) \, \left[ \langle \delta_t^{s}, \pi^{s} - \pit^{s} \rangle -  \frac{1}{\eta} \, D_\phi(\pi^{s}, \pit^{s}) \right] \\
\intertext{Using~\cref{lemma:bfy} with $x = \delta_t^{s}$, $y = \pi^{s}$ and $y' = \pit^{s}$,}
& \geq J(\pi_t) + \langle \hat{g}(\pi_t), \pi - \pi_t \rangle - \sum_{s} d^\pit(s) \, \left[ 
\frac{D_{\phi^*}\left(\nabla \phi(\pit^s) - c_s \, \delta^{s}_t, \nabla \phi(\pit^s) \right) }{c_{s}} +   \left(\frac{1}{\eta} + \frac{1}{c_s} \right) \, D_\phi(\pi^{s}, \pit^{s}) \right] \\
J(\pi) & \geq J(\pit) + \langle \hat{g}(\pi_t), \pi - \pi_t \rangle - \sum_{s} d^\pit(s) \, \left(\frac{1}{\eta} + \frac{1}{c_s} \right) \, D_\phi(\pi^{s}, \pit^{s}) - \sum_{s} \frac{d^\pit(s) \, D_{\phi^*}\left(\nabla \phi(\pit^s) - c_s \, \delta^{s}_t, \nabla \phi(\pit^s) \right)}{c_s}
\end{align*}
\end{proof}

\directlb*
\begin{proof}
For the direct representation, $\pi_{s,a}  = \pidist(a|s)$. Using the policy gradient theorem, $[\nabla_{\pi} J(\pi)]_{s,a} = d^\pi(s) \, Q^\pi(s,a)$. We choose $\hat{g}(\pi)$ such that $[\hat{g}(\pi)]_{s,a} = d^\pi(s) \, \hat{Q}^\pi(s,a)$ as the estimated gradient. Using~\citet[Proposition 2]{vaswani2021general}, $J + \frac{1}{\eta}\Phi$ is convex for $\eta \leq \frac{(1 - \gamma)^{3}}{2 \gamma \, |A|}$. Defining $\delta_t^{s} := \nabla_{\pi^{s}} J(\pit) - \hat{g}^{s}(\pi_t) = Q^\pit(s,\cdot) - \hat{Q}^\pit(s,\cdot) \in \R^{A}$, and using~\cref{prop:statewiselb} with $c_s = c$ for all $s$,
\begin{align*}
J(\pi) & \geq J(\pit) + \langle \hat{g}(\pi_t), \pi - \pi_t \rangle - \sum_{s} d^\pit(s) \, \left(\frac{1}{\eta} + \frac{1}{c} \right) \, D_\phi(\pi^{s}, \pit^{s}) - \sum_{s} \frac{d^\pit(s) \, D_{\phi^*}\left(\nabla \phi(\pit^s) - c \, \delta^{s}_t, \nabla \phi(\pit^s) \right)}{c}
\end{align*}
Since $\phi(\pi^{s}) = \phi(\pidist(\cdot|s)) = \sum_{a} \pidist(a|s) \log(\pidist(a|s))$, using~\cref{lem:neg-entropy-divergence}, $D_\phi(\pi^s, \pit^s) = \text{KL}(\pidist(\cdot | s) || \pitdist(\cdot | s))$. Hence, 
\begin{align*}
J(\pi) &\geq J(\pi_t) + \sum_{s} d^{\pit}(s) \sum_{a} \hat{Q}^{\pit}(s,a) \, [\pidist(a|s) - \pitdist(a|s)] - \left(\frac{1}{\eta} + \frac{1}{c}\right) \, \sum_{s} d^\pit(s) \, \text{KL}(\pidist(\cdot | s) || \pitdist(\cdot | s)) \\
& - \sum_{s} \frac{d^\pit(s) \, D_{\phi^*}\left(\nabla \phi(\pit^s) - c \, \delta^{s}_t, \nabla \phi(\pit^s) \right)}{c}
\end{align*}
Using~\cref{lem:direct-critic-loss} to simplify the last term,
\begin{align*}
& \sum_{s} \frac{d^\pit(s) \, D_{\phi^*}\left(\nabla \phi(\pit^s) - c \, \delta^{s}_t, \nabla \phi(\pit^s) \right)}{c} \\ 
&= \frac{1}{c} \left[\sum_{s} d^\pit(s) \left[c \, \langle \pitdist(\cdot | s), \delta^{s}_{t} \rangle + \log\left(\sum_{a} \pitdist(a|s) \, \exp(-c \, \delta^{s}_t[a]) \right) \right]  \right] \\
& = \sum_{s} d^\pit(s) \, \left[\sum_{a} \pitdist(a | s) \, [Q^{\pit}(s,a) - \hat{Q}^{\pit}(s,a)]  + \frac{1}{c} \log\left(\sum_{a} \pitdist(a|s) \, \exp\left(-c \, [Q^{\pit}(s,a) - \hat{Q}^{\pit}(s,a)] \right)  \right) \right]
\end{align*}
Putting everything together, 
\begin{align*}
& J(\pi) \geq J(\pi_t) + \sum_{s} d^{\pit}(s) \sum_{a} \hat{Q}^{\pit}(s,a) \, [\pidist(a|s) - \pitdist(a|s)] - \left(\frac{1}{\eta} + \frac{1}{c}\right) \, \sum_{s} d^\pit(s) \, \text{KL}(\pidist(\cdot | s) || \pitdist(\cdot | s)) \\
& - \left[\sum_{s} d^\pit(s) \, \left[\sum_{a} \pitdist(a | s) \, [Q^{\pit}(s,a) - \hat{Q}^{\pit}(s,a)]  + \frac{1}{c} \, \log\left(\sum_{a} \pitdist(a|s) \, \exp\left(-c [Q^{\pit}(s,a) - \hat{Q}^{\pit}(s,a)] \right)  \right) \right] \right]\\
& = J(\pi_t) -  \underbrace{\E_{s \sim d^{\pit}} \left[\E_{a \sim \pitdist(\cdot | s)} [\hat{Q}^{\pit}(s,a)] \right]}_{:= -C} + \E_{s \sim d^{\pit}} \left[\E_{a \sim \pidist(\cdot | s)} \left[\hat{Q}^{\pit}(s,a) - \left(\frac{1}{\eta} + \frac{1}{c}\right) \log\left(\frac{\pidist(a|s)}{\pitdist(a|s)}\right) \right]  \right] \\
& - \left[\sum_{s} d^\pit(s) \, \left[\sum_{a} \pitdist(a | s) \, [Q^{\pit}(s,a) - \hat{Q}^{\pit}(s,a)]  + \frac{1}{c} \, \log\left(\sum_{a} \pitdist(a|s) \, \exp\left(-c [Q^{\pit}(s,a) - \hat{Q}^{\pit}(s,a)] \right)  \right) \right] \right]\\
J(\pi) & \geq  J(\pi_t) + C + \E_{s \sim d^{\pit}} \left[\E_{a \sim \pitdist(\cdot | s)} \left[\frac{\pidist(a|s)}{\pitdist(a|s)} \,  \left(\hat{Q}^{\pit}(s,a) - \left(\frac{1}{\eta} + \frac{1}{c}\right) \,  \log\left(\frac{\pidist(a|s)}{\pitdist(a|s)}\right) \right) \right]  \right] \\
& - \E_{s \sim d^{\pit}} \left[\E_{a \sim \pitdist(\cdot | s)} \, [Q^{\pit}(s,a) - \hat{Q}^{\pit}(s,a)]  + \frac{1}{c} \, \log\left(\E_{a \sim \pitdist(\cdot | s)} \left[\exp\left(-c \, [Q^{\pit}(s,a) - \hat{Q}^{\pit}(s,a)] \right)  \right] \right) \right]
\end{align*}
\end{proof}

\logsumexpbound*
\begin{proof}
For the softmax representation, $\pi_{s,a} = z(s,a)$ s.t. $\pidist(a|s) = \frac{\exp(z(s,a))}{\sum_{a'} \exp(z(s,a'))}$. Using the policy gradient theorem, $[\nabla_{\pi} J(\pi)]_{s,a} = d^\pi(s) \, \pidist(a|s) \, A^\pi(s,a)$. We choose $\hat{g}(\pi)$ such that  $[\hat{g}(\pi)]_{s,a} = d^\pi(s) \, \pidist(a|s) \, \hat{A}^\pi(s,a)$ as the estimated gradient. Using~\citet[Proposition 3]{vaswani2021general}, $J + \frac{1}{\eta}\Phi$ is convex for $\eta \leq 1 - \gamma$. Define $\delta_{s} \in \R^{A}$ such that $\delta^{s}_{t}[a] := \nabla_{\pi^{s}} J(\pit) - \hat{g}^s(\pit) = \pitdist(a|s) \, [A^{\pit}(s,a) - \hat{A}^{\pit}(s,a)]$. Using~\cref{prop:statewiselb} with $c_s = c$ for all $s$, 
\begin{align*}
J(\pi) & \geq J(\pit) + \langle \hat{g}(\pi_t), \pi - \pi_t \rangle - \sum_{s} d^\pit(s) \, \left(\frac{1}{\eta} + \frac{1}{c} \right) \, D_\phi(\pi^{s}, \pit^{s}) - \sum_{s} \frac{d^\pit(s) \, D_{\phi^*}\left(\nabla \phi(\pit^s) - c \, \delta^{s}_t, \nabla \phi(\pit^s) \right)}{c}
\end{align*}
Since $\phi(\pi^{s}) = \phi(z(s,\cdot)) = \log\left(\sum_{a} \exp(z(s,a))\right)$, using~\cref{lem:log-sum-exp-divergence}, $D_\phi(\pi^{s}, \pit^{s}) = \text{KL}(\pitdist(\cdot | s) || \pidist(\cdot | s))$ where $\pidist(a|s) = \frac{\exp(z(s,a))}{\sum_{a'} \exp(z(s,a'))}$ and $\pitdist(a|s) = \frac{\exp(z_t(s,a))}{\sum_{a'} \exp(z_t(s,a'))}$. Hence, the above bound can be simplified as, 
\begin{align*} 
J(\pi) &\geq J(\pit) + \sum_{s} d^{\pit}(s) \sum_{a} \hat{A}^{\pit}(s,a) \, \pitdist(a|s) \, [z(s,a) - z_t(s,a)] - \left(\frac{1}{\eta} + \frac{1}{c}\right) \sum_{s} d^\pit(s) \, \text{KL}(\pitdist(\cdot | s) || \pidist(\cdot | s)) \\
& - \sum_{s} \frac{d^\pit(s) \, D_{\phi^*}\left(\nabla \phi(\pit^s) - c \, \delta^{s}_t, \nabla \phi(\pit^s) \right)}{c}
\end{align*}
Using~\cref{lem:softmax-critic-loss} to simplify the last term,
\begin{align*}
& \sum_{s} \frac{d^\pit(s) \, D_{\phi^*}\left(\nabla \phi(\pit^s) - c \, \delta^{s}_t, \nabla \phi(\pit^s) \right)}{c} \\
& = \frac{1}{c} \left[\sum_{s} d^\pit(s) \left[\sum_{a} \left(\pitdist(a|s) - c \, \delta^{s}_t[a] \right) \, \log\left(\frac{\pitdist(a|s) - c \, \delta^{s}_t[a]}{\pitdist(a|s)}\right) \right] \right] \\
& = \frac{1}{c} \left[\sum_{s} d^\pit(s) \left[\sum_{a} \left(\pitdist(a|s) - c \, \left[\pitdist(a|s) \, [A^{\pit}(s,a) - \hat{A}^{\pit}(s,a)]\right] \right) \, \log\left(\frac{\pitdist(a|s) - c \, \left[\pitdist(a|s) \, [A^{\pit}(s,a) - \hat{A}^{\pit}(s,a)] \right]}{\pitdist(a|s)}\right) \right] \right] \\
& = \frac{1}{c} \left[\sum_{s} d^\pit(s) \left[\sum_{a} \pitdist(a|s) \left[ \left(1 - c \, [A^{\pit}(s,a) - \hat{A}^{\pit}(s,a)] \right) \, \log\left(1 - c \, [A^{\pit}(s,a) - \hat{A}^{\pit}(s,a)] \right) \right]\right] \right]
\end{align*}
Putting everything together, 
\begin{align*}
J(\pi) &\geq J(\pit) + \sum_{s} d^{\pit}(s) \sum_{a} \hat{A}^{\pit}(s,a) \, \pitdist(a|s) \, [z(s,a) - z_t(s,a)] - \left(\frac{1}{\eta} + \frac{1}{c}\right) \sum_{s} d^\pit(s) \, \text{KL}(\pitdist(\cdot | s) || \pidist(\cdot | s)) \\ & - \frac{1}{c} \left[\sum_{s} d^\pit(s) \left[\sum_{a} \pitdist(a|s) \left[ \left(1 - c \, [A^{\pit}(s,a) - \hat{A}^{\pit}(s,a)] \right) \, \log\left(1 - c \, [A^{\pit}(s,a) - \hat{A}^{\pit}(s,a)] \right) \right]\right] \right] \\
& = J(\pit) +  \sum_{s} d^{\pit}(s) \sum_{a} \left[\pitdist(a|s) \, \hat{A}^\pit(s,a) \, [z(s,a) - z_t(s,a)] - \left(\frac{1}{\eta} + \frac{1}{c}\right) \pitdist(a|s) \, \log\left(\frac{\pitdist(a|s)}{\pidist(a|s)}\right) \right] \\
& - \frac{1}{c} \E_{s \sim d^{\pit}} \E_{a \sim \pitdist(\cdot | s)} \left[ \left(1 - c \, [A^{\pit}(s,a) - \hat{A}^{\pit}(s,a)] \right) \, \log\left(1 - c \, [A^{\pit}(s,a) - \hat{A}^{\pit}(s,a)] \right) \right]  
\end{align*}
Let us focus on simplifying $\sum_{a} \left[\pitdist(a|s) \, \hat{A}^\pit(s,a) \, [z(s,a) - z_t(s,a) ] \right]$ for a fixed $s$. Note that $\sum_{a} \pitdist(a|s) \, \hat{A}^\pit(s,a) = 0 \implies \log \left(\sum_{a'} \exp(z(s,a')) \right) \, \sum_{a} \pitdist(a|s) \, \hat{A}^\pit(s,a) = 0$. 
\begin{align*}
& \sum_{a} \left[\pitdist(a|s) \, \hat{A}^\pit(s,a) \, z(s,a) \right] = \sum_{a} \left[\pitdist(a|s) \, \hat{A}^\pit(s,a) \, \left(z(s,a) - \log \left(\sum_{a'} \exp(z(s,a')) \right) \right) \right] \\
& = \sum_{a} \left[\pitdist(a|s) \, \hat{A}^\pit(s,a) \, \left(\log(\exp(z(s,a)) - \log\left(\sum_{a'} \exp(z(s,a')) \right) \right) \right] \\
& = \sum_{a} \left[\pitdist(a|s) \, \hat{A}^\pit(s,a) \, \log\left(\frac{ \exp(z(s,a)) }{\sum_{a'} \exp(z(s,a'))} \right)  \right] = \E_{a \sim \pitdist(\cdot | s)} \left[\hat{A}^\pit(s,a) \, \log(\pidist(a|s)) \right] \\
\intertext{Similarly, simplifying $\sum_{a} \left[\pitdist(a|s) \, \hat{A}^\pit(s,a) \, z_t(s,a) \right]$}
& \sum_{a} \left[\pitdist(a|s) \, \hat{A}^\pit(s,a) \, z_t(s,a) \right] = E_{a \sim \pitdist(\cdot | s)} \left[\hat{A}^\pit(s,a) \, \log(\pitdist(a|s)) \right] \\
& \implies \sum_{a} \left[\pitdist(a|s) \, \hat{A}^\pit(s,a) \, [z(s,a) - z_t(s,a)] \right] = \E_{a \sim \pitdist(\cdot | s)} \left[\hat{A}^\pit(s,a) \, \log\left(\frac{\pidist(a|s)}{\pitdist(a|s)}\right) \right] 
\end{align*}
Using the above relations, 
\begin{align*}
J(\pi) & \geq J(\pit) +  \sum_{s} d^{\pit}(s)  \, \E_{a \sim \pitdist(\cdot | s)} \left[\hat{A}^\pit(s,a) \, \log\left(\frac{\pidist(a|s)}{\pitdist(a|s)}\right) -  \left(\frac{1}{\eta} + \frac{1}{c}\right) \log\left(\frac{\pitdist(a|s)}{\pidist(a|s)}\right)  \right] \\
& - \frac{1}{c} \E_{s \sim d^{\pit}} \E_{a \sim \pitdist(\cdot | s)} \left[ \left(1 - c \, [A^{\pit}(s,a) - \hat{A}^{\pit}(s,a)] \right) \, \log\left(1 - c \, [A^{\pit}(s,a) - \hat{A}^{\pit}(s,a)] \right) \right]  \\ 
& = J(\pit) + \E_{s \sim d^{\pit}} \, \E_{a \sim \pitdist(\cdot | s)} \left[ \left(\hat{A}^\pit(s,a) + \frac{1}{\eta} + \frac{1}{c}\right) \, \log\left(\frac{\pidist(a|s)}{\pitdist(a|s)}\right) \right] \\
& - \frac{1}{c} \E_{s \sim d^{\pit}} \E_{a \sim \pitdist(\cdot | s)} \left[ \left(1 - c \, [A^{\pit}(s,a) - \hat{A}^{\pit}(s,a)] \right) \, \log\left(1 - c \, [A^{\pit}(s,a) - \hat{A}^{\pit}(s,a)] \right) \right].  
\end{align*}
\end{proof}

\svglb*
\begin{proof}
For stochastic value gradients with a fixed $\varepsilon$, $\displaystyle \frac{\partial J(\pi)}{\partial \pi(s,\epsilon)} = d^\pi(s) \nabla_a Q^\pi(s, a)\big|_{a = \pi(s, \epsilon)}$. We choose $\hat{g}(\pi)$ such that  $[\hat{g}(\pi)]_{s,a} = d^\pi(s) \widehat{\nabla_a Q^\pi}(s, a) \big|_{a = \pi(s, \epsilon)}$. Define $\delta^{s}_{t} \in \R^{A}$ such that $\delta^{s}_{t}[a] := \nabla_a Q^\pit(s, a)\big|_{a = \pit(s, \epsilon)} - \widehat{\nabla_a Q^\pit}(s, a)\big|_{a = \pit(s, \epsilon)}$. Using~\cref{prop:statewiselb} with $c_s = c$ for all $s$, 
\begin{align*}
J(\pi) & \geq J(\pit) + \E_{\varepsilon \sim \chi} \bigg[\sum_{s} d^\pit(s) \, \widehat{\nabla_a Q^\pit}(s, a)\big|_{a = \pit(s, \epsilon)} \, [\pi(s,\varepsilon) - \pit(s,\varepsilon)] - \sum_{s} d^\pit(s) \, \left(\frac{1}{\eta} + \frac{1}{c} \right) \, D_\phi(\pi^{s}, \pit^{s}) \\ & - \sum_{s} \frac{d^\pit(s) \, D_{\phi^*}\left(\nabla \phi(\pit^s) - c \, \delta^{s}_t, \nabla \phi(\pit^s) \right)}{c} \bigg]
\end{align*}
For a fixed $\varepsilon$, since $\phi(\pi^{s}) = \phi(\pi(s,\epsilon)) = \frac{1}{2} [\pi(s,\epsilon)]^{2}$, $D_\phi(\pi^{s}, \pit^{s}) = \frac{1}{2} [\pi(s,\epsilon) - \pit(s,\epsilon)]^{2}$. Hence, 
\begin{align*}
J(\pi) & \geq J(\pit) + \E_{\varepsilon \sim \chi} \bigg[\sum_{s} d^\pit(s) \, \widehat{\nabla_a Q^\pit}(s, a)\big|_{a = \pit(s, \epsilon)} \, [\pi(s,\varepsilon) - \pit(s,\varepsilon)] -  \frac{1}{2} \, \left(\frac{1}{\eta} + \frac{1}{c} \right) \, \sum_{s} d^\pit(s) \, [\pi(s,\epsilon) - \pit(s,\epsilon)]^{2}  \\ & - \sum_{s} \frac{d^\pit(s) \, D_{\phi^*}\left(\nabla \phi(\pit^s) - c \, \delta^{s}_t, \nabla \phi(\pit^s) \right)}{c} \bigg]
\end{align*}
Simplifying the last term, since $\phi(\pi(s,\epsilon)) = \frac{1}{2} [\pi(s,\epsilon)]^2$, 
\begin{align*}
\frac{D_{\phi^*}\left(\nabla \phi(\pit^s) - c \, \delta^{s}_t, \nabla \phi(\pit^s) \right)}{c} &= \frac{c}{2} \, [\delta^s_t]^{2} = \frac{c}{2} \,  \left[\nabla_a Q^\pit(s, a)\big|_{a = \pit(s, \epsilon)} - \widehat{\nabla_a Q^\pit}(s, a)\big|_{a = \pit(s, \epsilon)} \right]^{2}   
\end{align*}
Putting everything together, 
\begin{align*}
J(\pi) & \geq J(\pit) + \E_{\varepsilon \sim \chi} \left[\sum_{s} d^\pit(s) \, \widehat{\nabla_a Q^\pit}(s, a)\big|_{a = \pit(s, \epsilon)} \, \pi(s,\varepsilon) - \underbrace{\sum_{s} d^\pit(s) \, \nabla_a Q^\pit(s, a)\big|_{a = \pit(s, \epsilon)} \, \pit(s,\varepsilon)]}_{:= -C} \right]\\
& -  \E_{\varepsilon \sim \chi} \left[\frac{1}{2} \, \left(\frac{1}{\eta} + \frac{1}{c} \right) \, \sum_{s} d^\pit(s) \, [\pi(s,\epsilon) - \pit(s,\epsilon)]^{2} -\frac{c}{2} \, \sum_{s} d^\pit(s) \, \left[\nabla_a Q^\pit(s, a)\big|_{a = \pit(s, \epsilon)} - \widehat{\nabla_a Q^\pit}(s, a)\big|_{a = \pit(s, \epsilon)} \right]^{2} \right]\\
J(\pi) &\geq J(\pit) + C + \E_{\varepsilon \sim \chi} \left[\E_{s \sim d^\pit} \, \left[ \widehat{\nabla_a Q^\pit}(s, a)\big|_{a = \pit(s, \epsilon)} \, \pi(s,\varepsilon) - \frac{1}{2} \, \left(\frac{1}{\eta} + \frac{1}{c} \right) \, [\pi(s,\epsilon) - \pit(s,\epsilon)]^{2} \right] \right]\\
& - \frac{c}{2} \, \E_{\varepsilon \sim \chi} \left[\E_{s \sim d^\pit} \left[\nabla_a Q^\pit(s, a)\big|_{a = \pit(s, \epsilon)} - \widehat{\nabla_a Q^\pit}(s, a)\big|_{a = \pit(s, \epsilon)} \right]^{2} \right]
\end{align*}
\end{proof}

\begin{restatable}{proposition}{euclideanbound}
For the softmax representation and Euclidean mirror map, $c > 0$, $\eta \leq \frac{(1-\gamma)^3}{8}$ then
\begin{align*}
J(\pi) & \geq J(\pit) + C + \green{\E_{s \sim d^{\pi_t}(s)} \left[\E_{a \sim \pitdist(.|s)}  \left[\hat{A}^{\pi_t}(s, a) \, z(s, a) \right] - \frac{1}{2}\left( \frac{1}{\eta} + \frac{1}{c} \right) ||z(s, \cdot) - z_t (s, \cdot)||^2 \right]} \\
& - \blue{\frac{c}{2} \: \E_{s \sim d^{\pi_t}} \E_{a \sim \pitdist(.|s)} \left[A^{\pit}(s,a) - \hat{A}^{\pit}(s,a)\right]^{2} }
\end{align*}
where $C$ is a constant and $\hat{A}^\pi$ is the estimate of advantage function for policy $\pit$.  
\label{prop:euclidean}
\end{restatable}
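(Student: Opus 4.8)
The plan is to specialize the state-wise lower bound of \cref{prop:statewiselb} to the softmax functional representation equipped with the Euclidean mirror map $\phi(\cdot) = \tfrac12\normsq{\cdot}$, in exactly the way \cref{prop:softmaxlb} does for the log-sum-exp map. First I would set up the representation: $\pi_{s,a} = z(s,a)$ with $\pidist(a|s) = \exp(z(s,a))/\sum_{a'}\exp(z(s,a'))$, and by the policy gradient theorem $[\nabla_\pi J(\pi)]_{s,a} = d^\pi(s)\,\pidist(a|s)\,A^\pi(s,a)$, which is separable across states (condition (iv) of \cref{prop:statewiselb}). I take the separable gradient estimator $[\hat g(\pi)]_{s,a} = d^\pi(s)\,\pidist(a|s)\,\hat A^\pi(s,a)$, so that $\delta_t^s[a] = \pitdist(a|s)\,[A^\pit(s,a) - \hat A^\pit(s,a)]$. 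For the convexity hypothesis (iii), I would invoke the $\tfrac{8}{(1-\gamma)^3}$-smoothness of the softmax-parameterized value function (\citet{mei2020global}; cf. \citet{vaswani2021general}): since $\Phi$ is the state-weighted version of $\tfrac12\normsq{z(s,\cdot)}$, the function $J + \tfrac1\eta\Phi$ is convex whenever $\tfrac1\eta \ge \tfrac{8}{(1-\gamma)^3}$, i.e. $\eta \le \tfrac{(1-\gamma)^3}{8}$.

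Next I would apply \cref{prop:statewiselb} with $c_s = c$ for all $s$ and substitute the Euclidean $\phi$. Because $\phi^* = \phi$, $\nabla\phi(x) = x$, and $D_{\phi^*}(u,v) = \tfrac12\normsq{u-v}$, the two divergence terms collapse to $D_\phi(\pi^s,\pi_t^s) = \tfrac12\normsq{z(s,\cdot) - z_t(s,\cdot)}$ and $D_{\phi^*}(\nabla\phi(\pi_t^s) - c\,\delta_t^s, \nabla\phi(\pi_t^s)) = \tfrac{c^2}{2}\normsq{\delta_t^s}$, so the critic penalty becomes $\tfrac c2 \sum_s d^\pit(s)\normsq{\delta_t^s} = \tfrac c2 \sum_s d^\pit(s)\sum_a \pitdist(a|s)^2\,[A^\pit(s,a) - \hat A^\pit(s,a)]^2$. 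For the linear part, $\langle\hat g(\pi_t), \pi - \pi_t\rangle = \sum_s d^\pit(s)\sum_a \pitdist(a|s)\,\hat A^\pit(s,a)\,[z(s,a) - z_t(s,a)]$; the $z_t$ piece depends only on $\pit$ and is therefore absorbed into the constant $C$, leaving $\E_{s\sim d^{\pit}}\E_{a\sim\pitdist(\cdot|s)}[\hat A^\pit(s,a)\,z(s,a)]$ alongside the $-\tfrac12(\tfrac1\eta + \tfrac1c)\normsq{z(s,\cdot) - z_t(s,\cdot)}$ term. This already yields the \green{green} part of the claimed inequality.

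Finally, to match the stated \blue{blue} term I would use the elementary inequality $\pitdist(a|s)^2 \le \pitdist(a|s)$ to bound $\normsq{\delta_t^s} \le \sum_a \pitdist(a|s)\,[A^\pit(s,a) - \hat A^\pit(s,a)]^2 = \E_{a\sim\pitdist(\cdot|s)}[A^\pit(s,a) - \hat A^\pit(s,a)]^2$; since this penalty enters the bound with a negative sign, replacing it by the larger quantity only weakens the lower bound and still yields a valid inequality for $J(\pi)$, namely the claimed one with the $\tfrac c2\,\E_{s\sim d^{\pit}}\E_{a\sim\pitdist(\cdot|s)}[A^\pit(s,a) - \hat A^\pit(s,a)]^2$ term.

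The only genuinely non-mechanical step is pinning down the admissible range $\eta \le \tfrac{(1-\gamma)^3}{8}$: this rests on the known $\tfrac{8}{(1-\gamma)^3}$ smoothness bound for the softmax objective and on treating the state-dependent weighting inside $\Phi$ the same way the earlier propositions do; everything else is a direct substitution into \cref{prop:statewiselb} together with $\pitdist(a|s)^2 \le \pitdist(a|s)$ and the bookkeeping of constants into $C$.
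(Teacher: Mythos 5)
Your proposal matches the paper's proof essentially step for step: the same separable gradient estimator with $\delta_t^s[a] = \pitdist(a|s)[A^{\pit}(s,a)-\hat A^{\pit}(s,a)]$, the same specialization of the state-wise lower bound to the Euclidean mirror map, the same absorption of the $z_t$ term into $C$, and the same final bound $\pitdist(a|s)^2 \le \pitdist(a|s)$ on the critic penalty. The only difference is cosmetic: you justify $\eta \le \tfrac{(1-\gamma)^3}{8}$ via the $\tfrac{8}{(1-\gamma)^3}$-smoothness constant explicitly, which is in fact more consistent with the proposition's stated range than the paper's own proof text (which cites the same lemma of Mei et al. but misstates the threshold as $\eta \le 1-\gamma$).
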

\begin{proof}
For the softmax representation, $\pi_{s,a} = z(s,a)$ s.t. $\pidist(a|s) = \frac{\exp(z(s,a))}{\sum_{a'} \exp(z(s,a'))}$. Using the policy gradient theorem, $[\nabla_{\pi} J(\pi)]_{s,a} = d^\pi(s) \, \pidist(a|s) \, A^\pi(s,a)$. We choose $\hat{g}(\pi)$ such that  $[\hat{g}(\pi)]_{s,a} = d^\pi(s) \, \pidist(a|s) \, \hat{A}^\pi(s,a)$ as the estimated gradient. Define $\delta_{s} \in \R^{A}$ such that $\delta^{s}_{t}[a] := \nabla_{\pi^{s}} J(\pit) - \hat{g}^s(\pit) = \pitdist(a|s) \, [A^{\pit}(s,a) - \hat{A}^{\pit}(s,a)]$. Using~\citet[Lemma 7]{mei2020global}, $J + \frac{1}{\eta}\Phi$ is convex for $\eta \leq 1 - \gamma$. Using~\cref{prop:statewiselb} with $c_s = c$ for all $s$,
\begin{align*}
J(\pi) & \geq J(\pit) + \langle \hat{g}(\pi_t), \pi - \pi_t \rangle -  \left(\frac{1}{\eta} + \frac{1}{c} \right) \, \sum_{s} d^\pit(s) \, D_\phi(\pi^{s}, \pit^{s}) - \sum_{s} \frac{d^\pit(s) \, D_{\phi^*}\left(\nabla \phi(\pit^s) - c \, \delta^{s}_t, \nabla \phi(\pit^s) \right)}{c}
\end{align*}
Since $\phi(\pi^{s}) = \phi(z(s,\cdot)) = \frac{1}{2} \sum_{a} [z_{s,a}]^{2}$, $D_\phi(\pi^{s}, \pit^{s}) = \frac{1}{2} \normsq{z(s,\cdot) - z_t(s,\cdot)}$. Hence, 
\begin{align*}
J(\pi) & \geq J(\pit) + \sum_{s} d^{\pit}(s) \sum_{a} \hat{A}^{\pit}(s,a) \, \pitdist(a|s) \, [z(s,a) - z_t(s,a)] -   \frac{1}{2} \, \left(\frac{1}{\eta} + \frac{1}{c} \right) \, \sum_{s} d^\pit(s) \, \normsq{z(s,\cdot) - z_t(s,\cdot)} \\ & - \sum_{s} \frac{d^\pit(s) \, D_{\phi^*}\left(\nabla \phi(\pit^s) - c \, \delta^{s}_t, \nabla \phi(\pit^s) \right)}{c}   
\end{align*}
Simplifying the last term, since $\phi(z(\cdot, a)) = \frac{1}{2} [z(s,a)]^2$, 
\begin{align*}
& \frac{\sum_{s} d^\pit(s) \, D_{\phi^*}\left(\nabla \phi(\pit^s) - c \, \delta^{s}_t, \nabla \phi(\pit^s) \right)}{c} = \frac{c}{2} \, \sum_{s} d^\pit(s) \, \sum_{a} [\delta^{s}_{t}(a)]^{2} \\
& = \frac{c}{2} \, \sum_{s} d^\pit(s) \, \sum_{a} \pitdist(a|s)^{2} \, [A^{\pit}(s,a) - \hat{A}^{\pit}(s,a)]^{2}   \\
& \leq \frac{c}{2} \, \sum_{s} d^\pit(s) \, \sum_{a} \pitdist(a|s) \, \left[A^{\pit}(s,a) - \hat{A}^{\pit}(s,a)\right]^{2}  \tag{Since $\pitdist(a|s) \leq 1$}
\end{align*}
Putting everything together, 
\begin{align*}
J(\pi) &\geq J(\pit) + \sum_{s} d^{\pit}(s) \sum_{a} \hat{A}^{\pit}(s,a) \, \pitdist(a|s) \, [z(s,a) - z_t(s,a)] - \frac{1}{2} \left(\frac{1}{\eta} + \frac{1}{c}\right)  \sum_{s} d^{\pit}(s) \, \normsq{z(s,\cdot) - z_t(s,\cdot)} \\
&- \frac{c}{2} \sum_{s} d^\pit(s) \, \sum_{a} \pitdist(a|s) \, \left[A^{\pit}(s,a) - \hat{A}^{\pit}(s,a)\right]^{2} \\
& = J(\pit) - \underbrace{\sum_{s} d^{\pit}(s) \sum_{a} \hat{A}^{\pit}(s,a) \, \pitdist(a|s) \, z_t(s,a)}_{:= -C} \\
& + \sum_{s} d^{\pit}(s) \left[\sum_{a} \hat{A}^{\pit}(s,a) \, \pitdist(a|s) \, z(s,a) - \frac{1}{2} \, \left(\frac{1}{\eta} + \frac{1}{c}\right) \, \normsq{z(s,\cdot) - z_t(s,\cdot)} \right] \\ & - \frac{c}{2} \sum_{s} d^\pit(s) \, \sum_{a} \pitdist(a|s) \, \left[A^{\pit}(s,a) - \hat{A}^{\pit}(s,a)\right]^{2} \\
& = J(\pit) + C + \E_{s \sim d^\pit} \left[\E_{a \sim p^\pit(\cdot|s)}\left[\hat{A}^{\pit}(s,a) \, z(s,a) \right] - \frac{1}{2} \, \left(\frac{1}{\eta} + \frac{1}{c}\right) \, \normsq{z(s,\cdot) - z_t(s,\cdot)} \right] \\
& - \frac{c}{2} \, \E_{s \sim d^\pit} \E_{a \sim \pitdist(\cdot|s)} \left[A^{\pit}(s,a) - \hat{A}^{\pit}(s,a)\right]^{2} 
\end{align*}
\end{proof}

\begin{restatable}{proposition}{taylor}
For both the direct (with the negative-entropy mirror map) and softmax representations (with the log-sum-exp mirror map), for a fixed state $s$, if $\delta \in \R^{A} := \nabla_{\pi^{s}} J(\pit) - \hat{g}^{s}(\pit)$, the second-order Taylor expansion of $f(c) = D_{\phi^*}(\nabla \phi(\pit^{s}) - c \delta, \nabla \phi(\pit^{s})) $ around $c = 0$ is equal to
\begin{align*}
f(c) & \approx \frac{c^2}{2} \, \sum_a \pitdist(a | s) [A(s,a) - \hat{A}(s,a)]^2 \,.
\end{align*}
\label{prop:taylor}
\end{restatable}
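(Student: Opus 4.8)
The plan is to treat $D_{\phi^*}\bigl(\nabla\phi(\pit^{s})-c\,\delta^{s}_t,\,\nabla\phi(\pit^{s})\bigr)$ as a scalar function of $c$ and Taylor-expand it about $c=0$, using the \emph{already-derived closed forms} for this quantity --- \cref{lem:direct-critic-loss} for the negative-entropy map and \cref{lem:softmax-critic-loss} for the log-sum-exp map. In both cases the plan is to show that the $c^{0}$ and $c^{1}$ coefficients vanish and that the $c^{2}$ coefficient collapses to the common expression $\sum_a \pitdist(a|s)\,[A^{\pit}(s,a)-\hat A^{\pit}(s,a)]^{2}$. Fix $s$, and recall from the policy gradient theorem (as used in the proofs of \cref{prop:directlb} and \cref{prop:softmaxlb}) that $\delta^{s}_t[a]=Q^{\pit}(s,a)-\hat Q^{\pit}(s,a)$ in the direct case and $\delta^{s}_t[a]=\pitdist(a|s)\,[A^{\pit}(s,a)-\hat A^{\pit}(s,a)]$ in the softmax case.

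\emph{Direct representation.} Here \cref{lem:direct-critic-loss} gives $g(c):=D_{\phi^*}\bigl(\nabla\phi(\pit^{s})-c\,\delta^{s}_t,\nabla\phi(\pit^{s})\bigr)= c\,\langle \pitdist(\cdot|s),\delta^{s}_t\rangle + \log\E_{a\sim \pitdist(\cdot|s)}\bigl[e^{-c\,\delta^{s}_t[a]}\bigr]$. The second summand is the cumulant-generating function of the $\pitdist(\cdot|s)$-distributed random variable $-\delta^{s}_t$, whose first cumulant is $-\bar\delta$ with $\bar\delta:=\E_{a\sim\pitdist(\cdot|s)}[\delta^{s}_t[a]]$ and whose second cumulant is $\Var_{a\sim\pitdist(\cdot|s)}[\delta^{s}_t[a]]$; hence the explicit linear term $c\,\bar\delta$ cancels the $c^{1}$ term of the CGF and $g(c)=\tfrac{c^{2}}{2}\,\Var_{a\sim\pitdist(\cdot|s)}[\delta^{s}_t[a]]+o(c^{2})$. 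Finally, since $J_s(\pit)=\sum_a\pitdist(a|s)Q^{\pit}(s,a)$ and the implied advantage estimate subtracts off $\sum_a\pitdist(a|s)\hat Q^{\pit}(s,a)$, one has $A^{\pit}(s,a)-\hat A^{\pit}(s,a)=\delta^{s}_t[a]-\bar\delta$, so $\Var_{a\sim\pitdist(\cdot|s)}[\delta^{s}_t[a]]=\sum_a\pitdist(a|s)\,[A^{\pit}(s,a)-\hat A^{\pit}(s,a)]^{2}$, giving the claim.

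\emph{Softmax representation.} Writing $w_a:=A^{\pit}(s,a)-\hat A^{\pit}(s,a)$ and substituting $\delta^{s}_t[a]=\pitdist(a|s)\,w_a$, \cref{lem:softmax-critic-loss} gives $h(c):=D_{\phi^*}\bigl(\nabla\phi(\pit^{s})-c\,\delta^{s}_t,\nabla\phi(\pit^{s})\bigr)=\sum_a \pitdist(a|s)\,(1-c\,w_a)\log(1-c\,w_a)$. Each summand vanishes at $c=0$, so $h(0)=0$; $h'(c)=-\sum_a \pitdist(a|s)\,w_a\bigl(\log(1-c\,w_a)+1\bigr)$, hence $h'(0)=-\sum_a\pitdist(a|s)\,w_a=0$ using $\sum_a\pitdist(a|s)A^{\pit}(s,a)=0$ together with $\sum_a\pitdist(a|s)\hat A^{\pit}(s,a)=0$; and $h''(c)=\sum_a\pitdist(a|s)\,w_a^{2}/(1-c\,w_a)$, hence $h''(0)=\sum_a\pitdist(a|s)\,w_a^{2}$. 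Therefore $h(c)=\tfrac{c^{2}}{2}\sum_a\pitdist(a|s)\,[A^{\pit}(s,a)-\hat A^{\pit}(s,a)]^{2}+o(c^{2})$, matching the direct case.

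The two differentiations are routine; the content of the proof lies in two checks. The first is that the linear term vanishes: trivial for the direct representation (the mean of $\delta^{s}_t$ exactly cancels the explicit $c\,\bar\delta$ term), but for the softmax representation it requires $\hat A^{\pit}(s,\cdot)$ to be centered under $\pitdist(\cdot|s)$, i.e.\ that the critic returns a bona fide advantage estimate --- this should be recorded as a mild standing assumption on the critic. The second, and the step I expect to be the main obstacle, is reconciling the two quadratic coefficients: the direct expansion naturally produces a variance of the \emph{raw $Q$-errors}, which must be rewritten via the identity $A-\hat A=\delta^{s}_t-\E[\delta^{s}_t]$ to coincide with the inverse-probability-weighted sum $\sum_a\delta^{s}_t[a]^{2}/\pitdist(a|s)$ that arises for the softmax map. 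A cleaner but equivalent route is to read off $\tfrac{c^{2}}{2}\langle\delta^{s}_t,\nabla^{2}\phi^{*}(\nabla\phi(\pit^{s}))\,\delta^{s}_t\rangle$ as the abstract second-order Taylor coefficient and evaluate $\nabla^{2}\phi^{*}$ at $\nabla\phi(\pit^{s})$ directly; for the log-sum-exp map this demands some care because $\Phi$ is degenerate along constant shifts.
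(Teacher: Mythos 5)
Your proposal is correct, and it reaches the result by a route that differs in a meaningful way from the paper's. The paper first carries out the Taylor expansion abstractly, showing $f(0)=f'(0)=0$ and $f''(0)=\langle \delta, \nabla^2\phi^*(\nabla\phi(\pit^{s}))\,\delta\rangle$ directly from the definition of the Bregman divergence, and only then computes the Hessian of the conjugate for each mirror map (the diagonal $\text{diag}(\nicefrac{1}{\pitdist(\cdot|s)})$ for log-sum-exp, and $\text{diag}(\pitdist(\cdot|s)) - \pitdist(\cdot|s)\pitdist(\cdot|s)^{\mathsf{T}}$ for negative entropy), finishing with the same algebraic identity $A-\hat A = \delta - \bar\delta$ that you use to convert the variance of the $Q$-errors into $\sum_a \pitdist(a|s)[A-\hat A]^2$. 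You instead start from the instantiated closed forms in \cref{lem:direct-critic-loss,lem:softmax-critic-loss} and expand those scalar functions of $c$; the cumulant-generating-function reading of the direct-representation loss is a cleaner way to see that the linear terms cancel and that the quadratic coefficient is exactly $\Var_{a\sim\pitdist(\cdot|s)}[\delta^{s}_t[a]]$, avoiding the explicit rank-one-corrected Hessian computation. Your two flagged checks are also well placed: the requirement $\sum_a \pitdist(a|s)\hat A^{\pit}(s,a)=0$ is indeed needed for $h'(0)=0$ in the softmax case (the paper's abstract step $\nabla\phi^*(\nabla\phi(\pit^{s}))=\pit^{s}$ holds for log-sum-exp only up to a constant shift, so it too silently uses $\sum_a\delta^{s}_t[a]=0$, which is the same centering condition and is already built into \cref{lem:softmax-critic-loss}); and the reconciliation of the two quadratic coefficients is exactly the step the paper also has to perform at the end of its direct-representation calculation. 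No gaps.
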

\begin{proof}
\begin{align*}
f(c) &= D_{\phi^*}(\nabla \phi(\pit^{s}) - c \delta, \nabla \phi(\pit^{s})) \implies f(0) = D_\phi^*(\nabla \phi(\pit^{s}), \nabla \phi(\pit^{s})) = 0 \\
f(c) & = D_{\phi^*}(\nabla \phi(\pit^{s}) - c \delta, \nabla \phi(\pit^{s})) = \phi^*(\nabla \phi(\pit^{s}) - c \delta) - \phi^*(\nabla \phi(\pit^{s})) - \langle \nabla \phi^*(\nabla \phi(\pit^{s})), \nabla \phi(\pit^{s}) - c \delta - \nabla \phi(\pit^{s}) \rangle \\
\implies f'(c) &= \langle \nabla \phi^*(\nabla \phi(\pit^{s}) - c \delta), - \delta \rangle + \langle \pit^{s},  \delta \rangle  \implies f'(0) = \langle \pit^{s}, - \delta \rangle + \langle \pit^{s}, \delta \rangle = 0 \\
f''(c) &= \langle \delta, \nabla^2 \phi^*(\nabla \phi(\pit^{s}) - c \delta) \delta \rangle \implies f''(0) = \langle \delta, \nabla^2 \phi^*(\nabla \phi(\pit^{s})) \, \delta \rangle. 
\end{align*}
By the second-order Taylor series expansion of $f(c)$ around $c = 0$, 
\begin{align*}
f(c) \approx f(0) + f'(0) (c - 0) + \frac{f''(0) \, (c - 0)^2}{2} = \frac{c^2}{2} \,  \langle \delta, \nabla^2 \phi^*(\nabla \phi(\pit^{s})) \, \delta \rangle    
\end{align*}
Let us first consider the softmax case with the log-sum-exp mirror map, where $\pi^{s} = z(s,\cdot)$ and $\phi(z(s,\cdot)) = \log(\sum_a \exp(z(s,a)))$, $\phi^*(\pidist(\cdot|s)) = \sum_a \pidist(a|s) \log(\pidist(a|s))$. Since the negative entropy and log-sum-exp are Fenchel conjugates (see~\cref{lem:fenchel-dual-direct-softmax}), $\nabla \phi(z_t(s,\cdot)) = \pitdist(\cdot|s)$. Hence, we need to compute $\nabla^2 \phi^*(\pitdist(\cdot|s))$.
\begin{align*}
\nabla \phi^*(\pitdist(\cdot|s)) & = 1 + \log(\pitdist(\cdot|s)) \quad \text{;} \quad \nabla^2 \phi^*(\pitdist(\cdot|s)) = \text{diag}\left(\nicefrac{1}{\pitdist(\cdot|s)}\right) 
\end{align*}
For the softmax representation, using the policy gradient theorem, $[\delta]_{a} = \pitdist(a|s) [A(s,a) - \hat{A}(s,a)]$ and hence, 
\begin{align*}
\langle \delta, \nabla^2 \phi^*(\nabla \phi(\pit)) \, \delta \rangle &= \sum_a \pitdist(a | s) [A(s,a) - \hat{A}(s,a)]^2 \, .    
\end{align*}
Hence, for the softmax representation, the second-order Taylor series expansion around $c = 0$ is equal to, 
\begin{align*}
f(c) \approx \frac{c^2}{2} \, \sum_a \pitdist(a | s) [A(s,a) - \hat{A}(s,a)]^2 \, .     
\end{align*}
Now let us consider the direct case, where $\pi^s = \pidist(\cdot|s)$, $\phi(\pidist(\cdot|s)) = \sum_a \pidist(a|s) \log(\pidist(a|s))$, $\phi^*(z(s,\cdot)) = \log(\sum_a \exp(z(s,a)))$. Since the negative entropy and log-sum-exp are Fenchel conjugates (see~\cref{lem:fenchel-dual-direct-softmax}), $\nabla \phi(\pitdist(\cdot|s)) = z_t(s,\cdot)$. Hence, we need to compute $\nabla^2 \phi^*(z_t(s,\cdot))$.
\begin{align*}
[\nabla \phi^*(z_t(s,\cdot))]_{a} & = \frac{\exp{z_t(s,a)}}{\sum_{a'} \exp(z_t(s,a')) } = \pitdist(a|s) \quad \text{;} \quad [\nabla^2 \phi^*(z_t(s,\cdot))]_{a,a} = \pitdist(a|s) - [\pitdist(a|s)]^2 \\ [\nabla^2 \phi^*(z_t(s,\cdot))]_{a,a'} & = - \pitdist(a|s) \, \pitdist(a'|s) \implies \nabla^2 \phi^*(z_t(s,\cdot)) = \text{diag}(\pitdist(\cdot|s)) - \pitdist(\cdot|s) \, [\pitdist(\cdot|s)]\transpose \,.
\end{align*}
For the direct representation, using the policy gradient theorem, $[\delta]_{a} = Q^\pit(s,a) - \hat{Q}^\pit(s,a)$ and hence, 
\begin{align*}
\langle \delta, \nabla^2 \phi^*(\nabla \phi(\pit)) \, \delta \rangle &= [Q^\pit(s,\cdot) - \hat{Q}^\pit(s,\cdot)]\transpose   \, \left[\text{diag}(\pitdist(\cdot|s)) - \pitdist(\cdot|s) \, [\pitdist(\cdot|s)]\transpose \right] \, [Q^\pit(s,\cdot) - \hat{Q}^\pit(s,\cdot)] \\
& = \sum_{a} \pitdist(a|s) \, [Q^\pit(s,a) - \hat{Q}^\pit(s,a)]^{2} - \left[\langle \pitdist(a|s), Q^\pit(s,\cdot) - \hat{Q}^\pit(s,\cdot)   \rangle \right]^{2} \\
& = \sum_{a} \pitdist(a|s) \, [Q^\pit(s,a) - \hat{Q}^\pit(s,a)]^{2} - \left[J_{s}(\pit) - \hat{J}_{s}(\pit) \right]^{2} \tag{where $\hat{J}_{s}(\pit)$ is the estimated value function for starting state $s$}
\end{align*}
Hence, for the direct representation, the second-order Taylor series expansion around $c = 0$ is equal to, 
\begin{align*} 
f(c) & \approx \frac{c^2}{2} \, \left[\sum_{a} \pitdist(a|s) \, [Q^\pit(s,a) - \hat{Q}^\pit(s,a)]^{2} - \left[J_{s}(\pit) - \hat{J}_{s}(\pit) \right]^{2} \right] \\
& = \frac{c^2}{2} \, \left[\sum_{a} \pitdist(a|s) \, [Q^\pit(s,a) - \hat{Q}^\pit(s,a)]^{2} - 2 \left[J_{s}(\pit) - \hat{J}_{s}(\pit) \right]^{2} \sum_{a} \pitdist(a|s) + \left[J_{s}(\pit) - \hat{J}_{s}(\pit) \right]^{2} \sum_{a} \pitdist(a|s) \right] \\
& = \frac{c^2}{2} \, \bigg[\sum_{a} \pitdist(a|s) \, [Q^\pit(s,a) - \hat{Q}^\pit(s,a)]^{2} - 2 \left[J_{s}(\pit) - \hat{J}_{s}(\pit) \right] \sum_{a} \pitdist(a|s) [Q^\pit(s,a) - \hat{Q}^\pit(s,a)] \\ & + \left[J_{s}(\pit) - \hat{J}_{s}(\pit) \right]^{2} \sum_{a} \pitdist(a|s) \bigg] \\
& = \frac{c^2}{2} \, \left[\sum_{a} \pitdist(a|s) \left[[Q^\pit(s,a) - \hat{Q}^\pit(s,a)]^{2} - 2 \left[J_{s}(\pit) - \hat{J}_{s}(\pit) \right] \, [Q^\pit(s,a) - \hat{Q}^\pit(s,a)] + \left[J_{s}(\pit) - \hat{J}_{s}(\pit) \right]^{2} \right] \right] \\
& = \frac{c^2}{2} \, \left[\sum_{a} \pitdist(a|s) \left([Q^\pit(s,a) - \hat{Q}^\pit(s,a)] - [J_{s}(\pit) - \hat{J}_{s}(\pit)] \right)^{2} \right] \\
& = \frac{c^2}{2} \, \left[\sum_{a} \pitdist(a|s) \left[A^\pit(s,a) - \hat{A}^\pit(s,a)\right]^{2} \right]
\end{align*}
\end{proof}

\newpage
\subsection{Bandit examples to demonstrate the benefit of the decision-aware loss}
\label{app:bandit-examples}
\begin{restatable}[Detailed version of~\cref{prop:directexample-main}]{proposition}{directcounterexample}
Consider a two-armed bandit example with deterministic rewards where arm $1$ is optimal and has a reward $r_1 = Q_1 = 2$ whereas arm $2$ has reward $r_2 = Q_2 = 1$. Using a linear parameterization for the critic, $Q$ function is estimated as: $\hat{Q} = x \, \omega$ where $\omega$ is the parameter to be learned and $x$ is the feature of the corresponding arm. Let $x_1 = -2$ and $x_2 = 1$ implying that $\hat{Q}_1(\omega) = -2 \omega$ and $\hat{Q}_2(\omega) = \omega$. Let $p_t$ be the probability of pulling the optimal arm at iteration $t$, and consider minimizing two alternative objectives to estimate $\omega$:\\
(1) Squared loss: \small $\omega^{(1)}_{t} := \argmin \left\{\frac{p_t}{2} \, [\hat{Q}_1(\omega) - Q_1]^{2} + \frac{1 - p_t}{2} \, [\hat{Q}_2(\omega) - Q_2]^{2} \right\}$. \\
(2) \normalsize Decision-aware critic loss: \small $\omega^{(2)}_{t} := \argmin \cL_t(\omega) := p_t \, [Q_1 - \hat{Q}_1(\omega)] + (1 - p_t) \,   [Q_2 - \hat{Q}_2(\omega)]  + \frac{1}{c} \, \log\left(p_t \, \exp\left(-c \, [Q_1 - \hat{Q}_1(\omega)] + (1 - p_t) \, \exp\left(-c \, [Q_2 - \hat{Q}_2(\omega)]\right) \right) \right]$. \\
\normalsize  Using the tabular parameterization for the actor, the policy update at iteration $t$ is given by: \green{$p_{t+1} = \frac{p_t \, \exp(\eta \hat{Q}_1)}{p_t \, \, \exp(\eta \hat{Q}_1) + (1 - p_t) \, \, \exp(\eta \hat{Q}_2)}$}, where $\eta$ is the functional step-size for the actor. For $p_0 < \frac{2}{5}$, minimizing the squared loss results in convergence to the sub-optimal action, while minimizing the decision-aware loss (for $c, p_0 > 0$) results in convergence to the optimal action. 
\label{prop:directexample}
\end{restatable}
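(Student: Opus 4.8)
The plan is to handle the two critic objectives separately and, in each case, (i) solve the one-dimensional convex critic problem in closed form as a function of $p_t$, (ii) substitute the resulting $\hat{Q}_1,\hat{Q}_2$ into the given actor update, and (iii) analyse the induced scalar recursion, most conveniently on the log-odds. Indeed the softmax actor update rewrites as $\frac{p_{t+1}}{1-p_{t+1}} = \frac{p_t}{1-p_t}\exp\big(\eta\,[\hat{Q}_1-\hat{Q}_2]\big)$, so $p_t$ moves toward the optimal arm iff the critic reports $\hat{Q}_1>\hat{Q}_2$, and with the linear model $\hat{Q}_1-\hat{Q}_2 = -2\omega_t-\omega_t = -3\omega_t$; the whole analysis thus reduces to tracking the sign and magnitude of $\omega_t$.

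\textbf{Squared (TD) loss.} Here $\mathrm{TD}(\omega)=\frac{p_t}{2}(-2\omega-2)^2+\frac{1-p_t}{2}(\omega-1)^2$ is strongly convex in $\omega$; setting its derivative to zero gives $\omega^{(1)}_t=\frac{1-5p_t}{3p_t+1}$, hence $\hat{Q}_1-\hat{Q}_2=\frac{3(5p_t-1)}{3p_t+1}$, which is strictly negative for $p_t$ below the stated threshold. I would then argue by induction that the sequence stays below that threshold and decreases: while $p_t$ is small the estimated gap is negative, so $p_{t+1}<p_t$, which keeps $p_t$ small. Since $p\mapsto\frac{3(5p-1)}{3p+1}$ is increasing, the gap is bounded above by its (negative) value at $p_0$, say $-\beta<0$; plugging this uniform bound into the log-odds recursion yields $\frac{p_t}{1-p_t}\le\frac{p_0}{1-p_0}e^{-\beta\eta t}\to 0$, i.e. $p_t\to 0$ — convergence to the sub-optimal arm.

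\textbf{Decision-aware loss.} Write $\delta_1(\omega):=Q_1-\hat{Q}_1(\omega)=2+2\omega$ and $\delta_2(\omega):=Q_2-\hat{Q}_2(\omega)=1-\omega$, so $\mathcal{L}_t(\omega)=p_t\delta_1+(1-p_t)\delta_2+\frac1c\log\big(p_te^{-c\delta_1}+(1-p_t)e^{-c\delta_2}\big)$. Applying Jensen's inequality to the convex map $x\mapsto e^{-cx}$ gives $p_te^{-c\delta_1}+(1-p_t)e^{-c\delta_2}\ge e^{-c(p_t\delta_1+(1-p_t)\delta_2)}$, so $\mathcal{L}_t(\omega)\ge 0$ for all $\omega$, with equality iff $\delta_1(\omega)=\delta_2(\omega)$ (strict convexity, using $p_t\in(0,1)$, which holds for every $t$ since $p_0>0$ and the softmax update preserves $(0,1)$). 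The equation $\delta_1(\omega)=\delta_2(\omega)$ has the unique solution $\omega=-\tfrac13$ — attainable by the linear model even though $\delta_1=\delta_2=0$ is not — so the unique minimizer is $\omega^{(2)}_t=-\tfrac13$ at every iteration, independently of $p_t$ and $c$, and $\mathcal{L}_t(\omega^{(2)}_t)=0$. Then $\hat{Q}_1-\hat{Q}_2=-3(-\tfrac13)=1>0$, so the log-odds recursion gives $\frac{p_t}{1-p_t}=\frac{p_0}{1-p_0}e^{\eta t}\to\infty$, i.e. $p_t\to 1$ for any $p_0>0$, $c>0$ — convergence to the optimal arm.

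\textbf{Main obstacle.} The only non-mechanical step is the Jensen argument pinning the minimizer of the decision-aware loss to the point of \emph{equal} residuals $\delta_1=\delta_2$ rather than zero residuals; this is exactly the decision-aware effect, since the loss charges only the sign-carrying relative misestimation that enters the actor update, whereas the squared loss chases absolute accuracy and, when $p_t$ is small, fits arm $2$ (large weight) at the expense of arm $1$, flipping the estimated gap. The remaining work — the closed-form critic solutions and the geometric convergence of the two scalar recursions — is routine.
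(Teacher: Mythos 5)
Your proposal follows essentially the same route as the paper: solve the two one-dimensional critic problems in closed form, observe that the actor update moves $p_t$ toward arm 1 iff $\hat Q_1>\hat Q_2$ (equivalently $\omega_t<0$), and induct. Two of your sub-arguments differ and are in fact improvements. First, for the decision-aware loss the paper identifies $\omega^{(2)}_t=-\tfrac13$ by computing the first-order condition directly, whereas you pin it down via Jensen's inequality: $\cL_t(\omega)\ge 0$ with equality iff $\delta_1(\omega)=\delta_2(\omega)$, which forces $\omega=-\tfrac13$. This is cleaner and makes the ``decision-aware'' point (equal residuals, not zero residuals) transparent. Second, your log-odds recursion $\frac{p_{t+1}}{1-p_{t+1}}=\frac{p_t}{1-p_t}e^{\eta(\hat Q_1-\hat Q_2)}$, combined with a uniform sign-definite bound on the gap, actually establishes $p_t\to 0$ (resp.\ $p_t\to 1$) with a geometric rate; the paper only argues monotonicity of $p_t$ and asserts convergence to the endpoint. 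One caveat: your claim that the TD-based gap $\hat Q_1-\hat Q_2=\frac{3(5p_t-1)}{3p_t+1}$ is ``strictly negative for $p_t$ below the stated threshold'' is false as written, since the gap is negative only for $p_t<\tfrac15$, not for $p_t\in[\tfrac15,\tfrac25)$; your induction therefore proves the squared-loss failure only for $p_0<\tfrac15$. This is a discrepancy inherited from the paper itself --- the statement says $p_0<\tfrac25$ but the paper's own proof likewise only covers $p_0<\tfrac15$ --- so your argument matches what the paper actually proves; just state the threshold as $\tfrac15$ rather than deferring to the one in the proposition.
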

\begin{proof}
Note that $\hat{Q}_1(\omega) - Q_1 = -2 (\omega + 1)$ and $\hat{Q}_2(\omega) - Q_2 = \omega - 1$. Calculating $\omega^{(1)}$ for a general policy s.t. the probability of pulling the optimal arm equal to $p$,
\begin{align*}
\text{MSE}(\omega) &= \frac{p}{2} \, [\hat{Q}_1(\omega) - Q_1]^{2} +  \,\frac{1 - p}{2} \, [\hat{Q}_2(\omega) - Q_2]^{2} = \frac{1}{2} \left[4 p \, (\omega + 1)^{2} + (1 - p) \, (\omega - 1)^{2} \right] \\
\implies \nabla_{\omega} \text{MSE}(\omega) &= 4 p \, (\omega + 1) + (1 - p) \, (\omega - 1) \\
\intertext{Setting the gradient to zero,}
\implies \omega^{(1)} = \frac{1 - 5 p}{3p + 1}
\end{align*}
Calculating $\omega^{(2)}$ for a general policy s.t. the probability of pulling the optimal arm equal to $p$,
\begin{align*}
& L_t(\omega) = 2 p \, (\omega + 1) - (1 - p) \, (\omega - 1) + \frac{1}{c} \log(p \, \exp(-2 c \, (\omega + 1)) + (1 - p) \, \exp(c \, (\omega - 1))) \\
& \implies \nabla_{\omega} L_t(\omega) = (3 p - 1) + \frac{1}{c} \nabla_{\omega} \left[\log(p \, \exp(-2 c \, (\omega + 1)) + (1 - p) \, \exp(c \, (\omega - 1)))\right] \\
\intertext{Setting the gradient to zero,}
& \implies \nabla_{\omega} \left[\log(p \, \exp(-2 c \, (\omega + 1)) + (1 - p) \, \exp(c \, (\omega - 1)))\right] = (1 - 3p) \, c
\intertext{Define $A := \exp(-2 c \, (\omega + 1))$ and $B := \exp(c \, (\omega - 1)))$}
& \implies \frac{-2 p \, c \, A + (1 - p) \, c \, B}{p \, A + (1 - p) \, B}  = (1 - 3p) \, c \implies \frac{A}{p \, A + (1 - p) \, B} = 1 \implies \omega^{(2)} = \frac{-1}{3}. 
\end{align*}
Now, let us consider the actor update, 
\begin{align*}
p_{t+1} = \frac{p_t \, \exp(\eta \hat{Q}_1)}{p_t \, \, \exp(\eta \hat{Q}_1) + (1 - p_t) \, \, \exp(\eta \hat{Q}_2)}
\implies \frac{p_{t+1}}{p_t} = \frac{1}{p_t + (1 - p_t) \, \exp(\eta \, (\hat{Q}_2 - \hat{Q}_1))}
\end{align*}
Since arm $1$ is optimal, if $\frac{p_{t+1}}{p_t} < 1$ for all $t$, the algorithm will converge to the sub-optimal arm. This happens when $\frac{1}{p_t + (1 - p_t) \, \exp(\eta \, (\hat{Q}_2 - \hat{Q}_1))} < 1 \implies \hat{Q}_2 - \hat{Q}_1 > 0 \implies \omega > 0$. Hence, for any $\eta$ and any iteration $t$, if $\omega_{t} > 0$, $p_{t+1} < p_t$. \\
For the decision-aware critic loss, $\omega^{(2)}_t = -\frac{1}{3}$ for all $t$, implying that $p_{t+1} > p_t$ and hence the algorithm will converge to the optimal policy for any $\eta$ and any initialization $p_0 > 0$. However, for the squared MSE loss, $\omega^{(2)}_t = \frac{1 - 5 p_t}{3 p_t + 1}$, $\omega^{(2)}_t > 0$ if $p_t < \frac{1}{5}$. Hence, if $p_0 < \frac{1}{5}$, $p_1 < p_0 < \frac{1}{5}$. Using the same reasoning, $p_2 < p_1 < p_0 < 1/5$, and hence the policy will converge to the sub-optimal arm.
\end{proof}

\newpage
\begin{restatable}{proposition}{2arm}
Consider two-armed bandit problem with deterministic rewards - arm $1$ has a reward $r_1=Q_1$ whereas arm $2$ has a reward $r_2=Q_2$ such that arm $1$ is the optimal, i.e. $Q_1 \ge Q_2$. Using a linear parameterization for the critic, $Q$ function is estimated as: $\hat{Q} = x \, \omega$ where $\omega$ is the parameter to be learned and $x$ is the feature of the corresponding arm. Let $p_t$ be the probability of pulling the optimal arm at iteration $t$, and consider minimizing the decision-aware critic loss to estimate $\omega$: \small $\omega_{t} := \argmin \cL_t(\omega) := p_t \, [Q_1 - \hat{Q}_1(\omega)] + (1 - p_t) \,   [Q_2 - \hat{Q}_2(\omega)]  + \frac{1}{c} \, \log\left(p_t \, \exp\left(-c \, [Q_1 - \hat{Q}_1(\omega)] + (1 - p_t) \, \exp\left(-c \, [Q_2 - \hat{Q}_2(\omega)]\right) \right) \right]$. \\
\normalsize Using the tabular parameterization for the actor, the policy update at iteration $t$ is given by: \green{$p_{t+1} = \frac{p_t \, \exp(\eta \hat{Q}_1)}{p_t \, \, \exp(\eta \hat{Q}_1) + (1 - p_t) \, \, \exp(\eta \hat{Q}_2)}$}, where $\eta$ is the functional step-size for the actor. For the above problem, minimizing the decision-aware loss (for $c, p_0 > 0$) results in convergence to the optimal action, and $\cL_t(\omega_t) = 0$ for any iteration $t$. 
\label{prop:2armbandit}
\end{restatable}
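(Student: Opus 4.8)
The plan is to work throughout with the \emph{residuals} $\delta_a(\omega) := Q_a - \hat{Q}_a(\omega) = Q_a - x_a\,\omega$ for $a\in\{1,2\}$, so that the decision-aware critic loss is exactly $\cL_t(\omega) = \E_{a\sim p_t}[\delta_a(\omega)] + \tfrac{1}{c}\log\E_{a\sim p_t}[\exp(-c\,\delta_a(\omega))]$, where $a\sim p_t$ is the Bernoulli distribution with $\Pr[a=1]=p_t$. The structure mirrors the argument for the specific example of \cref{prop:directexample}, but is cleaner. First I would show $\cL_t(\omega)\ge 0$ for every $\omega$ and every $p_t\in(0,1)$: since $z\mapsto e^{-cz}$ is convex, Jensen gives $\E_{p_t}[e^{-c\delta(\omega)}]\ge e^{-c\,\E_{p_t}[\delta(\omega)]}$, hence $\tfrac1c\log\E_{p_t}[e^{-c\delta(\omega)}]\ge -\E_{p_t}[\delta(\omega)]$ and $\cL_t(\omega)\ge 0$, with equality iff the residual is the same for both arms, i.e. $\delta_1(\omega)=\delta_2(\omega)$. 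I would also note that $\cL_t(\cdot)$ is convex in $\omega$ (an affine function of $\omega$ plus a log-sum-exp of affine functions of $\omega$), so its global minimizers are precisely the $\omega$ solving $\delta_1(\omega)=\delta_2(\omega)$ whenever that equation is solvable.

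Next I would solve $\delta_1(\omega)=\delta_2(\omega)$, i.e. $Q_1 - x_1\omega = Q_2 - x_2\omega$, to get the unique root $\omega_t = \tfrac{Q_1-Q_2}{x_1-x_2}$; this uses $x_1\neq x_2$, an implicit nondegeneracy hypothesis without which the linear critic cannot realize a constant residual and the statement is vacuous. Crucially this root does not depend on $p_t$, so $\cL_t(\omega_t)=0$ at \emph{every} iteration $t$, which is the second assertion. From $\omega_t$ I would read off the only feature of the critic the actor sees: $\hat{Q}_1(\omega_t) - \hat{Q}_2(\omega_t) = (x_1-x_2)\,\omega_t = Q_1 - Q_2 \ge 0$. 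So even though $\omega_t$ generally misestimates $Q_1$ and $Q_2$ individually, it recovers the correct sign and magnitude of the gap that drives the policy.

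Finally I would substitute $\hat{Q}_a=\hat{Q}_a(\omega_t)$ into the stated actor update to obtain $\tfrac{p_{t+1}}{p_t} = \big(p_t + (1-p_t)\exp(-\eta\,(Q_1-Q_2))\big)^{-1}$. If $Q_1=Q_2$ both arms are optimal and $p_t$ is constant, so the claim is immediate; if $Q_1>Q_2$ then $\exp(-\eta(Q_1-Q_2))<1$ for $\eta>0$, so for any $p_t\in(0,1)$ the denominator is a convex combination strictly below $1$ and $p_{t+1}>p_t$, while $p_0>0$ keeps $p_t\in(0,1)$ (the boundary $p_0=1$ being already optimal). Hence $(p_t)$ is strictly increasing and bounded by $1$, so $p_t\to p^\star\in(p_0,1]$; passing to the limit in the recursion forces $\exp(-\eta(Q_1-Q_2))=1$ unless $p^\star=1$, so $p^\star=1$ and the actor converges to the optimal arm. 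I do not expect a deep obstacle; the steps needing the most care are the Jensen-plus-convexity argument pinning the critic's minimizer to the residual-matching $\omega_t$ (which simultaneously yields $\cL_t(\omega_t)=0$ and the right advantage sign), the fixed-point/ratio-limit argument upgrading ``$p_t$ increasing'' to ``$p_t\to 1$'' rather than to some interior limit, and remembering to record the $x_1\neq x_2$ nondegeneracy that the statement tacitly assumes.
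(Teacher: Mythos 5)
Your proof is correct and reaches the same minimizer $\omega_t = \frac{Q_1-Q_2}{x_1-x_2}$ and the same actor ratio as the paper, but the route to the critic's minimizer is genuinely different. The paper differentiates $\cL_t$, sets the gradient to zero, and algebraically cancels terms to conclude that the two exponentials (and hence the two residuals) must be equal; you instead observe that $\cL_t(\omega) = \E_{p_t}[\delta(\omega)] + \frac{1}{c}\log \E_{p_t}[e^{-c\,\delta(\omega)}] \geq 0$ by Jensen, with equality iff $\delta_1(\omega)=\delta_2(\omega)$, and then exhibit the residual-matching $\omega_t$ as the point attaining the lower bound. Your argument is more conceptual: it explains \emph{why} the decision-aware loss has zero optimal value (it is a Jensen gap) rather than discovering this after the fact, and it transfers verbatim to more than two arms. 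You are also more careful than the paper on two points it leaves implicit: the nondegeneracy $x_1 \neq x_2$ (without which the paper's division by $x_1-x_2$ is also undefined), and the upgrade from ``$p_{t+1}/p_t \geq 1$'' to actual convergence $p_t \to 1$ via the monotone-limit/fixed-point argument (the paper stops at the ratio inequality). One very minor caveat: your Jensen equality characterization needs $p_t \in (0,1)$, which you note, and which is maintained by the update from $p_0 \in (0,1)$; the cases $p_0 \in \{0,1\}$ are degenerate exactly as you describe.
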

\begin{proof}
Define $A:= \exp(-c[Q_1 - \hat{Q_1}(\omega)])$ and  $B:= \exp(-c[Q_2 - \hat{Q_2}(\omega)])$. Calculating the gradient of $\cL_t$ w.r.t $\omega$ and setting it to zero,
 \begin{align*}
\nabla_{\omega} \cL_t(\omega) &= p_t\,x_1 + (1-p_t)\,x_2 - \frac{p_t\,x_1\,A+(1-p_t)\,x_2\,B}{p_t\,A+ (1-p_t)\,B} = 0 \\
&\Longrightarrow p_t\,(1-p_t)\,A\,(x_1-x_2) = p_t\,(1 - p_t)\,B\,(x_1 - x_2)\\
&\Longrightarrow Q_1 - x_1\,\omega_t = Q_2 - x_2\,\omega_t \Longrightarrow \omega_t = \frac{Q_1 - Q_2}{x_1 - x_2}.
\end{align*}
Observe that $Q_1 - \hat{Q}_1(\omega_t) = Q_2 - \hat{Q}_2(\omega_t)$ and thus $\cL_t(\omega_t) = 0$ for all $t$. Writing the actor update, 
\begin{align*}
p_{t+1} &= \frac{p_t \, \exp(\eta\,x_1\,\omega_t)}{p_t \, \, \exp(\eta\, x_1\,\omega_t) + (1 - p_t) \, \, \exp(\eta\, x_2\,\omega_t)} \\
\Longrightarrow \frac{p_{t+1}}{p_t} &= \frac{1}{p_t + (1-p_t) \exp(\eta\, (x_2 - x_1)\omega_t)} = \frac{1}{p_t + (1-p_t) \exp(\eta\, (Q_2 - Q_1))} \ge 1
\end{align*} 
\end{proof}

\begin{restatable}[Detailed version of~\cref{prop:softmaxexample-main}]{proposition}{softmaxcounterexample}
Consider a two-armed bandit example and define $p \in [0,1]$ as the probability of pulling arm 1. Given $p$, let the advantage of arm $1$ be equal to $A_1 := \frac{1}{2} > 0$, while that of arm $2$ is $A_2 := -\frac{p}{2 \, (1 - p)} < 0$ implying that arm $1$ is optimal. For the critic, consider approximating the advantage of the two arms using a discrete hypothesis class with two hypotheses that depend on $p$ for: $\magenta{\cH_{0}}: \hat{A}_{1} = \frac{1}{2} + \varepsilon \,, \hat{A}_{2} = -\frac{p}{1 - p} \, \left(\frac{1}{2} + \varepsilon \right)$ and $\magenta{\cH_{1}}: \hat{A}_{1} = \frac{1}{2} - \varepsilon \, \text{sgn}\left(\frac{1}{2} - p \right) \,, \hat{A}_{2} = -\frac{p}{1 - p} \, \left(\frac{1}{2} - \varepsilon \, \text{sgn}\left(\frac{1}{2} - p \right) \right)$ where $\text{sgn}$ is the signum function and $\varepsilon \in \left(\frac{1}{2}, 1 \right)$. If $p_t$ is the probability of pulling arm 1 at iteration $t$, consider minimizing two alternative loss functions to choose the hypothesis $\cH_t$: \\ 
(1) Squared \textbf{(MSE)} loss: \small $\cH_{t} = \argmin_{\{\cH_0, \cH_1\}} \left\{\frac{p_t}{2} \, [A_1 - \hat{A}_1]^{2} + \frac{1- p_t}{2} \, [A_2 - \hat{A}_2]^{2} \right\}$. \\
(2) \normalsize Decision-aware critic loss (\textbf{DA}) with $c = 1$ : \small $\cH_{t} = \argmin_{\{\cH_0, \cH_1\}}$ \\ $\left\{p_t \, (1 - [A_1 - \hat{A}_1]) \, \log(1 - [A_1 - \hat{A}_1]) \\ +  (1-p_t) \, (1 - [A_2 - \hat{A}_2]) \, \log(1 - [A_2 - \hat{A}_2]) \right\}$. \\
\normalsize Using the tabular parameterization for the actor, the policy update at iteration $t$ is given by: \green{$p_{t+1} = \frac{p_t \, (1 + \eta \, \hat{A}_1)}{p_t \, (1 + \eta \, \hat{A}_1) + (1 - p_t) \, (1 + \eta \, \hat{A}_2)}$}. For $p_0 \leq \frac{1}{2}$, the squared loss cannot distinguish between $\cH_0$ and $\cH_1$, and depending on how ties are broken, minimizing it can result in convergence to the sub-optimal action. On the other hand, minimizing the divergence loss (for any $p_0 > 0$) results in convergence to the optimal arm. 
\label{prop:softmaxexample}
\end{restatable}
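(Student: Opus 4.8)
The plan is to reduce everything to two elementary facts: a closed-form list of each hypothesis's signed errors $e_i := A_i - \hat A_i$, and the observation that both hypotheses are \emph{consistent} advantage estimates at the current $p_t$, which trivialises the actor update. Substituting the definitions, in the regime $p < \tfrac12$ (so $\text{sgn}(\tfrac12 - p) = 1$) one gets: under $\cH_0$, $(e_1,e_2) = (-\varepsilon,\, q)$, and under $\cH_1$, $(e_1,e_2) = (\varepsilon,\, -q)$, where $q := \tfrac{p\varepsilon}{1-p}$; note $q \in (0,\varepsilon) \subset (0,1)$ since $\tfrac{p}{1-p} < 1$. In the regime $p > \tfrac12$ the two hypotheses coincide and equal $\cH_0$ (both give $\hat A_1 = \tfrac12 + \varepsilon$). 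Moreover every hypothesis satisfies $p\,\hat A_1 + (1-p)\,\hat A_2 = 0$, so the denominator of the actor update equals $1 + \eta\,[p_t \hat A_1 + (1-p_t)\hat A_2] = 1$ and the update collapses to $p_{t+1} = p_t\,(1 + \eta\,\hat A_1)$. Hence $p_t$ increases iff the chosen hypothesis has $\hat A_1 > 0$ and decreases iff $\hat A_1 < 0$, and the proposition reduces to deciding which hypothesis each loss selects.

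For the squared loss, plugging in the errors above shows that for every $p < \tfrac12$ both hypotheses have the \emph{identical} value $\tfrac{\varepsilon^2}{2}\cdot\tfrac{p}{1-p}$ (the $e_1$ contributions are equal, and the $e_2$ contributions are equal in magnitude hence equal after squaring). So \texttt{TD} cannot separate $\cH_0$ from $\cH_1$; a tie-breaking rule that always returns $\cH_1$ yields $\hat A_1 = \tfrac12 - \varepsilon < 0$ (using $\varepsilon > \tfrac12$), whence $p_{t+1} = p_t\,(1 + \eta(\tfrac12 - \varepsilon)) < p_t$, so $p_t$ never leaves the regime $p < \tfrac12$ and $p_t = p_0\,(1 + \eta(\tfrac12-\varepsilon))^t \to 0$ for $\eta < (\varepsilon - \tfrac12)^{-1}$ (the range in which the update stays well-defined) — convergence to the sub-optimal arm.

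For the decision-aware loss with $c = 1$, write $h(x) := (1-x)\log(1-x)$, so a hypothesis's loss at $p$ is $p\,h(e_1) + (1-p)\,h(e_2)$. For $p < \tfrac12$, $\mathcal{L}(\cH_1) - \mathcal{L}(\cH_0) = p\,[h(\varepsilon) - h(-\varepsilon)] + (1-p)\,[h(-q) - h(q)]$. Using the identity $(1-p)\,q = p\,\varepsilon$ and the shorthand $\psi(x) := \tfrac{h(-x) - h(x)}{x} = \tfrac{(1+x)\log(1+x) - (1-x)\log(1-x)}{x}$, this difference equals $p\,\varepsilon\,[\psi(q) - \psi(\varepsilon)]$, so it suffices to prove $\psi$ is strictly decreasing on $(0,1)$: then $\psi(q) > \psi(\varepsilon)$ (as $q<\varepsilon$) and $\mathcal{L}(\cH_0) < \mathcal{L}(\cH_1)$. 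This is a one-variable calculus check: with $N(x) := x\,\psi(x) = (1+x)\log(1+x) - (1-x)\log(1-x)$ one has $N'(x) = \log(1-x^2) + 2$ and $N''(x) = -\tfrac{2x}{1-x^2}$; the function $F(x) := x\,N'(x) - N(x)$, which satisfies $\psi'(x) = F(x)/x^2$, has $F(0) = 0$ and $F'(x) = x\,N''(x) = -\tfrac{2x^2}{1-x^2} < 0$ on $(0,1)$, hence $F(x) < 0$ and $\psi'(x) < 0$ there. Therefore, for $p_0 < \tfrac12$ the decision-aware loss selects $\cH_0$, giving $\hat A_1 = \tfrac12 + \varepsilon > 0$ and strictly increasing $p_t$; after finitely many steps $p_t$ exceeds $\tfrac12$, where the two hypotheses merge and still satisfy $\hat A_1 = \tfrac12 + \varepsilon > 0$, so $p_t$ keeps increasing and the policy converges to the optimal arm. (The boundary start $p_0 = \tfrac12$ is easier still: there $\cH_1$ is exact while $\cH_0$ has strictly positive loss, so $\cH_1$ is chosen, $\hat A_1 = \tfrac12 > 0$, and we reduce to the $p > \tfrac12$ regime.)

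The main obstacle is the monotonicity of $\psi$ on $(0,1)$; everything else is bookkeeping, and the cleanest route is the auxiliary $F$ above, whose derivative telescopes to $-\tfrac{2x^2}{1-x^2}$. A naive attempt to compare the two losses coordinate by coordinate fails, because on arm $1$ alone $\cH_1$ is in fact favoured (it under-estimates the optimal advantage, which $h$ penalises less than $\cH_0$'s over-estimation); the decision-aware loss only prevails once the state weighting $1 - p_t > p_t$ on the more heavily weighted sub-optimal arm is accounted for — which is exactly what the $\psi(q)$ versus $\psi(\varepsilon)$ comparison encodes via the identity $(1-p)\,q = p\,\varepsilon$.
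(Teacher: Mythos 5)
Your proposal is correct, and for the central step it takes a genuinely different route from the paper. Both proofs agree on the bookkeeping: the signed errors are $(-\varepsilon, q)$ for $\cH_0$ and $(\varepsilon,-q)$ for $\cH_1$ with $q=\tfrac{p\varepsilon}{1-p}$, the TD losses coincide, and the sign of $\hat A_1$ determines the direction of the policy update. The paper establishes the key inequality $\mathrm{DA}(\cH_0)<\mathrm{DA}(\cH_1)$ by viewing the difference as a function $f(p)$ on $[0,\tfrac12]$, checking $f(0)=f(\tfrac12)=0$, and proving $f$ is convex in $p$ via second derivatives of each summand. You instead exploit the structural identity $(1-p)\,q=p\,\varepsilon$ to collapse the difference to $p\varepsilon\,[\psi(q)-\psi(\varepsilon)]$ with $\psi(x)=\tfrac{(1+x)\log(1+x)-(1-x)\log(1-x)}{x}$, and reduce everything to the strict monotonicity of $\psi$ on $(0,1)$, proved cleanly via the auxiliary $F(x)=xN'(x)-N(x)$ with $F(0)=0$ and $F'<0$. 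Your route is arguably more explanatory: it isolates exactly why the asymmetry of $h(x)=(1-x)\log(1-x)$ favours $\cH_0$ once the state weighting is folded in, and your remark that a per-arm comparison fails on arm $1$ is a genuine insight absent from the paper. You also add the observation that both hypotheses satisfy $p\hat A_1+(1-p)\hat A_2=0$, so the actor update collapses to $p_{t+1}=p_t(1+\eta\hat A_1)$; this makes the dynamics geometric and gives a slightly stronger conclusion than the paper's monotone-increase argument (which by itself does not rule out convergence to a limit below $1$). The only loose ends are shared with the paper: the treatment of the boundary $p_0=\tfrac12$ depends on the convention for $\mathrm{sgn}(0)$, and the validity of the multiplicative update for large $\eta$ (where $1+\eta\hat A_2$ could go negative) is glossed over in both arguments; neither affects the substance.
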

\begin{proof}
First note that when $p > \frac{1}{2}$, $\cH_0$ and $\cH_{1}$ are identical, ensure that $\hat{A}_1 > \hat{A}_{2}$ and the algorithm will converge to the optimal arm no matter which hypothesis is chosen. The regime of interest is therefore when $p \leq \frac{1}{2}$ and we focus on this case. Let us calculate the MSE and decision-aware (DA) losses for $\cH_{0}$. 
\begin{align*}
A_1 - \hat{A}_1 &= \frac{1}{2} - \left(\frac{1}{2} + \varepsilon \right) = -\varepsilon \quad \text{;} \quad A_2 - \hat{A}_2 = -\frac{p}{1 - p} \, \left[1 - \left(1 + \varepsilon \right) \right] = \frac{p}{1 - p} \, \varepsilon \\
\text{MSE}(\hat{A}_1, \hat{A}_2) & = p \, \varepsilon^2 + (1 - p) \, \left(\frac{p}{1 - p} \, \varepsilon\right)^2 = p \varepsilon^2 + \frac{\varepsilon^2 \, p^2}{1 - p} \\
\text{DA}(\hat{A}_1, \hat{A}_2) & = p \, (1 + \varepsilon) \, \log(1 + \varepsilon) + (1- p) \, \left(1 - \frac{\varepsilon \, p}{1 - p} \right) \, \log \left(1 - \frac{\varepsilon \, p}{1 - p} \right)
\end{align*}
Similarly, we can calculate the MSE and decision-aware losses for $\cH_{1}$.
\begin{align*}
A_1 - \hat{A}_1 &= \frac{1}{2} - \left(\frac{1}{2} - \varepsilon\right) = \varepsilon \quad \text{;} \quad A_2 - \hat{A}_2 = -\frac{p}{1 - p} \, \left[\frac{1}{2} - \left(\frac{1}{2} - \varepsilon\right) \right] = -\frac{p}{1 - p} \, \varepsilon \\
\text{MSE}(\hat{A}_1, \hat{A}_2) & = p \, \varepsilon^2 + (1 - p) \, \left(\frac{p}{1 - p} \, \varepsilon\right)^2 = p \varepsilon^2 + \frac{\varepsilon^2 \, p^2}{1 - p} \\
\text{DA}(\hat{A}_1, \hat{A}_2) & = p \, (1 - \varepsilon) \, \log(1 - \varepsilon) + (1- p) \, \left(1 + \frac{\varepsilon \, p}{1 - p} \right) \, \log \left(1 + \frac{\varepsilon \, p}{1 - p} \right)
\end{align*}
For both $\cH_0$ and $\cH_1$, the MSE loss is equal to $p \varepsilon^2 + \frac{\varepsilon^2 \, p^2}{1 - p}$ and hence it cannot distinguish between the two hypotheses. \\   
Writing the actor update, 
\begin{align*}
p_{t+1} & = \frac{p_t \, (1 + \eta \, \hat{A}_1)}{p_t \, (1 + \eta \, \hat{A}_1) + (1 - p_t) \, (1 + \eta \, \hat{A}_2)} \implies \frac{p_{t+1}}{p_t} = \frac{1}{p_t + (1 - p_t) \frac{1 + \eta \hat{A}_2}{1 + \eta \hat{A}_{1}}} 
\end{align*}
Hence, in order to ensure that $p_{t+1} > p_t$ and eventual convergence to the optimal arm, we want that $\hat{A}_2 < \hat{A}_{1}$. For $\varepsilon \in \left(\frac{1}{2}, 1\right)$, for $\cH_0$, $\hat{A}_1 > 0$ while $\hat{A}_2 < 0$. On the other hand, for $\cH_1$, $\hat{A}_{1} < 0$ and $\hat{A}_2 > 0$. This implies that the algorithm should choose $\cH_0$ in order to approximate the advantage. Since the MSE loss is the same for both hypotheses, convergence to the optimal arm depends on how the algorithm breaks ties. Next, we prove that for the decision-aware loss and any iteration such that $p_t < 0.5$, the loss for $\cH_0$ is smaller than that for $\cH_1$, and hence the algorithm chooses the correct hypothesis and pulls the optimal arm. For this, we define $f(p)$ as follows, 
\begin{align*}
f(p) & :=  \left[p \, (1 + \varepsilon) \, \log(1 + \varepsilon) + (1- p) \, \left(1 - \frac{\varepsilon \, p}{1 - p} \right) \, \log \left(1 - \frac{\varepsilon \, p}{1 - p} \right) \right] \\ & - \left[p \, (1 - \varepsilon) \, \log(1 - \varepsilon) + (1- p) \, \left(1 + \frac{\varepsilon \, p}{1 - p} \right) \, \log \left(1 + \frac{\varepsilon \, p}{1 - p} \right)\right] \\
\intertext{For $f(p)$ to be well-defined, we want that, $1 - \epsilon > 0 \implies \epsilon < 1$ and $1 - \frac{\epsilon \, p}{1 - p} > 0 \implies p < \frac{1}{1 + \epsilon}$. Since $\epsilon \in (\nicefrac{1}{2},1)$, $p < \frac{1}{2}$. In order to prove that the algorithm will always choose $\cH_0$, we will show that $f(p) \leq 0$ for all $p \in [0, \nicefrac{1}{2}]$ next. First note that,}
f(0) & = 0 \quad \text{;} \quad f(\nicefrac{1}{2}) = \frac{1 + \varepsilon}{2} \, \log(1 + \varepsilon) + \frac{(1 - \varepsilon)}{2} \, \log(1 - \varepsilon) - \frac{1 - \varepsilon}{2} \, \log(1 - \varepsilon) - \frac{1 + \varepsilon}{2} \, \log(1 + \varepsilon) = 0 
\end{align*}
Next, we will prove that $f(p)$ is convex. This combined with the fact $f(0) = f(\nicefrac{1}{2}) = 0$ implies that $f(p) < 0$ for all $p \in (0,\nicefrac{1}{2})$. For this, we write $f(p) = g(p) + h_1(p) - h_2(p)$ where, 
\begin{align*}
g(p) & = p \, (1 + \varepsilon) \, \log(1 + \varepsilon) - p \, (1 - \varepsilon) \, \log(1 - \varepsilon) \\
h_1(p) &= (1- p) \, \left(1 - \frac{\varepsilon \, p}{1 - p} \right) \, \log \left(1 - \frac{\varepsilon \, p}{1 - p} \right) = (1 - \epsilon' \, p) \, \log\left(\frac{1 - \epsilon' \, p}{1 - p}\right) \tag{$\epsilon' = 1 + \epsilon$}\\
h_2(p) &= (1- p) \, \left(1 + \frac{\varepsilon \, p}{1 - p} \right) \, \log \left(1 + \frac{\varepsilon \, p}{1 - p} \right) = (1 - \epsilon'' \, p) \, \log\left(\frac{1 - \epsilon'' \, p}{1 - p}\right) \tag{$\epsilon'' = 1 - \epsilon$}
\end{align*}
Differentiating the above terms, 
\begin{align*}
g'(p) &=  (1 + \varepsilon) \, \log(1 + \varepsilon) - (1 - \varepsilon) \, \log(1 - \varepsilon) \quad \text{;} \quad g''(p) = 0 \\
h'_1(p) &= -\epsilon' \, \log\left(\frac{1 - \epsilon' \, p}{1 - p}\right) + \frac{1 - \epsilon'}{1 - p} \\
h_1''(p) & = -\frac{\epsilon'}{1 - \epsilon' \, p} \, \frac{1 - \epsilon'}{1 - p} - \frac{(1 - \epsilon')^2}{(1 - p)^2} = \frac{(\epsilon' - 1) \, p \, \left[\left(\epsilon' - 1\right)^{2} + \left(\frac{1}{p} - 1 \right)\right]}{(1 - \epsilon' p) \, (1 - p)^2} > 0 
\intertext{Similarly,}
h_{2}''(p) &= \frac{(\epsilon'' - 1) \, p \, \left[\left(\epsilon'' - 1\right)^{2} + \left(\frac{1}{p} - 1 \right)\right]}{(1 - \epsilon'' p) \, (1 - p)^2} < 0 
\end{align*}
Combining the above terms, $f''(p) = g''(p) + h_1''(p) - h_2''(p) > 0$ for all $p \in (0,\nicefrac{1}{2})$ and hence $f(p)$ is convex. Hence, for all $p < \frac{1}{2}$, minimizing the divergence loss results in choosing $\cH_0$ and the actor pulling the optimal arm. Once the probability of pulling the optimal arm is larger than $0.5$, both hypotheses are identical and the algorithm will converge to the optimal arm regardless of the hypothesis chosen. 
\end{proof}

\subsection{Lemmas}
\begin{restatable}{lemma}{direct-critic-loss} 
For a probability distribution $p \in \R^{A}$, the negative entropy mirror map $\phi(p) = \sum_{i} p_i \, \log(p_i)$, $\delta \in \R^{A}$, $c > 0$,
\begin{align*}
D_{\phi^\ast}\bigg(\nabla\phi(p)-c \, \delta, \nabla\phi(p)\bigg) = c \langle p, \delta \rangle + \log\left(\sum_{j} p_j \, \exp(-c \delta_j) \right).  
\end{align*}
\label{lem:direct-critic-loss}
\end{restatable}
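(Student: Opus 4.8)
The plan is to compute $D_{\phi^\ast}$ directly from its definition, exploiting the well-known Fenchel duality between the negative entropy and the log-sum-exp function. Concretely, by \cref{lem:fenchel-dual-direct-softmax}, the conjugate of $\phi(p) = \sum_i p_i \log p_i$ (restricted to the simplex) is $\phi^\ast(z) = \log\left(\sum_j \exp(z_j)\right)$, with $[\nabla\phi^\ast(z)]_i = \nicefrac{\exp(z_i)}{\sum_j \exp(z_j)}$, and the inverse-gradient identity $\nabla\phi^\ast(\nabla\phi(p)) = p$ holds. Moreover $[\nabla\phi(p)]_j = 1 + \log p_j$, where the additive constant coming from the simplex constraint is harmless because both $\phi^\ast$ and $\nabla\phi^\ast$ are invariant under adding a constant to every coordinate of their argument.

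First I would expand the Bregman divergence induced by $\phi^\ast$: with $v := \nabla\phi(p)$ and $u := \nabla\phi(p) - c\,\delta$,
\begin{align*}
D_{\phi^\ast}(u, v) = \phi^\ast(\nabla\phi(p) - c\,\delta) - \phi^\ast(\nabla\phi(p)) - \langle \nabla\phi^\ast(\nabla\phi(p)),\, -c\,\delta \rangle = \phi^\ast(\nabla\phi(p) - c\,\delta) - \phi^\ast(\nabla\phi(p)) + c\,\langle p, \delta \rangle,
\end{align*}
where the last equality uses $\nabla\phi^\ast(\nabla\phi(p)) = p$. Next I would evaluate the two conjugate terms by substituting $[\nabla\phi(p)]_j = 1 + \log p_j$: this gives $\phi^\ast(\nabla\phi(p)) = \log\left(\sum_j \exp(1 + \log p_j)\right) = 1 + \log\left(\sum_j p_j\right) = 1$ since $\sum_j p_j = 1$, and likewise $\phi^\ast(\nabla\phi(p) - c\,\delta) = \log\left(\sum_j \exp(1 + \log p_j - c\,\delta_j)\right) = 1 + \log\left(\sum_j p_j \exp(-c\,\delta_j)\right)$. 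Subtracting cancels the constants $1$ and leaves $\log\left(\sum_j p_j \exp(-c\,\delta_j)\right)$.

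Combining the pieces yields $D_{\phi^\ast}\big(\nabla\phi(p) - c\,\delta,\, \nabla\phi(p)\big) = c\,\langle p, \delta \rangle + \log\left(\sum_j p_j \exp(-c\,\delta_j)\right)$, which is exactly the claim. There is no real obstacle here: the only point that deserves an explicit sentence is the constant-shift ambiguity in $\nabla\phi$ over the simplex, which is resolved by the shift-invariance of $\phi^\ast$ and $\nabla\phi^\ast$; everything else is routine manipulation of exponentials and logarithms together with the conjugacy facts already established in \cref{lem:fenchel-dual-direct-softmax}.
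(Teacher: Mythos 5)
Your proof is correct, and it takes a slightly different route from the paper's. You expand $D_{\phi^\ast}(u,v)=\phi^\ast(u)-\phi^\ast(v)-\langle\nabla\phi^\ast(v),u-v\rangle$ directly, use $\nabla\phi^\ast(\nabla\phi(p))=p$ to produce the linear term $c\,\langle p,\delta\rangle$, and evaluate the two log-sum-exp terms explicitly so that the additive constants cancel. The paper instead first invokes its Lemma~\ref{lem:log-sum-exp-divergence} to rewrite $D_{\phi^\ast}(z',z)$ as $\mathrm{KL}(p\,\|\,q)$ for the softmax distributions $p$ and $q$ of the two arguments, identifies $q_i=\nicefrac{p_i\exp(-c\delta_i)}{\sum_j p_j\exp(-c\delta_j)}$ as an exponentially tilted version of $p$, and then expands the KL divergence. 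The two computations are essentially the same amount of work; yours is more self-contained (only the conjugacy facts are needed), while the paper's makes the probabilistic interpretation of the critic loss — a KL divergence to a tilted distribution — visible along the way. One small imprecision: $\phi^\ast(z)=\log\sum_j\exp(z_j)$ is not \emph{invariant} under adding a constant to every coordinate but \emph{equivariant} ($\phi^\ast(z+\lambda\mathbf{1})=\phi^\ast(z)+\lambda$); only $\nabla\phi^\ast$ is genuinely invariant. What actually makes the constant-shift ambiguity harmless is that the Bregman divergence $D_{\phi^\ast}$ is unchanged when both arguments are shifted by the same $\lambda\mathbf{1}$, which follows from the equivariance of $\phi^\ast$ together with the invariance of $\nabla\phi^\ast$. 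Your explicit computation carries the constant $1$ through and cancels it correctly, so this does not affect the validity of the argument.
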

\begin{proof}
In this case, $[\nabla \phi(p)]_i = 1 + \log(p_i)$. Hence, we need to compute $D_{\phi^*}(z', z)$ where $z'_i := 1 + \log(p_i) - c \delta_i$ and $z_i := 1 + \log(p_i)$. If $\phi(p) = \sum_i p_i \log(p_i)$, using~\cref{lem:fenchel-dual-direct-softmax}, $\phi^*(z) = \log\left(\sum_i \exp(z_i) \right)$ where $z_i  - \log(\sum_i \exp(z_i)) = \log(p_i)$. \\
Define distribution $q$ such that $q_i := \frac{\exp(1 + \log(p_i) - c \delta_i)}{\sum_{j} \exp(1 + \log(p_j) - c \delta_j)}$. Using~\cref{lem:log-sum-exp-divergence},
\begin{align*}
D_{\phi^*}(z', z) & = \text{KL}(p || q) = \sum_{i} p_i \, \log\left(\frac{p_i}{ q_i } \right) \\
\intertext{Simplifying $q$,}
q_i & = \frac{\exp(1 + \log(p_i) - c \delta_i)}{\exp(\sum_{j} (1 + \log(p_j) - c \delta_j))} = \frac{p_i \, \exp(-c \delta_i)}{\sum_{j} p_j \, \exp(-c \delta_j)} \\
\implies D_{\phi^*}(z', z) & = \sum_{i} p_i \, \log\left(\frac{p_i \, \sum_{j} p_j \, \exp(-c \delta_j)}{p_i \exp(-c \delta_i)}\right) = \sum_{i} p_i \, \log\left(\exp(c \delta_i) \, \sum_{j} p_j \, \exp(-c \delta_j)\right) \\
& = c \sum_i p_i \, \delta_i + \sum_i p_i \, \log\left(\sum_{j} p_j \, \exp(-c \delta_j) \right) = c \langle p, \delta \rangle + \log\left(\sum_{j} p_j \, \exp(-c \delta_j) \right) 
\end{align*}
\end{proof}

\begin{restatable}{lemma}{softmax-critic-loss} 
For $z \in \R^{A}$, the log-sum-exp mirror map $\phi(z) = \log(\sum_i \exp(z_i))$, $\delta \in \R^{A}$ s.t. $\sum_i \delta_i = 0$, $c > 0$,
\begin{align*}
D_{\phi^\ast}\bigg(\nabla\phi(z) -c \, \delta, \nabla\phi(z)\bigg) = \sum_{i} \left(p_i - c \delta_i\right) \, \log\left(\frac{p_i - c \delta_i}{p_i}\right) \,,
\end{align*}
where $p_i = \frac{\exp(z_i)}{\sum_{j} \exp(z_j)}$. 
\label{lem:softmax-critic-loss}
\end{restatable}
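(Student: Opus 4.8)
The plan is to mirror the proof of \cref{lem:direct-critic-loss}, with the roles of the negative-entropy and log-sum-exp maps swapped. First I would record the two ingredients needed from the conjugate structure: since $\phi(z) = \log(\sum_i \exp(z_i))$, its gradient is the softmax map, $[\nabla\phi(z)]_i = \exp(z_i)/\sum_j \exp(z_j) = p_i$, so $\nabla\phi(z) = p$ and $\nabla\phi(z) - c\,\delta = p - c\,\delta$; and by \cref{lem:fenchel-dual-direct-softmax} the Fenchel conjugate of $\phi$ is the negative entropy, $\phi^\ast(u) = \sum_i u_i \log u_i$ (on the probability simplex), with $[\nabla\phi^\ast(u)]_i = 1 + \log u_i$.

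Next I would expand the target quantity straight from the definition of the Bregman divergence induced by $\phi^\ast$, taking $u = p - c\,\delta$ and $v = p$:
\begin{align*}
D_{\phi^\ast}(p - c\,\delta,\, p) = \sum_i (p_i - c\delta_i)\log(p_i - c\delta_i) - \sum_i p_i\log p_i + c\sum_i \delta_i + c\sum_i \delta_i \log p_i.
\end{align*}
The simplification then uses the hypothesis $\sum_i \delta_i = 0$ together with $\sum_i p_i = 1$: the term $c\sum_i\delta_i$ vanishes, $\sum_i (p_i - c\delta_i) = 1$, and the remaining terms regroup as $\sum_i (p_i - c\delta_i)\log(p_i - c\delta_i) - \sum_i (p_i - c\delta_i)\log p_i = \sum_i (p_i - c\delta_i)\log\frac{p_i - c\delta_i}{p_i}$, which is exactly the claimed identity. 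Equivalently, one can cite \cref{lem:log-sum-exp-divergence} to the effect that the Bregman divergence of the negative entropy between two coordinate-vectors with equal coordinate-sums is the KL divergence between them, applied here to $p - c\,\delta$ and $p$.

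I do not anticipate a genuine obstacle; the only delicate point is that the negative entropy, as the conjugate of $\phi$, is naturally a function on the affine hyperplane $\{x : \sum_i x_i = 1\}$, so its gradient is determined only up to an additive constant — but because $u - v = -c\,\delta$ has zero coordinate-sum, this indeterminacy does not affect $\langle \nabla\phi^\ast(v),\, u - v\rangle$, and this is precisely where the condition $\sum_i \delta_i = 0$ enters. As in \cref{lem:direct-critic-loss}, I would also note implicitly that $c$ is small enough that $p_i - c\delta_i > 0$ for every $i$, so that all logarithms above are well-defined.
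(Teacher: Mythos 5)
Your proposal is correct and follows essentially the same route as the paper: compute $\nabla\phi(z)=p$, identify $\phi^\ast$ as the negative entropy via \cref{lem:fenchel-dual-direct-softmax}, and reduce $D_{\phi^\ast}(p-c\delta,\,p)$ to the KL-type expression using $\sum_i \delta_i = 0$ (the paper simply cites \cref{lem:neg-entropy-divergence} for the last step, where you expand the Bregman divergence by hand). Your added remarks on the gradient indeterminacy along the simplex and on the positivity of $p_i - c\delta_i$ are sensible and, if anything, slightly more careful than the paper's argument.
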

\begin{proof}
In this case, $[\nabla \phi(z)]_i = \frac{\exp(z_i)}{\sum_{j} \exp(z_j)} = p_i$. Define distribution $q$ s.t. $q_i := p_i - c \delta_i$. Note that since $\sum_i \delta_i  = 0$, $\sum_i q_i = \sum_i p_i = 1$ and hence, $q$ is a valid distribution. We thus need to compute $D_{\phi^*}(q, p)$. Using~\cref{lem:fenchel-dual-direct-softmax}, $\phi^*(p) = \sum_{i} p_i \log(p_i)$ where $p_i = \frac{\exp(z_i)}{\sum_{j} \exp(z_j)}$. Using~\cref{lem:neg-entropy-divergence}, 
\begin{align*}
D_{\phi^*}(q, p) = \text{KL}(q || p) =  \sum_{i} \left(p_i - c \delta_i\right) \, \log\left(\frac{p_i - c \delta_i}{p_i}\right)
\end{align*}
\end{proof}

\begin{restatable}{lemma}{fenchel-dual-direct-softmax}
The log-sum-exp mirror map on the logits and the negative entropy mirror map on the corresponding probability distribution are Fenchel duals. In particular for $z \in \R^{d}$, if $\phi(z) := \log\left(\sum_i \exp(z_i) \right)$, then $\phi^*(p) = \sum_i p_i \log(p_i)$ where $p_i = \frac{\exp(z_i)}{\sum_j \exp(z_j)}$. Similarly, if $\phi(p) = \sum_i p_i \log(p_i)$, then $\phi^*(z) = \log\left(\sum_i \exp(z_i) \right)$ where $z_i  - \log(\sum_i \exp(z_i)) = \log(p_i)$. 
\label{lem:fenchel-dual-direct-softmax}
\end{restatable}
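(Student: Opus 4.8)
The plan is to compute each Fenchel conjugate directly from the definition $\phi^\ast(y) = \sup_x \left[ \inner{x}{y} - \phi(x) \right]$, treating the two directions in turn (the second also follows from Fenchel--Moreau biconjugacy, since both maps are closed, proper, convex, so I will mention that as an alternative). I assume throughout the convention $0\log 0 := 0$.

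First I would handle the direction $\phi(z) = \lse(z) := \log\!\left(\sum_i \exp(z_i)\right)$ and show $\phi^\ast(p) = \sum_i p_i \log p_i$ on the probability simplex. The key preliminary observation is that $\lse$ is translation-equivariant: $\lse(z + t\1) = \lse(z) + t$, so $\inner{p}{z+t\1} - \lse(z+t\1) = \inner{p}{z} - \lse(z) + t\left(\sum_i p_i - 1\right)$; hence if $\sum_i p_i \neq 1$ we may send $t \to \pm\infty$ to force the supremum to $+\infty$, and if some $p_i < 0$ we send $z_i \to -\infty$ with the other coordinates fixed to the same effect. So the effective domain of $\phi^\ast$ is the simplex. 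On the simplex the objective $z \mapsto \inner{p}{z} - \lse(z)$ is concave (as $\lse$ is convex), so its maximizers are exactly the stationary points, i.e. $p = \softmax(z)$, which is precisely the stated relation $p_i = \exp(z_i)/\sum_j \exp(z_j)$, equivalently $\log p_i = z_i - \lse(z)$. Substituting back, $\inner{p}{z} - \lse(z) = \sum_i p_i\left(\log p_i + \lse(z)\right) - \lse(z) = \sum_i p_i \log p_i$, using $\sum_i p_i = 1$. This gives $\phi^\ast(p) = \sum_i p_i \log p_i$.

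For the reverse direction, $\phi(p) = \sum_i p_i \log p_i$ (with $\phi \equiv +\infty$ off the simplex), I would run the symmetric Lagrangian computation: maximizing $\inner{z}{p} - \sum_i p_i \log p_i$ over $p$ with the constraint $\sum_i p_i = 1$ (strict concavity of the objective ensures a unique interior maximizer), the first-order condition is $z_i - \log p_i - 1 - \lambda = 0$, so $p_i \propto \exp(z_i)$, and normalization gives $p_i = \exp(z_i)/\sum_j \exp(z_j)$, i.e. $z_i - \lse(z) = \log p_i$ as claimed. Plugging this back, $\inner{z}{p} - \sum_i p_i \log p_i = \sum_i p_i (z_i - \log p_i) = \sum_i p_i\, \lse(z) = \lse(z)$. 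Alternatively, I would simply invoke Fenchel--Moreau: negative entropy is closed, proper, and convex, so its biconjugate equals itself, and combined with the first part this immediately identifies its conjugate as $\lse$.

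There is no genuine obstacle here; the only points requiring care are the domain bookkeeping in the first direction (verifying that $\phi^\ast$ takes the value $+\infty$ off the probability simplex, which is what makes the supremum finite precisely on the simplex) and, in the second direction, correctly incorporating the equality constraint when solving the stationarity condition. Everything else is a one-line substitution.
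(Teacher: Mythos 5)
Your proposal is correct and follows essentially the same route as the paper: compute the conjugate of the log-sum-exp map via the first-order optimality condition, substitute the softmax relation back in to obtain the negative entropy, and obtain the reverse direction from biconjugacy (the paper uses only the biconjugacy argument for the second statement, while you additionally supply the explicit Lagrangian computation). Your extra bookkeeping showing that $\phi^\ast$ is $+\infty$ off the probability simplex is a welcome refinement that the paper's proof glosses over, but it does not change the substance of the argument.
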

\begin{proof}
If $\phi(z) := \log\left(\sum_i \exp(z_i) \right)$,
\begin{align*}
\phi^*(p) & := \sup_{z} \left[ \langle p, z \rangle - \phi(z) \right] = \sup_{z} \left[ \sum_i p_i z_i - \log\left(\sum_i \exp(z_i) \right) \right] \\
\intertext{Setting the gradient to zero, we get that $p_i = \frac{\exp(z^*_i)}{\sum_j \exp(z^*_j)}$ for $z^* \in \mathcal{Z}^*$ where $\mathcal{Z^*}$ is the set of maxima related by a shift (i.e. if $z^* \in \mathcal{Z}^*$, $z^* + C \in \mathcal{Z}^*$ for a constant $C$). Using the optimality condition, we know that $\sum_i p_i = 1$ and} 
\log(p_i) &= z^*_i - \log\left(\sum_j \exp(z_j^*)\right) \implies z^*_i = \log(p_i) + \phi(z^*) \\
\intertext{Using this relation,}
\phi^*(p) &= \left[ \sum_i p_i z^*_i - \log\left(\sum_i \exp(z^*_i) \right) \right] = \left[ \sum_i p_i \log(p_i) + \phi(z^*) \sum_i p_i - \phi(z^*) \right] \\
\implies \phi^*(p) &= \sum_i p_i \log(p_i) 
\end{align*}
The second statement follows since the $\phi^*(\phi^*) = \phi$.
\end{proof}

\begin{restatable}{lemma}{log-sum-exp-divergence}
For probability distributions, $p$ and $p'$, if $\phi(p) = \sum_i p \log(p_i)$, then $D_\phi(p, p') = \text{KL}(p || p')$.
\label{lem:neg-entropy-divergence}
\end{restatable}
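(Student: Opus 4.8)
The plan is to unwind the definition of the Bregman divergence directly, since the negative-entropy mirror map yields the KL divergence by a short computation.

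First I would compute the gradient of the mirror map. With $\phi(p) = \sum_i p_i \log(p_i)$, differentiating coordinate-wise gives $[\nabla \phi(p)]_i = \log(p_i) + 1$. Next I would substitute this into the definition of the Bregman divergence,
\begin{align*}
D_\phi(p, p') &= \phi(p) - \phi(p') - \langle \nabla \phi(p'), p - p' \rangle \\
&= \sum_i p_i \log(p_i) - \sum_i p'_i \log(p'_i) - \sum_i \big(\log(p'_i) + 1\big)(p_i - p'_i).
\end{align*}
Expanding the last sum, the $-\sum_i (p_i - p'_i)$ term vanishes because $p$ and $p'$ are probability distributions, i.e. $\sum_i p_i = \sum_i p'_i = 1$. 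The remaining terms are $-\sum_i p_i \log(p'_i) + \sum_i p'_i \log(p'_i) - \sum_i p'_i \log(p'_i) = -\sum_i p_i \log(p'_i)$, so that $D_\phi(p,p') = \sum_i p_i \log(p_i) - \sum_i p_i \log(p'_i) = \sum_i p_i \log\!\big(\tfrac{p_i}{p'_i}\big) = \mathrm{KL}(p \,\|\, p')$.

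There is no real obstacle here; the only point requiring the hypothesis that $p,p'$ are genuine probability distributions (rather than arbitrary positive vectors) is the cancellation of the linear term $\sum_i (p_i - p'_i) = 0$. I would simply flag that step explicitly and the proof is complete.
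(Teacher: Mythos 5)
Your proposal is correct and follows essentially the same route as the paper's proof: compute $[\nabla\phi(p')]_i = 1+\log(p'_i)$, expand the Bregman divergence, and use $\sum_i p_i = \sum_i p'_i = 1$ to cancel the residual linear term. Nothing further is needed.
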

\begin{proof}
Note that $[\nabla \phi(p)]_i = 1 + \log(p_i)$. Using the definition of the Bregman divergence,
\begin{align*}
D_{\phi}(p, p') & := \phi(p) - \phi(p') - \langle \nabla \phi(p'), p - p' \rangle  \\
& = \sum_{i} \left[p_i \log(p_i) - p'_i \log(p'_i) - (1 + \log(p'_i)) (p_i - p'_i) \right] \\
& = \sum_i \left[p_i \log\left(\frac{p_i}{p'_i} \right)\right] - \sum_i p_i + \sum_i p'_i 
\end{align*}
Since $p$ and $p'$ are valid probability distributions, $\sum_i p_i = \sum_i p'_i = 1$, and hence, $D_{\phi}(p, p') = \text{KL}(p || p')$.
\end{proof}

\begin{restatable}{lemma}{log-sum-exp-divergence}
If $\phi(z) = \log(\sum_{i} \exp(z_i))$, then $D_\phi(z, z') = \text{KL}(p' || p)$, where $p_i := \frac{\exp(z_i)}{\sum_j \exp(z_j)}$ and $p'_i := \frac{\exp(z'_i)}{\sum_j \exp(z'_j)}$.
\label{lem:log-sum-exp-divergence}
\end{restatable}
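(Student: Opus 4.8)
The plan is to unfold the definition of the Bregman divergence for $\phi(z) = \log\!\left(\sum_i \exp(z_i)\right)$ and rewrite everything in terms of the induced probability vectors $p$ and $p'$. First I would record the only gradient computation needed: the log-sum-exp map is differentiable with $[\nabla\phi(z)]_i = \frac{\exp(z_i)}{\sum_j \exp(z_j)} = p_i$, i.e.\ $\nabla\phi$ is the softmax, so $\nabla\phi(z') = p'$. Substituting into the definition $D_\phi(z,z') = \phi(z) - \phi(z') - \langle \nabla\phi(z'),\, z - z'\rangle$ gives $D_\phi(z,z') = \phi(z) - \phi(z') - \langle p',\, z - z'\rangle$.

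The key algebraic identity is obtained by taking logarithms of the softmax relation: $\log p_i = z_i - \phi(z)$ and $\log p'_i = z'_i - \phi(z')$, hence $\log\frac{p'_i}{p_i} = (z'_i - z_i) + \phi(z) - \phi(z')$. Multiplying by $p'_i$, summing over $i$, and using $\sum_i p'_i = 1$ then yields
\[
\mathrm{KL}(p' \,\|\, p) \;=\; \sum_i p'_i \log\frac{p'_i}{p_i} \;=\; -\langle p',\, z - z'\rangle + \phi(z) - \phi(z'),
\]
which is exactly the expression just derived for $D_\phi(z,z')$. Comparing the two completes the proof. An equivalent route would be to invoke the Fenchel-duality lemma (\cref{lem:fenchel-dual-direct-softmax}) to identify $D_\phi(z,z')$ with $D_{\phi^*}(p',p)$ and then apply \cref{lem:neg-entropy-divergence} to the negative-entropy divergence $D_{\phi^*}$, but the direct substitution is shorter.

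There is essentially no hard step here; once the softmax--logarithm identity is in place the computation is two lines. The one point that warrants care is the \emph{order of the arguments}: because $\nabla\phi$ sends logits to probabilities (rather than the other way around), the reference point $z'$ of the Bregman divergence becomes the \emph{first} argument $p'$ of the KL divergence, so the result is $\mathrm{KL}(p'\,\|\,p)$ and not $\mathrm{KL}(p\,\|\,p')$. This is the mirror image of the preceding lemma for the negative-entropy map, and should be stated explicitly to avoid confusion.
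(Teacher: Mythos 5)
Your proof is correct and follows essentially the same route as the paper: both compute $\nabla\phi(z') = p'$, expand the Bregman definition, and use the identity $\log p_i = z_i - \phi(z)$ (which the paper applies implicitly after inserting $\sum_i p'_i = 1$) to recognize the result as $\mathrm{KL}(p' \,\|\, p)$. Your remark about the argument order flipping relative to the negative-entropy lemma is a worthwhile clarification but does not change the substance.
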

\begin{proof}
Note that $[\nabla \phi(z)]_i = \frac{\exp(z_i)}{\sum_j \exp(z_j)} = p_i$ where $p_i := \frac{\exp(z_i)}{\sum_j \exp(z_j)}$. Using the definition of the Bregman divergence,  
\begin{align*}
D_{\phi}(z, z') & := \phi(z) - \phi(z') - \langle \nabla \phi(z'), z - z' \rangle  \\
&= \log\left(\sum_j \exp(z_j)\right) - \log\left(\sum_j \exp(z'_j)\right) - \sum_{i} \left[\frac{\exp(z'_i)}{\sum_j \exp(z'_j)} (z_i - z_i') \right] \\
& = \sum_i p'_{i} \left[\log\left(\sum_j \exp(z_j)\right) - \log\left(\sum_j \exp(z'_j)\right) - z_i + z_i' \right] \tag{Since $\sum_i p_i' = 1$}\\
& = \sum_i p'_{i} \left[\log\left(\sum_j \exp(z_j)\right) - \log\left(\sum_j \exp(z'_j)\right) - \log(\exp(z_i)) + \log(\exp(z_i')) \right] \\
&= \sum_i p'_{i} \left[\log\left(\frac{\exp(z_i')}{\sum_j \exp(z'_j)}\right) - \log\left(\frac{\exp(z_i)}{\sum_j \exp(z_j)}\right) \right] \\
& = \sum_i p'_{i} \left[\log(p'_{i}) - \log(p'_{i}) \right] = \sum_i p'_{i} \left[\log\left(\frac{p'_{i}}{p_{i}}\right) \right] = \text{KL}(p' || p)     
\end{align*}
\end{proof}
\newpage
\section{Implementation Details}
\label{app:implementation}

\subsection{Heuristic to estimate $c$}
\label{app:c-estimation}
We estimate $c$ to maximize the lower-bound on $J(\pi)$. In particular, using~\cref{prop:genericlb},
\begin{align}
J(\pi) &\geq J(\pi_t) + \hat{g}(\pi_t)^\top (\pi - \pi_t) - \left(\frac{1}{\eta} + \frac{1}{c}\right) D_\Phi(\pi, \pi_t) -\frac{1}{c}D_{\Phi^\ast}\bigg(\nabla\Phi(\pi_t)-c[\nabla J(\pi_t) - \hat{g}(\pi_t)], \nabla\Phi(\pi_t)\bigg)  \nonumber  
\intertext{For a fixed $\hat{g}(\pit)$, we need to maximize the RHS w.r.t $\pi$ and $c$, i.e.}
\max_{c > 0} \max_{\pi \in \Pi} & J(\pi_t) + \hat{g}(\pi_t)^\top (\pi - \pi_t) - \left(\frac{1}{\eta} + \frac{1}{c}\right) D_\Phi(\pi, \pi_t) -\frac{1}{c}D_{\Phi^\ast}\bigg(\nabla\Phi(\pi_t)-c[\nabla J(\pi_t) - \hat{g}(\pi_t)], \nabla\Phi(\pi_t)\bigg)  \label{eq:lb}
\end{align}
Instead of maximizing w.r.t $\pi$ and $c$, we will next aim to find an upper-bound on the RHS that is independent of $\pi$ and aim to maximize it w.r.t $c$. Using~\cref{lemma:bfy} with  $y' = \pi$, $y = \pit$, $x = -\hat{g}(\pit)$ and define $c'$ such that $\frac{1}{c'} = \frac{1}{\eta} + \frac{1}{c}$. 
\begin{align*}
\langle -\hat{g}(\pit), \pi - \pit \rangle & \geq -\frac{1}{c'} \left[D_\Phi(\pi,\pit) + D_\Phi^*(\nabla \Phi(\pit) + c' \hat{g}(\pit), \nabla \Phi(\pit)) \right] \\
\implies J(\pit) + \langle \hat{g}(\pit), \pi - \pit \rangle - \left(\frac{1}{\eta'} + \frac{1}{c} \right) D_\Phi(\pi,\pit) &\leq J(\pit) + \frac{1}{c'} \, D_\Phi^*(\nabla \Phi(\pit) + c' \hat{g}(\pit), \nabla \Phi(\pit))
\end{align*}
Using the above upper-bound in~\cref{eq:lb}, 
\begin{align*}
\max_{c > 0} \left[J(\pit) + \frac{1}{c'} \, D_\Phi^*(\nabla \Phi(\pit) + c' \hat{g}(\pit), \nabla \Phi(\pit)) - \frac{1}{c}D_{\Phi^\ast}\bigg(\nabla\Phi(\pi_t)-c[\nabla J(\pi_t) - \hat{g}(\pi_t)], \nabla\Phi(\pi_t)\bigg) \right] \end{align*}
This implies that the estimate $\hat{c}$ can be calculated as:
\begin{align*}
\hat{c} &= \argmax_{c > 0} \left\{ \left(\frac{1}{\eta} + \frac{1}{c} \right) \, D_\Phi^*\left(\nabla \Phi(\pit) + \frac{1}{\left(\frac{1}{\eta} + \frac{1}{c} \right)} \hat{g}(\pit), \nabla \Phi(\pit) \right) - \frac{1}{c} \, D_{\Phi^\ast}\bigg(\nabla\Phi(\pi_t)-c[\nabla J(\pi_t) - \hat{g}(\pi_t)], \nabla\Phi(\pi_t)\bigg) \right\}
\end{align*}
In order to gain some intuition, let us consider the case where $D_\Phi(u,v) = \frac{1}{2} \normsq{u - v}$. In this case, 
\begin{align*}
\hat{c} &= \argmax_{c > 0} \left\{ \frac{\normsq{\hat{g}(\pit)}}{2 \, \left(\frac{1}{\eta} + \frac{1}{c} \right)} - \frac{c}{2} \normsq{\hat{g}(\pit) - \nabla J(\pit)}  \right\} \\
\intertext{If $\normsq{\hat{g}(\pit) - \nabla J(\pit)} \rightarrow 0$,}
\hat{c} & = \argmax_{c > 0} \left\{ \frac{\normsq{\hat{g}(\pit)}}{2 \, \left(\frac{1}{\eta} + \frac{1}{c} \right)} \right\} \implies c \rightarrow \infty
\intertext{If $\normsq{\hat{g}(\pit) - \nabla J(\pit)} \rightarrow \infty$,}
\hat{c} & = \argmax_{c > 0} \left\{ - \frac{c}{2} \right\} \implies c \rightarrow 0
\end{align*}

\subsection{Environments and constructing features}
\textbf{Cliff World}: We consider a modified version of the CliffWorld environment~\citep[Example 6.6]{sutton18book}. The environment is deterministic and consists of 21 states and 4 actions. The objective is to reach the Goal state as quickly as possible. If the agent falls into a Cliff, it yields reward of $-100$, and is then returned to the Start state. Reaching the Goal state yields a reward of $+1$, and the agent will stay in this terminal state. All other transitions are associated with a zero reward, and the discount factor is set to $\gamma$ = 0.9.  

\textbf{Frozen Lake}: We Consider the Frozen Lake v.1 environment from gym framework \cite{gym2016}. The environment is stochastic and consists of 16 states and 4 actions. The agent starts from the Start state and according to the next action (chosen by the policy) and the stochastic dynamics moves to the next state and yields a reward. The objective is to reach the Goal state as quickly as possible without entering the Hole States. All the Hole states and the Goal are terminal states. Reaching the goal state yields $+1$ reward and all other rewards are zero, and the discount factor is set to  $\gamma$ = 0.9. 

\textbf{Sampling}: We employ the Monte-Carlo method to sample from both environments and we use the expected return to estimate the action-value function $Q$. Specifically, we iteratively start from a randomly chosen state-action pair $(s, a)$, run a roll-out with a specified length starting from that pair, and collect the expected return to estimate $Q(s, a)$. 

\textbf{Constructing features}: Also, in order to use function approximation on the above environments, we use tile-coded features \cite{sutton18book}. Specifically, tilde-coded featurization needs three parameters to be set: $(i)$ hash table size (equivalent to the feature dimension) $d$, $(ii)$ number of tiles $N$ and $(iii)$ size of tiles $s$. For Cliff world environment, we consider following pairs to construct features: $\{ (d=40, N=5, s=1), (d=50, N=6, s=1), (d=60, N=4, s=3), (d=80, N=5, s=3), (d=100, N=6, s=3)\}$. This means whenever we use $d = 40$, the number of tiles is $N = 5$ and the tiling size is $s = 1$. The reported number of tiles and tiling size parameters are tuned and have achieved the best performance for all algorithms. Similarly for Frozen Lake environment, we use the following pairs to construct features: $\{ (d=40, N=3, s=3), (d=50, N=4, s=13), (d=60, N=5, s=3), (d=100, N=8, s=3)\}$.

\subsection{Critic optimization}
We explain implementation of \texttt{MSE}, \texttt{Adv-MSE} and decision-aware critic loss functions. We use tile-coded features $\mathbf{X}(s, a)$ and linear function approximation to estimate action-value function $Q$, implying that $\hat{Q}(s, a) = \omega^T \mathbf{X}(s, a)$ where $\omega \,,  \mathbf{X}(s, a) \in \R^d$.

\textbf{Baselines:} For policy $\pi$, the MSE objective is to return the $\omega$ that minimizes the squared norm error of the action-value function $Q^{\pi}$ across all state-actions weighted by the state-action occupancy measure $\mu^{\pi}(s, a)$.
\begin{align*}
    \omega^{\text{MSE}} = \argmin_{\omega \in \R^{d}} \E_{(s, a) \sim \mu^{\pi}(s, a)}[Q^{\pi}(s, a) - \omega^T \mathbf{X}(s, a)]^2  
\end{align*}
Taking the derivative with respect to $\omega$ and setting it to zero:
\begin{align*}
    &\E_{(s, a) \sim \mu^{\pi}(s, a)}\bigg[\big( Q^{\pi}(s, a) - \omega^T \mathbf{X}(s, a)\big) \, \mathbf{X}(s, a)^T\bigg] = 0 \\
    & \implies \underbrace{\sum_{s, a} \mu^{\pi}(s, a) Q^{\pi}(s, a) \mathbf{X}(s, a)^T}_{:=y} = \underbrace{\big[ \sum_{s, a} \mu^{\pi}(s, a) \mathbf{X}(s, a) \mathbf{X}(s, a)^T \big]}_{:=K} \omega
\end{align*}
Given features $\mathbf{X}$, the true action-value function $Q^\pi$ and state-action occupancy measure $\mu^\pi$, we can compute $K$, $y$ and solve $\omega^{\text{MSE}} = K^{-1}y$. 

Similarly for policy $\pi$, the advantage-MSE objective is to return $\omega$ that minimizes the squared error of the advantage function $A^{\pi}$ across all state-actions weighted by the state-action occupancy measure $\mu^{\pi}$.
\begin{align*}
    \omega^{\text{Adv-MSE}} = \argmin_{\omega \in \R^{d}} \E_{(s, a) \sim \mu^{\pi}(s, a)}\bigg[A^{\pi}(s, a) - \omega^T \big( \mathbf{X}(s, a)  - \sum_{a'} \mathbf{X}(s, a')\big)\bigg]^2  
\end{align*}
Taking the derivative with respect to $\omega$ and setting it to zero:
\begin{align*}
 & \E_{(s, a) \sim \mu^{\pi}(s, a)}\bigg[ \big[A^{\pi}(s, a) - \omega^T \big( \mathbf{X}(s, a)  - \sum_{a'} \mathbf{X}(s, a')\big)\bigg] \bigg[\mathbf{X}(s, a)  - \sum_{a'} \mathbf{X}(s, a')\bigg]^T = 0 \\
  & \implies \underbrace{\sum_{s, a} \mu^{\pi}(s, a) A^{\pi}(s, a) \bigg[\mathbf{X}(s, a)  - \sum_{a'} \mathbf{X}(s, a')\bigg]^T}_{:=y} = \underbrace{\sum_{s, a} \mu^{\pi}(s, a) \bigg[\mathbf{X}(s, a)  - \sum_{a'} \mathbf{X}(s, a')\bigg] \bigg[\mathbf{X}(s, a)  - \sum_{a'} \mathbf{X}(s, a')\bigg]^T}_{:=K} \omega
\end{align*}
Given features $\mathbf{X}$, the true advantage function $A^\pi$ and state-action occupancy measure $\mu^\pi$, we ca compute $K$ and $y$ and solve $w^{\text{Adv-MSE}} = K^{-1}y$.

\textbf{Decision-aware critic in direct representation:}
Recall that for policy $\pi$, the decision-aware critic loss in direct representation is the \blue{blue} term in \cref{prop:directlb}, which after linear parameterization on $\hat{Q}^\pi$ would be as follows:
\begin{align*}
    \E_{s \sim d^{\pi}} \left[\E_{a \sim \pidist(\cdot | s)} \, [Q^{\pi}(s,a) - \omega^T \mathbf{X}(s, a)]  + \frac{1}{c} \, \log\left(\E_{a \sim \pidist(\cdot | s)} \left[\exp\left(-c \, [Q^{\pi}(s,a) - \omega^T \mathbf{X}(s, a)] \right)  \right] \right) \right]
\end{align*}
The above term is a convex function of $\omega$ for any $c > 0$. We minimize the term using gradient descent, where the gradient with respect to $\omega$ is:

\begin{align*}
    -\E_{s \sim d^{\pi}} \left[ 
    \E_{a \sim \pidist(\cdot | s)} \mathbf{X}(s, a)
    - 
    \frac{\E_{a \sim \pidist(\cdot | s)} \left[\exp{ \left( -c \left[Q^\pi(s, a) - \omega^T \mathbf{X}(s, a) \right] \right)}  \: \mathbf{X}(s, a) \right]}
    {\E_{a \sim \pidist(\cdot | s)} \left[\exp{ \left( -c \left[Q^\pi(s, a) - \omega^T \mathbf{X}(s, a) \right] \right)} \right]}
    \right] 
\end{align*}

The step-size of gradient ascent is determined using Armijo line-search \cite{Amari1998} where the maximum step size is set to $1000$ and it decays with the rate $\beta=0.9$. The number of iteration for critic inner-loop, $m_c$ in \cref{alg:generic}, is set to $10000$, and if the gradient norm becomes smaller than $10^{-6}$ we terminate the inner loop.

\textbf{Decision-aware critic in softmax representation:}
Recall that for policy $\pi$, the decision-aware critic loss in softmax representation is the \blue{blue} term in \cref{prop:softmaxlb}, which after linear parameterization on $\hat{Q}^\pi$ and substituting $\hat{A}^\pi(s, a)$ with $\omega^T \left( \mathbf{X}(s, a) - \sum_{a'} \mathbf{X}(s, a') \right)$ would be as follows:

\begin{align*}
\frac{1}{c} \, \E_{s \sim d^{\pit}} \E_{a \sim \pitdist(\cdot | s)} \left[ \left(1 - c \, [A^{\pit}(s,a) - \omega^T ( \mathbf{X}(s, a) - \sum_{a'} \mathbf{X}(s, a') )] \right) \, \log\left(1 - c \, [A^{\pit}(s,a) - \omega^T ( \mathbf{X}(s, a) - \sum_{a'} \mathbf{X}(s, a')) \right)  \right]
\end{align*}
Similarly, the above term is convex with respect to $\omega$ and we minimize it using gradient descent. The step-size is determined using Armijo line-search with the same parameters as mentioned in direct case. The number of iterations in inner loop is set to $10000$ and we terminate the loop if the gradient norm becomes smaller than $10^{-8}$. The gradient with respect to $\omega$:
\begin{align*}
    E_{s \sim d^{\pit}} \E_{a \sim \pitdist(\cdot | s)} 
    \left[
    \bigg(1+ \log\left(1 - c \, [A^{\pit}(s,a) - \omega^T ( \mathbf{X}(s, a) - \sum_{a'} \mathbf{X}(s, a')) \right) \bigg) \bigg( \mathbf{X}(s, a) - \sum_{a'} \mathbf{X}(s, a') \bigg)
    \right]
\end{align*}

\subsection{Actor optimization}
\textbf{Direct representation}:
For all actor-critic algorithms, we maximize the \green{green} term in \cref{prop:directlb} known as MDPO \cite{tomar2020mirror}.
\begin{align*}
    \E_{s \sim d^{\pit}} \left[\E_{a \sim \pitdist(\cdot | s)} \left[\frac{\pidist(a|s)}{\pitdist(a|s)} \, \left(\hat{Q}^{\pit}(s,a) - \left(\frac{1}{\eta} + \frac{1}{c}\right) \,  \log\left(\frac{\pidist(a|s)}{\pitdist(a|s)}\right) \right) \right]  \right]
\end{align*}
In tabular parameterization of the actor, $\theta_{s, a} = p^{\pi}(s, a)$, the actor update is exactly natural policy gradient \cite{kakade2001natural} and can be solved in closed-form. We refer the reader to Appendix F.2 of \cite{vaswani2021general} for explicit derivation. At iteration $t$, given policy $\pit$,  the estimated action-value function from the critic $\hat{Q}^\pit$ and $\eta$ as the functional step-size, the update at iteration $t$ is:
\begin{align*}
    p^{\pi_{t+1}}(a|s) = \frac{p^{\pit} (a|s) \exp{\left(\eta \hat{Q}^{\pit}(s, a) \right)}}{\sum_{a'}p^{\pit} (a'|s) \exp{\left(\eta \hat{Q}^{\pit}(s, a') \right)}}
    \implies 
    \theta_{s, a} = \frac{\theta_{s, a} \exp{\left(\eta \hat{Q}^{\pit}(s, a) \right)}}{\sum_{a'} \theta_{s, a'} \exp{\left(\eta \hat{Q}^{\pit}(s, a') \right)}}
\end{align*}
When we linearly parameterize the policy, implying that for policy $\pi$,  $p^{\pi}(a|s) =  \frac{\exp{(\theta^T \, \mathbf{X}(s, a))}}{\sum_{a'} \exp{(\theta^T \, \mathbf{X}(s, a'))}} $ where $\theta, \, \mathbf{X}(s,a) \in \R^n$ and $n$ is the actor expressivity, we use the off-policy update loop (Lines 10-13 in~\cref{alg:generic}) and we iteratively update the parameters using gradient ascent. The MDPO objective with linear parameterization will be:
\begin{align*}
    \E_{s \sim d^{\pit}} \left[\E_{a \sim \pitdist(\cdot | s)} \left[\frac{\exp{(\theta^T \mathbf{X}(s, a))}}{\pitdist(a|s) \sum_{a'}\exp{(\theta^T \mathbf{X}(s, a'))}} \, \left(\hat{Q}^{\pit}(s,a) - \left(\frac{1}{\eta} + \frac{1}{c}\right) \,  \log\left(\frac{\exp{(\theta^T \mathbf{X}(s, a))}}{\pitdist(a|s) \sum_{a'}\exp{(\theta^T \mathbf{X}(s, a'))}} \right) \right) \right]  \right]
\end{align*}
And the gradient of objective with respect to $\theta$ is:
\begin{align*}
    \E_{s \sim d^{\pit}} \left[ \E_{a \sim \pitdist(\cdot | s)}
    \left[ \frac{p^{\pi}(a|s)}{p^{\pit}(a|s)} \left( \mathbf{X}(s, a) - \frac{\sum_{a'} \exp(\theta^T \mathbf{X}(s, a)) \mathbf{X}(s, a')}{\sum_{a'} \exp(\theta^T \mathbf{X}(s, a'))}\right) 
    \left(
    \hat{Q}^\pit(s, a) - \big(\frac{1}{\eta} + \frac{1}{c}\big) 
     \big(1 + \log(\frac{p^{\pi}(a|s)}{p^\pit(a|s)})\big)
    \right)
    \right]
    \right]
\end{align*}

\textbf{Softmax representation}:
For all actor-critic algorithms, we maximize the \green{green} term in \cref{prop:softmaxlb} known as sMDPO \cite{vaswani2021general}.
\begin{align*}
    \E_{s \sim d^{\pit}} \, \E_{a \sim \pitdist(\cdot | s)} \left[ \left(\hat{A}^\pit(s,a) + \frac{1}{\eta} + \frac{1}{c}\right) \, \log\left(\frac{\pidist(a|s)}{\pitdist(a|s)}\right) \right]
\end{align*}
In tabular parameterization of the actor, $\theta_{s, a} = p^{\pi}(s, a)$, at iteration $t$ given the policy $\pit$, the estimated advantage function from the critic $\hat{A}^{\pit}(s, a) = \hat{Q}^\pit (s, a) - \sum_{a'} p^{\pit}(a|s)\hat{Q}^\pit (s, a')$, and functional step-size $\eta$, the actor update can be solved in closed-form and is as follows:

\begin{align*}
    p^{\pi_{t+1}}(a|s) = \frac{p^{\pit} (a|s) \max(1 + \eta A^{\pit}(s, a), 0)}{\sum_{a'}p^{\pit} (a'|s) \max(1 + \eta A^{\pit}(s, a'), 0)}
    \implies 
    \theta_{s, a} = \frac{\theta_{s, a} \max(1 + \eta A^{\pit}(s, a), 0)}{\sum_{a'} \theta_{s, a'} \max(1 + \eta A^{\pit}(s, a'), 0)}
\end{align*}

We refer the reader to Appendix F.1 of \cite{vaswani2021general} for explicit derivation. 
When we linearly parameterize the policy, implying that for policy $\pi$,  $p^{\pi}(a|s) =  \frac{\exp{(\theta^T \, \mathbf{X}(s, a))}}{\sum_{a'} \exp{(\theta^T \, \mathbf{X}(s, a'))}} $, 
we need to maximize the following with respect to $\theta$:
\begin{align*}
    \E_{s \sim d^{\pit}} \, \E_{a \sim \pitdist(\cdot | s)} \left[ \left(\hat{A}^\pit(s,a) + \frac{1}{\eta} + \frac{1}{c}\right) \, \log\left(\frac{\exp{(\theta^T \, \mathbf{X}(s, a))}}{\pitdist(a|s) \sum_{a'} \exp{(\theta^T \, \mathbf{X}(s, a'))}}\right) \right]
\end{align*}
Similar to direct representation, we use the off-policy update loop and we iteratively update the parameters using gradient ascent. The gradient with respect to $\theta$ is:
\begin{align*}
    \E_{s \sim d^{\pit}} \, \E_{a \sim \pitdist(\cdot | s)} \left[ \left(\hat{A}^\pit(s,a) + \frac{1}{\eta} + \frac{1}{c}\right) \, 
     \left( \mathbf{X}(s, a) - \frac{\sum_{a'} \exp(\theta^T \mathbf{X}(s, a)) \mathbf{X}(s, a')}{\sum_{a'} \exp(\theta^T \mathbf{X}(s, a))}\right)
    \right]
\end{align*}

\subsection{Parameter Tuning}
\begin{table}[h]
  \centering
  \begin{tabular}{|m{3cm}|m{5cm}|m{5cm}|}
    \hline
      & \centering \textbf{Parameter} & \textbf{Value/Range} \\
    \hline
    \centering \textbf{Sampling} & \centering \# of samples & $\{1000, 5000\}$ \\
    \cline{2-3}
     & \centering length of episode & $\{20, 50\}$ \\
    \hline
    \centering \textbf{Actor} & \centering Gradient termination criterion & $\{10^{-3}, 10^{-4}\}$ \\
    \cline{2-3}
    & \centering $m_a$ & $\{ 1000, 10000 \}$ \\
    \cline{2-3}
    & \centering Armijo max step-size & 1000 \\
    \cline{2-3}
    & \centering Armijo step-size decay $\beta$ & 0.9 \\
    \cline{2-3}
    & \centering Policy initialization (linear) & $\mathcal{N}(0, 0.1)$ \\
    \cline{2-3}
    & \centering Policy initialization (tabular) & Random \\
    \hline
    \centering \textbf{Linear Critic} & \centering Gradient termination criterion (direct) & $\{10^{-6}, 10^{-8}\}$\\
    \cline{2-3}
    & \centering Gradient termination criterion (softmax) & $\{10^{-8}, 10^{-10}\}$\\
    \cline{2-3}
    & \centering $m_c$ & $\{1000, 10000\}$ \\
    \cline{2-3}
    & \centering Armijo max step-size & 1000 \\
    \cline{2-3}
    & \centering Armijo step-size decay $\beta$ & 0.9 \\
    \hline
    \centering \textbf{Others} & \centering $\eta$ in direct & $\{0.001, 0.005, 0.01, 0.1, 1\}$ \\
    \cline{2-3}
    & \centering $c$ in direct & $\{0.001, 0.01, 0.1, 1\}$  \\
    \cline{2-3}
    & \centering $\eta$ in softmax & $\{0.001, 0.005, 0.01, 0.1, 1\}$ \\
    \cline{2-3}
    & \centering $c$ in softmax & $\{0.001, 0.01, 0.1\}$  \\
    \cline{2-3}
    & \centering $d$ & $\{40, 50, 60, 80, 100\}$  \\
    \hline
  \end{tabular}
  \caption{Parameters for the Cliff World environment}
  \label{tab:example}
\end{table}
\vspace{10pt}
\begin{table}[h]
  \centering
  \begin{tabular}{|m{3cm}|m{5cm}|m{5cm}|}
    \hline
      & \centering \textbf{Parameter} & \textbf{Value/Range} \\
    \hline
    \centering \textbf{Sampling} & \centering \# of samples & $\{1000, 10000\}$ \\
    \cline{2-3}
     & \centering length of episode & $\{20, 50\}$ \\
    \hline
    \centering \textbf{Actor} & \centering Gradient termination criterion & $\{10^{-4}, 10^{-5}\}$ \\
    \cline{2-3}
    & \centering $m_a$ & $\{100, 1000\}$ \\
    \cline{2-3}
    & \centering Armijo max step-size & 1000 \\
    \cline{2-3}
    & \centering Armijo step-size decay $\beta$ & 0.9 \\
    \cline{2-3}
    & \centering Policy initialization (linear) & $\mathcal{N}(0, 0.1)$ \\
    \cline{2-3}
    & \centering Policy initialization (tabular) & Random \\
    \hline
    \centering \textbf{Linear Critic} & \centering Gradient termination criterion (direct) & $\{10^{-6}, 10^{-8}\}$\\
    \cline{2-3}
    & \centering Gradient termination criterion (softmax) &  $\{10^{-6}, 10^{-8}\}$\\
    \cline{2-3}
    & \centering $m_c$ & $\{10000, 1000000\}$ \\
    \cline{2-3}
    & \centering Armijo max step-size & 1000 \\
    \cline{2-3}
    & \centering Armijo step-size decay $\beta$ & 0.9 \\
    \hline
    \centering \textbf{Others} & \centering $\eta$ in direct & $\{0.01, 0.1, 1, 10\}$ \\
    \cline{2-3}
    & \centering $c$ in direct & $\{0.01, 0.1, 1\}$  \\
    \cline{2-3}
    & \centering $\eta$ in softmax & $\{0.01, 0.1, 1, 10\}$ \\
    \cline{2-3}
    & \centering $c$ in softmax & $\{0.01, 0.1\}$  \\
    \cline{2-3}
    & \centering $d$ & $\{40, 50, 60, 100\}$  \\
    \hline
  \end{tabular}
  \caption{Parameters for the Frozen Lake environment}
  \label{tab:example}
\end{table}
\clearpage
\section{Additional Experiments}
\label{app:additional-exps}
\begin{figure*}[h]
    \includegraphics[width=0.99\textwidth]{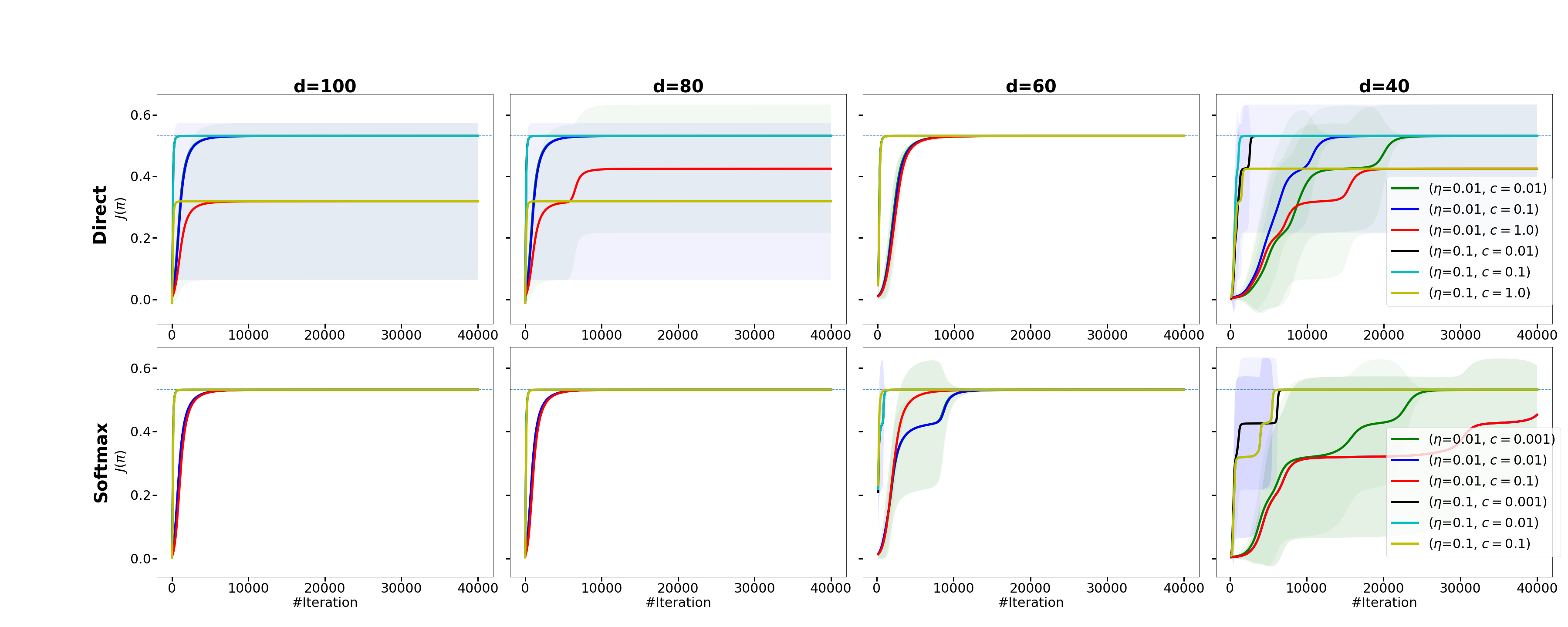}
    \caption{\textbf{Cliff World -- Linear actor and Linear critic with exact $Q$ computation} Assessing the impact of $c$ (trade-off parameter in decision-aware framework)  on the performance. We perform the experiment on the same setting as \cref{fig:cw_mb_linear_main}, linear actor and linear (with four different dimensions) critic with known MDP on Cliff World environment. We consider two values of functional step-size $\eta \in \{0.01, 0.1\}$ and three values of $c \in \{0.01, 0.1, 1 \}$ for direct and $c \in \{0.001, 0.01, 0.1\}$ for softmax representations, and compare the performance of $6$ combinations.  Overall, among different critic capacities and step-sizes, the value of $c = 0.01$ demonstrates superior performance in both policy representations. }
    \label{fig:cw_mb_linear_c}
\end{figure*}
\newpage
\begin{figure*}[t]
    \centering
    \begin{subfigure}{\textwidth}
        \centering
        \includegraphics[width=0.99\textwidth]{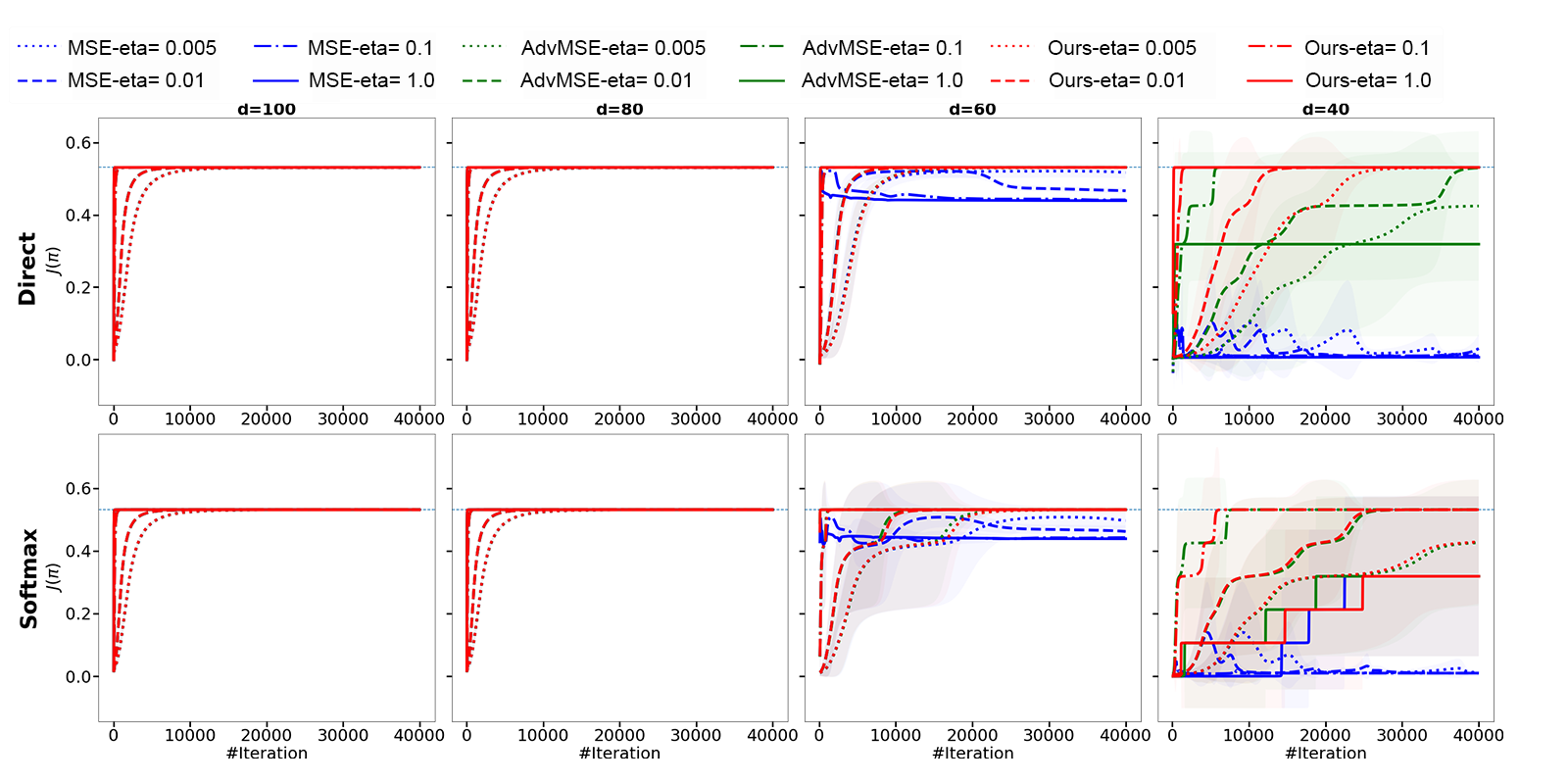}
        \caption{Linear policy parameterization}
        \label{fig:cw_mb_linear}
    \end{subfigure}
    \begin{subfigure}{\textwidth}
        \centering
        \includegraphics[width=0.99\textwidth]{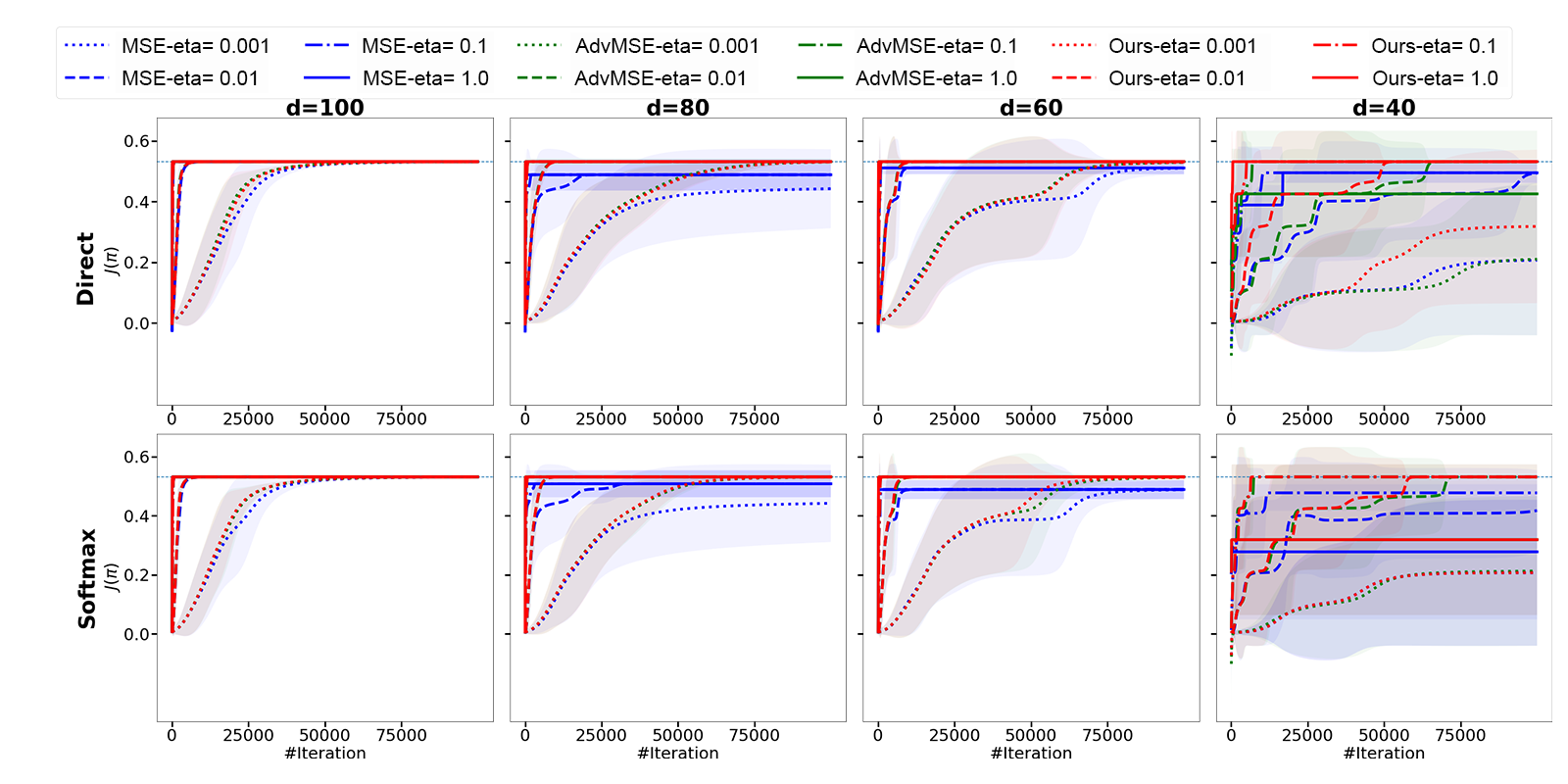}
        \caption{Tabular policy parameterization}
        \label{fig:cw_mb_tabular}
    \end{subfigure}
    \caption{\textbf{Cliff World -- Linear/Tabular actor and Linear critic with exact $Q$ computation}: Comparison of decision-aware, \texttt{Adv-MSE}, and \texttt{MSE} loss functions using a linear actor \cref{fig:cw_mb_linear} and \cref{fig:cw_mb_tabular} coupled with a linear (with four different dimensions) critic in the Cliff World environment for direct and softmax policy representations with known MDP. For $d=100$ (corresponding to an expressive critic) in both actor parameterizations and $d=80$ in linear parameterization, all algorithms have almost the same performance. In other scenarios, minimizing \texttt{MSE} loss function with any functional step-size leads to a sub-optimal policy. In contrast, minimizing \texttt{Adv-MSE} and decision-aware loss functions always result in reaching the optimal policy even in the less expressive critic $d=40$. Additionally, decision-aware convergence is faster than \texttt{Adv-MSE} particularly when the critic has limited capacity (e.g. In $d=40$ for direct and softmax representations and for both actor parameterizations, decision-aware reaches the optimal policy faster.) }
    \label{fig:cw_mb}
\end{figure*}
\begin{figure*}[h]
    \centering
    \begin{subfigure}{\textwidth}
        \centering
        \includegraphics[width=0.99\textwidth]{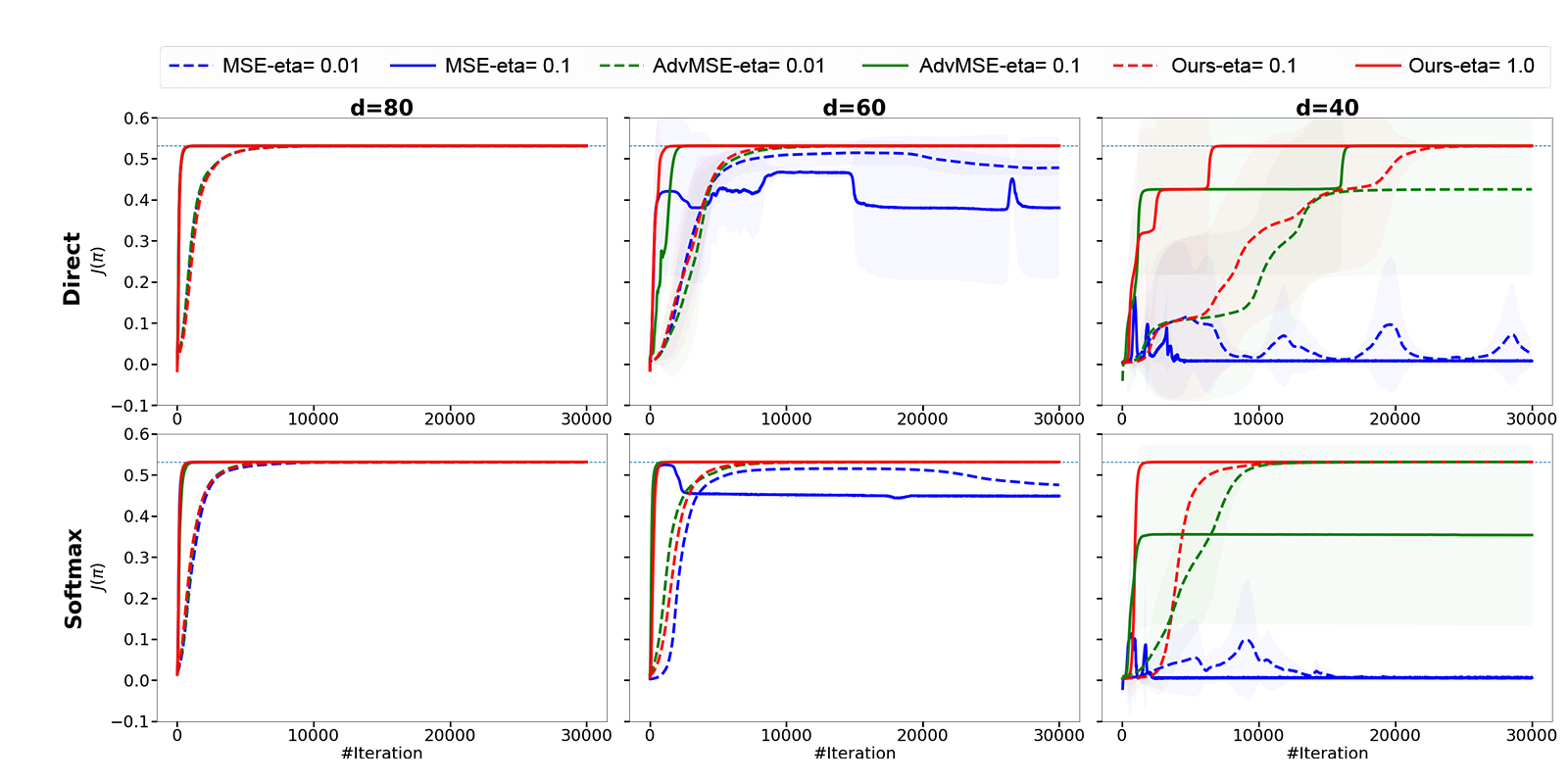}
        \caption{Linear policy parameterization}
        \label{fig:cw_mc_linear}
    \end{subfigure}
    \vspace{2ex} 
    \begin{subfigure}{\textwidth}
        \centering
        \includegraphics[width=0.99\textwidth]{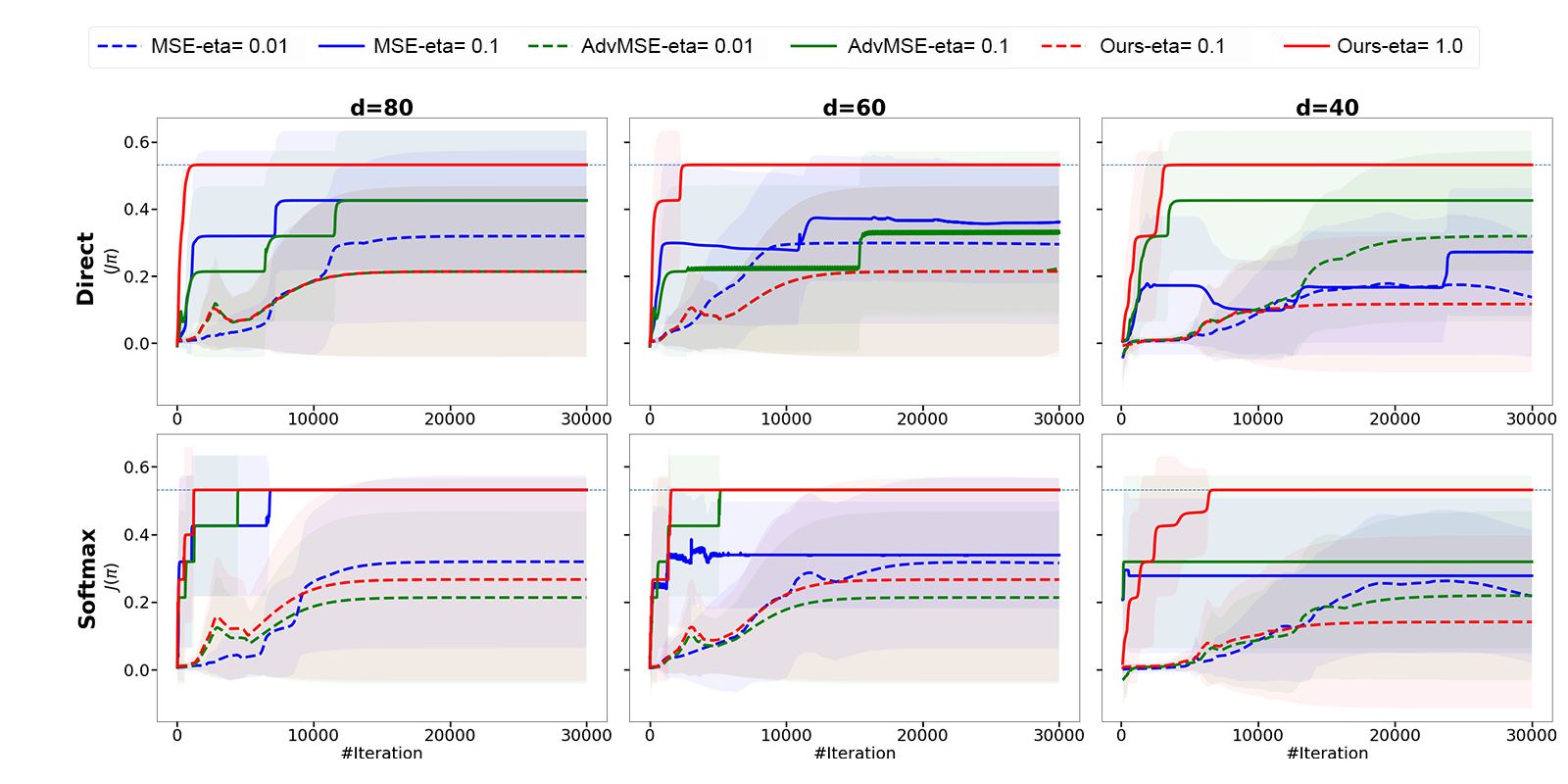}
        \caption{Tabular policy parameterization}
        \label{fig:cw_mc_tabular}
    \end{subfigure}
    \caption{\textbf{Cliff World -- Linear/Tabular actor and Linear critic with estimated $Q$}: Comparison of decision-aware, \texttt{Adv-MSE}, and \texttt{MSE} loss functions using a linear actor \cref{fig:cw_mc_linear} and \cref{fig:cw_mc_tabular} coupled with a linear (with three different dimensions) critic in the Cliff World environment for direct and softmax policy representations with Monte-Carlo sampling. When employing a linear actor alongside an expressive critic ($d=80$), all algorithms have nearly identical performance. However, minimizing the \texttt{MSE} loss with a linear actor and a less expressive critic ($d=40, 60$) leads to a loss of monotonic policy improvement and converging towards a sub-optimal policy in both representations. Conversely, minimizing the decision-aware and \texttt{Adv-MSE} losses enables reaching the optimal policy. Notably, decision-aware demonstrates a faster rate of convergence when the critic has limited capacity (e.g., $d=40$) in both policy representations.
    The disparity among algorithms becomes more apparent when using tabular parameterization. In this case, the decision-aware loss either achieves a faster convergence rate (in $d=80$ and $d=60$), or it alone reaches the optimal policy ($d=40$).}
    \label{fig:cw_mc}
\end{figure*} 
\clearpage
\begin{figure}[!t]
    \raggedright
    \vspace{10pt}
    \begin{subfigure}{0.99\textwidth}
        \centering
        \includegraphics[width=0.99\linewidth, ]{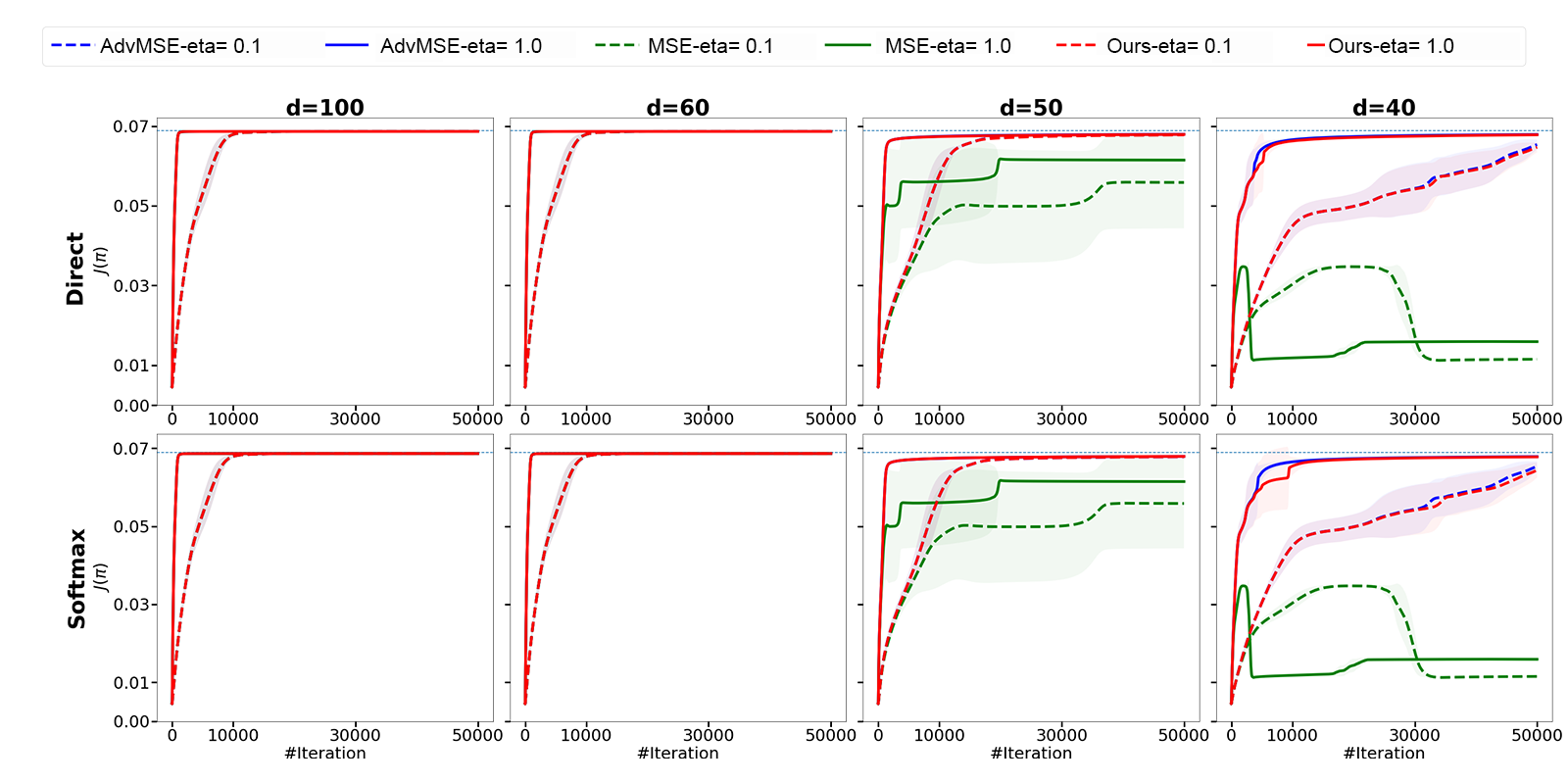}
        \caption{Linear policy parameterization}
        \label{fig:fl_mb_linear}
    \end{subfigure}
    \vspace{10pt} 
    \begin{subfigure}{\textwidth}
        \centering
        \includegraphics[width=0.99\linewidth]{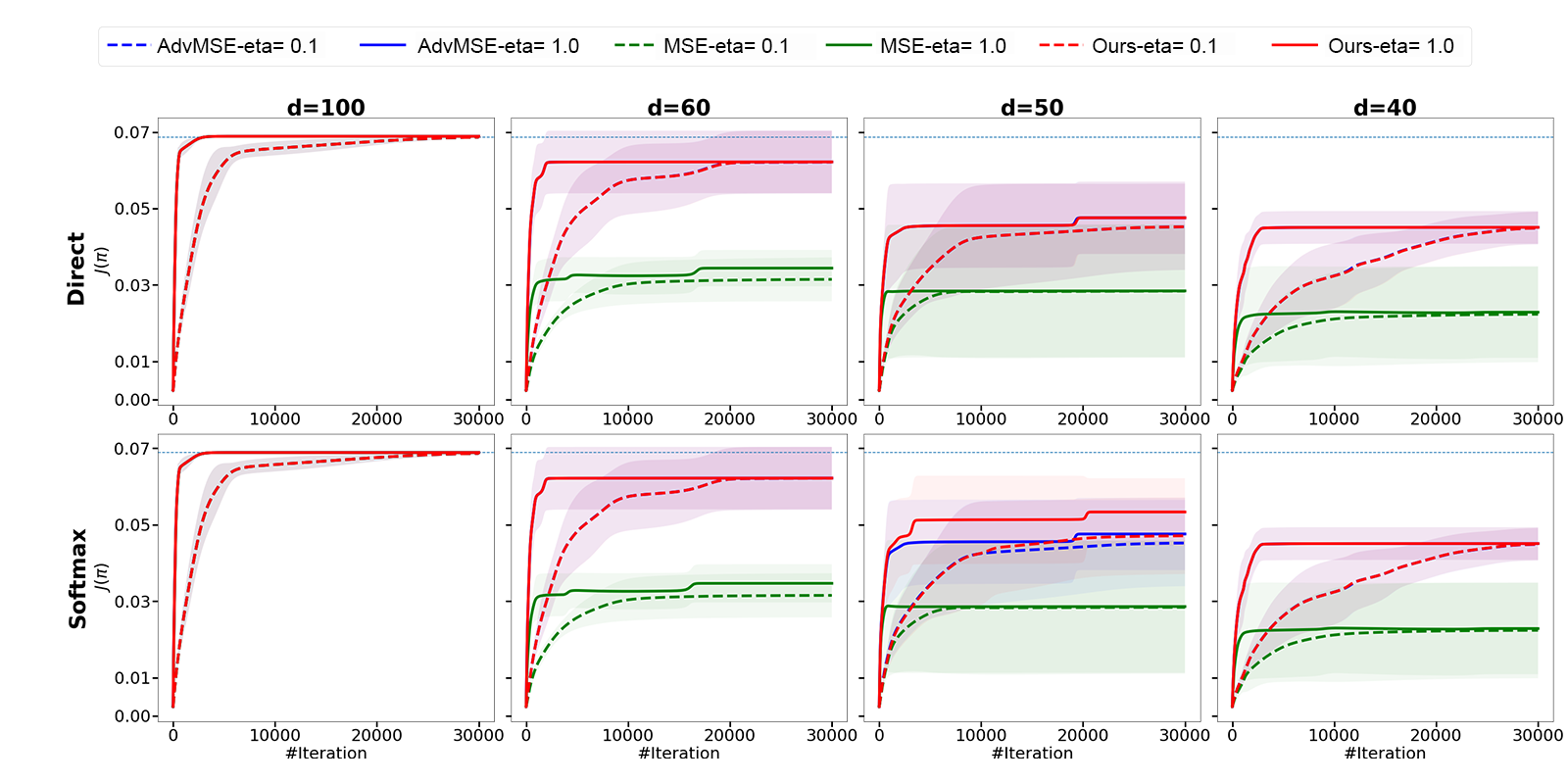}
        \caption{Tabular policy parameterization}
        \label{fig:fl_mb_tabular}
    \end{subfigure}
    \vspace{-3ex}
    \caption{\textbf{Frozen Lake -- Linear/Tabular actor and Linear critic with exact $Q$ computation}: Comparison of decision-aware, \texttt{Adv-MSE}, and \texttt{MSE} loss functions using a linear actor \cref{fig:fl_mb_linear} and \cref{fig:fl_mb_tabular} coupled with a linear (with four different dimensions) critic in the Frozen Lake environment for direct and softmax policy representations with known MDP. For $d=100$ (corresponding to an expressive critic) in both actor paramterizations and $d=60$ in linear paramterization, all algorithms have the same performance. In other scenarios, minimizing \texttt{MSE} loss functions leads to worse performance than decision-aware and \texttt{Adv-MSE} loss functions and for $d=40$ in linear parameterization \texttt{MSE} does not have monotonic improvement. \texttt{Adv-MSE} and decision-aware almost have a similar performance for all scenarios except $d=50$ with tabular actor where decision-aware reaches a better sub-optimal policy.}
    \label{fig:fl_mb}
\end{figure}

\textbf{Frozen Lake -- Linear/Tabular actor and Linear critic with estimated $Q$}: For the Frozen Lake environment, when estimating the $Q$ functions using Monte Carlo sampling (all other choices being the same as in~\cref{fig:fl_mb}), we found that the variance resulting from Monte Carlo sampling (even with $\geq 1000$ samples) dominates the bias. As a result, the effect of the critic loss is minimal, and all algorithms result in similar performance. 

\end{document}